\newcommand\labelAndRemember[2]
\gdef\csname labeled:#1\endcsname{#2}%
\newcommand\recallLabel[1]
\endcsname\tag{\ref{#1}}}
\newcommand\labelr[2]
\gdef\csname labeled:#1\endcsname{#2}%
\newcommand\recall[1]
\newcommand{\qed}{\hfill\ensuremath{\Box}}
\newenvironment{proof}{
\textbf{Proof.} 
}{
\qed
}
\newenvironment{proofof}[1]{
\textbf{Proof of #1.} 
}{
\qed
}
\newtheorem{theorem}{Theorem}[section]
\newtheorem{lemma}[theorem]{Lemma}
\newtheorem{definition}[theorem]{Definition}
\newtheorem{corollary}[theorem]{Corollary}
\newtheorem{proposition}[theorem]{Proposition}
\newtheorem{example}[theorem]{Example}
\newcommand{\exend}{{~$\Diamond$}}
\newtheorem{assumption}[theorem]{Assumption}
\newtheorem{conjecture}[theorem]{Conjecture}
\newcolumntype{H}{>{\setbox0=\hbox\bgroup}c<{\egroup}@{}}
\newcolumntype{Z}{>{\setbox0=\hbox\bgroup}c<{\egroup}@{\hspace*{-\tabcolsep}}}
\renewcommand{\hat}{\widehat}
\renewcommand{\epsilon}{\varepsilon}
\newcommand{\id}{I}
\newcommand{\tM}{\tilde{M}}
\newcommand{\RR}{\mathbb{R}}
\newcommand{\NN}{\mathbb{N}}
\newcommand{\eps}{\varepsilon}
\newcommand{\II}{\mathbb{I}}
\newcommand{\EE}{\mathbb{E}}
\newcommand{\PP}{\mathbb{P}}
\newcommand{\MM}{\mathbb{M}}
\newcommand{\QQ}{\mathbb{Q}}
\newcounter{cnt}
\xdef \csname c\Alph{cnt}\endcsname {\noexpand\mathcal{\Alph{cnt}}}%
\xdef \csname b\Alph{cnt}\endcsname {\noexpand\mathbb{\Alph{cnt}}}%
\xdef \csname f\Alph{cnt}\endcsname {\noexpand\mathfrak{\Alph{cnt}}}%
\DeclareMathOperator*{\argmin}{arg\,min}
\DeclareMathOperator*{\argmax}{arg\,max}
\newcommand{\diag}{\operatorname{diag}}
\newcommand{\leqsim}{\lesssim}
\newcommand{\geqsim}{\gtrsim}
\newcommand{\st}{\mathop{\textrm{s.t.}\ }}  
\newcommand{\T}{\top}  
\newcommand{\iprod}[2]{\left\langle #1, #2 \right\rangle}
\newcommand{\nrm}[1]{\left\|#1\right\|}
\newcommand{\abs}[1]{\left|#1\right|}
\newcommand{\brcond}[2]{\left[\left.#1\right|#2\right]}
\newcommand{\cond}[2]{\mathbb{E}\left[\left.#1\right|#2\right]}
\newcommand{\Om}[1]{\Omega\left(#1\right)}
\newcommand{\ceil}[1]{\left\lceil #1\right\rceil}
\newcommand{\floor}[1]{\left\lfloor #1\right\rfloor}
\DeclarePairedDelimiterX{\ddiv}[2]{(}{)}{%
  #1\;\delimsize\|\;#2%
}
\newcommand{\Unif}{\mathrm{Unif}}
\newcommand{\paren}[1]{{\left( #1 \right)}}
\newcommand{\brac}[1]{{\left[ #1 \right]}}
\newcommand{\set}[1]{{\left\{ #1 \right\}}}
\newcommand{\defeq}{:=}
\newcommand{\be}{\mathbf{e}}
\newcommand{\rank}{\mathrm{rank}}
\newcommand{\ms}{m^\star}
\newcommand{\Piall}{\Pi_{\rm RND}}
\newcommand{\Pimarkov}{\Pi_{\rm DM}}
\newcommand{\om}{\varpi}
\newcommand{\DB}[1]{D_{\rm B}\paren{#1}}
\newcommand{\delsep}{\delta}
\renewcommand{\sp}{s_\oplus}
\newcommand{\Asat}[1]{\bA_{{\sf sat},#1}}
\newcommand{\bo}{\mathbf{o}}
\newcommand{\bj}{\boldsymbol{j}}
\newcommand{\bx}{{\mathbf{x}}}
\newcommand{\by}{{\mathbf{y}}}
\newcommand{\bz}{{\mathbf{0}}}
\newcommand{\bk}{{\boldsymbol{k}}}
\newcommand{\bDelta}{\boldsymbol{\Delta}}
\newcommand{\ts}{\tilde{s}}
\newcommand{\perr}{e}
\newcommand{\belief}{\mathbf{b}}
\newcommand{\epssep}{\eps_{\rm s}}
\newcommand{\logNt}{\log N_{\Theta}}
\newcommand{\ths}{\theta^\star}
\newcommand{\modf}[1]{\mathsf{p}(#1)}
\newcommand{\modp}[1]{\phi(#1)}
\newcommand{\modq}[1]{\varphi(#1)}
\newcommand{\supp}{\mathrm{supp}}
\newcommand{\TT}{\mathbb{T}}
\newcommand{\poly}{\mathrm{poly}}
\newcommand{\hQ}{\hat{Q}}
\newcommand{\hV}{\hat{V}}
\newcommand{\hpi}{\hat{\pi}}
\newcommand{\ol}{\bar{l}}
\newcommand{\Wexp}{W_{\exp}}
\newcommand{\hPP}{\hat{\PP}}
\renewcommand{\O}{\mathbb{O}}
\renewcommand{\T}{\mathbb{T}}
\newcommand{\tcS}{\Tilde{\cS}}
\newcommand{\Nt}{N_{\Theta}}
\newcommand{\betaN}{2\logNt(1/T)+2\log(1/p)+2}
\newcommand{\pitau}[1]{\pi|_{\tau_{#1}}}
\newcommand{\mthtau}{m_{\theta}(\otau_W)}
\newcommand{\mothtau}{m_{\otheta}(\otau_W)}
\newcommand{\mthstau}{m_{\ths}(\otau_W)}
\newcommand{\oW}{\Bar{W}}
\newcommand{\MMop}[5]{\MM^{#1}_{#2,#4}(#5,#3)}
\newcommand{\MMth}[1]{\MM^\theta_{\mthtau,\oW}(#1,s_W)}
\newcommand{\MMoth}[1]{\MM^\otheta_{\mothtau,\oW}(#1,s_W)}
\newcommand{\perrtau}[1]{e_{#1}(\otau_W)}
\newcommand{\perrtauh}[1]{e_{#1}(\otau_h)}
\newcommand{\tmu}{\Tilde{\mu}}
\newcommand{\norm}[1]{\left\|{#1}\right\|} %
\newcommand{\lone}[1]{\norm{#1}_1} %
\newcommand{\ltwo}[1]{\norm{#1}_2} %
\newcommand{\linf}[1]{\norm{#1}_\infty} %
\renewcommand{\cO}{\mathcal{O}}
\newcommand{\tO}{\widetilde{\cO}}
\newcommand{\indic}[1]{\mathbf{1}\left\{#1\right\}} %
\renewcommand{\bQ}{\mathbf{Q}}
\newcommand{\otau}{\overline{\tau}}
\renewcommand{\II}{\mathbbm{1}}
\newcommand{\unif}{{\rm Unif}}
\newcommand{\otheta}{{\bar{\theta}}}
\newcommand{\dH}{D_{\mathrm{H}}}
\newcommand{\dTV}{D_{\mathrm{TV}}}
\renewcommand{\DH}[1]{D_{\mathrm{H}}^2\left(#1\right)}
\newcommand{\DTV}[1]{D_{\mathrm{TV}}\left(#1\right)}
\newcommand{\DTVt}[1]{D_{\mathrm{TV}}^2\left(#1\right)}
\newcommand{\expl}{\mathrm{sep}}
\newcommand{\doac}{\mathrm{do}}
\newcommand{\tPP}{\widetilde{\PP}}
\newcommand{\ba}{\mathbf{a}}
\newcommand{\piexp}{\pi_{\expl}}
\newcommand{\BB}{\mathbf{B}}
\newcommand{\bq}{\mathbf{q}}
\newcommand{\pis}{\pi_\star}
\def\eqref#1{equation~\ref{#1}}
\def\ceil#1{\lceil #1 \rceil}
\def\floor#1{\lfloor #1 \rfloor}
\def\1{\bm{1}}
\def\eps{{\epsilon}}
\DeclareMathAlphabet{\mathsfit}{\encodingdefault}{\sfdefault}{m}{sl}
\SetMathAlphabet{\mathsfit}{bold}{\encodingdefault}{\sfdefault}{bx}{n}
\def\tM{{\tens{M}}}
\def\tR{{\tens{R}}}
\def\tT{{\tens{T}}}
\def\sR{{\mathbb{R}}}
\newcommand{\E}{\mathbb{E}}
\newcommand{\R}{\mathbb{R}}
\newcommand{\sepstr}{$\delta$-strongly separated}
\newcommand{\twopol}{two-policy}
\title{Near-Optimal Learning and Planning in Separated Latent MDPs}
\author{
Fan Chen\\{\small MIT}\\{\small\texttt{fanchen@mit.edu}} \and Constantinos Daskalakis\\{\small MIT \& Archimedes AI}\\{\small\texttt{costis@csail.mit.edu}} \and Noah Golowich\\{\small MIT}\\{\small \texttt{nzg@mit.edu}} \and Alexander Rakhlin \\\small{MIT}\\{\small \texttt{rakhlin@mit.edu}}
}
\begin{document}

\maketitle

\begin{abstract}%
We study computational and statistical aspects of learning  Latent Markov Decision Processes (LMDPs). In this model, the learner interacts with an MDP drawn at the beginning of each epoch from an unknown mixture of MDPs. To sidestep known impossibility results, we consider several notions of separation of the constituent MDPs. The main thrust of this paper is in establishing a nearly-sharp \textit{statistical threshold} for the horizon length necessary for efficient learning. 
On the computational side, we show that under a weaker assumption of separability under the optimal policy, there is a quasi-polynomial algorithm with time complexity scaling in terms of the statistical threshold. We further show a near-matching time complexity lower bound under the exponential time hypothesis.
\end{abstract}

\section{Introduction}

Reinforcement Learning~\citep{kaelbling1996reinforcement,sutton2018reinforcement} captures the common challenge of learning a good policy for an agent taking a sequence of actions in an unknown, dynamic environment, whose state transitions and reward emissions are influenced by the actions taken by the agent. Reinforcement learning has recently contributed to several headline results in Deep Learning, including Atari~\citep{mnih2013playing}, Go~\citep{silver2016mastering}, and the development of Large Language Models~\citep{christiano2017deep,stiennon2020learning,ouyang2022training}. This practical success has also sparked a burst of recent work on expanding its algorithmic, statistical and learning-theoretic foundations, towards bridging the gap between theoretical understanding and practical success.

In general, the agent might not fully observe the state of the environment, instead having imperfect observations of its state. Such a setting is captured by the general framework of Partially Observable Markov Decision Processes (POMDPs)~\citep{smallwood1973optimal}.  In contrast to the fully-observable special case of Markov Decision Processes (MDPs)~\citep{bellman1957markovian}, the setting of POMDPs is rife with statistical and computational barriers. In particular, there are exponential sample lower bounds for learning an approximately optimal policy~\citep{krishnamurthy2016pac,jin2020sample}, and it is PSPACE-hard to compute an approximately optimal policy even when the transition dynamics and reward function are known to the agent~\citep{papadimitriou1987complexity,littman1994memoryless,burago1996complexity,lusena2001nonapproximability}. In view of these intractability results, a fruitful research avenue has been to identify conditions under which statistical and/or computational tractability can be resurrected. This is the avenue taken in this paper.

In particular, we study {\em Latent Markov Decision Processes (LMDPs)}, a learning setting wherein, as its name suggests, prior to the agent's  interaction with the environment over an episode of~$H$ steps, nature samples an MDP, i.e.~the state transition dynamics and the reward function, from a distribution~$\rho$ over MDPs, which share the same state and action sets. The learner can fully observe the state, but cannot observe which MDP was sampled, and she also does not know the distribution~$\rho$. However, she can interact with the environment over several episodes for which, at the beginning of each episode, a fresh MDP is independently sampled from~$\rho$. The learner's goal is to learn a policy that optimizes her reward in expectation when this policy is used on a random MDP sampled from~$\rho$.

LMDPs are a special case of (overcomplete) POMDPs,\footnote{Indeed, if $\cal S$ is the state space shared by all MDPs in the support $\cal M$ of the distribution $\rho$ over MDPs, we may view this LMDP as a POMDP with state space ${\cal S} \times {\cal M}$. The state transition dynamics of this POMDP only allow transitions from state $(s,m)$ to state $(s',m')$ when $m=m'$, and the transition probability from $(s,m)$ to $(s',m)$ on action $a$ is determined by the transition probability from $s$ to $s'$ on action $a$ in MDP $m$. The observation model of this POMPD drops $m$ when observing the state $(s,m)$, and the initial state $(s_0,m)$ is sampled by first sampling $m \sim \rho$, and then sampling $s_0$ from the initialization distribution of MDP $m$.} which capture many natural scenarios. For example, learning in an LMDP can model the task facing a robot that is moving around in a city but has no sensors to observe the weather conditions each day, which affect the pavement conditions and therefore the dynamics. Other examples include optimizing the experience of users drawn from some population in a web platform~\citep{hallak2015contextual}, optimizing the outcomes of patients drawn from some population in healthcare provision~\citep{steimle2021multi}, and developing an optimal strategy against a population of possible opponents in a dynamic strategic interaction~\citep{wurman2022outracing}. More broadly, LMDPs and the challenge of learning in LMDPs have been studied in a variety of settings under various names, including hidden-model MDPs~\citep{chades2012momdps}, multi-task RL~\citep{brunskill2013sample,liu2016pac}, contextual MDPs~\citep{hallak2015contextual}, hidden-parameter MDPs~\citep{doshi2016hidden}, concurrent MDPs~\citep{buchholz2019computation}, multi-model MDPs~\citep{steimle2021multi}, and latent MDPs~\citep{kwon2021rl, zhan2022pac,chen2022partially,zhou2023horizon}.

Despite this work, we lack a complete understanding of what conditions enable computationally and/or sample efficient learning of optimal policies in LMDPs. We {\em do} know that some conditions must be placed, as in general, the problem is both computationally and statistically intractable. Indeed, it is known that an exponential number of episodes in the size $L$ of the support of $\rho$, is necessary to learn an approximately optimal policy~\citep{kwon2021rl}, and even when the LMDP is known, computing an optimal policy is PSPACE-hard~\citep{steimle2021multi}. 

A commonly studied and intuitively simpler setting, which is a main focus of this paper, is that of  {\em \sepstr~LMDPs}, where every pair of MDPs in the support of $\rho$ are $\delta$-separated in the sense that for every state-action pair their transition distributions differ by at least $\delta$ in total variation distance. 
Even in this setting, however, we lack a sharp characterization of the horizon length that is necessary and sufficient for sample-efficient learning. %
Previous works either require a very long horizon\footnote{Even under such a long horizon, \citet{brunskill2013sample,hallak2015contextual,liu2016pac} have to require additional restrictive assumptions, e.g. the diameter of each MDP instance is bounded.} (i.e.~$H\gg SA$, \citet{brunskill2013sample,hallak2015contextual,liu2016pac}) %
or impose extra assumptions on the predictive state representation of the underlying LMDP~\citep{kwon2021rl}.%
Other simplifying assumptions that have been studied include hindsight observability, i.e.~observability of the index of the sampled MDP at the end of  each episode, under which near-optimal regret guarantees have been obtained in certain parameter regimes~\citep{kwon2021rl,zhou2023horizon}, as well as test-sufficiency~\citep{zhan2022pac,chen2022partially} and decodability~\citep{efroni2022provable}, but here the known sample complexity bounds scale exponentially with the test-sufficiency/decodability window.

\paragraph{Our Contributions.} In this paper, we nearly settle the challenge of learning in \sepstr~LMDPs, by providing a near-sharp characterization of the horizon length necessary for efficient learnability. 

Our lower bound (\cref{thm:lower-demo}) shows that, for there to be an algorithm that learns an $\eps$-optimal policy in a \sepstr~LMDP from a polynomial number of samples, it must be that the horizon scales as
\begin{align*}
    H\geqsim \frac{\log(L/\eps)}{\delsep^2},
\end{align*}
where $L$ is the number of MDPs in the mixture. The threshold $H_\star\asymp \frac{\log(L/\eps)}{\delta^2}$ has a fairly intuitive interpretation: when $H\geq H_\star$, we can use the history up to step $H_\star$ to recover the unobservable index of the underlying MDP instance with error probability at most $\eps$ (\cref{prop:latent-MLE}).

We complement our lower bound by proposing a sample-efficient algorithm (\cref{alg:OMLE}) for learning an $\eps$-optimal policy in a $\delsep$-strongly separated LMDP when
\begin{align*}
    H\geqsim \frac{\log(LS/\eps\delta)}{\delta^2}.
\end{align*}
Our sample complexity guarantees also hold beyond the strong separation condition. We study the setting where the MDP instances are separated under \emph{every policy} (\cref{sec:stat-upper}), a condition that is comparably less restrictive than the strong separation condition. We  relax this  separation assumption even further to separation under an \emph{optimal policy}, although we need to make some extra assumptions in this case to preserve sample-efficiency (\cref{sec:single-policy}).

As a further application, we consider learning $N$-step decodable LMDPs, which is a natural class of structured LMDPs where strong separation does not hold. For such a class of LMDPs, we provide a sample-efficiency guarantee when $H\geq 2N$, and we also provide a lower bound which shows that this threshold is \emph{sharp}.

Finally, we study the computational complexity of computing an optimal policy in a known separated LMDP, i.e. the problem of planning. We show that the threshold $H_\star$ tightly captures the time complexity of planning: it gives rise to a natural planning algorithm (\cref{alg:plan}) with near-optimal time complexity under the exponential time hypothesis (ETH).

\subsection{Related works}

\paragraph{Planning in partially observable environment.}
Planning in a known POMDP has long been known to be PSPACE-compete~\citep{papadimitriou1987complexity, littman1994memoryless,burago1996complexity, lusena2001nonapproximability}, and planning in LMDP inherits such hardness~\citep{chades2012momdps,steimle2021multi}. 
The recent work of~\citet{golowich2022planning, golowich2022learning} established a property called ``belief contraction'' in POMDPs under an observability condition \citep{even2007value}, which leads to algorithms with quasi-polynomial statistical and computational efficiency. 

\paragraph{Learning in partially observable environment.}
It is well-known that learning a near-optimal policy in an unknown POMDP is statistically hard in the worst-case: in particular, the sample complexity must scale at least exponentially in the horizon~\citep{liu2022partially,krishnamurthy2016pac}. Algorithms achieving such upper bounds are developed in~\citep{kearns1999approximate, even2005reinforcement}. 
Under strong assumptions, such as full-rankness of the transition and observation matrices or availability of exploratory data, several algorithms based on spectral methods~\citep{hsu2012spectral, azizzadenesheli2016reinforcement, guo2016pac,xiong2021sublinear} and posterior sampling \citep{jahromi2022online} have also been proven to be sample-efficient. However, due to the nature of their strong assumptions, these works fall short of addressing the challenge of exploration in an unknown partially observable environment.

Towards addressing this challenge, a line of recent works proposed various structural problem classes that can be learned sample-efficiently, including reactive POMDPs~\citep{jiang2017contextual}, revealing POMDPs~\citep{jin2020sample, liu2022partially, liu2022sample}, low-rank POMDPs with invertible emission operators~\citep{cai2022reinforcement, wang2022embed}, decodable POMDPs~\citep{efroni2022provable}, regular PSRs~\citep{zhan2022pac}, reward-mixing MDPs%
~\citep{kwon2021reinforcement,kwon2023reward}, PO-bilinear classes~\citep{uehara2022provably}, POMDPs with deterministic latent transition~\citep{uehara2022computationally}, and POMDPs with hindsight observability \citep{lee2023learning}. Based on the formulation of predictive state representation (PSR), \citet{chen2022partially,liu2022optimistic} proposed (similar) unified structural conditions which encompass most of these conditions, with a unified sample-efficient algorithm Optimistic Maximum Likelihood Estimation (OMLE). %
As LMDPs are a subclass of POMDPs, all of these results can be applied to LMDPs to provide structural conditions that enable learnability. However, when instantiated to LMDPs, these structural conditions are less intuitive, and in general they are incomparable to our separability assumptions and do not capture the full generality of the latter.

\paragraph{RL with function approximation.} RL with general function approximation in fully observable environment has been extensively investigated in a recent line of work \citep[etc.]{jiang2017contextual,sun2019model,du2021bilinear,jin2021bellman,foster2021statistical,agarwal2022model,chen2022unified,xie2022role,liu2023optimistic}, and some of the proposed complexity measures and algorithms (e.g. Model-based Optimistic Posterior Sampling \citep{agarwal2022model, chen2022unified}, and Estimation-to-Decision \citep{foster2021statistical}) also apply to partially observable RL. In this work, our analysis of OMLE utilizes several tools developed in \citet{liu2022partially,chen2022unified,chen2022partially,xie2022role}.

\section{Preliminaries}\label{sec:prelim}

\paragraph{Latent Markov Decision Process.} An LMDP $M$ is specified by a tuple $\set{\cS,\cA,(M_m)_{m=1}^L,H,\rho,R}$, where $M_1,\cdots,M_L$ are $L$ MDP instances with joint state space $\cS$, joint action space $\cA$, horizon $H$, and $\rho\in\Delta([L])$ is the mixing distribution over $M_1,\cdots,M_L$, and $R=(R_h:\cS\times\cA\to [0,1])_{h=1}^H$ is the reward function. For $m\in[L]$, the MDP $M_m$ is specified by $\TT_{m}:\cS\times\cA\to\Delta(\cS)$ along with the initial state distribution $\nu_m\in\Delta(\cS)$. In what follows, we will parametrize each LMDP by a parameter $\theta$ (\cref{sec:model-based}), but for now we provide a few definitions without overburdening the notation.

In an LMDP, the latent index of the current MDP is hidden from the agent: the agent can only see the resulting transition trajectory. Formally speaking, at the start of each episode, the environment randomly draws a latent index $\ms\sim \rho$ (which is unobservable) and an initial state $s_1\sim \nu_{\ms}$, and then at each step $h$, after the agent takes action $a_h$, the environment generates the next state $s_{h+1}\sim \TT_{\ms}(\cdot|s_h,a_h)$ following the dynamics of MDP $M_{\ms}$. The episode terminates immediately after $a_{H}$ is taken. %

\paragraph{Policies.} A policy $\pi = \{\pi_h: (\cS\times\cA)^{h-1}\times\cS\to\Delta(\cA) \}_{h \in [H]}$ is a collection of $H$ functions. At step $h\in[H]$, an agent running policy $\pi$ observes the current state $s_h$ and takes action $a_{h}\sim \pi_h(\cdot|\otau_{h})\in\Delta(\cA)$ based on the whole history $\otau_h=(\tau_{h-1},s_h)=(s_1,a_1,\dots,s_{h-1},a_{h-1},s_h)$.  (In particular, we have written $\tau_{h-1} = (s_1, a_1, \ldots, s_{h-1}, a_{h-1})$.) 
The policy class $\Piall$ is the set of all such history-dependent policies, and $\Pimarkov$ is the set of all deterministic Markov policies, namely tuples $\pi = \{ \pi_h : \cS \to \cA \}_{h \in [H]}$. 

For any policy $\pi\in\Piall$, the interaction between $\pi$ and the LMDP $M$ induces a distribution $\PP^{\pi}$ of the whole trajectory $\tau_{H}=(s_1,a_1,\cdots,s_H,a_H)$. The value of $\pi$ is defined as
\begin{align*}
    V(\pi)=\EE^{\pi}\brac{\sum_{h=1}^H R_h(s_h,a_h)}.
\end{align*}
We also use $\tPP^{\pi}$ to denote the joint probability distribution of the latent index $\ms$ and trajectory $\tau_H$ under policy $\pi$.

\newcommand{\Bhd}{Bhattacharyya divergence}

\paragraph{Miscellaneous notations}
For probability distributions $p,q$ on a discrete measure space $\cX$, the Hellinger distance and Bhattacharyya divergence are defined as
\begin{align*}
    \textstyle
    \DH{ p,q } \defeq \frac{1}{2}\sum_{x\in\cX} (\sqrt{p(x)}-\sqrt{q(x)})^2, \qquad
    \DB{p,q}=-\log \sum_{x\in\cX} \sqrt{p(x)q(x)}.
\end{align*}
For expression $f,g$, we write $f\leqsim g$ if there is an absolute constant $C$ such that $f\leq Cg$. We also use $f=\cO(g)$ to signify the same thing.

\subsection{Strong separation and separation under policies}
In this section we introduce the various notions of separability we consider in this paper. 
\begin{definition}[{Strong separation, \citet{kwon2021rl}}]
\label{def:strong-sep}
    An LMDP is $\delsep$-strongly separated if for all $m,l\in\supp(\rho)$ such that $m\neq l$, 
    \begin{align*}
        \DTV{ \TT_{m}(\cdot|s,a), \TT_{l}(\cdot|s,a) }\geq \delsep, \qquad \forall s\in\cS, a\in\cA.
    \end{align*}
\end{definition}

\begin{definition}[Decodability, \citet{efroni2022provable}]
\label{def:decode}
An LMDP $M$ is $N$-step decodable if for any trajectory $\otau_N=(s_1,a_1,\cdots,s_N)$, there is at most one latent index $m\in\supp(\rho)$ such that $\otau_N$ is reachable starting from $s_1$ in the MDP instance $M_m$ (i.e., the probability of observing $s_2,\cdots,s_N$ in $M_m$ starting at $s_1$ and taking actions $a_1,\cdots,a_{N-1}$ is non-zero). In other words, there exists a \emph{decoding function} $\phi_M$ that maps any reachable trajectory $\otau_N$ to the latent index $m$. %
\end{definition}

More generally, we can consider separability under the induced distributions over a trajectory. For any policy $\pi$, we define
\begin{align}
    \MM_{m,h}(\pi,s)\defeq \brac{ \TT_m^{\pi}((a_1,s_2,\cdots,a_{h-1},s_h)=\cdot|s_1=s) } \in \Delta((\cA\times\cS)^{h-1}),\label{eq:mm-def}
\end{align}
where $\TT_m^{\pi}$ is the probability distribution of the trajectory in the MDP instance $M_m$ and under policy $\pi$.

For any increasing function $\om:\NN\to\R$, we can define $\om$-separation as follows, which requires that the separation between any two MDP instances grow as $\om$.

\begin{definition}[Separation with respect to a policy]\label{def:sep-weak}
    An LMDP is $\om$-separated under $\pi$ if for all $m,l\in\supp(\rho)$ such that $m\neq l$,
    \begin{align*}
        \DB{ \MM_{m,h}(\pi,s), \MM_{l,h}(\pi,s) }\geq \om(h), \qquad \forall h\geq 1,s\in\cS.
    \end{align*}
\end{definition}

We also define $\om^{-1}(x)\defeq \min\set{h\geq 1: \om(h)\geq x}$. In \cref{sec:stat-upper}, we show that if the LMDP is $\om$-separated under \emph{all policies} and $H\geqsim \om^{-1}(\log(\text{problem parameters}))$, then a near-optimal policy can be learned sample-efficiently.

In particular, strong separation indeed implies separation under all policies.

\begin{proposition}\label{prop:strong-sep-to-weak}
    If the LMDP $M$ is $\delta$-strongly separated, then it is $\om_{\delta}$-separated under any policy $\pi\in\Piall$, where $\om_{\delta}(h)=\frac{\delta^2}{2}(h-1)$.
\end{proposition}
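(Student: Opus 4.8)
The plan is to exploit the fact that the \Bhattacharyya divergence, which can be written as $\DB{p,q}=-\log \mathrm{BC}(p,q)$ with the Bhattacharyya coefficient (affinity) $\mathrm{BC}(p,q)\defeq\sum_x\sqrt{p(x)q(x)}$, \emph{tensorizes} cleanly across the $h-1$ transitions of a trajectory. Concretely, I would prove the multiplicative bound $\mathrm{BC}(\MM_{m,h}(\pi,s),\MM_{l,h}(\pi,s))\leq (1-\delta^2/2)^{h-1}$ for every $m\neq l$, every $s\in\cS$, and every $\pi\in\Piall$, and then take $-\log$ of both sides, using $-\log(1-x)\geq x$, to recover exactly $\om_\delta(h)=\tfrac{\delta^2}{2}(h-1)$.

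The first step is to convert the strong-separation hypothesis, which is stated in total variation, into a per-step bound on the one-step affinities. I would use the elementary comparison $\DTV{p,q}\leq \sqrt{2\,\DH{p,q}}$, which follows from Cauchy--Schwarz applied to $\sum|p-q|=\sum|\sqrt p-\sqrt q|(\sqrt p+\sqrt q)$ together with $\sum(\sqrt p-\sqrt q)^2=2\,\DH{p,q}$ and $\sum(\sqrt p+\sqrt q)^2\leq 4$. Thus $\delta$-strong separation $\DTV{\TT_m(\cdot|s,a),\TT_l(\cdot|s,a)}\geq\delta$ gives $\DH{\TT_m(\cdot|s,a),\TT_l(\cdot|s,a)}\geq \delta^2/2$, and hence $\mathrm{BC}(\TT_m(\cdot|s,a),\TT_l(\cdot|s,a))=1-\DH{\TT_m(\cdot|s,a),\TT_l(\cdot|s,a)}\leq 1-\delta^2/2$ for all $s,a$ and all $m\neq l$.

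Next I would set up the tensorization via a backward recursion. Writing the trajectory law as the product $\TT_m^{\pi}(a_1,s_2,\dots,s_h\mid s_1=s)=\prod_{\tau=1}^{h-1}\pi_\tau(a_\tau\mid\otau_\tau)\,\TT_m(s_{\tau+1}\mid s_\tau,a_\tau)$, the key observation is that the policy factors $\pi_\tau(a_\tau\mid\otau_\tau)$ are \emph{identical} for $M_m$ and $M_l$ (the policy cannot depend on the latent index), so in $\sqrt{p_m p_l}$ they appear unsquared and only the transition terms get paired as $\sqrt{\TT_m\TT_l}$. Setting $G_t\defeq \mathrm{BC}(\MM_{m,t+1}(\pi,s),\MM_{l,t+1}(\pi,s))$ and peeling off the last transition, the sum over $(a_t,s_{t+1})$ factors as $\sum_{a_t}\pi_t(a_t\mid\otau_t)\,\mathrm{BC}(\TT_m(\cdot\mid s_t,a_t),\TT_l(\cdot\mid s_t,a_t))\leq 1-\delta^2/2$, since each affinity is bounded by the previous step and $\sum_{a_t}\pi_t(a_t\mid\otau_t)=1$. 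This gives the recursion $G_t\leq (1-\delta^2/2)\,G_{t-1}$ with base case $G_0=1$ (as $\MM_{m,1}(\pi,s)$ is the point mass at $s$), and unrolling yields $G_{h-1}\leq(1-\delta^2/2)^{h-1}$.

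The only genuinely delicate point is the bookkeeping in the recursion: verifying that the shared policy terms cancel correctly when forming $\sqrt{p_m p_l}$, and that the prefix sum collapses exactly to $G_{t-1}$ after bounding the innermost one-step affinity uniformly by $1-\delta^2/2$. Once that is in place, the conclusion is immediate — taking $-\log$ of $G_{h-1}\leq(1-\delta^2/2)^{h-1}$ and applying $-\log(1-x)\geq x$ gives $\DB{\MM_{m,h}(\pi,s),\MM_{l,h}(\pi,s)}\geq \tfrac{\delta^2}{2}(h-1)=\om_\delta(h)$, and the base case at $h=1$ matches $\om_\delta(1)=0$.
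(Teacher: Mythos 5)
Your proof is correct and follows essentially the same route as the paper's: both peel off the final transition, use that the shared policy factors sum to one so the affinity collapses to the previous level, and bound the one-step term via the total-variation separation. The only cosmetic difference is that you track the Bhattacharyya coefficient multiplicatively (via $\mathrm{BC}=1-D_{\mathrm{H}}^2\leq 1-\delta^2/2$) and take logarithms at the end, whereas the paper works additively with $D_{\rm B}$ throughout using the inequality $\DB{\PP,\QQ}\geq\frac{1}{2}\dTV^2(\PP,\QQ)$ from \cref{lem:TV-Hellinger}; the two are equivalent.
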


\begin{proposition}\label{prop:decode-to-weak}
    The LMDP $M$ is $N$-step decodable if and only if it is $\om_N$-separated under all policy $\pi\in\Piall$, where $\om_N(h)=\begin{cases}
        0, & h<N, \\
        \infty, & h\geq N.
    \end{cases}$
\end{proposition}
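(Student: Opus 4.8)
The plan is to exploit the fact that $\om_N$ takes only the values $0$ and $+\infty$, which collapses the separation condition into a purely combinatorial statement about supports of trajectory distributions. The first observation I would record is that the Bhattacharyya coefficient $\sum_x \sqrt{p(x)q(x)}$ lies in $[0,1]$, so $\DB{p,q}\geq 0$ always, with $\DB{p,q}=+\infty$ precisely when $p$ and $q$ have disjoint supports. Hence $\om_N$-separation under a policy $\pi$ imposes nothing at horizons $h<N$ (where $\om_N(h)=0$), while at horizons $h\geq N$ (where $\om_N(h)=\infty$) it is equivalent to requiring $\MM_{m,h}(\pi,s)$ and $\MM_{l,h}(\pi,s)$ to have disjoint supports for every $m\neq l$ and every $s$. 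Thus the statement ``$\om_N$-separated under all $\pi\in\Piall$'' reduces to: for all $m\neq l$, all $h\geq N$, all $s\in\cS$, and all $\pi\in\Piall$, the distributions $\MM_{m,h}(\pi,s)$ and $\MM_{l,h}(\pi,s)$ are supported on disjoint sets of trajectories.

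Next I would render these supports policy-independent. A trajectory $(a_1,s_2,\ldots,a_{h-1},s_h)$ belongs to $\supp \MM_{m,h}(\pi,s)$ iff $\pi$ assigns positive probability to each action along the history \emph{and} each one-step transition is positive under $\TT_m$. Choosing $\pi$ to be the uniform policy removes the first constraint, so $\supp\MM_{m,h}(\pi_{\unif},s)$ is exactly the set of length-$h$ trajectories reachable from $s$ in $M_m$; and for every $\pi$ one has $\supp\MM_{m,h}(\pi,s)\subseteq \supp\MM_{m,h}(\pi_{\unif},s)$. Consequently disjointness of supports for \emph{all} $\pi$ is equivalent to disjointness for the single uniform policy, which in turn says that no trajectory of length $h$ starting from $s$ is simultaneously reachable in $M_m$ and $M_l$.

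It then remains to match this reachability condition against \cref{def:decode}. The backward implication (decodable $\Rightarrow$ separated under all policies) I would obtain by a prefix argument: if some common trajectory existed at horizon $h>N$ for $M_m$ and $M_l$, then since reachability is positivity of a product of one-step transition probabilities, its length-$N$ prefix would also be reachable in both instances, contradicting $N$-step decodability; so no common trajectory exists at any $h\geq N$. The forward implication (separated $\Rightarrow$ decodable) is immediate upon specializing to $h=N$, which recovers verbatim the requirement that no length-$N$ trajectory be reachable in two distinct instances.

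The genuinely delicate points are bookkeeping rather than conceptual. I would take care to align index conventions (a ``length-$h$'' trajectory carries $h$ states $s_1,\ldots,s_h$, matching the $\otau_N$ of \cref{def:decode}), and to verify the two support facts cleanly — that the uniform policy realizes every reachable trajectory and that no policy can realize an unreachable one. The prefix-closure of reachability, which powers the forward direction, is the single step worth stating explicitly, although it follows directly from the product form of trajectory probabilities and is the main (mild) obstacle in an otherwise short argument.
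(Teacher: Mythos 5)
Your proposal is correct and follows essentially the same route as the paper: reduce $\om_N$-separation to disjointness of the supports of $\MM_{m,h}(\pi,s)$ and $\MM_{l,h}(\pi,s)$, observe that these supports are governed by reachability (positivity of the product of one-step transition probabilities) independently of the policy, and extend from $h=N$ to all $h\geq N$ by prefix-closure of reachability. You are in fact slightly more explicit than the paper, which only writes out the decodable $\Rightarrow$ separated direction and leaves the converse and the policy-independence of supports implicit.
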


The proof of \cref{prop:decode-to-weak} is provided in \cref{appdx:proof-strong-sep-to-weak}. More generally, the following lemma gives a simple criteria for all-policy separation.

\begin{lemma}\label{lem:markov-to-all}
If an LMDP is $\om$-separated under any Markov policy $\pi\in\Pimarkov$, then it is $\om$-separated under any general policy $\pi\in\Piall$.
\end{lemma}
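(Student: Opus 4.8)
The plan is to pass to the Bhattacharyya coefficient and recognize the separation quantity as a policy-linear objective, for which deterministic Markov maximizers are optimal. Concretely, for $m\neq l$ set
\[
\Phi_{m,l}(\pi,s,h)\defeq \sum_{\tau}\sqrt{\MM_{m,h}(\pi,s)(\tau)\,\MM_{l,h}(\pi,s)(\tau)}=\exp\!\big(-\DB{\MM_{m,h}(\pi,s),\MM_{l,h}(\pi,s)}\big),
\]
so that $\om$-separation under $\pi$ is exactly the statement $\Phi_{m,l}(\pi,s,h)\le e^{-\om(h)}$ for all $m\neq l$, $s\in\cS$, $h\ge 1$. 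The first step is to factorize the trajectory law: writing $\otau_t=(s,a_1,\dots,s_t)$ and using that $\pi$ does not depend on the latent index, we have $\MM_{m,h}(\pi,s)(a_1,s_2,\dots,a_{h-1},s_h)=\prod_{t=1}^{h-1}\pi_t(a_t\mid\otau_t)\,\TT_m(s_{t+1}\mid s_t,a_t)$, and likewise for $l$. The crucial observation is that the shared policy factors $\pi_t(a_t\mid\otau_t)$ are identical in both laws, so taking the geometric mean leaves them to the \emph{first} power, yielding
\[
\Phi_{m,l}(\pi,s,h)=\sum_{a_1,s_2,\dots,a_{h-1},s_h}\ \prod_{t=1}^{h-1}\pi_t(a_t\mid\otau_t)\,K_{m,l}(s_{t+1}\mid s_t,a_t),\qquad K_{m,l}(s'\mid s,a)\defeq \sqrt{\TT_m(s'\mid s,a)\,\TT_l(s'\mid s,a)},
\]
where $K_{m,l}$ is a sub-stochastic kernel (its row sums are the transition affinities, each $\le 1$).

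The second step is a standard finite-horizon dynamic-programming argument on this expression, which is the expected terminal reward $1$ of $\pi$ in the sub-stochastic system with kernel $K_{m,l}$, and is linear in each $\pi_t$. Fix $m,l,s,h$ and define $V_h(s)\equiv 1$ and, for $t<h$, $V_t(s)\defeq \max_{a\in\cA}\sum_{s'}K_{m,l}(s'\mid s,a)\,V_{t+1}(s')$, with $a^\star_t(s)$ an attaining action. Letting $W_t(\otau_t)$ denote the tail sum $\sum \prod_{j=t}^{h-1}\pi_j(a_j\mid\otau_j)K_{m,l}(s_{j+1}\mid s_j,a_j)$ (so $W_1(s)=\Phi_{m,l}(\pi,s,h)$ and $W_h\equiv 1$), a backward induction on $t$ gives $W_t(\otau_t)\le V_t(s_t)$ for every history-dependent, randomized $\pi$: indeed $W_t(\otau_t)=\sum_{a_t}\pi_t(a_t\mid\otau_t)\sum_{s'}K_{m,l}(s'\mid s_t,a_t)W_{t+1}(\otau_t,a_t,s')\le V_t(s_t)$, since averaging over $\pi_t(\cdot\mid\otau_t)$ is dominated by the maximizing action and $W_{t+1}\le V_{t+1}$ by the inductive hypothesis. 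In particular $\Phi_{m,l}(\pi,s,h)=W_1(s)\le V_1(s)$. Moreover the time-dependent deterministic Markov policy $\pi^\star=\{a^\star_t\}_{t}\in\Pimarkov$ turns every inequality into an equality, so $V_1(s)=\Phi_{m,l}(\pi^\star,s,h)$.

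The final step is to combine these. Since $\pi^\star\in\Pimarkov$, the hypothesis of $\om$-separation under all Markov policies gives $\Phi_{m,l}(\pi^\star,s,h)\le e^{-\om(h)}$; hence for any general $\pi\in\Piall$ we get $\Phi_{m,l}(\pi,s,h)\le V_1(s)=\Phi_{m,l}(\pi^\star,s,h)\le e^{-\om(h)}$, i.e. $\DB{\MM_{m,h}(\pi,s),\MM_{l,h}(\pi,s)}\ge \om(h)$, for all $m\neq l$, $s$, and $h$, which is $\om$-separation under $\pi$. The main obstacle—and the real content of the lemma—is the first step: observing that the Bhattacharyya affinity makes the shared policy weights enter linearly, converting the separation quantity into a policy-linear return for which deterministic Markov optimizers suffice. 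This linearization is special to the Bhattacharyya/Hellinger geometry and is exactly why \cref{def:sep-weak} is phrased via $\DB{\cdot}$; the analogous reduction would break for $\DTV{\cdot}$ or KL, where the squared/logarithmic structure prevents the policy factors from cancelling cleanly.
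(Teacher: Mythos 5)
Your proposal is correct and follows essentially the same route as the paper: both exploit that the Bhattacharyya coefficient factors the shared policy weights linearly against the sub-stochastic affinity kernel $\sqrt{\TT_m\TT_l}$, so a backward dynamic-programming argument shows the coefficient is maximized by a deterministic (time-dependent) Markov policy, to which the hypothesis then applies. Your write-up is in fact somewhat more explicit than the paper's, which peels off only the last step and says "inductively repeat"; your value functions $V_t$ make the induction precise.
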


\subsection{Model-based function approximation}\label{sec:model-based}

In this paper, we consider the standard model-based learning setting, where we are given an LMDP model class $\Theta$ and a policy class $\Pi\subseteq \Piall$. Each $\theta\in\Theta$ parameterizes an LMDP $M_\theta=\set{\cS,\cA,(M_{\theta,m})_{m=1}^L,H,\rho_\theta,R}$, where the state space $\cS$, action space $\cA$, horizon $H$, integer $L$ representing the number of MDPs, and reward function $R$ are shared across all models, $\rho_\theta$ specifies the mixing weights for the $L$ MDP instances under $\theta$, and the MDP instance $M_{\theta,m}$ is specified by $(\TT_{\theta,m}, \nu_{\theta,m})$ for each $m\in[L]$. 
For each model $\theta\in\Theta$ and policy $\pi\in\Piall$, we denote $\PP_\theta^\pi$ to be the distribution of $\tau_H$ in $M_\theta$ under policy $\pi$, and let $V_\theta(\pi)$ be the value of $\pi$ under $M_\theta$.

We further assume that (a) the ground truth LMDP is parameterized by a model $\theta^\star \in \Theta$ (\emph{realizability}); (b) the model class $\Theta$ admits a bounded log covering number $\log\Nt(\cdot)$ (\cref{def:optimistic-cover}); %
(c) the reward function $R$ is known and bounded, $\sum_{h=1}^H \sup_{s,a}R_h(s,a)\leq 1$. \footnote{For simplicity, we only consider deterministic known reward in this paper. For random reward $r_{h}\in\set{0,1}$ that possibly depends on the latent index $m$, we can consider the ``augmented'' LMDP with the augmented state $\tilde{s}_{h+1}=(s_{h+1},r_{h})$ similar to \citet{kwon2021rl}.} %

In addition to the assumptions stated above, we also introduce the following assumption that the ground truth LMDP admits certain low-rank structure, which is a common assumption for sample-efficient partially observable RL \citep{wang2022embed,chen2022partially,liu2022optimistic}. %
\begin{assumption}[Rank]\label{def:low-rank}
The rank of an LMDP $M_\theta$ is defined as $d_{\theta}\defeq \max_{m\in[L]} \rank(\TT_{\theta,m})$. We assume that the ground truth model $\ths$ has rank $d<\infty$. 
\end{assumption}

\paragraph{Learning goal.} The learner's goal is to output an $\eps$-optimal policy $\hpi$, i.e. a policy with sub-optimality $V_\star-V_{\ths}(\hpi)\leq \eps$, where $V_\star=\max_{\pi\in\Pi} V_{\ths}(\pi)$ is the optimal value of the ground truth LMDP.

\section{Intractability of separated LMDP with horizon below threshold}\label{sec:stat-lower}

Given the exponential hardness of learning general LMDPs, \citet{kwon2021rl} explore several structural conditions under which a near-optimal policy can be learned sample-efficiently. The core assumptions there include a strong separation condition (\cref{def:strong-sep}) together with the bound %
\begin{align}\label{eqn:kwon}
\textstyle    H\geq \delsep^{-4}\log^2(S/\delsep)\log(LSA\eps^{-1}\delsep^{-1}).
\end{align}
A natural question is whether such an assumption on the horizon is necessary. The main result of this section demonstrates the necessity of a moderately long horizon, i.e. in order to learn a \sepstr~LMDP in polynomial samples, it is necessary to have a horizon length that (asymptotically) exceeds $\frac{\log(L/\eps)}{\delta^2}$.

\newcommand{\Hthre}{H_{\rm thre}}
\newcommand{\degC}{\mathsf{d}}
\begin{theorem}[Corollary of \cref{thm:log-L,thm:log-eps}]\label{thm:lower-demo}
Suppose that there exists an integer $\degC\geq 1$ and an algorithm $\fA$ with sample complexity $\max\{S,A,H,L,\eps^{-1},\delsep^{-1}\}^{\degC}$ 
that learns an $\eps$-optimal policy with probability at least $3/4$ in any \sepstr~LMDP with $H\geq \Hthre(L,\eps,\delta)$, for some function $\Hthre(L, \eps, \delta)$. Then there exists constants $c_\degC,\eps_\degC,L_\degC$ (depending on $\degC$) and an absolute constant $\delta_0$ such that %
\begin{align*}
    \Hthre(L,\eps,\delta) \geq \frac{c_\degC\log(L/\eps)}{\delta^2}, \qquad \forall \delta\leq\delta_0,\eps\leq \eps_\degC, L\geq \max(L_\degC,\delta^{-1}).
\end{align*}
\end{theorem}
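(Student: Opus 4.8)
The plan is to prove the corollary by combining two complementary lower bounds. Writing $\log(L/\eps)=\log L+\log(1/\eps)\le 2\max\{\log L,\log(1/\eps)\}$, it suffices to separately establish (i) $\Hthre\geqsim \log L/\delta^2$ (this is \cref{thm:log-L}) and (ii) $\Hthre\geqsim \log(1/\eps)/\delta^2$ (this is \cref{thm:log-eps}); taking $c_\degC$ to be half the smaller of the two resulting constants, and the admissible ranges of $\delta,\eps,L$ to be the intersection of the two ranges, then yields the stated bound. The real work is thus in the two constructions, which I would build from a common template: exhibit a family of $\delsep$-strongly separated LMDPs (\cref{def:strong-sep}) that is information-theoretically hard to distinguish from $N=\max\{S,A,H,L,\eps^{-1},\delsep^{-1}\}^{\degC}$ episodes whenever $H$ sits below the target threshold, yet whose members demand mutually $\eps$-incompatible near-optimal policies. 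A standard testing reduction (Le Cam for (ii), Fano/Assouad over $L$ hypotheses for (i)) then shows that any $\fA$ succeeding with probability $\ge 3/4$ on the whole family would solve an impossible hypothesis test, contradicting the assumed sample complexity.

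The quantitative heart of both arguments is a tensorization estimate for strongly separated instances. Because \cref{def:strong-sep} only forces a per-step total-variation gap of $\delsep$, the one-step squared Hellinger distance between any two constituent MDPs is $\Theta(\delsep^2)$, so by tensorizing the Bhattacharyya divergence along the $H$ steps of a trajectory I expect $\DB{\MM_{m,h}(\pi,s),\MM_{l,h}(\pi,s)}\leqsim \delsep^2 H$, hence a Hellinger affinity bounded below by $\exp(-O(\delsep^2 H))$ between the trajectory laws of any two latent indices under any policy. This is precisely the quantity governing how much a single episode can reveal. For bound (ii) I would plant a binary reward whose optimal collection requires decoding the latent index with error below $\eps$, and use the affinity estimate to show the Bayes decoding error is at least $\exp(-O(\delsep^2 H))$; forcing this below $\eps$ gives $H\geqsim \log(1/\eps)/\delsep^2$. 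This is the converse direction to the decoding guarantee of \cref{prop:latent-MLE}, and the construction must arrange that value loss tracks decoding error one-for-one.

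For bound (i) the same affinity estimate feeds a Fano/Assouad argument over an $L$-fold planted family: hide a single rewarding configuration among $\Omega(L)$ candidates so that each episode's trajectory law shifts by an amount controlled by $\exp(-\Omega(\delsep^2 H))$, making the per-episode mutual information with the planted index at most $\exp(-\Omega(\delsep^2 H))$; requiring that $N$ episodes still leave this information below $\log L$ (needed to locate the planted index) forces $\delsep^2 H\leqsim \log L$ up to polynomial slack, which is where $c_\degC$ acquires its dependence on $\degC$ (through $\log N\leqsim \degC\log(\cdot)$). The hypothesis $L\ge\delta^{-1}$ is structural: one needs at least $\sim\delta^{-1}$ constituent MDPs to realize an exact per-pair separation of $\delsep$ in the coding gadget while keeping trajectory laws close, so that strong separation and indistinguishability coexist.

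The step I expect to be the main obstacle is exactly this coexistence: the construction must make every pair of MDPs $\delsep$-separated in per-step total variation (a strong, pointwise requirement from \cref{def:strong-sep}) while keeping the \emph{trajectory-level} affinity as large as $\exp(-O(\delsep^2 H))$ and simultaneously maintaining an $\eps$-sized value gap between competing policies. These pull against one another, and controlling the accumulated divergence tightly --- obtaining $\delsep^2 H$ rather than a looser $\delsep H$ or $\delsep^2 H^2$ --- requires a careful gadget (e.g.\ nearly-overlapping transition kernels differing by $\delsep$ on a shared support) together with a tensorization of Bhattacharyya/Hellinger that survives the adaptive, history-dependent policies in $\Piall$; here \cref{lem:markov-to-all} is useful, since it lets me reduce the separation bookkeeping to Markov policies and thereby make the tensorization rigorous.
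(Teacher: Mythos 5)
Your top-level reduction—decompose $\log(L/\eps)\le 2\max\{\log L,\log(1/\eps)\}$ and invoke \cref{thm:log-L} and \cref{thm:log-eps} separately—is the same skeleton the paper uses, and the paper's proof of \cref{thm:lower-demo} is indeed just this derivation. However, there is a gap in how you handle the $\log L$ branch. The hypothesis \cref{eqn:log-L-constraints} of \cref{thm:log-L} requires $Cn\log^4 n\le \delta^{-1}$ with $n=\Theta(\degC)$, so \cref{thm:log-L} is simply inapplicable once $\delta$ exceeds a $\degC$-dependent threshold $1/n_1$, even though such $\delta$ can still be far below the absolute constant $\delta_0=\frac{1}{4e^2}$ promised in the statement. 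Taking "the intersection of the two ranges," as you propose, would force $\delta_0$ to depend on $\degC$, which proves a strictly weaker statement. The paper closes this by a case split: for $\delta>1/n_1$ it invokes \cref{thm:A-exp-full} (the $A^{H-2}$-type lower bound for short horizons) with $H_L=\lfloor\frac{\log L}{C_1\log^2(n)\delta}\rfloor$, which in that regime still dominates $\frac{c_\degC\log L}{\delta^2}$. Your proposal has no analogue of this second case.

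The second, more substantive issue is in your sketch of the underlying constructions, where you lean on the pairwise trajectory-level Bhattacharyya affinity $\exp(-O(\delsep^2H))$ as "precisely the quantity governing how much a single episode can reveal." At the threshold $H\asymp\log(L/\eps)/\delsep^2$ this affinity is only about $\eps/L$, i.e.\ polynomially small, whereas the embedded combination lock of \cref{appdx:comb-lock} separates the planted model from the reference by only $\frac{1}{n}w_\theta(\pi)$ with $\sum_\theta w_\theta(\pi)=1$, so on average $A^{-(n-1)}$ per episode; to preserve the $\Om{A^{n-1}}$ sample bound the separation gadget must perturb trajectory laws by $L^{-\Omega(n)}$, far below any pairwise affinity. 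The paper achieves this not via pairwise distances but via moment matching: \cref{prop:2-family} and \cref{prop:unif-moments} construct $\delta$-separated distributions $\mu_i$ together with disjointly supported mixing weights $\nu_1,\nu_2$ whose first $K$ moments agree, so that $\DTV{\EE_{i\sim\nu_1}\mu_i^{\otimes H},\EE_{j\sim\nu_2}\mu_j^{\otimes H}}\le L^{-n}$ even though every pair $\mu_i,\mu_j$ is $\delta$-far. A Le Cam/Fano argument driven only by pairwise affinities cannot reconcile $\delta$-strong separation with indistinguishability at the $L^{-n}$ scale, so your plan as written would yield a quantitatively weaker horizon threshold than \cref{thm:log-L} and \cref{thm:log-eps} actually deliver.
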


The proof of \cref{thm:lower-demo} is presented in \cref{appdx:stat-lower}, where we also provide a more precise characterization of the sample complexity lower bounds in terms of $H$ (\cref{thm:log-L,thm:log-eps}). The lower bound of the threshold $\Hthre$ is nearly optimal, in the sense that it almost matches the learnable range (as per \cref{cor:strong-sep-upper} below).

The following theorem provides a simpler lower bound for horizon length $H=\tilde\Theta\paren{\delta^{-1}\log L}$. For such a short horizon, we show that we can recover the exponential lower bound developed in \citet{kwon2021rl} for learning non-separated LMDPs.
\begin{theorem}\label{thm:A-exp}
Suppose that $\delta\in(0,\frac{1}{4e^2}]$, $H\geq 3$, $A\geq 2$, $L\geq 2^{C\log^2(1/\delta)}$ are given such that
\begin{align}\label{eqn:A-exp-constraints}
    CH\log H \log(1/\delta) \leq \frac{\log L}{\delta}.
\end{align}
Then there exists a class of $\delta$-strongly separated LMDPs, each LMDP has $L$ MDP instances, $S=(\log L)^{\cO(\log H)}$ states, $A$ actions, and horizon $H$, such that any algorithm requires $\Om{A^{H-2}}$ samples to learn an $\frac{1}{4H}$-optimal policy with probability at least $\frac{3}{4}$. %
\end{theorem}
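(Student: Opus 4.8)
The plan is to prove \cref{thm:A-exp} by a reduction that \emph{upgrades} the exponential sample-complexity lower bound of \citet{kwon2021rl} for learning \emph{non-separated} LMDPs into one for $\delsep$-strongly separated LMDPs. The idea is to take their hard instance and overlay a \emph{separation gadget} that enforces the required $\delsep$-separation at every state-action pair while leaking only a negligible amount of information about the latent index $\ms$ \emph{within a single episode}. Because the horizon is short (of order $\delsep^{-1}\log L$), the gadget cannot disentangle the mixture in time, so the exponential barrier survives.

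First I would recall the non-separated hard family: a combination-lock-type LMDP of horizon $H$ in which reward-relevant navigation depends on the latent index, the main-state transitions are (nearly) identical across indices so that within-episode observations do not reveal $\ms$, and obtaining more than a $\Theta(1/H)$ fraction of the optimal value requires either identifying the current $\ms$ among the $L$ indices or blindly searching over $\Om{A^{H}}$ action sequences (this is the source of both the $\Om{A^{H-2}}$ count and the $\frac{1}{4H}$ suboptimality gap). The crucial structural point is that $\ms$ is resampled each episode, so cross-episode data cannot be used to pin down the \emph{within-episode} latent index; the learner can only succeed by statistically learning the mixture, which is where the exponential lower bound comes from.

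The core new ingredient is the gadget. I would augment each state $s$ with an auxiliary coordinate $z$ in a small space $\cZ$, let $z$ evolve as an autonomous (action-independent), $\ms$-dependent Markov chain with kernels $\{P_m\}_{m\in[L]}$, and design them so that (i) $\DTV{P_m(\cdot\mid z),P_l(\cdot\mid z)}\geq\delsep$ for every $z$ and $m\neq l$ — which, since the total variation of a product lower-bounds that of any marginal, immediately upgrades the whole LMDP to $\delsep$-strong separation regardless of the near-identical main transitions — while (ii) keeping the per-step information small, e.g. a ``reveal-with-probability-$\delsep$'' channel has $\DB{P_m(\cdot\mid z),P_l(\cdot\mid z)}=\cO(\delsep\log(1/\delsep))$. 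Since the chain is autonomous, Bhattacharyya divergence tensorizes additively along a trajectory, so the total within-episode information about $\ms$ is at most $H\cdot\delsep\cdot\mathrm{polylog}(H,1/\delsep)$, and the hypothesis $CH\log H\log(1/\delsep)\leq \log L/\delsep$ of \cref{eqn:A-exp-constraints} is exactly what forces this to be $\ll\log L$ (using $\log L\geq C\log^2(1/\delsep)$). Hence the gadget cannot help disentangle the mixture within an episode, and navigation stays as hard as in the non-separated case.

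I would close with the standard information-theoretic machinery: a Fano/Le Cam reduction to a testing problem over the $\Om{A^{H}}$ candidate latent configurations, now carried out for the augmented family, using the information bound above to show that $o(A^{H-2})$ episodes leave the posterior over the reward-relevant configuration too diffuse to output a $\frac{1}{4H}$-optimal policy with probability exceeding $3/4$. The state count $S=(\log L)^{\cO(\log H)}$ then follows by combining the quasi-polynomial state count of the base construction (which carries the $\log H$ in the exponent) with $\abs{\cZ}$, where a packing argument shows that $L$ pairwise $\delsep$-separated, $\cO(\delsep)$-informative kernels fit on an auxiliary space of size $\mathrm{poly}(\log L)$. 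I expect the main obstacle to be the gadget design: reconciling the worst-case requirement of $\delsep$-separation at \emph{every} state-action with the demand that the chain be information-poor and supported on only polynomially many auxiliary states, and then tracking the per-episode information precisely enough that the polylogarithmic slack in \cref{eqn:A-exp-constraints} is exactly what the argument consumes. The sharper $\delsep^2$-scaling needed to push $H$ up to $\log(L/\eps)/\delsep^2$ is deferred to \cref{thm:lower-demo}, at the cost of a weaker (polynomial rather than $A^{H-2}$) sample lower bound.
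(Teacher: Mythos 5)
Your high-level plan — take the non-separated combination-lock instance of \citet{kwon2021rl} and overlay action-independent auxiliary observations that enforce $\delsep$-separation while hiding the latent index — is indeed the paper's strategy. But the mechanism you propose for ``hiding'' is not the one the paper uses, and I believe it genuinely fails. The paper does \emph{not} attach one auxiliary kernel to each of the $n$ original latent MDPs and then budget the leaked information; instead it \emph{splits} each original instance $M_m$ into a mixture $\{M_m\otimes\mu_i\}_{i\in\supp(\xi_m)}$ drawn from a moment-matched family (\cref{def:dist-family}, \cref{prop:2-family}, \cref{lem:family-tensor}): the $\mu_i$ are pairwise $\delsep$-separated in TV (giving strong separation), the supports $\supp(\xi_m)$ are disjoint, and the mixtures $\EE_{i\sim\xi_m}[\mu_i^{\otimes H}]$ are \emph{identical} across $m$ (a $\gamma=0$ family). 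Hence the auxiliary observations are exactly independent of the reward-relevant group index, and \cref{prop:tensor-comb-lock} transfers the $\Om{A^{H-2}}$ bound verbatim. This is also why $L$ must be as large as \cref{eqn:A-exp-constraints} demands: the constraint is ``the moment-matched family of size $\approx(1/\delta)^{\cO(\delta H\log H)}$ fits inside $L$,'' not ``the per-episode information is $\ll\log L$.''

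Your information-budget argument has two concrete problems. First, the criterion ``accumulated Bhattacharyya divergence $\ll\log L$'' does not imply the auxiliary channel is useless: by Pinsker (\cref{eqn:DB-TV}), any two kernels that are $\delsep$-separated at every state accumulate divergence at least $H\delsep^2/2$ over an episode, and in a large part of the regime permitted by \cref{eqn:A-exp-constraints} (e.g.\ $H\geq 2\delta^{-2}\log H$, which is compatible with the upper constraint for large $L$) this already exceeds $\log n$ for the $n=H-1$ base indices, so a one-kernel-per-index gadget makes $\ms$ identifiable with high probability. Second, your illustrative gadget makes this failure explicit: a ``reveal-with-probability-$\delsep$'' channel discloses $\ms$ exactly with probability $1-(1-\delsep)^{H}\to 1$ once $H\gg\delsep^{-1}$, and in the combination-lock instance knowledge of $\ms$ is fatal — conditioned on $\ms=m$, the step at which the trajectory drops to the absorbing state reveals which entries of the played action sequence match the secret $\ba$, so the learner recovers $\ba$ in $\cO(nA)$ episodes rather than $A^{n-1}$. (The whole point of \cref{lem:comb-lock}(a) is that \emph{without} knowing $\ms$ the drop time is uniform and carries no information.) Small pairwise divergence is simply not the same as statistical independence from the reward-relevant structure; only the mixture/moment-matching construction delivers the latter, and it is also what supplies the $L$ MDP instances that your reduction never produces.
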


\paragraph{Proof idea for \cref{thm:lower-demo}.} \cref{thm:lower-demo} is proved by transforming the known hard instances of general LMDPs (\cref{appdx:comb-lock}) to hard instances of $\delsep$-strong separated LMDPs. In particular, given a LMDP $M$, we transform it to a \sepstr~LMDP $M'$, so that each MDP instance $M_m$ of $M$ is transformed to a mixture of MDPs $\{ M_{m,j} \}$, where each $M_{m,j}=M_i\otimes \mu_{m,j}$ is an MDP obtained by augmenting $M_i$ with a distribution $\mu_{m,j}$ of the auxiliary observation (this operation $\otimes$ is formally defined in \cref{def:MDP-tensor}). The \sepstr~property of $M'$ is ensured as long as $\DTV{\mu_{m,j},\mu_{m',j'}}\geq\delta$ for different pairs of $(m,j)\neq(m',j')$, and intuitively, $M'$ is still a hard instance if the auxiliary observation does not reveal much information of the latent index. %

Such a transformation is possible as long as $H=\frac{o(\log L)}{\delsep^2}$. Here, we briefly illustrate how the transformation works for LMDP $M$ consisted of only 2 MDP instances $M_1, M_2$. Using \cref{prop:2-family-demo}, we define the augmented MDPs $M_{1,j}=M_1\otimes \mu_{j}$ for $j\in\supp(\nu_1)$ and $M_{2,j}=M_2\otimes \mu_{j}$ for $j\in\supp(\nu_2)$, and assigning the mixing weights based on $\nu_1, \nu_2$. Then, result (1) ensures the transformed LMDP is \sepstr, and result (2) ensures the auxiliary observation does not reveal much information of the latent index. The details of our transformation for general LMDPs is presented in \cref{appdx:lower-tool}.

\begin{proposition}[Simplified version of \cref{prop:2-family}]\label{prop:2-family-demo}
Suppose that parameter $\delta, c>0$ and integer $n\geq 2$ satisfy $Cn\log^2 n\leq \min\set{c^{-1},\delta^{-1}}$. Then for $L\geq n^2$, $H\leq \frac{c\log L}{\delta^2}$, there exists $L'\leq L$ distributions $\mu_1,\cdots,\mu_{L'}$ over a set $\cO$ satisfying $|\cO| \leq O(\log L)$, such that:

(1) $\DTV{\mu_i,\mu_j}\geq \delta$ for $i\neq j$.

(2) There exists $\nu_1,\nu_2\in\Delta([L'])$ such that $\supp(\nu_1)$ and $\supp(\nu_2)$ are disjoint, and
\begin{align*}
    \DTV{ \EE_{i\sim \nu_1} \mu_i^{\otimes H}, \EE_{j\sim \nu_2} \mu_j^{\otimes H} } \leq L^{-n},
\end{align*}
where for any distribution $\mu$, $\mu^{\otimes H}$ is the distribution of $(o_1,\cdots,o_H)$ where $o_h \sim \mu$ independently.
\end{proposition}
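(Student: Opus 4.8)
The plan is to reduce the indistinguishability statement (2) to a moment-matching problem and to meet the separation requirement (1) by a packing argument, both on a common alphabet $\cO$ of size $\Theta(\log L)$. Concretely, I would fix a reference distribution $\bar\mu$ on $\cO$ (e.g.\ uniform) and write each candidate as a perturbation $\mu_i(o)=\bar\mu(o)\,(1+\gamma\,\rho_i(o))$ with $\EE_{o\sim\bar\mu}[\rho_i(o)]=0$, $\lVert\rho_i\rVert_\infty\le 1$, and $\gamma\asymp\delta$. Then $\DTV{\mu_i,\mu_j}=\tfrac12\gamma\,\EE_{\bar\mu}\lvert\rho_i-\rho_j\rvert$, so (1) becomes a lower bound on the pairwise $L^1$-distance of the $\rho_i$; taking the $\rho_i$ to be suitably normalized codewords of a binary code of length $\Theta(\log L)$ with linear minimum distance supplies $\ge L$ such vectors that are pairwise $\delta$-separated, which fixes $\cO$ and the points $\mu_1,\dots,\mu_{L'}$.

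The heart of (2) is a second-moment computation. Bounding $\DTV{P_1,P_2}\le \tfrac12\lVert (P_1-P_2)/\bar\mu^{\otimes H}\rVert_{L^2(\bar\mu^{\otimes H})}$ for $P_a=\EE_{i\sim\nu_a}\mu_i^{\otimes H}$, and using the identity $\sum_{o}\mu_i(o)\mu_j(o)/\bar\mu(o)=1+\gamma^2\langle\rho_i,\rho_j\rangle_{\bar\mu}$, the squared norm collapses to
\[
 \sum_{k\ge 0}\binom{H}{k}\gamma^{2k}\;\EE_{\vec o\sim\bar\mu^{\otimes k}}\Big(\EE_{\nu_1}\Phi_{\vec o}-\EE_{\nu_2}\Phi_{\vec o}\Big)^2,\qquad \Phi_{\vec o}(i)=\prod_{l}\rho_i(o_l).
\]
If $\nu_1$ and $\nu_2$ have matching joint moments of $(\rho_i(o))_{o\in\cO}$ up to total degree $K$, every term with $k\le K$ vanishes, leaving at most $4\sum_{k>K}\binom{H}{k}\gamma^{2k}$. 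Since $H\le c\log L/\delta^2$ gives $H\gamma^2\le O(c\log L)$, choosing $K=\Theta(n\log L)$ forces each surviving summand below $2^{-k}$, so the tail is $\lesssim 2^{-K}\le L^{-2n}$ and hence $\DTV{P_1,P_2}\le L^{-n}$.

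It remains to produce $\nu_1,\nu_2$ with \emph{disjoint} supports and matching moments to order $K$. For a one-parameter sub-family this is the classical quadrature trick: on $K+2$ equally spaced parameter values the alternating signed measure $\sum_{j}(-1)^j\binom{K+1}{j}\delta_{x_j}$ annihilates all polynomials of degree $\le K$, and splitting the even- and odd-indexed atoms yields two disjoint-support probability measures with identical moments up to order $K$. In the genuinely multivariate setting the matched-moment conditions are $\binom{|\cO|+K}{K}$ linear equations, so one must check that the packing of $\delta$-separated points in $\Delta(\cO)$ is large enough to leave a two-sided (disjoint) null vector; this is where $L\ge n^2$ and the smallness of the perturbation scale are used.

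The main obstacle is the tension between (1) and (2): separation wants the $\mu_i$ spread out in $\Delta(\cO)$, while matching moments to the required order $K\sim n\log L$ wants them to be few and low-dimensional, and the alphabet is capped at $|\cO|=O(\log L)$. A single-parameter gadget (e.g.\ $\mu_t=\mathrm{Bin}(d,t)$, whose $H$-fold product has a one-dimensional sufficient statistic) is clean but, by the resolution-versus-separation trade-off, only reaches the regime $\delta^2H=O(1)$, whereas the statement needs $\delta^2H$ up to $c\log L$. Bridging this gap — obtaining matched-moment order $\Theta(n\log L)$ from an alphabet of size $\Theta(\log L)$ while preserving $\delta$-separation — is precisely what the quantitative hypothesis $Cn\log^2 n\le\min\{c^{-1},\delta^{-1}\}$ together with $L\ge n^2$ is calibrated to permit, and constructing the explicit separated family alongside the matched disjoint mixtures is the real work of the proof.
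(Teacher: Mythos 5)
Your overall architecture is the same as the paper's: perturbations of a uniform reference measure, a chi-squared/orthogonal-polynomial expansion reducing the TV distance between the two mixtures to differences of moment tensors (this is exactly \cref{prop:unif-moments}), and a disjoint-support pair $\nu_1,\nu_2$ obtained by splitting a null vector of the moment-matching linear system (this is \cref{cor:prob-matching}). The gap is that you never close the step you yourself flag as ``the real work,'' and the specific ansatz you propose cannot close it. With your parameters --- alphabet size $d=|\cO|=\Theta(\log L)$ and perturbations $\rho_i$ given by binary codewords at a single scale $\gamma\asymp\delta$ --- the tail bound forces $K=\Theta(n\log L)$ (the terms $\binom{H}{k}\gamma^{2k}$ only start decaying geometrically once $k\gtrsim eH\gamma^2\asymp c\log L$, and you then need $2^{-K}\leq L^{-2n}$), while the moment-matching system has about $\binom{d+K}{d}\approx(eK/d)^{d}=(en)^{\Theta(\log L)}=L^{\Theta(\log n)}$ linear equations. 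A null vector supported on your family requires strictly more points than equations, but the proposition caps you at $L'\leq L$ points. So no disjoint moment-matching pair exists inside the family you construct, and statement (2) fails for it. The one-dimensional quadrature gadget you mention does not rescue this: as you note yourself, it only reaches $\delta^2H=O(1)$, far short of $\delta^2H\asymp c\log L$.

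The paper's resolution, which is the missing idea, is to decouple the perturbation radius from the separation scale. It takes the dimension much smaller, $d\asymp\log L/\log^2 n$, places the points $\bx_i$ in the $\ell_\infty$-ball of radius $\delta_\infty\asymp\lambda\delta$ with $\lambda\asymp n\log^2 n$ (so roughly $\lambda$ ``levels'' per coordinate rather than two), and uses an $\ell_1$-packing at separation $2d\delta$ to get $N\approx(e\lambda)^d$ points whose induced distributions are still pairwise $\delta$-separated in TV. With $K=\lceil\lambda d\rceil$, the moment-condition count $\binom{K+d-1}{d}\approx(e\lambda)^d$ is of the same order as the packing number, with constants arranged so that packing exceeds conditions; meanwhile $eH\delta_\infty^2/K\leq e^{-2}$ keeps the tail geometric and $\lambda d\gtrsim n\log L$ drives it below $L^{-2n}$. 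Your single-scale construction has no analogue of the free parameter $\lambda$, which is precisely what lets the paper trade alphabet size against moment order while keeping both the separation and the point count; without it the counting obstruction above is fatal.
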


\paragraph{Tighter threshold for decodable LMDPs} For \sepstr~LMDP, \cref{thm:lower-demo} gives a lower bound of $\Hthre$ that scales as $\frac{\log(L/\eps)}{\delsep^2}$ and nearly matches the upper bounds (\cref{cor:strong-sep-upper}). The following result shows that, for $N$-step decodable LMDPs, we can identify the even tighter threshold of $H$: when $H\leq 2N-\omega(1)$, there is no sample-efficient algorithm; by contrast, when $H\geq 2N$, OMLE is sample-efficient (\cref{cor:decode-upper}). %

\begin{theorem}\label{thm:decode-lower}
Suppose that integers $N\geq n\geq 2$, $A\geq 2$ are given. Then for $H=2N-n$, there exists a class of $N$-step decodable LMDPs with $L=n$, $S=3N-1$ states, $A$ actions, and horizon $H$, such that any algorithm requires $\Om{A^{n-1}}$ samples to learn an $\frac{1}{4n}$-optimal policy with probability at least $\frac{3}{4}$.
\end{theorem}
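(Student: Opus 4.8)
The plan is to prove the lower bound by exhibiting an explicit family of hard instances together with a needle-in-a-haystack (Le Cam / Yao) argument. I would index the family by a hidden action ``combination'' $\sigma\in[A]^{n-1}$, so that it has exactly $A^{n-1}$ members, all $N$-step decodable with $L=n$ latent MDPs, $S=3N-1$ states and horizon $H=2N-n$. The latent index $m\in[n]$ is part of the shared skeleton (it is the quantity the decoding structure reveals at step $N$), whereas $\sigma$ is the unknown parameter the learner must identify in order to act near-optimally. The goal is then to show that any algorithm running for $T=o(A^{n-1})$ episodes produces a transcript that is essentially independent of $\sigma$, and hence an output policy that cannot be $\tfrac{1}{4n}$-optimal for a randomly drawn $\sigma$.

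For the construction I would use three chains to meet the $S=3N-1$ budget: a \emph{decoding spine} of length $N$, whose step-$N$ observation pins down $m$ (giving $N$-step decodability via \cref{def:decode}, with no shorter window sufficing); a \emph{solving/reward chain} of length $N$ reachable only by playing the index-dependent correct actions; and an absorbing \emph{dead chain} of length $N-1$ entered upon any deviation. The decisive accounting is that reaching the reward requires a length-$N$ ``decode'' effort and a length-$N$ ``solve'' effort that compete for the same horizon: done sequentially they cost $2N$ steps, which is exactly why $H\ge 2N$ makes the instance learnable (cf.\ \cref{cor:decode-upper}). With $H=2N-n$ the learner is $n$ steps short, so the two phases must \emph{overlap} in $n$ steps, during which the agent must commit to $n-1$ reward-relevant actions \emph{before} decoding has identified $m$ --- i.e.\ it must guess the planted combination $\sigma$. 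The reward is scaled so that this overlap combination is worth a value advantage of more than $\tfrac{1}{2n}$ over any $\sigma$-oblivious behavior, so that $\tfrac{1}{4n}$-optimality is impossible without knowing $\sigma$.

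The lower bound itself then rests on a coupling. I would design the dynamics so that on any episode in which the agent fails to play the planted combination at the overlap steps, the entire trajectory law --- including the decoding observations, which leak $m$ but not $\sigma$ --- is identical across all members of the family and equal to that of a single $\sigma$-free reference LMDP. Consequently the only event carrying information about $\sigma$ under any fixed policy is ``hitting'' the length-$(n-1)$ combination, which, since no feedback leaks before the correct prefix is completed, has probability at most $A^{-(n-1)}$ per episode (in the sense averaged over a uniform planted $\sigma$). A union bound over $T$ episodes gives $\DTV{\PP_{\sigma},\PP_{\mathrm{ref}}}\le T\,A^{-(n-1)}$ on the interaction transcript, so for $T=o(A^{n-1})$ the output policy is essentially independent of $\sigma$; combined with the $\tfrac{1}{2n}$ value separation and the fact that a single policy can place only $O(A^{-(n-1)})$ total mass on the correct combination, Yao's principle drives the success probability below $3/4$, yielding the claimed $\Om{A^{n-1}}$ bound.

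The main obstacle is reconciling two opposing demands inside one construction: $N$-step decodability \emph{forces} the latent index to be revealed within $N$ steps, yet the instance must stay hard precisely because, under horizon $2N-n$, this revelation arrives too late to be acted upon. The crux is engineering the shared-horizon ``decode vs.\ solve'' structure so that the unique way to beat the $\tfrac{1}{4n}$ gap is to guess the $n-1$ overlap actions blindly (giving exactly $A^{n-1}$ rather than $A^{n}$ or $A^{N}$), while ensuring the decoding observations are statistically independent of $\sigma$ so that decodability and the sample barrier coexist. Once the dynamics are specified with this property --- and the step count, state count $3N-1$, and $\Theta(1/n)$ value gap are verified, with the instance becoming OMLE-learnable at $H\ge 2N$ to witness sharpness --- the information-theoretic reduction above is standard.
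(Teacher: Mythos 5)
Your information-theoretic skeleton matches the paper's: the hard family is indexed by a planted action sequence in $\cA^{n-1}$, there is a $\sigma$-free reference LMDP whose transcript law coincides with $\PP_\sigma$ off the event that the planted sequence is played, the per-episode ``hitting'' masses sum to one across the $A^{n-1}$ instances (which is what makes the near-optimal policy sets disjoint and drives the averaged TV bound), and the conclusion follows exactly as in \cref{prop:tensor-comb-lock}. What is missing is the construction itself, and for this theorem the construction \emph{is} the proof: the statement asserts the existence of an $N$-step decodable family with exactly $3N-1$ states and horizon $2N-n$, and your three-chain blueprint does not compile as described. An agent occupies one state per step, so it cannot traverse both a length-$N$ ``decoding spine'' and a length-$N$ ``solving chain'' within $2N-n$ steps unless the two are merged into a single chain; once merged, you must still verify \cref{def:decode} for length-$N$ windows starting from \emph{every} state, including states on the dead chain that the within-episode dynamics never reach, and that is where the $3N-1$ state budget is actually spent. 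Moreover, the mechanism you lean on --- ``the agent must commit to $n-1$ actions before decoding has identified $m$'' --- is not what makes the instance hard: if the reward-relevant actions depended on the latent index $m$, the resulting difficulty would be a planning obstruction present even for an agent that knows the model, which is not a sample-complexity statement; and if they depend only on $\sigma$, then knowing $m$ is irrelevant and the decode/solve tension plays no role.

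The paper resolves the tension differently (\cref{appdx:comb-lock-decode}): it reuses the $n$-step combination lock of \cref{appdx:comb-lock} essentially unchanged --- whose on-path trajectory already determines $m$ by the argument in \cref{lem:comb-lock}(a), so revealing $m$ costs nothing --- prepends $k=N-n$ dummy states so the horizon becomes $n+2k=2N-n$, replaces the absorbing failure state by a dead chain of $N-1$ states, and appends $n$ terminal states that are reached only after more than $H$ steps and hence are never visited in an episode. Those terminal states exist solely to certify $N$-step decodability from starting states on the dead chain; decodability from the lock states follows from the lock itself, and the $\Om{A^{n-1}}$ bound is then inherited verbatim from \cref{lem:comb-lock} and \cref{prop:tensor-comb-lock}. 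So: right reduction, but the existence claim at the heart of the theorem is left unestablished, and the architectural intuition offered for it would not survive formalization.
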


\section{Learning separated LMDPs with horizon above threshold}\label{sec:stat-upper}

In this section, we show that \sepstr~LMDP, or more generally, any LMDP under suitable policy separation assumptions, can be learned sample-efficiently, as long as the horizon $H$ exceeds a threshold that depends on the separation condition and the logarithm of other problem parameters.

A crucial observation is that if that an LMDP $M_\theta$ is $\om$-separated under policy $\pi$, then the agent can ``decode'' the latent index from the trajectory $\otau_h$, with error probability decaying exponentially in $\om(h)$. 

\begin{proposition}
\label{prop:latent-MLE}
Given an LMDP $M_\theta$ and parameter $W\geq 1$, for any trajectory $\otau_W=(s_1,a_1,\cdots,s_{W})$, we consider the latent index with maximum likelihood under $\otau_W$:
\begin{align}\label{eqn:MLE-traj}
    m_\theta(\otau_W)\defeq \argmax_{m\in\supp(\rho_\theta)}~ \log \rho_\theta(m)+ \log \nu_{\theta,m}(s_1)+\sum_{h=1}^{W-1} \log \TT_{\theta,m}(s_{h+1}|s_{h},a_{h}).
\end{align}
Then as long as $M_\theta$ is $\om$-separated under $\pi$, the \emph{decoding error} can be bounded as
\begin{align}\label{eqn:def-perr}
    \perr_{\theta,W}(\pi)\defeq \tPP_\theta^{\pi}(m_\theta(\otau_W)\neq m^\star)
    \leq L\exp\paren{-\om(W)},
\end{align}
where we recall that $\tPP_\theta^{\pi}$ is the joint probability distribution of the latent index $\ms$ and trajectory $\tau_H$ in the LMDP $M_\theta$ under policy $\pi$.
\end{proposition}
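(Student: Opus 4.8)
The plan is to run a maximum-likelihood / hypothesis-testing argument organized around the Bhattacharyya coefficient, so that the separation hypothesis in \cref{def:sep-weak} plugs in directly. First I would condition on the true latent index: writing $\bar{P}_m$ for the law of $\otau_W=(s_1,a_1,\ldots,s_W)$ under $\tPP_\theta^\pi$ conditioned on $m^\star=m$, we have $\tPP_\theta^\pi(m_\theta(\otau_W)\neq m^\star)=\sum_m \rho_\theta(m)\,\EE_{\bar{P}_m}[\indic{m_\theta(\otau_W)\neq m}]$. The key observation is that $\bar{P}_m(\otau_W)=\nu_{\theta,m}(s_1)\,\MM_{m,W}(\pi,s_1)(a_1,\ldots,s_W)$, and that the policy factors $\prod_h \pi_h(a_h\mid\otau_h)$ are common to every latent index, hence cancel in all the likelihood comparisons below; in particular the prior-weighted score in \cref{eqn:MLE-traj} satisfies $\rho_\theta(m)\bar{P}_m\geq\rho_\theta(l)\bar{P}_l$ if and only if the score of $m$ dominates that of $l$.

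Next I would control the conditional error. By definition of the argmax, on the event $\{m_\theta(\otau_W)\neq m\}$ there is some alternative $l\neq m$ whose prior-weighted likelihood is at least as large, so a union bound gives $\EE_{\bar{P}_m}[\indic{m_\theta\neq m}]\leq \sum_{l\neq m}\EE_{\bar{P}_m}\big[\indic{\rho_\theta(l)\bar{P}_l\geq\rho_\theta(m)\bar{P}_m}\big]$. On each such event $\sqrt{\rho_\theta(l)\bar{P}_l/(\rho_\theta(m)\bar{P}_m)}\geq 1$, so bounding the indicator by this square root and taking expectation yields the asymmetric likelihood-ratio bound $\EE_{\bar{P}_m}[\indic{\rho_\theta(l)\bar{P}_l\geq\rho_\theta(m)\bar{P}_m}]\leq \sqrt{\rho_\theta(l)/\rho_\theta(m)}\sum_{\otau_W}\sqrt{\bar{P}_m\bar{P}_l}$, where the trailing Bhattacharyya coefficient equals $\exp(-\DB{\bar{P}_m,\bar{P}_l})$.

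I would then relate this coefficient to the separation hypothesis. Using $\bar{P}_m=\nu_{\theta,m}\cdot\MM_{m,W}(\pi,\cdot)$ and factoring the sum over the initial state, $\sum_{\otau_W}\sqrt{\bar{P}_m\bar{P}_l}=\sum_{s_1}\sqrt{\nu_{\theta,m}(s_1)\nu_{\theta,l}(s_1)}\,\exp\big(-\DB{\MM_{m,W}(\pi,s_1),\MM_{l,W}(\pi,s_1)}\big)$. The $\om$-separation of $M_\theta$ under $\pi$ bounds each inner exponential by $\exp(-\om(W))$ uniformly in $s_1$, while $\sum_{s_1}\sqrt{\nu_{\theta,m}(s_1)\nu_{\theta,l}(s_1)}\leq 1$ by Cauchy--Schwarz, so the coefficient is at most $\exp(-\om(W))$.

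Finally I would reassemble the bound. Combining the previous two steps gives $\EE_{\bar{P}_m}[\indic{m_\theta\neq m}]\leq e^{-\om(W)}\sum_{l\neq m}\sqrt{\rho_\theta(l)/\rho_\theta(m)}$; weighting by $\rho_\theta(m)$ and summing over $m$ converts the prior ratio into $\sqrt{\rho_\theta(m)\rho_\theta(l)}$, and a last Cauchy--Schwarz step $\sum_{m,l}\sqrt{\rho_\theta(m)\rho_\theta(l)}=(\sum_m\sqrt{\rho_\theta(m)})^2\leq|\supp(\rho_\theta)|\leq L$ delivers the claimed bound $L\exp(-\om(W))$. The only genuine subtlety, and the step I would watch most carefully, is the bookkeeping of the prior $\rho_\theta$ and the initial-state laws $\nu_{\theta,m}$: the separation assumption is stated conditionally on $s_1$ and with no prior, so the asymmetric likelihood-ratio bound is engineered precisely so that the stray factor $\sqrt{\rho_\theta(l)/\rho_\theta(m)}$ is reabsorbed by the outer prior weighting.
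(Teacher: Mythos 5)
Your proof is correct and follows essentially the same route as the paper: both reduce the MAP/MLE decoding error to a sum of pairwise terms $\sqrt{\rho_\theta(m)\rho_\theta(l)}\exp(-\DB{\MM_{m,W}(\pi,s_1),\MM_{l,W}(\pi,s_1)})$, condition on $s_1$ to invoke the $\om$-separation, and count $\sum_{m\neq l}\sqrt{\rho_\theta(m)\rho_\theta(l)}\leq L$. The only (immaterial) difference is how the pairwise Bhattacharyya bound is reached — you use a union bound plus $\indic{q\geq p}\leq\sqrt{q/p}$, while the paper writes the error as the posterior overlap $\sum_{m\neq l}\sum_{\otau_W}\tPP(m,\otau_W)\tPP(l,\otau_W)/\tPP(\otau_W)$ and applies AM--GM to the denominator.
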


The OMLE algorithm was originally proposed by \citet{liu2022partially} for learning revealing POMDPs, and it was later adapted for a broad class of model-based RL problems \citep{zhan2022pac,chen2022unified,chen2022partially,liu2023optimistic}. Based on the observation above, we propose a variant of the OMLE algorithm for learning separated LMDPs.

\begin{algorithm}[t]
\DontPrintSemicolon
\SetNoFillComment
\caption{\textsc{Optimistic Maximum Likelihood Estimation (OMLE)}}\label{alg:OMLE}
    \textbf{Input:} Model class $\Theta$, policy class $\Pi$, exploration strategy $\modf{\cdot}:\Pi\to\Piall$, parameter $\beta>0, \epssep\in(0,1]$, $W\geq 1$.

\textbf{Initialize:} $\Theta^1=\Theta$, $\cD=\{\}$.

\For{$k=1,\ldots,K$}{
    Set \tcp*[f]{\footnotesize See \cref{eqn:def-perr} for definition of $e_{\theta,W}$.}
    \begin{align*}
    (\theta^k,\pi^k) = \argmax_{(\theta,\pi)} V_\theta(\pi), \qquad\st \theta\in\Theta^k, \perr_{\theta,W}(\pi)\leq \epssep.
    \end{align*}
    
    Execute $\piexp^k=\modf{\pi^k}$ to collect a trajectory $\tau^k_H$, and add  $(\piexp^k,\tau^k_H)$ into $\cD$.
    
    Update confidence set
    \begin{equation*}
    \textstyle
    \Theta^{k+1} = \bigg\{\hat\theta \in \Theta: \sum_{(\pi,\tau)\in\cD} \log \PP_{{\hat\theta}}^{\pi} (\tau)
    \ge \max_{ \theta \in\Theta} \sum_{(\pi,\tau)\in\cD} \log \PP^{\pi}_{{\theta}}(\tau) -\beta \bigg\}. 
    \end{equation*} 
}

\textbf{Output:} $\hat{\pi}\defeq\unif(\set{\pi^1,\cdots,\pi^K})$.
\end{algorithm}

\paragraph{Algorithm.} On a given class $\Theta$ of LMDPs, the OMLE algorithm (\cref{alg:OMLE}) iteratively performs the following steps while building up a dataset $\cD$ consisting of trajectories drawn from the unknown LMDP: %
\begin{enumerate}
\item (Optimism) Construct a confidence set $\Theta^k \subseteq \Theta$ based on the log-likelihood of all trajectories within dataset $\cD$. The optimistic (model, policy) pair $(\theta^k, \pi^k)$ is then chosen greedily while ensuring that the decoding error %
$\perr_{\theta^k,W}(\pi^k)$ is small. %
\item (Data collection) For an appropriate choice of \emph{exploration strategy} $\modf{\cdot}$ (described in \cref{def:policy-mod}), execute the explorative policy $\piexp^k=\modf{\pi^k}$, %
  and then collect the trajectory into $\cD$. %
\end{enumerate}

\paragraph{Guarantees.}
Under the following assumption on all-policy separation with a specific growth function $\om$, the OMLE algorithm can learn a near-optimal policy sample efficiently. In particular, when $\Theta$ is the class of all \sepstr~LMDPs, then \cref{def:all-policy-sep} is fulfilled automatically with $\Pi=\Piall$ and $\om(h)=\frac{\delta^2}{2}(h-1)$ (\cref{prop:strong-sep-to-weak}).

\begin{assumption}[Separation under all policies]\label{def:all-policy-sep}
For any $\theta\in\Theta$ and any $\pi\in\Pi$, $\theta$ is $\om$-separated under $\pi$.
\end{assumption}

\begin{theorem}\label{thm:all-policy-sep-demo}
Suppose that \cref{def:low-rank} and \cref{def:all-policy-sep} hold. We fix any $\piexp\in\Pi$, set $\modf{\cdot}$ as in \cref{def:policy-mod}, and choose the parameters of \cref{alg:OMLE} so that
\begin{align*} 
    &\beta\geq \betaN, \qquad
    K=C_0\frac{Ld^2AH^2\iota\beta}{\eps^2}, \qquad
    \epssep=\frac{\eps^2}{C_0Ld^2H^2\iota},
\end{align*}
where $\iota=\log(LdH/\eps)$ is a log factor, $C_0$ is a large absolute constant. Then, as long as $W$ is suitably chosen so that
\begin{align}\label{eqn:W-H-om}
    W\geq \om^{-1}(\log (L/\epssep)),\qquad
    H-W\geq \om^{-1}(\log (2L)),
\end{align}
\cref{alg:OMLE} outputs an $\eps$-optimal policy $\hpi$ with probability at least $1-p$ after observing $K$ trajectories.
\end{theorem}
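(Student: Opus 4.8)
The plan is to follow the standard optimism-plus-regret-decomposition template for \omle, with the key new ingredient being the latent decoding guarantee of \cref{prop:latent-MLE}. First I would establish \emph{optimism}: by a routine MLE concentration argument over the optimistic cover (this is exactly what the choice $\beta\geq\betaN$ is calibrated for), with probability at least $1-p$ the true model satisfies $\ths\in\Theta^k$ for every $k\in[K]$. Moreover, since \cref{def:all-policy-sep} guarantees that $\ths$ is $\om$-separated under every $\pi\in\Pi$, \cref{prop:latent-MLE} bounds the decoding error $\perr_{\ths,W}(\pi)\leq L\exp(-\om(W))\leq\epssep$, where the last inequality uses $W\geq\om^{-1}(\log(L/\epssep))$. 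Hence $(\ths,\pis)$ is feasible for the constrained maximization defining $(\theta^k,\pi^k)$, so $V_{\theta^k}(\pi^k)\geq V_{\ths}(\pis)=\Vs$. Since the output is $\hpi=\unif(\set{\pi^1,\cdots,\pi^K})$, its suboptimality decomposes as
\begin{align*}
\Vs - V_{\ths}(\hpi) = \frac{1}{K}\sum_{k=1}^K\paren{\Vs - V_{\ths}(\pi^k)} \leq \frac{1}{K}\sum_{k=1}^K\paren{V_{\theta^k}(\pi^k)-V_{\ths}(\pi^k)},
\end{align*}
and it suffices to control the cumulative on-policy value gap between the optimistic model and the truth.

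Second, I would invoke the in-sample MLE guarantee: on the same high-probability event, every $\theta^k\in\Theta^k$ fits the historical data well, i.e. $\sum_{i<k}\DH{\PP_{\theta^k}^{\piexp^i},\PP_{\ths}^{\piexp^i}}\leqsim\beta$ (cf. \citet{liu2022partially,chen2022partially}). The heart of the proof is to turn this \emph{historical} control into a bound on the \emph{current-round} gaps $V_{\theta^k}(\pi^k)-V_{\ths}(\pi^k)$. The first ingredient is a \emph{decoding reduction}: because both $\theta^k$ and $\ths$ admit accurate MLE decoders from the length-$W$ prefix (error $\leq\epssep$), conditioning on the decoded index collapses each LMDP trajectory law, up to total-variation error of order $\epssep$, to that of a single rank-$d$ MDP. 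The two conditions in \cref{eqn:W-H-om} are exactly what make this valid: $W\geq\om^{-1}(\log(L/\epssep))$ controls the prefix-decoding error, while $H-W\geq\om^{-1}(\log(2L))$ ensures the length-$(H-W)$ suffix suffices to re-identify the index with constant probability, which is what lets an explored transition be attributed to the correct latent MDP.

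The second ingredient is a \emph{low-rank elimination} argument combined with a change of measure. The exploration modifier $\modf{\cdot}$ (\cref{def:policy-mod}) follows $\pi^k$ to a uniformly random step, injects a uniform action, and then runs a fixed decoding rollout, so importance-weighting an on-policy quantity back to $\piexp^k$ costs at most a factor $AH$. Once the decoding reduction has collapsed each latent component to a single MDP, \cref{def:low-rank} places the per-step transition predictions in a $d$-dimensional family, so a generalized-eluder / bilinear potential argument converts the historical Hellinger control into a bound on the number of rounds with large current on-policy gap, with eluder length $\tbO{d}$ per latent index and $\tbO{Ld}$ in total. Summing and applying Cauchy--Schwarz yields
\begin{align*}
\frac{1}{K}\sum_{k=1}^K\paren{V_{\theta^k}(\pi^k)-V_{\ths}(\pi^k)} \leqsim \sqrt{\frac{Ld^2AH^2\iota\,\beta}{K}},
\end{align*}
up to additive terms of order $\epssep$ that the calibrated choice $\epssep=\eps^2/(C_0Ld^2H^2\iota)$ renders negligible; the stated choice of $K$ then forces the right-hand side below $\eps$.

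The step I expect to be the main obstacle is the decoding reduction and its interaction with the low-rank elimination: one must show that the small historical Hellinger error under $\piexp^k$, together with two-sided decodability of prefix and suffix, faithfully controls the on-policy gap \emph{uniformly over the rank-$d$ family}, while handling the $\epssep$-probability decoding-failure events so that they neither corrupt the attribution of explored transitions nor blow up after the $AH$ importance weighting. Making the bookkeeping of these failure events compatible with the eluder potential — so that the $L$, $d$, $A$, and $H$ dependencies emerge exactly as claimed — is the technically delicate part; by contrast, the optimism and concentration steps are routine.
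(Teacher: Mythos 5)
Your outline matches the paper's proof in its first and last steps: optimism via the feasibility of $(\ths,\pis)$ (the paper's \cref{prop:optimism}, using \cref{prop:latent-MLE} exactly as you describe), the MLE confidence-set guarantee calibrated by $\beta \geq \betaN$ (\cref{thm:MLE}), the reduction to bounding $\frac1K\sum_k \DTV{\PP^{\pi^k}_{\theta^k},\PP^{\pi^k}_{\ths}}$, and a low-rank coverability eluder argument with an $AH$ change-of-measure factor yielding the rate $\sqrt{Ld^2AH^2\iota\beta/K}$. Your treatment of the suffix is also essentially the paper's: conditional on $\otau_W$, the law of $\otau_{W:H}$ is within $\perrtau{\theta}$ of that of the single MDP $M_{\theta,\mthtau}$, a component matching between $m_{\theta^k}$ and $m_{\ths}$ is built from the $\oW$-step separation (\cref{prop:map-err}), and the per-step transition errors are controlled by \cref{prop:coverage-eluder} with coverability constant $Ld^2$ coming from \cref{def:low-rank}.

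The gap is in the prefix, i.e. the first $W$ steps. Your ``decoding reduction'' --- conditioning on the decoded index to collapse the trajectory law to a single rank-$d$ MDP --- is only valid for steps $h\ge W$, because the decoder $m_\theta(\otau_W)$ is a function of the entire prefix. For $h<W$ the one-step law is $\EE_{m\sim\belief_\theta(\otau_h)}[\TT_{\theta,m}(\cdot|s_h,a_h)]$ with a belief that is genuinely spread over components, so the per-step error is not a per-transition low-rank object attributable to a single latent MDP; the error in the belief itself accumulates through history-dependent likelihood ratios. The paper therefore splits the TV distance at step $W$ (\cref{eqn:decomp-demo}) and handles the prefix by a different mechanism: the condition $H-W\ge\om^{-1}(\log (2L))$ is used via \cref{lem:DB-inv} to show that the $\oW$-step emission matrix under $\piexp$ admits a left inverse with $\ell_1$-norm at most $2$, i.e. the first $W$ steps form a revealing POMDP/PSR, after which the operator-telescoping and linear-eluder machinery of \cref{thm:psr} bounds $\sum_k\DTV{\PP^{\modp{\pi^k}}_{\theta^k},\PP^{\modp{\pi^k}}_{\ths}}$. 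Your sketch attaches $H-W\ge\om^{-1}(\log (2L))$ only to ``re-identifying the index so transitions can be attributed,'' which is the (much weaker, $\om^{-1}(1)$-type) role it plays in the suffix matching; without the left-invertibility/PSR step, the prefix term of the decomposition is uncontrolled and the argument does not close.
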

Note that the parameter $W$ can always be found satisfying the conditions of \cref{thm:all-policy-sep-demo} as long as $H \geq \om^{-1}(\log(2L)) + \om^{-1}(\log(L/\epssep))$.  
In particular, OMLE is sample-efficient for learning \sepstr~LMDPs with a moderate requirement on the horizon $H$ (which nearly matches the lower bound of \cref{thm:lower-demo}).

\begin{corollary}\label{cor:strong-sep-upper}
Suppose that $\abs{\cS}=S$ and $\Theta$ is the class of all $\delsep$-strongly separated LMDPs. Then as long as
\begin{align}\label{eqn:strong-upper-H}
    H\geq \frac{10\log(LS\eps^{-1}\delsep^{-1})+C}{\delta^2}
\end{align}
for some absolute constant $C$, we can suitably instantiate \cref{alg:OMLE} so that it outputs an $\eps$-optimal policy $\hpi$ with high probability using $K=\tO(\frac{L^2S^4A^2H^4}{\eps^2})$ episodes.
\end{corollary}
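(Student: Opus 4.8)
The plan is to obtain \cref{cor:strong-sep-upper} as a direct specialization of \cref{thm:all-policy-sep-demo}, so that the only genuine work is (i) checking the two structural hypotheses and (ii) verifying that the horizon bound \cref{eqn:strong-upper-H} is strong enough to satisfy the separation-dependent threshold \cref{eqn:W-H-om}. First I would confirm the hypotheses. Since every $\theta\in\Theta$ is $\delsep$-strongly separated, \cref{prop:strong-sep-to-weak} guarantees that it is $\om_\delta$-separated under \emph{every} policy $\pi\in\Piall$ with $\om_\delta(h)=\frac{\delta^2}{2}(h-1)$; hence \cref{def:all-policy-sep} holds with $\Pi=\Piall$ and $\om=\om_\delta$. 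Moreover each transition kernel $\TT_{\theta,m}$ is an $S\times(SA)$ stochastic matrix, so $\rank(\TT_{\theta,m})\le S$ and \cref{def:low-rank} holds with $d\le S$. With both hypotheses in place, \cref{thm:all-policy-sep-demo} applies.

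Next I would compute the generalized inverse of $\om_\delta$. From its definition, $\om_\delta^{-1}(x)=\min\set{h\ge1:\tfrac{\delta^2}{2}(h-1)\ge x}=1+\ceil{2x/\delta^2}\le 2+2x/\delta^2$. The two-sided requirement \cref{eqn:W-H-om} is feasible exactly when the window $[\om_\delta^{-1}(\log(L/\epssep)),\,H-\om_\delta^{-1}(\log(2L))]$ is nonempty, i.e. when $H\ge H_0:=\om_\delta^{-1}(\log(2L))+\om_\delta^{-1}(\log(L/\epssep))$, in which case I would take $W=\om_\delta^{-1}(\log(L/\epssep))$.

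The crux is to show that \cref{eqn:strong-upper-H} implies $H\ge H_0$. Substituting $\epssep=\frac{\eps^2}{C_0Ld^2H^2\iota}$ with $d\le S$ gives $\log(L/\epssep)=\log\!\big(C_0L^2S^2H^2\iota/\eps^2\big)$, so that $H_0\le 4+\frac{2}{\delta^2}\log\!\big(2C_0L^3S^2H^2\iota/\eps^2\big)$. Collecting the coefficients of $\log L,\log S,\log(1/\eps)$ yields leading contribution $\frac{6\log L+4\log S+4\log(1/\eps)}{\delta^2}$, each term dominated by the coefficient-$10$ terms of \cref{eqn:strong-upper-H}; the leftover $\log\iota$, the absolute constants, and the self-referential $\log H$ term are absorbed into the additive constant $C$ and the spare $\log(1/\delta)$ slack, using $\log H\leqsim \log(LS/\eps\delta)+\log(1/\delta)$ whenever $H\geqsim \delta^{-2}$. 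I expect this constant-tracking to be the most delicate step, precisely because the bound on $H$ is self-referential through the $\log H$ appearing inside $\epssep$; the generous $10$ versus $6,4,4$ coefficient gap is what creates the room to close the recursion.

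Finally I would read off the sample complexity. A $\delsep$-strongly separated LMDP is determined by $O(LS^2A)$ parameters (the $L$ kernels, the initial distributions, and the mixing weights), so an optimistic cover (\cref{def:optimistic-cover}) at an inverse-polynomial resolution gives $\logNt(\cdot)\leqsim LS^2A\cdot\mathrm{polylog}(LSAH/\eps)$ and therefore $\beta=\betaN=\tO(LS^2A\cdot\mathrm{poly}(H))$. Plugging $d\le S$, this bound on $\beta$, and $\iota=\tO(1)$ into $K=C_0\frac{Ld^2AH^2\iota\beta}{\eps^2}$ yields a bound of order $L^2S^4A^2H^{O(1)}/\eps^2$, which lies within the stated $\tO\!\big(L^2S^4A^2H^4/\eps^2\big)$. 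The only routine loose end here is to pin down the exact power of $H$ carried by the covering number (hence by $\beta$); since the claimed exponent $4$ is an upper bound, any polylogarithmic or low-polynomial dependence on $H$ suffices to conclude.
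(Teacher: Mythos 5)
Your proposal is correct and follows exactly the route the paper intends for this corollary (which it leaves implicit): instantiate \cref{thm:all-policy-sep-demo} via \cref{prop:strong-sep-to-weak} with $\om_\delta(h)=\frac{\delta^2}{2}(h-1)$, use $d\le S$ and the covering bound $\logNt(\rho)\leqsim LS^2A\log(LSAH/\rho)$, and close the self-referential $\log H$ term in the horizon condition using the slack between the coefficient $10$ in \cref{eqn:strong-upper-H} and the coefficients $6,4,4$ arising from $\log(L/\epssep)+\log(2L)$. Your observation that the resulting $K$ actually carries only $H^2$ (up to logs), comfortably inside the stated $H^4$, is also consistent with the paper's bound being an upper estimate.
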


Compared to the results of \citet{kwon2021rl}, \cref{cor:strong-sep-upper} requires neither a good initialization that is close to the ground truth model, nor does it require additional assumptions, e.g. test-sufficiency, which is also needed in \citet{zhan2022pac,chen2022partially}. Furthermore, \citet{kwon2021rl} also requires \cref{eqn:kwon}, while the range of tractable horizon \cref{eqn:strong-upper-H} here is wider, and it nearly matches the threshold in \cref{thm:lower-demo}. A more detailed discussion is deferred to \cref{appdx:discuss}.

Furthermore, OMLE is also sample-efficient for learning $N$-step decodable LMDPs, as long as $H\geq 2N$.
\begin{corollary}[Learning decodable LMDPs] \label{cor:decode-upper}
Suppose that $\Theta$ is a class of $N$-step decodable LMDPs with horizon length $H\geq 2N$. Then we can suitably instantiate \cref{alg:OMLE} so that it outputs an $\eps$-optimal policy $\hpi$ with high probability using $K=\tO(\frac{Ld^2AH^2\logNt}{\eps^2})$ episodes.
\end{corollary}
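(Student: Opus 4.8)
The plan is to obtain this as an immediate corollary of \cref{thm:all-policy-sep-demo}, by instantiating the all-policy separation hypothesis with the degenerate growth function attached to $N$-step decodability. First I would invoke \cref{prop:decode-to-weak}: since every $\theta\in\Theta$ is $N$-step decodable, each such model is $\om_N$-separated under all policies $\pi\in\Piall$, where $\om_N(h)=0$ for $h<N$ and $\om_N(h)=\infty$ for $h\geq N$. Hence \cref{def:all-policy-sep} holds with $\Pi=\Piall$ and $\om=\om_N$, and together with the standing rank assumption (\cref{def:low-rank}, rank $d$) all hypotheses of \cref{thm:all-policy-sep-demo} are satisfied.

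Next I would evaluate the generalized inverse $\om_N^{-1}$. For any $x>0$ we have $\om_N^{-1}(x)=\min\set{h\geq 1:\om_N(h)\geq x}=N$, because $\om_N(h)=0<x$ for $h<N$ while $\om_N(h)=\infty\geq x$ for $h\geq N$. Since $\log(2L)>0$ and $\log(L/\epssep)>0$, both threshold terms in \cref{eqn:W-H-om} collapse to $N$, so the requirement becomes $W\geq N$ and $H-W\geq N$. As noted after the theorem, a valid $W$ exists precisely when $H\geq \om_N^{-1}(\log(2L))+\om_N^{-1}(\log(L/\epssep))=2N$, which is exactly the hypothesis; I would simply take $W=N$ (any $N\leq W\leq H-N$ works).

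With this choice the decoding constraint in \cref{alg:OMLE} becomes vacuous. By \cref{prop:latent-MLE}, $\perr_{\theta,W}(\pi)\leq L\exp(-\om_N(N))=0$ for every $\theta$ and $\pi$: the maximum-likelihood decoder of \cref{eqn:MLE-traj} recovers the unique latent index consistent with any reachable length-$N$ trajectory, since every other index contributes log-likelihood $-\infty$. Thus the side constraint $\perr_{\theta,W}(\pi)\leq\epssep$ in the inner optimization is automatically met and imposes no restriction on the attainable $(\theta,\pi)$. Plugging the parameter choices of \cref{thm:all-policy-sep-demo} — with $\beta=\betaN$, $\iota=\log(LdH/\eps)$, and $K=C_0 Ld^2AH^2\iota\beta/\eps^2$ — and absorbing the logarithmic factors $\iota$ and $\beta$ (which scale polylogarithmically with $L,d,H,1/\eps,1/p$ and with $\logNt$) into the $\tO(\cdot)$ notation yields the stated bound $K=\tO(Ld^2AH^2\logNt/\eps^2)$.

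The one point I would check carefully, and the only conceivable obstacle, is that the proof of \cref{thm:all-policy-sep-demo} remains valid for the degenerate $\om_N$, which attains the value $+\infty$ rather than growing smoothly. This is benign: the theorem interacts with $\om$ only through the finite quantity $\om^{-1}(\cdot)=N$ (used to choose $W$ and lower-bound $H$) and through the decoding-error term $L\exp(-\om(W))$, which here equals $0$ exactly. Neither usage requires $\om$ to be finite-valued, so the argument transfers verbatim, and no separate analysis is needed for the decodable case.
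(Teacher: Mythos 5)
Your proposal is correct and follows exactly the route the paper intends: \cref{prop:decode-to-weak} gives $\om_N$-separation under all policies, so \cref{def:all-policy-sep} holds, $\om_N^{-1}(x)=N$ for all $x>0$ collapses \cref{eqn:W-H-om} to $W\geq N$ and $H-W\geq N$ (hence $H\geq 2N$), and the parameter choices of \cref{thm:all-policy-sep-demo} with $\beta\asymp\logNt$ give the stated $K$. Your closing check that the argument only touches $\om_N$ through the finite value $\om_N^{-1}(\cdot)=N$ and the vanishing decoding error $L\exp(-\om_N(W))=0$ is exactly the right point to verify and is handled correctly.
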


In \citet{efroni2022provable}, a sample complexity that scales with $A^N$ is established for learning general $N$-step decodable POMDPs. %
By contrast, \cref{cor:decode-upper} demonstrates that for $N$-step decodable LMDPs, a horizon length of $H\geq 2N$ suffices to ensure polynomial learnability. As \cref{thm:decode-lower} indicates, requiring $H\geq 2N-\cO(1)$ is also necessary for polynomial sample complexity, and hence the threshold $H\geq 2N$ is nearly \emph{sharp} for $N$-step decodable LMDPs. This result also demonstrates that the condition \cref{eqn:W-H-om} (and our two-phase analysis; see \cref{appdx:upper-overview}) is generally necessary for \cref{thm:all-policy-sep-demo}. %

\subsection{Sample-efficient learning with \twopol~separation}\label{sec:single-policy}

In general, requiring separation under all policies is a relatively restrictive assumption, because it is possible that the LMDP is well-behaved under only a small subset of policies that contains the optimal policy. In this section, we discuss the sample-efficiency of OMLE under the following assumption of separation under an optimal policy.

\begin{assumption}[Separation under an optimal policy]\label{def:single-policy-sep}
There exists an optimal policy $\pis$ of the LMDP $M_{\ths}$, such that $M_{\ths}$ is $\om$-separated under $\pis$.
\end{assumption}

In order to obtain sample-efficiency guarantee, we also need the following technical assumption on a prior-known \emph{separating policy} $\piexp$. Basically, we assume that in each LMDP, the MDP instances are sufficient ``diverse'' under $\piexp$, so that any mixture of them is qualitatively different from any MDP model. 
\newcommand{\tref}{\mathrm{ref}}
\begin{assumption}[Prior knowledge of a suitable policy for exploration]\label{assmp:pi-exp}
There exists a known policy $\piexp$ and parameters $(\Wexp,\alpha)$ such that for any model $\theta\in\Theta$, the following holds: %

(a) $M_\theta$ is $\om$-separated under $\piexp$.

(b) For any MDP model $\TT_{\tref}$ and state $s\in\cS$, it holds that for any $\lambda\in\Delta(\supp(\rho_\theta))$, %
\begin{align}\label{assmp:reg}
    \DTV{ \EE_{m\sim \lambda}\brac{ \MM_{m,\Wexp}^\theta(\piexp,s)}, \MM_{\tref,\Wexp}(\piexp,s) } \geq \alpha(1-\max_m \lambda_m),
\end{align}
where
\begin{align*}
    \MM_{\tref,h}(\piexp,s)=\brac{\TT_{\tref}^{\piexp}((a_1,s_2,\cdots,s_h)=\cdot|s_1=s) } \in \Delta((\cS\times\cA)^{h-1})
\end{align*}
is the distribution of trajectory induced by running $\piexp$ on the MDP with transition $\TT_{\tref}$.
\end{assumption}

\begin{theorem}\label{thm:single-policy-sep-demo}
Suppose that \cref{def:low-rank}, \cref{def:single-policy-sep}, and \cref{assmp:pi-exp} hold. We set $\modf{\cdot}$ based on $\piexp$ as in \cref{def:policy-mod}, and choose the parameters of \cref{alg:OMLE} so that
\begin{align*} 
    &\beta\geq \betaN, \qquad
    K=C_0\frac{L^3d^5AH^6\iota^3\beta}{\alpha^2\eps^4}, \qquad
    \epssep=\frac{\alpha\eps^2}{C_0Ld^2H^2\iota},
\end{align*}
where $\iota=\log(LdH\alpha^{-1}\eps^{-1})$ is a log factor, $C_0$ is a large absolute constant. Then, as long as $W$ is suitably chosen so that
\begin{align*}
    W\geq \om^{-1}(\log (L/\epssep)),\qquad
    H-W\geq \Wexp,
\end{align*}
\cref{alg:OMLE} outputs an $\eps$-optimal policy $\hpi$ with probability at least $1-p$.
\end{theorem}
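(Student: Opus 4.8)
The plan is to follow the same two-phase template that underlies \cref{thm:all-policy-sep-demo}, but to adapt both phases to cope with the weaker \emph{single-policy} separation of \cref{def:single-policy-sep}. Recall that in the all-policy setting, the decoding constraint $\perr_{\theta,W}(\pi)\le\epssep$ can be enforced for every candidate $\pi$, so the optimistic pair $(\theta^k,\pi^k)$ always admits accurate latent decoding and standard OMLE analysis (eluder/bilinear arguments from \citet{chen2022partially,chen2022unified}) controls the in-sample prediction error and thereby the value gap. Here, however, only the \emph{optimal} policy $\pis$ of the ground-truth model $M_{\ths}$ is guaranteed to separate, so the feasibility of the optimistic program must be argued more carefully, and the exploration must be driven by the prior-known separating policy $\piexp$ from \cref{assmp:pi-exp} rather than by $\pi^k$ itself.

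First I would establish \emph{feasibility and optimism}: by \cref{def:single-policy-sep}, $\ths$ together with $\pis$ satisfies $\perr_{\ths,W}(\pis)\le L\exp(-\om(W))\le\epssep$ (invoking \cref{prop:latent-MLE} and the choice $W\ge\om^{-1}(\log(L/\epssep))$), so $(\ths,\pis)$ is feasible for the constrained $\argmax$ in every round in which $\ths\in\Theta^k$. A standard log-covering/martingale concentration argument (as in the proof of \cref{thm:all-policy-sep-demo}) shows $\ths\in\Theta^k$ for all $k$ with probability $\ge 1-p$ once $\beta\ge\betaN$, hence $V_{\theta^k}(\pi^k)\ge V_{\ths}(\pis)=V_\star$, giving optimism. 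The subtle point is that the constraint $\perr_{\theta^k,W}(\pi^k)\le\epssep$ guarantees that the \emph{chosen} optimistic model-policy pair admits accurate latent decoding under $\pi^k$, which is exactly what is needed to relate the trajectory likelihood to a per-MDP (fully observable) likelihood in the second phase.

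Next I would run the \emph{exploration/elimination phase} using $\piexp^k=\modf{\pi^k}$, which by \cref{def:policy-mod} mixes a decoding prefix with the separating policy $\piexp$ over the remaining $H-W\ge\Wexp$ steps. The role of \cref{assmp:pi-exp}(b) is precisely to convert a small TV-discrepancy between the true trajectory law and the optimistic model's law under $\piexp$ into a bound on the mixing-weight discrepancy: inequality \cref{assmp:reg} lower-bounds, by $\alpha(1-\max_m\lambda_m)$, the distinguishability of any mixture from a single MDP, which lets me argue that if $\theta^k$ fits the data it must in fact recover the correct per-instance transitions (up to the eluder-type potential), not merely a spurious mixture that mimics the marginal. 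Concretely, I would (i) use \cref{prop:latent-MLE} under both $\ths$ (via $\pis$/$\piexp$) and $\theta^k$ to decode the latent index with error $\le\epssep$, (ii) on the decoding-success event, reduce the Hellinger in-sample error $\sum_{k}\DH{\PP_{\theta^k}^{\piexp^k},\PP_{\ths}^{\piexp^k}}$ to a sum of per-MDP fully-observable errors, and (iii) invoke the rank/eluder machinery (\cref{def:low-rank}) to bound the cumulative error by $\tO(d\sqrt{K\beta})$, exactly as in the all-policy proof. The decoding-error terms contribute an additive $K\cdot L\cdot\epssep$ which is absorbed by the stated choice $\epssep=\Theta(\alpha\eps^2/(Ld^2H^2\iota))$.

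The \textbf{main obstacle} I anticipate is precisely the step converting likelihood-fit under $\piexp$ into recovery of the correct latent structure: under single-policy separation there is no a priori guarantee that $\theta^k$ separates under $\piexp$ (it separates by assumption, part (a), but the \emph{ground truth} decoding that pins down which mixture was realized is what matters), so one must use the diversity/regularity condition \cref{assmp:reg} to rule out the failure mode in which the optimistic model explains the observed marginal trajectory distribution using a \emph{wrong} mixture of its own MDP instances. Quantitatively this is where the extra factors of $L$, $d$, and $\alpha^{-1}$ in $K$ enter: the $\alpha(1-\max_m\lambda_m)$ lower bound degrades the effective signal, so the elimination argument needs $K=\Theta(L^3 d^5 A H^6\iota^3\beta/(\alpha^2\eps^4))$ rounds — a worse polynomial than the all-policy bound — to drive the value suboptimality below $\eps$. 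Once this conversion is in hand, the final value decomposition (optimism minus on-policy error, averaged over the $K$ rounds as in the uniform output $\hpi$) mirrors \cref{thm:all-policy-sep-demo} and yields the claimed $\eps$-optimality with probability $1-p$.
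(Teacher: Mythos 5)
Your outline gets the skeleton right (feasibility of $(\ths,\pis)$ via \cref{prop:latent-MLE}, optimism, the two-phase split at step $W$, and the qualitative role of \cref{assmp:reg} in ruling out spurious mixtures), but step (i) of your exploration phase contains a genuine gap, and it is precisely the central difficulty of the single-policy setting. You claim that \cref{prop:latent-MLE} lets you decode the latent index with error at most $\epssep$ "under both $\ths$ and $\theta^k$." For $\theta^k$ this is simply enforced by the constraint $\perr_{\theta^k,W}(\pi^k)\leq\epssep$ in the algorithm, but for the ground truth it is false: $\perr_{\ths,W}(\pi^k)$ is the decoding error for a $W$-step prefix generated by $\pi^k$, and under \cref{def:single-policy-sep} the model $\ths$ is only guaranteed $\om$-separated under $\pis$ (and under $\piexp$, by part (a) of \cref{assmp:pi-exp}), not under the algorithm's chosen $\pi^k$. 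So \cref{prop:latent-MLE} does not apply to this quantity. The paper instead bounds $U_\star=\sum_k\perr_{\ths,W}(\pi^k)$ and $U_+=\sum_{t<k}\perr_{\theta^k,W}(\pi^t)$ indirectly via \cref{prop:err-reg-bound}: \cref{assmp:reg} lower-bounds by $\alpha\,\perrtau{\ths}$ the TV distance between the suffix law of $\ths$ given $\otau_W$ (a mixture with weights $\belief_{\ths}(\otau_W)$) and the single-MDP suffix law that $\theta^k$ assigns after decoding, which yields $\perr_{\ths,W}(\pi^k)\leq\frac{1}{\alpha}\paren{3\DTV{\PP^{\modp{\pi^k}}_{\theta^k},\PP^{\modp{\pi^k}}_{\ths}}+\epssep}$; the TV term is then summed over $k$ using \cref{thm:psr} and the confidence-set guarantee.

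This also shows your quantitative accounting is off: the decoding errors do not contribute merely "an additive $K\cdot L\cdot\epssep$." They contribute $U_\star\lesssim\frac{1}{\alpha}\sqrt{LdAH^2\iota K\beta}+\frac{K\epssep}{\alpha}$ and $U_+\lesssim\frac{1}{\alpha}K\sqrt{LdAH^2\iota K\beta}+\frac{K^2\epssep}{\alpha}$, and it is the $\sqrt{K\beta}$ growth of these terms inside \cref{cor:OMLE-almost-done} --- not just the $\alpha^{-1}$ rescaling of $\epssep$ --- that degrades the final rate to $K\propto\eps^{-4}$. Without an argument of the type of \cref{prop:err-reg-bound}, your step (ii) (conditioning on a "decoding-success event" under $\ths$) cannot be carried out, so the proposal as written does not close.
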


In \cref{appdx:pi-exp-example}, we also provide a sufficient condition of \cref{assmp:pi-exp}, which is more intuitive. %

\section{Computation complexity of separated LMDPs}\label{sec:comp}

In this section, we investigate the computational complexity of planning in a \emph{given} LMDP, i.e. a description of the ground truth model $\ths$ is provided to the learner.\footnote{In this section, we omit the subscript of $\ths$ for notational simplicity, because the LMDP $M=M_{\ths}$ is given and fixed.} For planning, a longer horizon does not reduce the time complexity (in contrast to learning, where a longer horizon does help). %

In general, we cannot expect a polynomial time planning algorithm for \sepstr~LMDP, because even the problem of computing an approximate optimal value in any given \sepstr~LMDP is NP-hard.
\begin{proposition}\label{prop:NP-hard}
If there is an algorithm that computes the $\eps$-approximate optimal value of any given $\delsep$-strongly separated LMDP in $\poly(L,S,A,H,\eps^{-1},\delsep^{-1})$ time, then P=NP.
\end{proposition}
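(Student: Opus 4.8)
The plan is to prove NP-hardness by a polynomial-time reduction from \textsc{3-SAT}, producing a $\delsep$-strongly separated LMDP whose $\eps$-approximate optimal value decides satisfiability. Given a formula with $n$ variables and $m$ clauses, I would first build a base LMDP with $L=m$ instances and mixing distribution $\rho=\unif([m])$, one instance per clause. Over a horizon $H=\Theta(n)$ the agent uses its actions $a_1,\dots,a_n\in\{0,1\}$ to commit to a truth assignment, and each instance $M_j$ is a \emph{combination-lock}-type MDP (as in the hard instances of \cref{appdx:comb-lock}): matching the ``secret'' encoded by clause $C_j$ keeps the trajectory on a rewarding track, while a wrong action triggers, with the clause-dependent probability, a hidden trap to a zero-reward absorbing track. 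Crucially, the trap is \emph{not observable} during the decision phase and is only cashed out by the terminal reward, so the observed state evolves identically across instances while the agent commits. If the latent index stays hidden, every policy is effectively open-loop and its value equals $\EE_{j\sim\unif}[\mathbf{1}(x\text{ satisfies }C_j)]$, the fraction of clauses satisfied by the induced assignment $x$; hence the optimal value is $1$ when the formula is satisfiable and at most $1-\tfrac1m$ otherwise. Taking $\eps=\tfrac{1}{2m}$ (so $\eps^{-1}=\poly(n)$), any $\eps$-approximation of the optimal value decides \textsc{3-SAT}.

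The base instance already differs across instances only at a few ``relevant'' state-action pairs (the locking transitions), and these differences are hidden until the terminal step. To upgrade this to full $\delsep$-strong separation — a difference of at least $\delsep$ in total variation at \emph{every} state-action — I would apply the tensoring operation of \cref{def:MDP-tensor}, augmenting each $M_j$ with per-step auxiliary observations drawn from distributions $\mu_1,\dots,\mu_L$ over a small alphabet $\cO$. Choosing the $\mu_j$ pairwise $\delsep$-separated, $\DTV{\mu_j,\mu_l}\geq\delsep$ for $j\neq l$, forces the augmented kernels to differ by at least $\delsep$ everywhere, so the augmented LMDP is $\delsep$-strongly separated by construction. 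All parameters remain polynomial: $L=m$, $H=\Theta(n)$, $A=O(1)$, and $|\cO|=O(\log L)$, whence $S=\poly(n)$.

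It remains to argue that neither the locking traps nor the auxiliary stream raises the optimal value, which is exactly where indistinguishability enters. By \cref{prop:2-family-demo} (in its general form), provided $H\leqsim \log L/\delsep^2$ one can choose the $\mu_j$ so that the mixtures $\EE_{j\sim\rho}\mu_j^{\otimes H}$ are statistically indistinguishable, with total-variation distance $L^{-\Omega(1)}$; I enter this regime by setting $\delsep=1/\poly(n)$ (still $\delsep^{-1}=\poly(n)$) while $L,H=\poly(n)$. In the same small-$\delsep$ regime the cumulative distinguishing information of the hidden traps, of order $H\delsep^2\lesssim \log L$, likewise keeps the index concealed over the whole horizon. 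Thus the latent clause is revealed essentially nowhere before the agent commits, so policies remain open-loop and the augmented value equals the base value up to an additive $L^{-\Omega(1)}\ll\eps$ term. A planner running in $\poly(L,S,A,H,\eps^{-1},\delsep^{-1})$ time would therefore decide \textsc{3-SAT} in polynomial time, forcing $\mathrm{P}=\mathrm{NP}$.

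The main obstacle is the inherent tension between strong separation and hardness: strong separation demands that the $L$ kernels differ by $\geq\delsep$ at every state-action, yet any such difference is observable and, if informative, would let the planner decode the latent clause and trivially attain value $1$, collapsing the $1/L$ gap. The reduction survives only because, in the small-$\delsep$ and short-horizon regime, the per-step signals can be made separated \emph{but jointly uninformative} (\cref{prop:2-family-demo}), and because the clause-encoding is routed through \emph{hidden} combination-lock traps rather than observable transitions. Balancing these parameters so that separation holds everywhere while the index stays hidden across all $H$ steps is the delicate, load-bearing part of the argument; the remaining bookkeeping (value preservation under tensoring, the open-loop reduction of policies, and the satisfiability gap) is routine.
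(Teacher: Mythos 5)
Your overall architecture matches the paper's: reduce from 3SAT by building one combination-lock MDP per clause with uniform mixing (optimal value $1$ iff satisfiable, else at most $1-1/m$), tensor each instance with a $\delsep$-separated auxiliary observation distribution to force strong separation, and argue that the auxiliary stream does not collapse the $1/m$ gap. The gap in your argument is exactly the step you flag as load-bearing. You invoke \cref{prop:2-family-demo} to make the observation streams ``jointly uninformative,'' but that construction produces \emph{mixtures with disjoint supports}: to make $m$ clause-groups mutually indistinguishable you must replace each clause-MDP by a mixture of $N^{\lceil\log_2 m\rceil}$ augmented copies over an alphabet of size $(2d)^{\lceil\log_2 m\rceil}$, so $L$ and $S$ become quasi-polynomial in $n$. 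Running a polynomial-time planner on a quasi-polynomial-size instance only yields a quasi-polynomial 3SAT algorithm, which contradicts ETH but not $\mathrm{P}\neq\mathrm{NP}$ --- indeed the paper reserves the family machinery for precisely those ETH-based results (\cref{thm:comp-log-eps,thm:comp-log-L}). If instead you keep a single $\mu_j$ per clause, as your accounting $L=m$, $|\cO|=O(\log L)$, $S=\poly(n)$ indicates, then in your stated regime $H\asymp\log L/\delsep^2$ the indistinguishability claim is false: pairwise separation $\DTV{\mu_j,\mu_l}\geq\delsep$ forces $\DB{\mu_j,\mu_l}\geq\delsep^2/2$, hence $\DTV{\mu_j^{\otimes H},\mu_l^{\otimes H}}\geq 1-e^{-H\delsep^2/2}=1-L^{-\Omega(1)}$, so the observation stream essentially reveals the clause index, and you have no bound on how much a history-dependent policy can exploit this to dodge the unsatisfied clause and push the value toward $1$ even for unsatisfiable formulas.

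The paper closes this gap by a different parameter choice and an extra device. It takes $A=2$ and $\delsep=1/\sqrt{n}$, so $H\delsep^2=O(1)$ rather than $\Theta(\log L)$, and --- crucially --- pairs each clause $m$ with \emph{two} antipodal observation distributions $\mu_m^{+},\mu_m^{-}$, giving $L'=2N$ (still polynomial). The point of the pairing is \cref{lem:delta-square}: for \emph{every} observation sequence, $\prod_h\mu_m^+(o_h)+\prod_h\mu_m^-(o_h)\geq 2(1-\bar\delta^2)^{\lfloor(H-1)/2\rfloor}/(2d)^{H-1}$, a constant multiple of the uniform probability when $H\bar\delta^2=O(1)$. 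This gives a pointwise lower bound on the losing probability of every (history-dependent) policy in \cref{prop:3SAT-log-eps}, so the unsatisfiable value stays below $1-\Omega(1/N)$ even though the observations do leak information, and $\eps=c/N$ finishes the reduction; a single unpaired $\mu_m$ would only give the worst-case factor $(1-\bar\delta)^{H-1}=e^{-\Theta(\sqrt n)}$, which is far too small. Your plan could alternatively be repaired without the pairing by shrinking $\delsep$ to $o\paren{1/(m\sqrt H)}$ --- still $1/\poly(n)$ --- so that $\DTV{\mu_j^{\otimes H},\mu_l^{\otimes H}}=O(\sqrt H\,\delsep)\ll 1/m$ and value preservation follows directly from \cref{prop:property-tensor}(b); but as written, with $\delsep\asymp\sqrt{\log L/H}$ and an appeal to \cref{prop:2-family-demo}, the argument does not go through.
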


On the other hand, utilizing the \cref{prop:latent-MLE}, we propose a simple planning algorithm (\cref{alg:plan}) for any LMDP that is separated under its optimal policy. The algorithm design is inspired by the Short Memory Planning algorithm proposed by \citet{golowich2022planning}.

\newcommand{\Vind}[2]{\hV_{#1,#2}}
\newcommand{\Qind}[2]{\hQ_{#1,#2}}
\newcommand{\pind}[2]{\pi_{#1,#2}}

\begin{algorithm}[t]
\caption{Short Memory Planning with Context Inference}\label{alg:plan}
\KwData{ $W\geq 1$, LMDP model $M=M_{\ths}$ }
Set $\Vind{m}{H+1}(\emptyset)=0$ for all $m\in[L]$\;

\For{$h=H,H-1,\cdots,W$}{
    For each pair $(s_h,a_h,m)\in\cS\times\cA\times[L]$, update
    \begin{align*}
        \Qind{m}{h}(s_h,a_h)=\EE_{s_{h+1}\sim \TT_m(\cdot|s_h,a_h)}\brac{ \Vind{m}{h+1}(s_{h+1}) }+R_h(s_h,a_h).
    \end{align*}
    Set $\Vind{m}{h}(s_h)=\max_{a_h} \Qind{m}{h}(s_h,a_h)$ and store $\pind{m}{h}(s_h)=\argmax_{a_h} \Qind{m}{h}(s_h,a_h)$.
}

\For{each $\otau_W=(s_1,a_1,\cdots,s_W)$}{
    Compute $m=m(\otau_W)$ and set
    \begin{align*}
        \hV(\otau_W)=\PP(m|\otau_W)\cdot \Vind{m}{W}(s_W).
    \end{align*}
}
\For{$h=W-1,\cdots,1$}{
    For each $(\otau_h,a_h)\in (\cS\times\cA)^h$, update
    \begin{align*}
        \hQ(\otau_h,a_{h})=\EE_{s_{h+1}|\otau_h,a_h}\brac{ \hV(\otau_h,a_h,s_{h+1}) }+R_h(s_h,a_h), \qquad \forall \otau_h,a_h
    \end{align*}
    Set $\hV(\otau_h)=\max_{a_h} \hQ(\otau_h,a_h)$ and store $\pi_h(\otau_h)=\argmax_{a_h} \hQ(\otau_h,a_h)$.
}

\KwResult{ description of the determinstic policy $\hpi$ given by 
\begin{align*}
    \hpi(\otau_h)=\begin{cases}
        \pi_h(\otau_h), & h< W, \\
        \pi_{h,m(\otau_W)}(s_h), & h\geq W.
    \end{cases}
\end{align*}}
\end{algorithm}

\begin{theorem}\label{thm:plan}
Suppose that in the LMDP $M$, there exists an optimal policy $\pis$ such that $M$ is $\om$-separated under $\pis$. Then \cref{alg:plan} with $W\geq \om^{-1}(\log(L/\eps))$ outputs an $\eps$-optimal policy $\hpi$ in time
\begin{align*}
    (SA)^{W}\times \poly(S,A,H,L).
\end{align*}
\end{theorem}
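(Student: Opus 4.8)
The plan is to treat the two assertions of the theorem separately: the running time, which follows from a direct accounting of the three loops of \cref{alg:plan}, and the $\eps$-optimality of $\hpi$, which is the substance. For the correctness I would establish the sandwich $V(\hpi)\geq \hV_0\geq \Vs-\eps$, where $\hV_0\defeq\EE_{s_1}[\hV(s_1)]$ is the value of the Phase-2 dynamic program (with $\hV(\otau_1)=\hV(s_1)$ at the root). The running time is the easy half: Phase 1 touches each triple $(s_h,a_h,m)$ and sums over $s_{h+1}$, costing $\poly(S,A,H,L)$; the context-inference loop ranges over the $S^W A^{W-1}\leq (SA)^W$ windows $\otau_W$, spending $\poly(L,W)$ per window to form the likelihoods in \cref{eqn:MLE-traj} and the posterior; and Phase 2 ranges over histories of length below $W$, at most $(SA)^W$ of them, spending $\poly(S,L)$ per history to evaluate the one-step predictive $\PP(s_{h+1}\mid\otau_h,a_h)=\sum_m \PP(m\mid\otau_h)\TT_m(s_{h+1}\mid s_h,a_h)$. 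Summing gives $(SA)^W\times\poly(S,A,H,L)$.

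For the first inequality $V(\hpi)\geq\hV_0$, I would observe that Phase 2 is exactly belief-state value iteration for the LMDP truncated at step $W$ with terminal reward $\hV(\otau_W)$, so $\{\pi_h\}_{h<W}$ is optimal for that truncated problem and $\hV_0$ is its optimal value. Writing the true return of $\hpi$ as $V(\hpi)=\EE^{\hpi}[\sum_{h<W}R_h+g(\otau_W)]$, where $g(\otau_W)$ is the reward-to-go of $\hpi$ from step $W$ given $\otau_W$, the crux is the pointwise bound $g(\otau_W)\geq\hV(\otau_W)$. With $\hat m\defeq m(\otau_W)$, for $h\geq W$ the policy $\hpi$ plays the MDP-optimal continuation $\pind{\cdot}{\hat m}$ of instance $M_{\hat m}$, whence
\begin{align*}
    g(\otau_W)=\sum_m \PP(m\mid\otau_W)\,V_m^{\pind{\cdot}{\hat m}}(s_W)\;\geq\;\PP(\hat m\mid\otau_W)\,\Vind{\hat m}{W}(s_W)=\hV(\otau_W),
\end{align*}
using nonnegativity of rewards together with $V_{\hat m}^{\pind{\cdot}{\hat m}}(s_W)=\Vind{\hat m}{W}(s_W)$. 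Since $\hpi$ restricted to $h<W$ is optimal for the truncated problem, replacing the larger $g(\otau_W)$ by $\hV(\otau_W)$ can only lower the objective, so $V(\hpi)\geq\hV_0$.

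For the second inequality $\hV_0\geq\Vs-\eps$, I would lower-bound the optimal truncated value by plugging the optimal policy $\pis$ into the first $W-1$ steps, giving $\hV_0\geq\EE^{\pis}[\sum_{h<W}R_h+\hV(\otau_W)]$. Comparing with $\Vs=\EE^{\pis}[\sum_{h<W}R_h]+\EE^{\pis}[V^{\mathrm{togo}}_{\pis}(\otau_W)]$, it suffices to show $\EE^{\pis}[\hV(\otau_W)]\geq\EE^{\pis}[V^{\mathrm{togo}}_{\pis}(\otau_W)]-\eps$. Two observations drive this: first, the posterior $\PP(m\mid\otau_W)$ and its mode $\hat m$ depend only on the state--action sequence (the policy's action probabilities cancel), so $\hat m$ agrees with the maximizer in \cref{eqn:MLE-traj} and $1-\PP(\hat m\mid\otau_W)=\PP(\ms\neq\hat m\mid\otau_W)$; second, since the single-MDP optimum $\Vind{m}{W}$ dominates any continuation,
\begin{align*}
    V^{\mathrm{togo}}_{\pis}(\otau_W)\leq\sum_m\PP(m\mid\otau_W)\,\Vind{m}{W}(s_W)\leq\hV(\otau_W)+\big(1-\max_m\PP(m\mid\otau_W)\big).
\end{align*}
Taking $\EE^{\pis}$ and using $\EE^{\pis}[1-\max_m\PP(m\mid\otau_W)]=\perr_{\ths,W}(\pis)$ with \cref{prop:latent-MLE} (applicable since $M$ is $\om$-separated under $\pis$) and the choice $W\geq\om^{-1}(\log(L/\eps))$, the error is at most $L\,e^{-\om(W)}\leq\eps$, closing the chain.

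I expect the main obstacle to be conceptual rather than computational: the separation hypothesis is granted only for $\pis$, whereas the rollout distribution of $\hpi$ over its first $W$ steps differs from that of $\pis$, so the decoder $m(\otau_W)$ need not be accurate along $\hpi$'s own trajectories. The argument must be arranged precisely to avoid ever needing that — the first inequality relies only on the distribution-free pointwise bound $g\geq\hV$ (there the decoding slack is nonnegative and hence harmless), and the sole genuine appeal to decoding accuracy is under $\pis$ in the second inequality. I would therefore take care to verify the policy-independence of the posterior, which is exactly what makes the data-free terminal values $\hV(\otau_W)$ meaningful simultaneously under $\hpi$ and under $\pis$.
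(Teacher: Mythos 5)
Your proposal is correct and follows essentially the same route as the paper's proof (Lemma \ref{lem:plan-optimism} plus Proposition \ref{prop:latent-MLE}): the same pointwise lower bound $V^{\hpi}(\otau_W)\geq\hV(\otau_W)$, the same upper bound $V^{\pi}(\otau_W)\leq\hV(\otau_W)+\tPP(\ms\neq m(\otau_W)\mid\otau_W)$ via optimality of $\Vind{m}{W}$ and boundedness of rewards, and the same invocation of the decoding-error bound only under $\pis$. The only cosmetic difference is that you make the intermediate DP value $\hV_0$ explicit where the paper compresses the backward induction into the one-line recursion $V^{\pi}(\otau_h)-\tPP(\ms\neq m(\otau_W)\mid\otau_h)\leq V^{\hpi}(\otau_h)$.
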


As a corollary, \cref{alg:plan} can output an $\eps$-optimal policy (along with an $\eps$-approximate optimal value) of any given $\delsep$-strongly separated LMDP in time
\begin{align*}
    (SA)^{2\delsep^{-2}\log(L/\eps)}\times \poly(L,S,A,H).
\end{align*}
In the following, we demonstrate such a time complexity is nearly optimal for planning in \sepstr~LMDP, under the Exponential Time Hypothesis (ETH):%

\begin{conjecture}[ETH, \citet{impagliazzo2001complexity}]\label{ETH}
There is no $2^{o(n)}$-time algorithm which can determine whether a given 3SAT formula on $n$ variables is satisfiable.
\end{conjecture}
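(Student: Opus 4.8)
The statement labeled \cref{ETH} is the \emph{Exponential Time Hypothesis}, which is a conjecture rather than a theorem, so strictly speaking there is no proof to propose: the honest plan must begin by recognizing that ETH is \emph{assumed}, not \emph{derived}. The hypothesis is a quantitative strengthening of $\mathsf{P}\neq\mathsf{NP}$. It asserts that 3SAT admits no algorithm running in time $2^{o(n)}$, where $n$ is the number of variables; since any polynomial-time algorithm runs in time $n^{O(1)}=2^{o(n)}$, a proof of \cref{ETH} would immediately yield $\mathsf{P}\neq\mathsf{NP}$ as a byproduct. Thus any route to establishing \cref{ETH} would resolve the central open problem of complexity theory, and no such route is presently known.

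To see concretely where a proof attempt stalls, note that establishing ETH requires an \emph{unconditional} lower bound: for every sufficiently small constant $c>0$ one must exhibit an infinite family of 3SAT instances on which \emph{every} algorithm, on a fixed standard model such as the multitape Turing machine or the word RAM, uses at least $2^{cn}$ steps. We currently possess no technique capable of proving even a superpolynomial time lower bound against general algorithms for any problem in $\mathsf{NP}$; the known barriers — relativization, the natural-proofs barrier, and algebrization — obstruct all existing proof methods. Moreover, the finer-grained $2^{cn}$ lower bound demanded by ETH is strictly harder to establish than the (already open) $\mathsf{P}\neq\mathsf{NP}$ separation. This is the main obstacle, and one that current techniques cannot surmount.

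What one \emph{can} do, and what the paper in fact does with \cref{ETH}, is use it as a hypothesis from which to deduce conditional hardness. The productive plan is therefore not to prove ETH but to \emph{reduce from} it: one combines the $\mathsf{NP}$-hardness of approximate planning in \sepstr~LMDPs (\cref{prop:NP-hard}) with a reduction sharp enough to convert a hypothetical $2^{o(\cdot)}$-time planner into a $2^{o(n)}$-time 3SAT solver, thereby contradicting \cref{ETH}. In this way the running time $(SA)^{2\delsep^{-2}\log(L/\eps)}\times\poly(L,S,A,H)$ achieved by \cref{thm:plan} is shown to be essentially unimprovable \emph{conditional on} ETH. Establishing such a conditional lower bound — rather than proving the unprovable-with-present-tools conjecture itself — is the strongest form of hardness one can reasonably target for a problem of this kind.
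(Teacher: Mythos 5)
You are correct: \cref{ETH} is a conjecture imported from \citet{impagliazzo2001complexity}, and the paper offers no proof of it, using it purely as a hypothesis for the conditional lower bounds in \cref{thm:comp-log-eps,thm:comp-log-L} via a 3SAT-to-LMDP embedding, exactly as you describe. Your treatment — recognizing it as an unprovable-with-current-techniques assumption and identifying its role as the source of the quasi-polynomial planning lower bounds matching \cref{thm:plan} — is the same stance the paper takes.
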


\newcommand{\Alogeps}{A^{o(\delta^{-2}\log(1/\eps))}}
\newcommand{\AlogL}{A^{o\paren{\delta^{-2}\frac{\log L}{\log\log L}}}}

In the following theorems, we provide quasi-polynomial time lower bounds for planning in \sepstr~LMDP, assuming ETH. %
In order to provide a more precise characterization of the time complexity lower bound in terms of all the parameters $(L,\eps,\delta,A)$, we state our hardness results in with varying $(L,\eps,\delta,A)$ pair, with mild assumptions of their growth. To this end, we consider $\cF=\set{ (b_t)_{t\geq 1}, b_t\leq b_{t+1}\leq 2b_t }$, the set of all increasing sequences with moderate growth.

\begin{theorem}\label{thm:comp-log-eps}
Suppose that we are given a sequence of parameters $\cC=\{(\eps_t,A_t,\delta_t)\}_{t\geq 1}$, such that the sequences $(\log \eps_t^{-1})_{t \geq 1}$, $(\delta_t^{-1})_{t \geq 1}$, $(\log A_t)_{t \geq 1} \in \cF$, and%
\begin{align}\label{eqn:comp-log-eps-constraints}
    \eps_t\leq \frac{\delta_t^{10}}{(\log A_t)^5}, \qquad \eps_t\leq \frac{1}{t}, \qquad\qquad \forall t\geq 1.
\end{align}
Then, under Exponential Time Hypothesis (\cref{ETH}), no $\Alogeps$-time algorithm can determine the $\eps$-optimal value of any given $\delta$-strongly separated LMDP with $(\eps,\delta,A)\in\cC$ whose parameters $H,L,S$ satisfy $H\leq \frac{\log(1/\eps)}{\delta^2}$ and $\max\set{ L,S} = \poly(\log (1/\eps), \log A, \delta^{-1})$.
\end{theorem}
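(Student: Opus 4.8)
The plan is to give a polynomial-time reduction from \textsc{3SAT} to the problem of computing the $\eps$-optimal value of a $\delta$-strongly separated LMDP, tracking every parameter quantitatively so that an $\Alogeps$-time planner would yield a $2^{o(n)}$-time \textsc{3SAT} algorithm, contradicting \cref{ETH}. Starting from a formula $\phi$ on $n$ variables, the sparsification lemma lets me assume $m=O(n)$ clauses. First I would build a \emph{general} (non-separated) LMDP $M_\phi$ in the spirit of the combinatorial-lock instances of \cref{appdx:comb-lock}: over the episode the agent's actions encode a candidate assignment (so $n\approx H\log A$), the latent index selects a clause, and reward is collected only if the encoded assignment satisfies the selected clause. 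Since the transitions of $M_\phi$ carry no information about the latent index, no adaptive policy beats a fixed assignment, and the optimal value separates satisfiable from unsatisfiable $\phi$ by a $1/\poly(L)$ gap; this is the quantitative, ETH-aware refinement of the NP-hardness in \cref{prop:NP-hard}.

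Second, I would convert $M_\phi$ into a $\delta$-strongly separated instance $M_\phi'$ by attaching at each step an auxiliary observation drawn from the distributions $\mu_{m,j}$ furnished by \cref{prop:2-family-demo} — the same gadget used for the statistical lower bound. Part~(1) of that proposition makes the observation laws pairwise $\delta$-separated, so $M_\phi'$ is $\delta$-strongly separated over an alphabet of size $O(\log L)$; part~(2) makes the $H$-fold observation laws of the two latent classes $L^{-n}$-close in total variation, so that for $H\le \log(1/\eps)/\delta^2$ the auxiliary channel cannot reveal the clause within the episode. This places $H$ below the decoding threshold of \cref{prop:latent-MLE}, exactly the regime in which the efficient planner of \cref{thm:plan} (which requires $W\ge\om^{-1}(\log(L/\eps))>H$ here) does not apply.

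Third comes the parameter accounting. Choosing $n=\Theta(\delta^{-2}\log(1/\eps)\log A)$ gives $H=\Theta(n/\log A)\le \log(1/\eps)/\delta^2$ and $L,S=\poly(n)=\poly(\log(1/\eps),\log A,\delta^{-1})$, as required; the hypothesis $\eps\le\delta^{10}/(\log A)^5$ forces $\eps$ below the $1/\poly(L)$ value gap (so that an $\eps$-approximate value still decides $\phi$), while $\eps_t\le 1/t$ together with the moderate-growth conditions $\cF$ let me realize the entire sequence $\cC$ with $n_t\to\infty$. An $\Alogeps=2^{o(\delta^{-2}\log(1/\eps)\log A)}=2^{o(n)}$-time value computation would then decide \textsc{3SAT} on $n_t$ variables, contradicting \cref{ETH}.

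I expect the crux to be the \emph{hardness-preservation} step — showing $\abs{V_\star(M_\phi')-V_\star(M_\phi)}=o(\eps)$ — because strong separation a priori \emph{helps} a planner (it enables latent decoding), and I must argue that below the decoding threshold it is useless. Concretely, I would give a coupling showing that for every history-dependent policy on $M_\phi'$, marginalizing out the auxiliary observations yields a trajectory law on $M_\phi$ that differs by at most the total-variation bound of \cref{prop:2-family-demo}(2); since this bound is $L^{-n}\ll\eps$, no policy can exploit the observations to beat the blind optimum, so the two optimal values agree up to the gap. Making this perturbation smaller than the $1/\poly(L)\approx\eps$ satisfiability gap — uniformly over \emph{all} adaptive policies, rather than merely below a constant — is the delicate point, and is precisely where the strong quantitative form of \cref{prop:2-family-demo} and the tight placement of $H$ at $\log(1/\eps)/\delta^2$ are needed.
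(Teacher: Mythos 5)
Your overall skeleton — reduce from 3SAT, encode the assignment in the actions and the clause in the latent index, augment with observation distributions to force $\delta$-strong separation, then do the parameter accounting against ETH — matches the paper's proof in \cref{appdx:3SAT-to-LMDP,appdx:proof-comp-log-eps}. You also correctly identify the crux as the hardness-preservation step. But the mechanism you propose for that step is the wrong tool for \emph{this} theorem, and it fails in the required parameter regime. You want to invoke \cref{prop:2-family-demo} to make the $H$-fold observation laws of the latent classes $L^{-n}$-close in total variation, so that the auxiliary channel reveals (essentially) nothing and the optimal value is perturbed by $o(\eps)$. That proposition, however, only delivers indistinguishability under the hypothesis $H \leq \frac{c\log L}{\delta^2}$: making mixtures of $\delta$-separated product distributions indistinguishable over $H$ steps intrinsically requires $\log L \gtrsim H\delta^2$. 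In \cref{thm:comp-log-eps} the horizon is $H \asymp \delta^{-2}\log(1/\eps)$ while $L$ must be $\poly(\log(1/\eps), \log A, \delta^{-1})$, so $\log L \ll H\delta^2$ and the indistinguishability simply cannot hold — over $H$ steps the observations \emph{do} reveal the clause index, at which point the planner can decode it and satisfy that single clause, destroying the value gap. (The indistinguishability route is exactly what the paper uses for \cref{thm:comp-log-L}, where the horizon is governed by $\log L$ and $L$ is allowed to be large; it is not available here.)

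The paper's actual argument for \cref{thm:comp-log-eps} tolerates informative observations. For each clause $m$ it introduces an \emph{antipodal pair} of observation distributions $\mu_m^+,\mu_m^-$ (perturbations $\pm\bar\delta\bx_m$ of uniform), so $L=2N$ stays small, and the distributions for different $m$ are fully distinguishable over $H$ steps. The value bound in \cref{prop:3SAT-log-eps} then comes not from indistinguishability but from the pointwise likelihood identity of \cref{lem:delta-square}: for every observation sequence, $\prod_h\mu_m^+(o_h)+\prod_h\mu_m^-(o_h)\geq 2(1-\bar\delta^2)^{\lfloor (H-1)/2\rfloor}/(2d)^{H-1}$, so no policy — however well it decodes — can reduce the failure probability for an unsatisfied clause below $(1-\bar\delta^2)^{(H-1)/2}/N$. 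The value gap thus degrades by the factor $(1-\bar\delta^2)^{(H-1)/2}\approx \eps^{\Theta(1)}$, and the hypothesis $\eps_t\leq \delta_t^{10}/(\log A_t)^5$ is precisely what guarantees $\eps$ still sits below this degraded gap. Your proposal, as written, is missing this idea; the coupling argument you sketch cannot be salvaged without either blowing $L$ up to $\poly(1/\eps)$ (violating the theorem's parameter constraints) or replacing it with a gap-preservation argument of the above pointwise type.
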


\begin{theorem}\label{thm:comp-log-L}
Suppose that we are given a sequence of parameters $\cC=\{(L_t,A_t,\delta_t)\}_{t\geq 1}$, such that the sequences $(\log L_t)_{t \geq 1}, (\delta_t^{-1})_{t \geq 1}, (\log A_t)_{t \geq 1} \in \cF$, $(L_t)_{t\geq 1}$ is strictly increasing, and
\begin{align}\label{eqn:comp-log-L-constraints}
    \log\log L_t \ll \frac{\log A_t}{\delta_t^2} \leq \poly\log L_t, \qquad\qquad \forall t\geq 1.
\end{align}
Then, under Exponential Time Hypothesis (\cref{ETH}), no $\AlogL$-time algorithm can determine the $\eps$-optimal value of any given $\delta$-strongly separated LMDP with $(L,A,\delta)\in\cC$ whose parameters $H,L,S$ satisfy $H\leq \frac{\log L}{\delta^2}$, and $\eps = \frac{1}{\poly(\log L)}$, $S=\exp\paren{\cO(\log^2\log L)}$. %
\end{theorem}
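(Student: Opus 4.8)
The plan is to prove \cref{thm:comp-log-L} by a reduction from 3SAT that is \emph{polynomial in the constructed instance size}, so that any planner running faster than $\AlogL$ would decide satisfiability of a 3SAT formula on $n$ variables in time $2^{o(n)}$, contradicting \cref{ETH}. The construction proceeds in two stages, strengthening the NP-hardness of \cref{prop:NP-hard} to a quasi-polynomial ETH lower bound. First, I would encode a 3SAT instance on $n$ variables (with $\cO(n)$ clauses, via the sparsification lemma) into a \emph{combinatorial-lock}–type planning gadget in the spirit of \cref{appdx:comb-lock}: a satisfying assignment corresponds to a distinguished sequence of actions that unlocks reward, while every other action sequence yields value smaller by a fixed gap. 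This gadget is hard but need not be separated. Second, I would apply a \emph{separation transformation} following \cref{prop:2-family-demo} and its generalization: each MDP instance $M_m$ is split into a small family $\set{M_{m,j}}$, where $M_{m,j}$ augments $M_m$ by an auxiliary observation drawn from a distribution $\mu_{m,j}$ over a symbol set $\cO$ of size $\cO(\log L)$. Taking the $\mu_{m,j}$ pairwise $\delsep$-separated (part (1)) makes the transformed LMDP $\delsep$-strongly separated, while the mixture-indistinguishability (part (2)) guarantees the auxiliary observations leak essentially no information about the latent index over the relevant window, so the value gap, and hence the reduction, is preserved.

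The heart of the argument is parameter bookkeeping that must land exactly on the regime of the statement. I would encode $n \asymp \delsep^{-2}\frac{\log L}{\log\log L}\log A$ variables, set the memory window to $W \asymp \delsep^{-2}\log L$ (so that $H \leq \frac{\log L}{\delsep^2}$ suffices both for the lock and for the separation gadget, which tolerates $H \leq \frac{c\log L}{\delsep^2}$), take $\eps = 1/\poly(\log L)$ as the induced value gap, and keep $S = \exp\paren{\cO(\log^2\log L)}$ by spreading the $\cO(\log L)$ auxiliary symbols over $\cO(\log\log L)$ structural levels. Under these choices a hypothetical $\AlogL$-time planner runs in time $A^{o(\delsep^{-2}\log L/\log\log L)} = 2^{o(n)}$, while the reduction itself costs $\poly(L,S,A,H)$. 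The two constraints $\log\log L \ll \frac{\log A}{\delsep^2}$ and $\frac{\log A}{\delsep^2}\leq \poly\log L$ in \cref{eqn:comp-log-L-constraints} are precisely what force $\log(\text{instance size}) = \cO(\log L + \log A) = o(n)$, so that the total reduction time stays $2^{o(n)}$ and ETH is genuinely invoked.

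I expect the main obstacle to be engineering the separation transformation so that all three budgets hold at once: (i) the value gap must survive augmentation, which requires quantitatively controlling the indistinguishability bound of \cref{prop:2-family-demo} (super-polynomially small in $L$) against the $1/\poly(\log L)$ target gap and the length of the lock; (ii) the state blow-up from the auxiliary observations must be confined to $\exp\paren{\cO(\log^2\log L)}$; and (iii), most delicately, the encoding must realize the $\frac{1}{\log\log L}$ efficiency loss, i.e.\ that although maintaining $\delsep$-separation forces a window of length $\delsep^{-2}\log L$, only a $1/\log\log L$ fraction of it can carry 3SAT variables. This loss is exactly what produces the $\log\log L$ gap between the planning upper bound of \cref{thm:plan} and the present lower bound, and matching it against \cref{eqn:comp-log-L-constraints} is the crux. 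A secondary check is that determining the $\eps$-optimal \emph{value} (rather than a policy) already decides satisfiability, which follows by arranging the reward so that the gap between satisfiable and unsatisfiable instances exceeds $2\eps$.
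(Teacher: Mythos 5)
Your proposal is correct and follows essentially the same route as the paper: a 3SAT reduction into a known (plannable-in-principle) LMDP gadget with a $\Theta(1/N)$ value gap, followed by the $(N,H,\delta,\gamma,L')$-family augmentation of \cref{prop:2-family} and \cref{lem:family-tensor} to enforce $\delta$-strong separation while preserving the gap, with the same parameter bookkeeping. The one point you leave as an open ``crux'' — the source of the $1/\log\log L$ loss — is resolved in the paper exactly where you suspect: making the $N$ clause-mixtures mutually indistinguishable yet disjointly supported requires tensorizing $r=\lceil\log_2 N\rceil$ two-families, so $\log L\gtrsim H\delta^2\log N$ with $\log N\asymp\log\log L$, which caps the number of encodable variables at $n\asymp\frac{\log L\log A}{\delta^2\log\log L}$.
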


In particular, the results above show that under ETH, a time complexity that scales with $A^{\delta^{-2}\log(L/\eps)}$ is hard to avoid for planning in \sepstr~LMDP, in the sense that our iteration complexity lower bounds apply to any planning algorithm that works for general parameters $(L,A,\delta,\eps)$. Therefore, the threshold $H_\star \asymp \frac{\log(L/\eps)}{\delta^2}$ indeed also captures the computational complexity of planning.

\section*{Acknowledgements}
CD is supported by NSF Awards CCF-1901292, DMS-2022448, and DMS2134108, a
Simons Investigator Award, and the Simons Collaboration on the Theory of Algorithmic Fairness. NG is supported by a Fannie \& John Hertz Foundation Fellowship and an NSF Graduate Fellowship. FC and AR acknowledge support from 
ARO through award W911NF-21-1-0328, DOE through award DE-SC0022199, and the Simons Foundation and the NSF through award DMS-2031883.

\bibliographystyle{abbrvnat}
\bibliography{ref.bib}

\newpage
\appendix

\section{Technical tools}\label{appdx:tools}

\subsection{Covering number}

\begin{definition}[Covering]
\label{def:optimistic-cover}
A $\rho$-cover of the LMDP model class $\Theta$ is a tuple $(\hPP,\Theta_0)$, where $\Theta_0\subset \Theta$ is a finite set, and for each $\theta_0\in\Theta_0$, $\pi\in\Piall$, $\hPP_{\theta_0}^\pi(\cdot)\in\R_{\ge 0}^{\cT}$ specifies an \emph{optimistic likelihood function} such that the following holds: 

(1) For $\theta\in\Theta$, there exists a $\theta_0\in\Theta_0$ satisfying: for all $\tau\in\cT^H$ and $\pi\in\Piall$, it holds that $\hPP_{\theta_0}^{\pi}(\tau)\geq \PP_{\theta}^{\pi}(\tau)$.

(2) For $\theta\in\Theta_0$, $\pi\in\Piall$, it holds $\nrm{\PP_{\theta}^{\pi}(\tau_H=\cdot)-\hPP_{\theta}^{\pi}(\tau_H=\cdot)}_1\leq\rho^2$.

The optimistic covering number $\Nt(\rho)$ is defined as the minimal cardinality of $\Theta_0$ such that there exists $\tPP$ such that $(\tPP,\Theta_0)$ is an optimistic $\rho$-cover of $\Theta$. 
\end{definition}
The above definition of covering is taken from \citet{chen2022unified}. It is known that the covering number defined above can be upper bounded by the bracket number adopted in \citet{zhan2022pac, liu2022optimistic}. In particular, when $\Theta$ is a class of LMDPs with $\abs{\cS}=S, \abs{\cA} = A$, horizon $H$, and with $L$ latent contexts, we have
\begin{align*}
    \logNt(\rho) \leq CLS^2A\log(CLSAH/\rho),
\end{align*}
where $C$ is an absolute constant (see e.g. \citet{chen2022partially,liu2022partially}).

\subsection{Information theory}

In this section, we summarize several basic inequalities related to TV distance, Hellinger distance and Bhattacharyya divergence.

\begin{lemma}\label{lem:TV-Hellinger}
For any two distribution $\PP,\QQ$ over $\cX$, it holds that $\DTV{\PP,\QQ}\leq \sqrt{2}\dH(\PP,\QQ)$, and
\begin{align}\label{eqn:TV-DB}
    \DTV{\PP,\QQ}\geq \DH{\PP,\QQ}=1-\exp\paren{-\DB{\PP,\QQ}}.
\end{align}
Conversely, we also have (Pinsker inequality)
\begin{align}\label{eqn:DB-TV}
    \DB{\PP,\QQ}\geq -\frac12\log(1-\dTV^2(\PP,\QQ)) \geq \frac{1}{2}\dTV^2(\PP,\QQ).
\end{align}
\end{lemma}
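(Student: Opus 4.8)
The plan is to route every inequality through the Bhattacharyya coefficient (affinity) $\mathrm{BC}\defeq\sum_{x\in\cX}\sqrt{p(x)q(x)}$, where $p,q$ denote the mass functions of $\PP,\QQ$. The starting point is the purely algebraic identity $\DH{\PP,\QQ}=\frac12\sum_x(\sqrt{p(x)}-\sqrt{q(x)})^2=1-\mathrm{BC}$, obtained by expanding the square and using $\sum_x p(x)=\sum_x q(x)=1$. Since by definition $\DB{\PP,\QQ}=-\log\mathrm{BC}$, this immediately gives $\DH{\PP,\QQ}=1-\exp(-\DB{\PP,\QQ})$, the equality asserted in \cref{eqn:TV-DB}. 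It then remains to establish the three inequalities.

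The engine is two elementary bounds on $\mathrm{BC}$. First, writing $|p(x)-q(x)|=|\sqrt{p(x)}-\sqrt{q(x)}|\,(\sqrt{p(x)}+\sqrt{q(x)})$ and applying Cauchy--Schwarz gives
\[
2\,\DTV{\PP,\QQ}=\sum_x|p(x)-q(x)|\leq\sqrt{\sum_x(\sqrt{p(x)}-\sqrt{q(x)})^2}\,\sqrt{\sum_x(\sqrt{p(x)}+\sqrt{q(x)})^2}=\sqrt{2\,\DH{\PP,\QQ}}\,\sqrt{2(1+\mathrm{BC})},
\]
so that $\DTV{\PP,\QQ}\leq\dH(\PP,\QQ)\sqrt{1+\mathrm{BC}}$. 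Second, the pointwise inequality $\sqrt{p(x)q(x)}\geq\min\{p(x),q(x)\}$ summed over $x$ yields $\mathrm{BC}\geq\sum_x\min\{p(x),q(x)\}=1-\DTV{\PP,\QQ}$, i.e.\ $\DH{\PP,\QQ}=1-\mathrm{BC}\leq\DTV{\PP,\QQ}$, which is the inequality half of \cref{eqn:TV-DB}. Bounding $\mathrm{BC}\leq 1$ in the first display gives $\DTV{\PP,\QQ}\leq\sqrt{2}\,\dH(\PP,\QQ)$, the first claim.

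For \cref{eqn:DB-TV} I would reuse the \emph{refined} Cauchy--Schwarz bound rather than its crude $\sqrt 2$ weakening: squaring $\DTV{\PP,\QQ}\leq\dH(\PP,\QQ)\sqrt{1+\mathrm{BC}}$ and substituting $\dH(\PP,\QQ)^2=\DH{\PP,\QQ}=1-\mathrm{BC}$ gives $\dTV^2(\PP,\QQ)\leq(1-\mathrm{BC})(1+\mathrm{BC})=1-\mathrm{BC}^2$. Rearranging to $\mathrm{BC}\leq\sqrt{1-\dTV^2(\PP,\QQ)}$ and taking $-\log$ of both sides produces the first inequality $\DB{\PP,\QQ}\geq-\frac12\log(1-\dTV^2(\PP,\QQ))$; the final step is the scalar fact $-\log(1-t)\geq t$ for $t\in[0,1)$ applied at $t=\dTV^2(\PP,\QQ)$.

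The argument is essentially bookkeeping once the affinity is introduced, and there is no real obstacle. The only point requiring care is to \emph{retain} the factor $\sqrt{1+\mathrm{BC}}$ in the Cauchy--Schwarz step: discarding it via $\mathrm{BC}\leq 1$ is enough for the loose bound $\DTV{\PP,\QQ}\leq\sqrt2\,\dH(\PP,\QQ)$, but it is precisely the exact product $(1-\mathrm{BC})(1+\mathrm{BC})=1-\mathrm{BC}^2$ that yields the sharp inequality \cref{eqn:DB-TV}, and the weakened bound would lose the quadratic dependence on $\DTV{\PP,\QQ}$.
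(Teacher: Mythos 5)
Your proof is correct and complete. The paper states \cref{lem:TV-Hellinger} as a standard technical tool without supplying a proof, so there is nothing to compare against line by line; your route through the Bhattacharyya coefficient $\mathrm{BC}=\sum_x\sqrt{p(x)q(x)}$ is the standard one and handles all four assertions cleanly: the identity $\DH{\PP,\QQ}=1-\mathrm{BC}=1-\exp(-\DB{\PP,\QQ})$, the Cauchy--Schwarz bound $\DTV{\PP,\QQ}\leq \dH(\PP,\QQ)\sqrt{1+\mathrm{BC}}$ (which yields both the crude $\sqrt{2}$ bound and, kept sharp, $\mathrm{BC}\leq\sqrt{1-\dTV^2(\PP,\QQ)}$ for \cref{eqn:DB-TV}), and the pointwise bound $\sqrt{pq}\geq\min\{p,q\}$ for the lower bound in \cref{eqn:TV-DB}. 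Your closing remark about retaining the $\sqrt{1+\mathrm{BC}}$ factor is exactly the right point of care; the only cosmetic note is that the case $\dTV(\PP,\QQ)=1$ makes both sides of \cref{eqn:DB-TV} equal to $+\infty$, so the restriction $t\in[0,1)$ loses nothing.
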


\begin{lemma}[{\citet[Lemma A.11]{foster2021statistical}}]
\label{lemma:multiplicative-hellinger}
For distributions $\PP,\QQ$ defined on $\cX$ and function $h:\cX\to[0,R]$, we have
\begin{align*}
    \E_{\PP}\brac{h(X)} \leq 3\E_{\QQ}\brac{h(X)} +2R\dH^2(\PP, \QQ).
\end{align*}
\end{lemma}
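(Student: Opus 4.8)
The plan is to control the one-sided gap $\E_{\PP}\brac{h}-\E_{\QQ}\brac{h}$ and show it is absorbed by a multiple of $\E_{\QQ}\brac{h}$ plus a Hellinger term; adding $\E_{\QQ}\brac{h}$ back then gives the claim. First I would rewrite this gap as a single sum and factor the mass difference through the square-root map, which is the algebraic step that manufactures a Hellinger-type quantity:
\begin{align*}
    \E_{\PP}\brac{h} - \E_{\QQ}\brac{h}
    = \sum_{x\in\cX} h(x)\paren{\PP(x)-\QQ(x)}
    = \sum_{x\in\cX} h(x)\paren{\sqrt{\PP(x)}-\sqrt{\QQ(x)}}\paren{\sqrt{\PP(x)}+\sqrt{\QQ(x)}}.
\end{align*}

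Next I would apply Cauchy--Schwarz, distributing $h(x)=\sqrt{h(x)}\cdot\sqrt{h(x)}$ across the two factors, to obtain
\begin{align*}
    \E_{\PP}\brac{h} - \E_{\QQ}\brac{h}
    \le \paren{\sum_{x\in\cX} h(x)\paren{\sqrt{\PP(x)}-\sqrt{\QQ(x)}}^2}^{1/2}
    \paren{\sum_{x\in\cX} h(x)\paren{\sqrt{\PP(x)}+\sqrt{\QQ(x)}}^2}^{1/2}.
\end{align*}
For the first factor I use $0\le h\le R$ to pull out $R$, leaving $\sum_{x\in\cX}\paren{\sqrt{\PP(x)}-\sqrt{\QQ(x)}}^2 = 2\,\DH{\PP,\QQ}$. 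For the second factor I use the elementary pointwise bound $\paren{\sqrt{\PP(x)}+\sqrt{\QQ(x)}}^2\le 2\paren{\PP(x)+\QQ(x)}$, so that $\sum_{x\in\cX} h(x)\paren{\sqrt{\PP(x)}+\sqrt{\QQ(x)}}^2\le 2\paren{\E_{\PP}\brac{h}+\E_{\QQ}\brac{h}}$. Writing $a=\E_{\PP}\brac{h}$ and $b=\E_{\QQ}\brac{h}$, combining the two factors yields the single scalar inequality $a-b\le 2\sqrt{R\,\DH{\PP,\QQ}\,(a+b)}$.

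Finally I would close the argument by decoupling the self-referential appearance of $a$ inside the square root, which is the one genuinely delicate point: a naive bound leaves $a$ on both sides, so one must weight an AM--GM step carefully. Applying $2\sqrt{uv}\le \lambda u+\lambda^{-1}v$ with $u=R\,\DH{\PP,\QQ}$, $v=a+b$, and the choice $\lambda=2$ gives $a-b\le 2R\,\DH{\PP,\QQ}+\tfrac12(a+b)$; the point of taking $\lambda=2$ is precisely that the coefficient of $a$ on the right drops to $\tfrac12<1$, so one can solve for $a$ and land the clean constant $3$ in front of $\E_{\QQ}\brac{h}$, obtaining $a\le 3b+4R\,\DH{\PP,\QQ}$. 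This is exactly the stated bound, where the constant on the Hellinger term reflects the unnormalized convention $D_{\mathrm H}^2=\sum_{x}\paren{\sqrt{\PP(x)}-\sqrt{\QQ(x)}}^2$ used in the cited reference (equivalently $4\,\DH{\PP,\QQ}=2\sum_x\paren{\sqrt{\PP(x)}-\sqrt{\QQ(x)}}^2$). A marginally sharper alternative skips the product factorization: expand $\sqrt{\PP}=\sqrt{\QQ}+\paren{\sqrt{\PP}-\sqrt{\QQ}}$ directly inside $\E_{\PP}\brac{h}=\sum_x h(x)\PP(x)$, bound the cross term by Cauchy--Schwarz against $\E_{\QQ}\brac{h}$ and the quadratic remainder by $R\sum_x\paren{\sqrt{\PP}-\sqrt{\QQ}}^2$, and finish with the same weighted AM--GM.
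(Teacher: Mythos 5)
Your argument is correct, and it is essentially the standard proof of this inequality from \citet[Lemma A.11]{foster2021statistical}; the paper itself does not reproduce a proof, so the only comparison is with the source. Every step checks out: the factorization through $\sqrt{\PP}-\sqrt{\QQ}$, the Cauchy--Schwarz split of $h=\sqrt h\cdot\sqrt h$, the bounds $\sum_x h(x)(\sqrt{\PP(x)}-\sqrt{\QQ(x)})^2\le 2R\,\DH{\PP,\QQ}$ and $\sum_x h(x)(\sqrt{\PP(x)}+\sqrt{\QQ(x)})^2\le 2(a+b)$, and the weighted AM--GM with $\lambda=2$, which is indeed the choice that leaves the coefficient of $a$ strictly below one so the inequality can be solved for $a$.

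The point worth dwelling on is the constant, and your diagnosis is exactly right --- in fact it exposes a genuine (if harmless) normalization error in the paper's statement. Your derivation yields $\E_{\PP}\brac{h(X)}\le 3\E_{\QQ}\brac{h(X)}+2R\sum_x(\sqrt{\PP(x)}-\sqrt{\QQ(x)})^2$, which is the lemma under the \emph{unnormalized} Hellinger convention of \citet{foster2021statistical}. Under this paper's convention $\DH{\PP,\QQ}=\frac12\sum_x(\sqrt{\PP(x)}-\sqrt{\QQ(x)})^2$, the literal claim with the constant $2R$ is false: take $\cX=\{1,2\}$, $h=(1,0)$ so $R=1$, $\PP=(0.5,0.5)$, $\QQ=(0.01,0.99)$; then $\E_{\PP}\brac{h(X)}=0.5$ while $3\E_{\QQ}\brac{h(X)}+2R\,\DH{\PP,\QQ}=0.03+\bigl((\sqrt{0.5}-\sqrt{0.01})^2+(\sqrt{0.5}-\sqrt{0.99})^2\bigr)\approx 0.03+0.4515=0.4815<0.5$. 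The correct constant in this paper's normalization is $4R\,\DH{\PP,\QQ}$, which is precisely what you proved. So the factor-of-two slack you flagged is a real convention mismatch in the paper's transcription of the cited lemma, not a gap in your argument (and it is immaterial downstream, since the lemma is only invoked with generous absolute constants).
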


\begin{lemma}\label{lemma:TV-cond} 
    For any pair of random variable $(X,Y)$, it holds that
    \begin{align*}
        \EE_{X\sim\PP_X}\brac{\DTV{\PP_{Y|X}, \QQ_{Y|X}}}\leq 2\DTV{\PP_{X,Y}, \QQ_{X,Y}}.
    \end{align*}
    Conversely, it holds that
    \begin{align*}
    \DTV{\PP_{X,Y}, \QQ_{X,Y}}\leq \DTV{\PP_{X}, \QQ_{X}}+\EE_{X\sim\PP_X}\brac{\DTV{\PP_{Y|X}, \QQ_{Y|X}}}.
    \end{align*}
\end{lemma}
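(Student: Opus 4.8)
The plan is to reduce both inequalities to elementary manipulations of the $\ell_1$ representation of total variation, using the chain-rule factorization of the joint distributions together with a single ``hybrid'' comparison distribution. Throughout I write $\DTV{\PP,\QQ}=\frac12\sum_x\abs{\PP(x)-\QQ(x)}$ for discrete $\PP,\QQ$, and I factor each joint as $\PP_{X,Y}(x,y)=\PP_X(x)\PP_{Y|X}(y|x)$ and likewise for $\QQ$. The key device in both directions is to insert the hybrid product $\PP_X(x)\QQ_{Y|X}(y|x)$, which shares its marginal with $\PP$ but its conditional with $\QQ$, and then split via the triangle inequality into a ``conditional'' piece and a ``marginal'' piece.

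For the converse (second) inequality I would start from $\DTV{\PP_{X,Y},\QQ_{X,Y}}=\frac12\sum_{x,y}\abs{\PP_X(x)\PP_{Y|X}(y|x)-\QQ_X(x)\QQ_{Y|X}(y|x)}$, add and subtract $\PP_X(x)\QQ_{Y|X}(y|x)$ inside the absolute value, and apply the triangle inequality. The first resulting term factors as $\sum_x\PP_X(x)\cdot\frac12\sum_y\abs{\PP_{Y|X}(y|x)-\QQ_{Y|X}(y|x)}=\EE_{X\sim\PP_X}\brac{\DTV{\PP_{Y|X},\QQ_{Y|X}}}$, while the second term is $\frac12\sum_x\abs{\PP_X(x)-\QQ_X(x)}\sum_y\QQ_{Y|X}(y|x)=\DTV{\PP_X,\QQ_X}$, since $\QQ_{Y|X}(\cdot|x)$ sums to one. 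This yields the claimed bound directly.

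For the forward (first) inequality I would expand $\EE_{X\sim\PP_X}\brac{\DTV{\PP_{Y|X},\QQ_{Y|X}}}=\frac12\sum_{x,y}\abs{\PP_X(x)\PP_{Y|X}(y|x)-\PP_X(x)\QQ_{Y|X}(y|x)}$ and this time insert the true joint $\QQ_{X,Y}(x,y)=\QQ_X(x)\QQ_{Y|X}(y|x)$: the triangle inequality splits the summand into $\abs{\PP_{X,Y}(x,y)-\QQ_{X,Y}(x,y)}$ plus $\abs{\PP_X(x)-\QQ_X(x)}\QQ_{Y|X}(y|x)$. Summing and halving gives $\EE_{X\sim\PP_X}\brac{\DTV{\PP_{Y|X},\QQ_{Y|X}}}\leq \DTV{\PP_{X,Y},\QQ_{X,Y}}+\DTV{\PP_X,\QQ_X}$. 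The final step is to absorb the marginal term using the data-processing inequality for total variation, $\DTV{\PP_X,\QQ_X}\leq\DTV{\PP_{X,Y},\QQ_{X,Y}}$, which itself follows from the triangle inequality after marginalizing out $Y$; this produces the factor of $2$.

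There is no genuine obstacle here, as the lemma is elementary, but the one point worth flagging is the origin of the constant $2$: it is \emph{not} present in the converse direction (where the two pieces are genuinely the conditional and marginal TV distances), and arises in the forward direction only because the hybrid split leaves a residual marginal term that must be re-bounded by the joint via data processing. I would state the discrete-TV convention explicitly at the outset so that the interchange of summation order in the conditional piece is unambiguous; the continuous case is identical after replacing sums by integrals against a common dominating measure.
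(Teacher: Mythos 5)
Your proof is correct, and it is the standard argument: the paper states this lemma without proof as a known fact, and your hybrid-distribution decomposition (inserting $\PP_X(x)\QQ_{Y|X}(y|x)$, splitting by the triangle inequality, and absorbing the marginal term via data processing to produce the factor of $2$ in the forward direction) is exactly the expected derivation. No gaps.
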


\begin{lemma}[{\citet[Lemma A.4]{chen2022unified}}]\label{lemma:Hellinger-cond}
    For any pair of random variable $(X,Y)$, it holds that
    \begin{align*}
        \EE_{X\sim\PP_X}\brac{\DH{\PP_{Y|X}, \QQ_{Y|X}}}\leq 2\DH{\PP_{X,Y}, \QQ_{X,Y}}.
    \end{align*}
    Conversely, it holds that
    \begin{align*}
    \DH{\PP_{X,Y}, \QQ_{X,Y}}\leq 3\DH{\PP_{X}, \QQ_{X}}+2\EE_{X\sim\PP_X}\brac{\DH{\PP_{Y|X}, \QQ_{Y|X}}}.
    \end{align*}
\end{lemma}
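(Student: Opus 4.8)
The plan is to pass from squared Hellinger distance to the Bhattacharyya coefficient (affinity) $\mathrm{BC}(\PP,\QQ)\defeq\sum_x\sqrt{\PP(x)\QQ(x)}$, which by the definition of $\DH{\cdot}$ satisfies $\DH{\PP,\QQ}=1-\mathrm{BC}(\PP,\QQ)$ and, crucially, factorizes exactly over the conditioning. Writing $p(x)=\PP_X(x)$, $q(x)=\QQ_X(x)$, and $a(x)=\mathrm{BC}(\PP_{Y|X=x},\QQ_{Y|X=x})\in[0,1]$, the chain rule $\PP_{X,Y}(x,y)=p(x)\PP_{Y|X=x}(y)$ gives $\mathrm{BC}(\PP_{X,Y},\QQ_{X,Y})=\sum_x\sqrt{p(x)q(x)}\,a(x)$, and hence the exact identity
\[
\DH{\PP_{X,Y},\QQ_{X,Y}}=\DH{\PP_X,\QQ_X}+\sum_x\sqrt{p(x)q(x)}\,\paren{1-a(x)}.
\]
Both inequalities then follow by bounding this single scalar sum, using only $a(x)\in[0,1]$, elementary pointwise inequalities among $p(x)$, $q(x)$, and $\sqrt{p(x)q(x)}$, and the normalizations $\sum_x p(x)=\sum_x q(x)=1$.

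For the first inequality I would write $\EE_{\PP_X}\brac{\DH{\PP_{Y|X},\QQ_{Y|X}}}=\sum_x p(x)(1-a(x))$ and $2\DH{\PP_{X,Y},\QQ_{X,Y}}=2\paren{1-\sum_x\sqrt{p(x)q(x)}\,a(x)}$, so that the claim is equivalent to $\sum_x a(x)\paren{2\sqrt{p(x)q(x)}-p(x)}\le 1$. On the coordinates where $2\sqrt{pq}-p\ge 0$ I use $a(x)\le 1$ together with $2\sqrt{pq}-p\le q$ (which is just $(\sqrt p-\sqrt q)^2\ge 0$); on the remaining coordinates the summand is nonpositive. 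Summing and invoking $\sum_x q(x)=1$ finishes this direction.

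The second inequality is where the real work lies. Substituting the identity, it reduces to showing $\sum_x (1-a(x))\paren{\sqrt{p(x)q(x)}-2p(x)}\le 2\paren{1-\sum_x\sqrt{p(x)q(x)}}$. I would bound $1-a(x)$ by $1$ only on the region $R=\set{x:q(x)\ge 4p(x)}$, on which $\sqrt{pq}-2p\ge 0$, and note the summand is nonpositive off $R$; after rearranging it then suffices to prove $\sum_{x\in R}\paren{3\sqrt{pq}-2p}+2\sum_{x\notin R}\sqrt{pq}\le 2$. The two sharp pointwise facts are $3\sqrt{pq}-2p\le q$ on $R$ — equivalently $(2t-1)(t-1)\ge 0$ for $t=\sqrt{p/q}\in[0,\tfrac12]$ — and $2\sqrt{pq}\le p+q$ off $R$; summing these bounds yields $\sum_x q(x)+\sum_{x\notin R}p(x)=1+\sum_{x\notin R}p(x)\le 2$, as needed.

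I expect the main obstacle to be exactly this last step. A naive coordinatewise split in the second inequality — bounding $\sum_x 3\sqrt{pq}\le 3$ and $\sum_x a(x)\paren{2p-\sqrt{pq}}\le 2$ separately — overshoots and only produces the constant $5$ rather than $4$, so the inequality is lost. The remedy is to refuse to split and instead isolate the region $\set{q\ge 4p}$ and apply the tight bound $3\sqrt{pq}-2p\le q$ there, which together with $\sum_x q(x)=1$ precisely absorbs the asymmetry between the $\PP_X$-expectation appearing on the right-hand side and the symmetric $\sqrt{pq}$-weighting produced by the affinity factorization. Finally I would check the degenerate coordinates with $p(x)=0$ or $q(x)=0$: there the weight $\sqrt{p(x)q(x)}$ vanishes, so the (otherwise ill-defined) conditional affinity $a(x)$ never enters, and neither bound is affected.
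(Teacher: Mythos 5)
Your proof is correct. Note that the paper does not prove this lemma at all --- it is imported verbatim as \citet[Lemma A.4]{chen2022unified} in the technical-tools appendix --- so what you have written is a self-contained elementary proof of a statement the paper only cites. The backbone of your argument, the exact factorization $\mathrm{BC}(\PP_{X,Y},\QQ_{X,Y})=\sum_x\sqrt{p(x)q(x)}\,a(x)$ and the resulting identity $\DH{\PP_{X,Y},\QQ_{X,Y}}=\DH{\PP_X,\QQ_X}+\sum_x\sqrt{p(x)q(x)}(1-a(x))$, is the standard route to both directions, and your first inequality (reduce to $\sum_x a(x)(2\sqrt{pq}-p)\le 1$, then use $2\sqrt{pq}-p\le q$ where the coefficient is nonnegative) is essentially the textbook argument. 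The second inequality is where proofs of this lemma differ: a common alternative writes $\sqrt{pq}\le p+\sqrt{p}\,\abs{\sqrt q-\sqrt p}$ and applies Cauchy--Schwarz plus AM--GM, which in fact yields the slightly sharper constants $(1,\tfrac32)$ in place of $(3,2)$. Your region-splitting argument on $R=\set{x:q\ge 4p}$ with the pointwise bound $3\sqrt{pq}-2p\le q$ (i.e.\ $(2t-1)(t-1)\ge 0$ for $t=\sqrt{p/q}\le\tfrac12$) is a valid and fully elementary way to land exactly on the stated constants $(3,2)$; I checked the reduction $\sum_{x\in R}(3\sqrt{pq}-2p)+2\sum_{x\notin R}\sqrt{pq}\le\sum_x q+\sum_{x\notin R}p\le 2$ and it closes. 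Your handling of degenerate coordinates ($q(x)=0$ with $p(x)>0$, where the conditional $\QQ_{Y|X=x}$ is an arbitrary version) is also sound, since in both reductions those terms contribute nonpositively regardless of the value of $a(x)\in[0,1]$.
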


\subsection{Technical inequalities}

\begin{lemma}\label{lem:DB-mixture-dist}
For distributions $\PP_1,\cdots,\PP_L\in\Delta(\cO)$ and $\mu,\nu\in\Delta([L])$ so that $\supp(\mu)\cap \supp(\nu)=\emptyset$, we have
\begin{align*}
    \DB{ \EE_{i\sim \mu}\brac{\PP_i}, \EE_{j\sim \nu}\brac{\PP_j} }\geq \min_{i\neq j} \DB{ \PP_i, \PP_j }-\log (L/2).
\end{align*}
As a corollary, if $\DB{ \PP_i, \PP_j }\geq \log L$ for all $i\neq j$, then for any $\mu,\nu\in\Delta([L])$, we have
\begin{align*}
    \DTV{ \EE_{i\sim \mu}\brac{\PP_i}, \EE_{j\sim \nu}\brac{\PP_j} } \geq \frac12 \DTV{\mu,\nu}.
\end{align*}
\end{lemma}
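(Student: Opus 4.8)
The plan is to control the Bhattacharyya affinity $\sum_{x}\sqrt{P(x)Q(x)}$ of the two mixtures $P\defeq\EE_{i\sim\mu}\brac{\PP_i}$ and $Q\defeq\EE_{j\sim\nu}\brac{\PP_j}$, since $\DB{P,Q}=-\log\sum_x\sqrt{P(x)Q(x)}$. First I would expand the product inside the root as $P(x)Q(x)=\sum_{i,j}\mu_i\nu_j\PP_i(x)\PP_j(x)$ and apply subadditivity of the square root, $\sqrt{\sum_k a_k}\leq\sum_k\sqrt{a_k}$, to obtain $\sqrt{P(x)Q(x)}\leq\sum_{i,j}\sqrt{\mu_i\nu_j}\,\sqrt{\PP_i(x)\PP_j(x)}$. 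Summing over $x$ decouples the pairwise affinities:
\begin{align*}
    \sum_x\sqrt{P(x)Q(x)}\leq\sum_{i,j}\sqrt{\mu_i\nu_j}\,\exp\paren{-\DB{\PP_i,\PP_j}}.
\end{align*}
Because $\supp(\mu)$ and $\supp(\nu)$ are disjoint, every surviving term has $i\neq j$, so each factor $\exp(-\DB{\PP_i,\PP_j})$ is at most $\exp(-\min_{i\neq j}\DB{\PP_i,\PP_j})$.

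Next I would pull this uniform bound out front and estimate the remaining sum $\sum_{i,j}\sqrt{\mu_i\nu_j}=\paren{\sum_i\sqrt{\mu_i}}\paren{\sum_j\sqrt{\nu_j}}$. By Cauchy--Schwarz, $\sum_i\sqrt{\mu_i}\leq\sqrt{\abs{\supp(\mu)}}$ and likewise for $\nu$, so the product is at most $\sqrt{\abs{\supp(\mu)}\cdot\abs{\supp(\nu)}}$. Since the two supports are disjoint subsets of $[L]$, AM--GM gives $\abs{\supp(\mu)}\cdot\abs{\supp(\nu)}\leq (L/2)^2$, and hence the affinity is at most $(L/2)\exp(-\min_{i\neq j}\DB{\PP_i,\PP_j})$. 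Taking $-\log$ yields the claimed inequality. The main thing to get right here is that disjointness of the supports is exactly what upgrades the crude support-size factor $L$ to the $L/2$ demanded by the statement.

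For the corollary, $\mu$ and $\nu$ need not have disjoint supports, so I would first reduce to that case by a maximal-coupling decomposition. Writing $c\defeq\mu\wedge\nu$ for the pointwise minimum and $\mu'\defeq\mu-c$, $\nu'\defeq\nu-c$, the residuals $\mu',\nu'$ have disjoint supports and common mass $\eta\defeq\DTV{\mu,\nu}$. Then $P$ and $Q$ share the component $\sum_i c_i\PP_i$, which cancels in the difference, so $P-Q=\eta\paren{\bar P-\bar Q}$, where $\bar P,\bar Q$ are the mixtures under the normalized residuals $\mu'/\eta,\nu'/\eta$; consequently $\DTV{P,Q}=\eta\cdot\DTV{\bar P,\bar Q}$. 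Since $\bar P,\bar Q$ arise from disjoint-support mixtures, the first part (with the hypothesis $\min_{i\neq j}\DB{\PP_i,\PP_j}\geq\log L$) gives $\DB{\bar P,\bar Q}\geq\log L-\log(L/2)=\log 2$, and then \cref{eqn:TV-DB} gives $\DTV{\bar P,\bar Q}\geq 1-\exp(-\log 2)=\tfrac12$. Combining, $\DTV{P,Q}\geq\tfrac12\DTV{\mu,\nu}$. I expect this decomposition to be the only delicate part, since one must check that the overlap $c$ cancels exactly in $P-Q$ and that the residual mixtures are genuine probability distributions after normalization by $\eta$.
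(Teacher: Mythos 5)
Your proof of the main inequality is essentially identical to the paper's: expand the Bhattacharyya affinity of the mixtures, use subadditivity of the square root to decouple into pairwise affinities (all with $i\neq j$ by disjointness), and bound $\sum_i\sqrt{\mu_i}\sum_j\sqrt{\nu_j}\leq\sqrt{\abs{\supp(\mu)}\abs{\supp(\nu)}}\leq L/2$ via Cauchy--Schwarz and AM--GM. The paper leaves the corollary unproved, and your maximal-coupling reduction (splitting off the common part $\mu\wedge\nu$, applying the first part to the normalized disjoint residuals, and converting $\DB{\cdot}\geq\log 2$ to $\DTV{\cdot}\geq\tfrac12$ via \cref{eqn:TV-DB}) is correct and is the intended argument.
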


\begin{proof}
By definition,
\begin{align*}
    \exp\paren{ - \DB{ \EE_{i\sim \mu}\brac{\PP_i}, \EE_{j\sim \nu}\brac{\PP_j} } }
    =&~\sum_{x} \sqrt{\EE_{i\sim \mu}\brac{\PP_i(x)}\EE_{j\sim \nu}\brac{\PP_j(x)} } \\
    \leq&~ \sum_{x} \sum_{i,j} \sqrt{ \mu(i)\nu(j) \PP_i(x)\PP_j(x)  } \\
    =&~ \sum_{i,j} \sqrt{\mu(i)\nu(j)} \exp\paren{ -\DB{\PP_i,\PP_j} } \\
    \leq&~ \paren{ \sum_{i}\sqrt{\mu(i)} } \paren{\sum_j \sqrt{\nu(j)}} \max_{i\neq j} \exp\paren{ -\DB{\PP_i,\PP_j} }  \\
    \leq&~ \frac{L}{2}\exp\paren{ -\min_{i\neq j}\DB{\PP_i,\PP_j} },
\end{align*}
where the last inequality follows from the fact that $\sum_{i}\sqrt{\mu(i)} \leq \sqrt{\#\supp(\mu)}$ and $\sum_{j}\sqrt{\nu(j)} \leq \sqrt{\#\supp(\nu)}$. Taking $-\log$ on both sides completes the proof.
\end{proof}

\begin{lemma}\label{lem:DB-inv}
Suppose that for distributions $\PP_1,\cdots,\PP_L\in\Delta(\cO)$, we have $\DB{ \PP_i, \PP_j }\geq \log (2L)$ for all $i\neq j$. Then for the matrix $\MM=[\PP_1,\cdots,\PP_L]\in\R^{\cO\times L}$, there exists $\MM^+\in\R^{L\times\cO}$ such that $\lone{\MM^+}\leq 2$ and $\MM^+\MM=\id_L$.
\end{lemma}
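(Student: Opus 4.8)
The plan is to exploit that the hypothesis $\DB{\PP_i,\PP_j}\ge\log(2L)$ is precisely the statement that the Bhattacharyya overlaps $\rho_{ij}:=\sum_x\sqrt{\PP_i(x)\PP_j(x)}=\exp(-\DB{\PP_i,\PP_j})$ satisfy $\rho_{ij}\le\frac{1}{2L}$ for all $i\neq j$, so the $\PP_i$ are ``almost disjointly supported.'' I would construct $\MM^+$ as a \emph{hard decoder followed by a correction}. Let $A_1,\dots,A_L$ be the maximum-likelihood partition $A_i=\set{x:\PP_i(x)\ge\PP_k(x)\ \forall k}$ (ties broken arbitrarily), and let $R\in\R^{L\times\cO}$ have rows $R_{i,\cdot}=\indic{\cdot\in A_i}$. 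Since the $A_i$ partition $\cO$, the matrix $R$ is nonnegative with each column summing to $1$, so $\lone{R}=\max_x\sum_i R_{ix}=1$ in the induced $\ell_1$ (maximum absolute column sum) norm.

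The first key step is to show $B:=R\MM\in\R^{L\times L}$, with entries $B_{ij}=\PP_j(A_i)$, is close to the identity. Its columns are stochastic ($\sum_i B_{ij}=\PP_j(\cO)=1$), and its diagonal deficiency is the decoding error probability, bounded exactly as in \cref{lem:DB-mixture-dist}: on $\set{\PP_k(x)\ge\PP_j(x)}$ one has $\PP_j(x)\le\sqrt{\PP_j(x)\PP_k(x)}$, whence
$$1-B_{jj}=\PP_j\Big(\textstyle\bigcup_{i\ne j}A_i\Big)\le\sum_{k\ne j}\sum_x\sqrt{\PP_j(x)\PP_k(x)}=\sum_{k\ne j}\rho_{jk}\le\frac{L-1}{2L}<\tfrac12.$$
Because the columns of $B$ sum to $1$, this gives $\sum_{i\ne j}B_{ij}=1-B_{jj}<\tfrac12<B_{jj}$, so $B$ is strictly column-diagonally dominant and hence invertible (Gershgorin). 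Setting $\MM^+:=B^{-1}R$ then yields $\MM^+\MM=B^{-1}(R\MM)=B^{-1}B=\id_L$, the required left inverse.

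It remains to bound $\lone{\MM^+}=\lone{B^{-1}R}\le 2$, which I expect to be the crux. The clean leverage is column-stochasticity of $B$: from $\mathbf 1^\T B=\mathbf 1^\T$ we get $\mathbf 1^\T B^{-1}=\mathbf 1^\T$, hence $\mathbf 1^\T\MM^+=\mathbf 1^\T R=\mathbf 1^\T$, so every column of $\MM^+$ has \emph{signed} sum exactly $1$. Therefore $\lone{\MM^+}=\max_x\big(1+2\sum_i(\MM^+_{ix})_-\big)$, and the target bound $\le 2$ is equivalent to showing the total negative mass in each column is at most $\tfrac12$. I would control this by writing $B=\id-C$, where $C$ has zero column sums, nonpositive off-diagonal entries, and per-column $\ell_1$-mass $2(1-B_{jj})$, and then propagate the nonnegativity of $R$ through the Neumann expansion $\MM^+=(\id+C+C^2+\cdots)R$ to track exactly where negative entries are created.

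The main obstacle is this final norm estimate: the naive submultiplicative bound $\lone{B^{-1}}\le(1-\lone{C})^{-1}$ only yields a constant that degrades with $L$ (since $\lone{C}=2\max_j(1-B_{jj})$ can approach $1$), so extracting the sharp constant $2$ requires using the column-stochastic structure and the sign pattern of $C$ rather than $\lone{C}$ alone. The calibration of the threshold to $\log(2L)$ (rather than $\log L$) is exactly what guarantees the strict deficiency $1-B_{jj}<\tfrac12$ that makes $B$ invertible; sharpening the constant all the way to $2$ is where the careful accounting of the negative parts enters.
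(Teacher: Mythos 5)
Your construction is a genuinely different one from the paper's (a hard maximum-likelihood decoder $R$ followed by the correction $B^{-1}$, versus the paper's soft decoder $[Z]_{m,o}=\PP_m(o)/\sum_i\PP_i(o)$), and the first half is fine: $B$ is invertible by strict column diagonal dominance, so $\MM^+=B^{-1}R$ is a left inverse. But the proof is not complete, and the gap is exactly where you flag it. The constraints you actually establish on $B$ --- nonnegative, column-stochastic, $B_{ij}=\PP_j(A_i)\le\rho_{ij}\le\frac{1}{2L}$ for $i\neq j$, hence $1-B_{jj}\le\frac{L-1}{2L}$ --- do \emph{not} imply $\lone{B^{-1}R}\le 2$. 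Concretely, the matrix $B=\tfrac12\id_L+\tfrac{1}{2L}J$ (all off-diagonals equal to $\frac{1}{2L}$) satisfies every one of these constraints, yet $B^{-1}=2\id_L-\tfrac1LJ$ has $\lone{B^{-1}}=3-\tfrac2L>2$ for $L\ge 3$. So no amount of sign-tracking through the Neumann series $(\id+C+C^2+\cdots)R$ can recover the constant $2$ from the structure you have isolated; you would need the finer fact that the hard decoder satisfies $B_{ij}+B_{ji}=\PP_j(A_i)+\PP_i(A_j)\le\rho_{ij}\le\frac{1}{2L}$ \emph{jointly} (since $\sqrt{\PP_i\PP_j}\ge\PP_j$ on $A_i$ and $\ge\PP_i$ on $A_j$, and $A_i\cap A_j=\emptyset$), which rules out the example above but which you never invoke, and even granting it the bound $\lone{B^{-1}R}\le2$ is not established by your sketch.

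The paper's soft decoder is chosen precisely to make this last step trivial. With $Y=Z\MM$, the off-diagonal entries are $[Y]_{l,m}=\sum_o\frac{\PP_l(o)\PP_m(o)}{\sum_i\PP_i(o)}\le\frac12\sum_o\sqrt{\PP_l(o)\PP_m(o)}=\frac12 e^{-\DB{\PP_l,\PP_m}}\le\frac{1}{4L}$; the extra factor $\frac12$ comes from $\sum_i\PP_i(o)\ge 2\sqrt{\PP_l(o)\PP_m(o)}$ and is exactly what your hard decoder loses (you only get $\PP_j(A_i)\le\rho_{ij}\le\frac{1}{2L}$). That factor is the whole game: it gives $\lone{\id_L-Y}\le 2\max_m(1-[Y]_{m,m})\le\frac12$, so the naive Neumann bound $\lone{Y^{-1}}\le(1-\lone{\id_L-Y})^{-1}\le2$ already suffices, with no appeal to column-stochasticity or sign patterns. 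If you want to salvage your route, the cleanest fix is to replace the hard partition by the paper's proportional weights; otherwise you must supply the missing norm estimate, which your current outline does not do.
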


\begin{proof}
We construct $\MM^+$ explicitly. Consider the matrix $Z\in\R^{L\times\cO}$ given by
\begin{align*}
    [Z]_{m,o}=\frac{\PP_m(o)}{\sum_{i\in[L]} \PP_i(o)}.
\end{align*}
Then clearly $\lone{Z}\leq 1$, and the matrix $Y=Z\MM$ is given by
\begin{align*}
    [Y]_{l,m}=\sum_{o\in\cO} \frac{\PP_l(o)\PP_m(o)}{\sum_{i\in[L]} \PP_i(o)}.
\end{align*}
For $l\neq m$, we know
\begin{align*}
    0\leq [Y]_{l,m}\leq \sum_{o\in\cO} \frac{\PP_l(o)\PP_m(o)}{2\sqrt{\PP_l(o)\PP_m(o)}} 
    = \frac12 \sum_{o\in\cO} \sqrt{\PP_l(o)\PP_m(o)}
    = \frac12\exp\paren{ -\DB{\PP_l,\PP_m} }
    \leq \frac{1}{4L}.
\end{align*}
Furthermore,
\begin{align*}
    0\leq 1-[Y]_{m,m}=
    \sum_{o\in\cO} \sum_{l\neq m}\frac{\PP_l(o)\PP_m(o)}{\sum_{i\in[L]} \PP_i(o)} = \sum_{l\neq m} [Y]_{l,m} \leq \frac{1}{4}.
\end{align*}
Combining these two inequalities, we know $\lone{\id_L-Y}\leq \frac{1}{2}$, and hence $\lone{Y^{-1}}\leq 2$. Therefore, we can take $\MM^+=Y^{-1}Z$ so that $\lone{\MM^+}\leq \lone{Y^{-1}}\lone{Z}\leq 2$ and $\MM^+\MM=\id_L$.
\end{proof}

\subsection{Eluder arguments}
\newcommand{\Ccov}{C_{\rm cov}}

In this section, we present the eluder arguments that are necessary for our analysis in \cref{appdx:stat-upper}. 
The following proposition is from \citet[Corollary E.2]{chen2022partially} (with suitable rescaling).
\begin{proposition}[{\citet{chen2022partially}}]\label{prop:semi-linear-eluder}
Suppose we have a sequence of functions $\{ f_k:\R^n\to \R \}_{k \in [K]}$:
\begin{align*}
    f_k(x):=\max_{r \in \cR}\sum_{j=1}^J \abs{\iprod{x}{y_{k,j,r}}},
\end{align*}
which is given by the family of vectors $\set{y_{k,j,r}}_{(k,j,r)\in[K]\times[J]\times\cR}\subset\R^n$.
Further assume that there exists $L_1>0$ such that $f_{k}(x)\leq L_1\nrm{x}_1$. 

Consider further a sequence of vectors $(x_{i})_{i\in\cI}\subset \R^n$ such that the subspace spanned by  $(x_{i})_{i\in\cI}$ has dimension at most $d$.
Then for any sequence of $p_1,\cdots,p_K\in\Delta(\cI)$ and constant $M>0$, it holds that 
\begin{align*}
\sum_{k=1}^K M\wedge \EE_{i\sim p_k} \brac{ f_k(x_{i}) }
\leq
\sqrt{4d\log\paren{1+\frac{KdL_1\max_i \lone{x_i}}{M}}\brac{ KM+\sum_{k=1}^K\sum_{t<k}\EE_{i\sim p_t}\brac{ f_k(x_{i})^2 } }}.
\end{align*}
\end{proposition}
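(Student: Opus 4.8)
The plan is to prove this via an ``eluder''/elliptical-potential argument, after first linearizing the absolute values and the maximum over $\cR$. The key preliminary observation is that, since $\abs{a}=\max_{\csh\in\set{\pm 1}}\csh a$ and a maximum of a sum over independent signs splits coordinatewise, each $f_k$ is a \emph{support function}:
\begin{align*}
    f_k(x)=\max_{r\in\cR}\max_{\csh\in\set{\pm 1}^J}\iprod{x}{\textstyle\sum_{j=1}^J \csh_j y_{k,j,r}}=\max_{z\in W_k}\iprod{x}{z}, \qquad W_k:=\set{\textstyle\sum_{j=1}^J\csh_j y_{k,j,r}:\ r\in\cR,\ \csh\in\set{\pm 1}^J}.
\end{align*}
Crucially $W_k=-W_k$, so $\abs{\iprod{x}{z}}\le f_k(x)$ for every $z\in W_k$. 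I would also reduce to $\R^d$ at the outset: since $(x_i)_{i\in\cI}$ spans a subspace $V$ of dimension at most $d$, replacing each $y_{k,j,r}$ (hence each $z\in W_k$) by its orthogonal projection onto $V$ leaves every inner product $\iprod{x_i}{z}$ unchanged, so I may assume $n=d$ and $W_k\subseteq V\cong\R^d$. The hypothesis $f_k(x)\le L_1\lone{x}$ then forces $\ltwo{z}^2\le dL_1^2$ for all $z\in W_k$ (apply it to unit vectors $x\in V$ and use $\lone{x}\le\sqrt d\,\ltwo{x}$).

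First I would introduce the cumulative second-moment matrices
\begin{align*}
    \Sigma_k:=\lambda\,\id_d+\sum_{t<k}\EE_{i\sim p_t}\brac{x_i x_i^\top}, \qquad \lambda:=\frac{M}{dL_1^2},
\end{align*}
the value of $\lambda$ being dictated by the need to produce exactly the $KM$ term later. For each $k$ and $i$, let $z^{(i)}\in W_k$ attain $f_k(x_i)=\iprod{x_i}{z^{(i)}}$; Cauchy--Schwarz in the $\Sigma_k$-geometry gives $f_k(x_i)\le\nrm{x_i}_{\Sigma_k^{-1}}\nrm{z^{(i)}}_{\Sigma_k}$. Expanding $\nrm{z}_{\Sigma_k}^2=\lambda\ltwo{z}^2+\sum_{t<k}\EE_{i\sim p_t}\brac{\iprod{x_i}{z}^2}$ and invoking $\iprod{x_i}{z}^2\le f_k(x_i)^2$ (valid for all $z\in W_k$ by symmetry) together with $\ltwo{z}^2\le dL_1^2$ bounds $\nrm{z^{(i)}}_{\Sigma_k}^2\le M+\sum_{t<k}\EE_{i\sim p_t}\brac{f_k(x_i)^2}=:b_k^2$, uniformly in $i$. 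Taking $\EE_{i\sim p_k}$ and Cauchy--Schwarz then yields $\EE_{i\sim p_k}\brac{f_k(x_i)}\le b_k\,c_k$ with $c_k^2:=\EE_{i\sim p_k}\brac{\nrm{x_i}_{\Sigma_k^{-1}}^2}=\iprod{\Sigma_k^{-1}}{\Sigma_{k+1}-\Sigma_k}$.

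The final step sums over $k$ while respecting the truncation at level $M$. Since $\lambda dL_1^2=M$ yields $b_k\ge\sqrt M$, one has $M\wedge(b_kc_k)=b_k\paren{(M/b_k)\wedge c_k}\le b_k(\sqrt M\wedge c_k)$, so by Cauchy--Schwarz over $k$,
\begin{align*}
    \sum_{k=1}^K M\wedge\EE_{i\sim p_k}\brac{f_k(x_i)}\le\sqrt{\sum_{k=1}^K b_k^2}\cdot\sqrt{\sum_{k=1}^K\paren{M\wedge c_k^2}}.
\end{align*}
Here $\sum_k b_k^2=KM+\sum_k\sum_{t<k}\EE_{i\sim p_t}\brac{f_k(x_i)^2}$ reproduces the bracketed quantity exactly, while the elliptical-potential lemma applied to $\sum_k\paren{1\wedge\iprod{\Sigma_k^{-1}}{\Sigma_{k+1}-\Sigma_k}}\le 2\log\tfrac{\det\Sigma_{K+1}}{\det\Sigma_1}\le 2d\log\paren{1+\tfrac{\sum_t\tr(\Sigma_{t+1}-\Sigma_t)}{d\lambda}}$, combined with $\tr(\Sigma_{t+1}-\Sigma_t)\le\max_i\lone{x_i}^2$ and the definition of $\lambda$, controls $\sum_k\paren{M\wedge c_k^2}$ by $4d\log\paren{1+\tfrac{KdL_1\max_i\lone{x_i}}{M}}$, after bounding the logarithm up to the stated absolute constant.

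I expect the truncation bookkeeping in this last paragraph to be the main technical nuisance: reconciling the threshold $M$ appearing in $M\wedge\EE_{i\sim p_k}\brac{f_k}$ with the unit threshold required by the potential lemma is precisely where the constant grows from $2$ to $4$ (one uses that the relevant $M$ is of constant order, so that $M\wedge c_k^2\lesssim 1\wedge c_k^2$), and checking that the logarithmic factor from the $\det$-ratio matches the stated form up to constants requires a routine trace/AM--GM estimate. By contrast, the support-function linearization, the projection to $\R^d$, and the per-round Cauchy--Schwarz are all straightforward once $W_k$ and $\lambda$ are set up correctly.
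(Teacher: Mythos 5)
The paper never proves this proposition --- it is imported directly from \citet[Corollary E.2]{chen2022partially} --- so your attempt has to stand on its own, and it does not: there is a fatal error in the step that controls the regularization term. You claim that $f_k(x)\le L_1\lone{x}$ together with $\dim V\le d$ forces $\ltwo{P_V z}^2\le dL_1^2$, ``applying the hypothesis to unit vectors $x\in V$ and using $\lone{x}\le\sqrt d\,\ltwo{x}$.'' The inequality $\lone{x}\le\sqrt d\,\ltwo{x}$ requires $x$ to be $d$-sparse, not merely to lie in a $d$-dimensional subspace of $\R^n$; for a general such $x$ one only has $\lone{x}\le\sqrt n\,\ltwo{x}$. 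Concretely, take $V=\mathrm{span}(\mathbf{1})$ so $d=1$, and $J=1$, $y_{k,1,r}=L_1\mathbf{1}$: then $f_k(x)=L_1\abs{\iprod{x}{\mathbf 1}}\le L_1\lone{x}$, yet $z=L_1\mathbf 1\in W_k$ has $\ltwo{P_Vz}=L_1\sqrt n\gg L_1\sqrt d$. The correct bound is therefore only $\ltwo{P_Vz}\le L_1\sqrt n$, so with your choice $\lambda=M/(dL_1^2)$ the term $\lambda\ltwo{z}^2$ is bounded by $Mn/d$ rather than $M$, and the bracket in your final display becomes $KMn/d+\cdots$ instead of $KM+\cdots$. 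In the paper's application $n$ is the cardinality of the test space $\cT=(\cS\times\cA)^{\oW-1}\times\cS$ while $d$ is the rank, so this loss is catastrophic, not cosmetic. The fix is to regularize not by $\lambda I$ but by $\lambda\Lambda$ with $\Lambda$ built from the $x_i$ themselves (e.g.\ from a barycentric spanner of $\set{x_i}$, in the spirit of \cref{prop:rank-to-cov}), so that $\nrm{z}_{\Lambda}^2$ is controlled by the function values $\abs{\iprod{x_i}{z}}\le f_k(x_i)\le L_1\max_i\lone{x_i}$ rather than by $\ltwo{z}$; this is also what produces the factor $L_1\max_i\lone{x_i}$, rather than a Euclidean quantity, inside the logarithm.

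A second, independent gap is the handling of the truncation level $M$. You reduce to bounding $\sum_k (M\wedge c_k^2)$ and then assert $M\wedge c_k^2\lesssim 1\wedge c_k^2$ ``because the relevant $M$ is of constant order.'' But $M>0$ is an arbitrary parameter of the statement, and in the paper it is instantiated as $M=A\beta$, which is large; for $M>1$ the quantity $M\wedge c_k^2$ can exceed $1\wedge c_k^2$ by a factor of $M$, so this step silently introduces an extra $\sqrt M$ into the final bound (or restricts the result to $M=O(1)$). Relatedly, the potential computation you sketch yields a logarithm of $1+KL_1^2\max_i\lone{x_i}^2/M$, which is not dominated by twice the stated $\log(1+KdL_1\max_i\lone{x_i}/M)$ in all parameter regimes. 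These are exactly the points you flagged as ``bookkeeping,'' but they are where the argument actually has to change: one needs the spanner-based regularizer above and a potential computed in a normalization where the truncation threshold is genuinely $1$.
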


The following proposition is an generalized version of the results in \citet[Appendix D]{xie2022role}. We provide a proof for the sake of completeness.
\begin{proposition}[{\cite{xie2022role}}]\label{prop:coverage-eluder}
Suppose that $p_1,\cdots,p_K$ is a sequence of distributions over $\cX$, and there exists $\mu\in\Delta(\cX)$ such that $p_k(x)/\mu(x)\leq \Ccov$ for all $x\in\cX$, $k\in[K]$. Then for any sequence $f_1,\cdots,f_K$ of functions $\cX\to[0,1]$ and constant $M\geq 1$, it holds that
\begin{align*}
    \sum_{k=1}^K \EE_{x\sim p_k} f_k(x) \leq \sqrt{2\Ccov\log\paren{1+\frac{\Ccov K}{M}}\brac{2KM+\sum_{k=1}^K \sum_{t<k} \EE_{x\sim p_t} f_k(x)^2} }
\end{align*}
\end{proposition}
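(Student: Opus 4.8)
The plan is to reduce the claimed inequality to a self-referential quadratic in the quantity being bounded, via a single Cauchy--Schwarz step against a coverage-based potential, and then to close the argument with a short case analysis.

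First I would pass to the dominating measure $\mu$. Writing $w_k(x)\defeq p_k(x)/\mu(x)\in[0,\Ccov]$, each term becomes $\EE_{x\sim p_k}f_k(x)=\EE_{x\sim\mu}\brac{w_k(x)f_k(x)}$, and similarly $\EE_{x\sim p_t}f_k(x)^2=\EE_{x\sim\mu}\brac{w_t(x)f_k(x)^2}$. The central object is the \emph{$f$-free} coverage potential $S_k(x)\defeq M+\sum_{t<k}w_t(x)$, which is nonnegative, nondecreasing in $k$, bounded pointwise by $S_{k+1}(x)\le M+K\Ccov$, with increments $S_{k+1}-S_k=w_k$. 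The reason for this choice is that $\EE_\mu\brac{S_{k+1}f_k^2}$ expands into exactly the cross terms $\sum_{t\le k}\EE_{x\sim p_t}f_k^2$ that appear on the right-hand side of the statement.

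Next I would apply Cauchy--Schwarz to the double sum over $(k,x)$ against the weight $\mu(x)$, splitting $w_kf_k=\tfrac{w_k}{\sqrt{S_{k+1}}}\cdot\sqrt{S_{k+1}}\,f_k$, to get
\[ \sum_{k}\EE_{x\sim p_k}f_k(x)\;\le\;\paren{\sum_k\EE_\mu\brac{\tfrac{w_k^2}{S_{k+1}}}}^{1/2}\paren{\sum_k\EE_\mu\brac{S_{k+1}f_k^2}}^{1/2}. \]
For the first factor I would use $w_k\le\Ccov$ to bound $w_k^2/S_{k+1}\le\Ccov\,w_k/S_{k+1}$, and then the elementary telescoping inequality $w_k/S_{k+1}=\tfrac{S_{k+1}-S_k}{S_{k+1}}\le\log\tfrac{S_{k+1}}{S_k}$ (which is just $-\log(1-u)\ge u$ with $u=w_k/S_{k+1}$); the sum telescopes to $\log\tfrac{S_{K+1}}{M}\le\log\paren{1+\Ccov K/M}$, so the first factor is at most $\beta\defeq\Ccov\log\paren{1+\Ccov K/M}$. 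For the second factor, expanding $S_{k+1}=M+\sum_{t\le k}w_t$ and using $f_k\le1$ gives $\EE_\mu\brac{S_{k+1}f_k^2}\le M+\EE_{p_k}f_k^2+\sum_{t<k}\EE_{p_t}f_k^2$; summing and using $f_k^2\le f_k$ yields $\sum_k\EE_\mu\brac{S_{k+1}f_k^2}\le KM+X+D$, where $X\defeq\sum_k\EE_{x\sim p_k}f_k$ is the quantity we are bounding and $D\defeq\sum_k\sum_{t<k}\EE_{x\sim p_t}f_k^2$ is the data term.

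The main obstacle, and the only nonroutine point, is that putting $S_{k+1}$ rather than $S_k$ in the denominator introduces a self-reference: I am left with $X^2\le\beta\,(KM+X+D)$. (Using $S_k$ removes the self-reference but pollutes the telescoping with an extra factor $1+\Ccov/M$, which is why I keep $S_{k+1}$.) I would resolve this by a dichotomy. If $X\le KM+D$, then $KM+X+D\le2KM+2D\le2(2KM+D)$, giving immediately $X\le\sqrt{2\beta(2KM+D)}$. If instead $X>KM+D$, then $KM+X+D<2X$ forces $X\le2\beta$; combined with the trivial bound $X\le K\le KM$ (each $\EE_{p_k}f_k\le1$ and $M\ge1$) this gives $X^2\le2\beta\cdot KM\le2\beta(2KM+D)$. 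In either case $X\le\sqrt{2\beta(2KM+D)}=\sqrt{2\Ccov\log(1+\Ccov K/M)(2KM+D)}$, which is exactly the claim; the factors of $2$ in the statement are precisely the slack absorbed by this case split.
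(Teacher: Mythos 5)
Your proof is correct and follows essentially the same route as the paper's: your potential $S_{k+1}(x)\mu(x)$ is exactly the paper's $\tilde{p}_k(x)=M\mu(x)+\sum_{t\le k}p_t(x)$, and the joint Cauchy--Schwarz step together with the telescoping-logarithm bound on $\sum_k w_k/S_{k+1}$ is the same argument in substance; the only real difference is that the paper disposes of the diagonal term via $\EE_{x\sim p_k}f_k(x)^2\le 1$ (giving $K(M+1)\le 2KM$) instead of folding it into a self-referential quadratic. Incidentally, your second case is vacuous --- since $X\le K\le KM\le KM+D$ always holds, the first case always applies and the dichotomy can be dropped.
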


\begin{proof}
\newcommand{\tp}{\Tilde{p}}
For any $x\in\cX$, define
\begin{align*}
    \tp_k(x)=M\mu(x)+\sum_{t\leq k} p_t(x).
\end{align*}
Then by Cauchy inequality,
\begin{align*}
    \EE_{x\sim p_k} f_k(x) = \sum_{x\in\cX} p_k(x) f_k(x) 
    \leq \sqrt{\sum_{x\in\cX} \frac{p_k(x)^2}{\tp_k(x)} \sum_{x\in\cX} \tp_k(x) f_k(x)^2 }.
\end{align*}
Applying Cauchy inequality again, we obtain
\begin{align*}
    \sum_{k=1}^K \EE_{x\sim p_k} f_k(x) 
    \leq \sqrt{\sum_{k=1}^K\sum_{x\in\cX} \frac{p_k(x)^2}{\tp_k(x)} }
    \cdot\sqrt{ \sum_{k=1}^K \sum_{x\in\cX} \tp_k(x) f_k(x)^2 }
\end{align*}
Notice that 
\begin{align*}
    \sum_{x\in\cX} \tp_k(x) f_k(x)^2 \leq M+1+ \sum_{t<k} \EE_{x\sim p_t} f_k(x)^2,
\end{align*}
and hence it remains to bound
\begin{align*}
    \sum_{k=1}^K\sum_{x\in\cX} \frac{p_k(x)^2}{\tp_k(x)}
    \leq \sum_{x\in\cX} \Ccov\mu(x) \cdot \sum_{k=1}^K \frac{p_k(x)}{\tp_k(x)}.
\end{align*}
Using the fact that $u\leq 2\log(1+u) \forall u\in[0,1]$, we have
\begin{align*}
    \sum_{k=1}^K \frac{p_k(x)}{\tp_k(x)} \leq&~ 2\sum_{k=1}^K \log\paren{1+\frac{p_k(x)}{\tp_k(x)}} \\
    \leq&~ 2\sum_{k=1}^K \log\paren{1+\frac{p_k(x)}{M\mu(x)+\sum_{t<k}p_t(x)}} \\
    =&~ 2 \log\paren{\frac{M\mu(x)+\sum_{t\leq K}p_t(x)}{M\mu(x)}} \\
    \leq&~ 2\log\paren{1+\frac{\Ccov K}{M}}
\end{align*}
Combining the inequalities above completes the proof.
\end{proof}

\begin{proposition}\label{prop:rank-to-cov}
Suppose that $\TT\in\RR^{\cS\times(\cS\times\cA)}$ is a transition matrix such that $\rank(\TT)=d$. Then there exists a distribution $\nu\in\Delta(\cS)$ such that $\TT(s'|s,a)\leq d\cdot \nu(s')\  \forall (s,a,s')\in\cS\times\cA\times\cS$.
\end{proposition}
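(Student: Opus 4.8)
The plan is to exhibit $\nu$ explicitly as a uniform mixture of $d$ carefully chosen columns of $\TT$, selected by a maximum-volume (barycentric spanner) criterion. Viewing each column $\TT(\cdot|s,a)$ as a vector in $\R^{\cS}$, I first observe that all columns lie in the column space $\cC := \mathrm{col}(\TT)$, which has dimension $d$. Since $\cS,\cA$ are finite and $\rank(\TT)=d$, there exist $d$ linearly independent columns, so among all size-$d$ subsets of columns I would select $\{v_i := \TT(\cdot|s_i,a_i)\}_{i=1}^d$ that maximize the $d$-dimensional volume $\sqrt{\det(P^\top P)}$, where $P=[v_1\,|\,\cdots\,|\,v_d]\in\R^{\cS\times d}$. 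These vectors are automatically linearly independent (otherwise the maximal volume would be $0$, contradicting $\rank(\TT)=d$) and hence span $\cC$.

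Next I would establish the barycentric-spanner bound on the coordinates: for an arbitrary column $v=\TT(\cdot|s,a)\in\cC$, write $v=\sum_{i=1}^d c_i v_i$. Replacing the $i$-th column of $P$ by $v$ rescales the volume exactly by $|c_i|$ — concretely, the new matrix is $PM$, where $M$ is the identity with its $i$-th column replaced by $c=(c_1,\dots,c_d)^\top$, so that $\det M = c_i$ and $\sqrt{\det((PM)^\top (PM))}=|c_i|\cdot\sqrt{\det(P^\top P)}$. Maximality of the chosen volume then forces $|c_i|\le 1$ for every index $i$ and every column $v$.

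Finally I would set $\nu := \tfrac1d\sum_{i=1}^d v_i = \tfrac1d\sum_{i=1}^d \TT(\cdot|s_i,a_i)$, which is a convex combination of probability distributions and hence lies in $\Delta(\cS)$. For any $(s,a,s')$, using $v_i(s')\ge 0$ together with $|c_i|\le1$, I obtain
\[
\TT(s'|s,a)=\sum_{i=1}^d c_i\,v_i(s')\le \sum_{i=1}^d |c_i|\,v_i(s')\le \sum_{i=1}^d v_i(s') = d\,\nu(s'),
\]
which is exactly the claimed bound. The main obstacle is the middle step: correctly setting up the volume functional and verifying the linear-algebra identity that replacing one spanning vector by $v$ rescales the Gram determinant by the absolute value of the corresponding coordinate. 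Once that identity and the existence of a maximizer (immediate by finiteness) are in hand, the concluding chain of inequalities is routine.
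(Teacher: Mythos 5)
Your proof is correct and follows essentially the same route as the paper: the paper invokes the existence of a barycentric spanner for the set of columns $\set{\TT(\cdot|s,a)}$ (citing Awerbuch--Kleinberg) and takes $\nu$ to be the uniform mixture of the $d$ spanner elements. You simply inline the standard max-volume proof of the spanner's existence (with the Gram-determinant rescaling identity worked out correctly) and make explicit the concluding chain of inequalities that the paper leaves implicit.
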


\begin{proof}
Consider the set
\begin{align*}
    \cP=\set{ \TT(\cdot|s,a): s\in\cS,a\in\cA }\subset \R^{\cS}.
\end{align*}
Then $\rank(\TT)=d$ implies that $\cP$ spans a $d$-dimensional subspace of $\R^{\cS}$. Clearly, $\cP$ is compact, and hence it has a barycentric spanner \citep{awerbuch2008online}, i.e. there exists $\set{\nu_1,\cdots,\nu_d} \subseteq \cP$, such that for any $\mu\in\cP$, there are $\lambda_1,\cdots,\lambda_d\in[-1,1]$ such that
\begin{align*}
    \mu=\lambda_1\nu_1+\cdots+\lambda_d\nu_d.
\end{align*}
Therefore, we can take $\nu=\frac{1}{d}\sum_{i=1}^d \nu_i$.
\end{proof}

\section{Further comparison with related work}\label{appdx:discuss}

In \citet{kwon2021rl}, to learn a \sepstr~LMDP, the proposed algorithms require a horizon $H\geqsim \delsep^{-4}\log^2(S/\delsep)\log(LSA\eps^{-1}\delsep^{-1})$, and also one of the following assumptions:
\begin{itemize}
\item a good initialization, i.e. an initial approximation of the latent dynamics of the ground truth model, with error bounded by $o(\delta^2)$ \citep[Theorem 3.4]{kwon2021rl}.
\item The so-called sufficient-test condition and sufficient-history condition, along with the reachability of states \citep[Theorem 3.5]{kwon2021rl}.
\end{itemize}

\newcommand{\Ems}{\mathbb{K}}
\newcommand{\bss}{\mathbf{s}}
\citet{chen2022partially} further show that, for general LMDPs (not necessarily \sepstr), the sufficient-test condition itself implies that the OMLE algorithm is sample-efficient. More concretely, their result applies to any \emph{$W$-step revealing LMDP}. A LMDP is $W$-step $\alpha$-revealing if the $W$-step emission matrix
\begin{align*}
    \Ems(s)\defeq \brac{ \TT_m(s_{2:W}=\bss|s_1=s,a_{1:W-1}=\ba) }_{(\bss,\ba),m} \in \RR^{(\cA\times\cS)^{W-1}\times [L]}
\end{align*}
admits a left inverse $\Ems^+(s)$ for all $s\in\cS$ such that $\lone{\Ems^+(s)}\leq \alpha^{-1}$. This condition implies the standard $W$-step revealing condition of POMDPs~\citep{liu2022partially,chen2022partially} because the state $s$ is observable in LMDPs\footnote{see, e.g. \citet[Proposition B.10]{chen2022partially} or the proof of \cref{thm:psr} in \cref{appdx:psr}.}.
In particular, the following theorem now follows from \citet[Theorem 9]{chen2022partially}.
\begin{theorem}\label{thm:rev-LMDP}
The class of $W$-step $\alpha$-revealing LMDPs can be learning using $\poly(A^W,\alpha^{-1},L,S,H,\eps^{-1})$ samples.
\end{theorem}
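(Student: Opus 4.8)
The plan is to observe that Theorem~\ref{thm:rev-LMDP} is not a standalone result but rather a translation of the general POMDP guarantee of \citet[Theorem 9]{chen2022partially} into the language of LMDPs, so the only real work is to verify that the $W$-step $\alpha$-revealing condition defined above is exactly the POMDP revealing condition once the LMDP is embedded as a POMDP. First I would recall the embedding from the introduction: the LMDP becomes a POMDP whose latent state is the pair $(s,m)\in\cS\times[L]$, whose observation reveals only the $\cS$-component $s$ and hides the index $m$, and whose dynamics keep $m$ fixed across an episode while evolving $s$ according to $\TT_m$. Under this embedding the POMDP has $SL$ latent states, $S$ observations, $A$ actions, and horizon $H$.

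The crux of the argument is the structural point that, because the $\cS$-component of the latent state is \emph{observed}, the $W$-step emission operator of the embedded POMDP is block-diagonal across the observed state $s\in\cS$, and within the block indexed by $s$ it coincides precisely with the matrix $\Ems(s)$ appearing in the definition of the $W$-step revealing condition. Consequently, supplying a left inverse $\Ems^+(s)$ with $\lone{\Ems^+(s)}\leq\alpha^{-1}$ for each $s$ assembles into a block-diagonal left inverse of the full POMDP emission operator whose $\lone{\cdot}$ operator norm is still at most $\alpha^{-1}$; this is the $\alpha$-revealing property required to invoke the cited theorem. I would import this reduction from \citet[Proposition~B.10]{chen2022partially}, which establishes exactly that the LMDP revealing condition (left-invertibility per observed state) matches the POMDP revealing condition (left-invertibility of the global emission operator).

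With the revealing property verified, the final step is a direct instantiation: \citet[Theorem 9]{chen2022partially} yields a sample complexity polynomial in $A^W$, $\alpha^{-1}$, the horizon, the target accuracy $\eps^{-1}$, and the log-covering number of the model class. Substituting the embedded-POMDP parameters and the covering bound $\logNt(\rho)\leq CLS^2A\log(CLSAH/\rho)$ recorded after \cref{def:optimistic-cover} turns this into the claimed $\poly(A^W,\alpha^{-1},L,S,H,\eps^{-1})$ bound. I expect the main obstacle to be the block-decomposition step: one must be careful that partial observability here is of a special ``half-observed'' form---the $\cS$-coordinate is seen while the index is hidden---so that the correct object to invert is $\Ems(s)$ for each fixed $s$ rather than a single global matrix over all latent states. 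Everything else reduces to tracking parameters through the cited result.
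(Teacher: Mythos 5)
Your proposal is correct and matches the paper's own argument: the paper likewise treats \cref{thm:rev-LMDP} as an immediate consequence of \citet[Theorem 9]{chen2022partially}, justified by the observation that observability of $s$ makes the embedded POMDP's $W$-step emission operator block-diagonal with blocks $\Ems(s)$, so the per-state left inverses assemble into a global left inverse with the same $\ell_1$ bound (this is exactly the construction carried out for $\Em_{\pomdp{\theta}}$ in the proof of \cref{thm:psr} in \cref{appdx:psr}, which the paper's footnote points to). No substantive difference from the paper's route.
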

Without additional assumption, it is only known that a \sepstr~LMDP is $W$-step $\alpha$-revealing with $W=\ceil{\frac{2\log(2L)}{\delsep^2}}$ and $\alpha=2$. \footnote{
This result can be obtained by applying \cref{lem:DB-inv} to the distributions of trajectories induced by policy $\unif(\cA^{W-1})$.
}
Therefore, when applied to \sepstr~LMDPs, \cref{thm:rev-LMDP} gives a sample complexity bound that scales with $A^{\delsep^{-2}\log L}$, which is quasi-polynomial in $(A,L)$. %
Further, as \cref{thm:A-exp} indicates, such a quasi-polynomial sample complexity is also unavoidable if the analysis only relies on the revealing structure of \sepstr~LMDP and does not take the horizon length $H$ into account. %

On the other hand, our analysis in \cref{appdx:stat-upper} is indeed built upon the revealing structure of \sepstr~LMDP. However, we also leverage the special structure of separated LMDP, so that we can avoid using the brute-force exploration strategy that essentially samples $a_{H-W+1:H-1}\sim \Unif(\cA^{W-1})$ in the course of the algorithm. Such a uniform-sampling exploration %
approach for learning the system dynamics of the last $W$ steps is generally necessary in learning revealing POMDPs, as the lower bounds of \citet{chen2023lower} indicate. It turns out to be unnecessary for separated LMDP. \cref{appdx:upper-overview} provides a technical overview with more details. %

\section{Proofs for Section~\ref{sec:prelim}}\label{appdx:proof-prelim}

\subsection{Proof of Proposition~\ref{prop:strong-sep-to-weak}}\label{appdx:proof-strong-sep-to-weak}

Fix $m,l\in\supp(\rho)$, $m\neq l$. By definition,
{\small
\begin{align*}
    &~ \DB{ \MM_{m,h+1}(\pi,s), \MM_{l,h+1}(\pi,s) } \\
    =&~ -\log \sum_{a_{1:h},s_{2:h+1}} \sqrt{ \TT_m^\pi(a_1,s_2,\cdots,s_{h+1}|s_1=s)\TT_l^\pi(a_1,s_2,\cdots,s_{h+1}|s_1=s) } \\
    =&~
    -\log \sum_{a_{1:h},s_{2:h}} \sqrt{ \TT_m^\pi(a_1,s_2,\cdots,s_h,a_h|s_1=s)\TT_l^\pi(a_1,s_2,\cdots,s_h,a_h|s_1=s) } \cdot \sum_{s_h} \sqrt{\TT_m(s_{h+1}|s_h,a_h)\TT_l(s_{h+1}|s_h,a_h)} \\
    =&~
    -\log \sum_{a_{1:h},s_{2:h}} \sqrt{ \TT_m^\pi(a_1,s_2,\cdots,s_h,a_h|s_1=s)\TT_l^\pi(a_1,s_2,\cdots,s_h,a_h|s_1=s) } \cdot \exp\paren{ -\DB{ \TT_m(\cdot|s_h,a_h),\TT_l(\cdot|s_h,a_h)} }.
\end{align*}}
Because $M$ is a $\delsep$-strongly separated LMDP, using \cref{eqn:DB-TV}, we know
\begin{align*}
    \DB{ \TT_m(\cdot|s,a),\TT_l(\cdot|s,a)} \geq \frac{1}{2}\DTVt{ \TT_m(\cdot|s,a),\TT_l(\cdot|s,a)}
    \geq \frac{\delsep^2}{2}, \qquad \forall (s,a)\in\cS\times\cA.
\end{align*}
Therefore, we can proceed to bound
\begin{align*}
    &~ \DB{ \MM_{m,h+1}(\pi,s), \MM_{l,h+1}(\pi,s) } \\
    \geq &~ 
    \frac{\delsep^2}{2}-\log \sum_{a_{1:h},s_{2:h}} \sqrt{ \TT_m^\pi(a_1,s_2,\cdots,s_h,a_h|s_1=s)\TT_l^\pi(a_1,s_2,\cdots,s_h,a_h|s_1=s) } \\
    =&~
    \frac{\delsep^2}{2}-\log \sum_{a_{1:h-1},s_{2:h}} \sqrt{ \TT_m^\pi(a_1,s_2,\cdots,s_h|s_1=s)\TT_l^\pi(a_1,s_2,\cdots,s_h|s_1=s) } \cdot \sum_{a_h}\pi(a_h|s,a_1,s_2,\cdots,s_h) \\
    =&~
    \frac{\delsep^2}{2}-\log \sum_{a_{1:h-1},s_{2:h}} \sqrt{ \TT_m^\pi(a_1,s_2,\cdots,s_h|s_1=s)\TT_l^\pi(a_1,s_2,\cdots,s_h|s_1=s) }\\
    =&~
    \frac{\delsep^2}{2}+\DB{ \MM_{m,h}(\pi,s), \MM_{l,h}(\pi,s) }.
\end{align*}
Applying the inequality above recursively, we obtain $\DB{ \MM_{m,h+1}(\pi,s), \MM_{l,h+1}(\pi,s) }\geq \frac{\delsep^2}{2} h$, the desired result.
\qed

\subsection{Proof of Proposition~\ref{prop:decode-to-weak}}\label{appdx:proof-decode-to-weak}

Suppose that $M$ is a $N$-step decodable LMDP. By definition of $\om_N$-separation, we only need to show that for any $m,l\in\supp(\rho)$, $m\neq l$ and policy $\pi\in\Piall$, it holds that
\begin{align*}
    \supp(\MM_{m,h}(\pi,s)) \cap \supp(\MM_{l,h}(\pi,s)) = \emptyset, \qquad \forall h\geq N, s\in\cS,
\end{align*}
or equivalently,
\begin{align*}
    \TT_m^\pi(a_1,s_2,\cdots,s_{h}|s_1=s)\TT_l^\pi(a_1,s_2,\cdots,s_{h}|s_1=s)=0, \qquad \forall h\geq N, \forall \otau_h=(s_1,a_1,\cdots,s_h).
\end{align*}
This is because the $N$-step decoability of $M$ implies that for any $\otau_h=(s_1,a_1,\cdots,s_h)$, there exists at most one $\ms\in\supp(\rho)$ such that
\begin{align*}
    \TT_{\ms}(s_2|s_1,a_1)\cdots\TT_{\ms}(s_h|s_{h-1},a_{h-1})>0.
\end{align*}
The desired result follows immediately.
\qed

\subsection{Proof of Lemma~\ref{lem:markov-to-all}}\label{appdx:proof-markov-to-all}
\newcommand{\BC}[1]{\mathrm{BC}\paren{#1}}

For notational simplicity, we denote
\begin{align*}
    \BC{\PP,\QQ}=\exp(-\DB{\PP,\QQ}).
\end{align*}
Fix $h\geq 1$ and $m,l\in\supp(\rho)$, $m\neq l$. We only need to show that the following policy optimization problem
\begin{align}\label{eqn:max-BC}
    \max_{\pi\in\Piall}\BC{ \MM_{m,h+1}(\pi,s), \MM_{l,h+1}(\pi,s) }
\end{align}
is attained at a deterministic Markov policy.
Recall that
\begin{align*}
    &~\BC{ \MM_{m,h+1}(\pi,s), \MM_{l,h+1}(\pi,s) } \\
    =&~\sum_{a_{1:h},s_{2:h}} \sqrt{ \TT_m^\pi(a_1,s_2,\cdots,s_h,a_h|s_1=s)\TT_l^\pi(a_1,s_2,\cdots,s_h,a_h|s_1=s) } \cdot \BC{ \TT_m(\cdot|s_h,a_h),\TT_l(\cdot|s_h,a_h)}.
\end{align*}
Therefore, \cref{eqn:max-BC} is attained at a policy $\pi$ with
\begin{align*}
    \pi_h(s_h)=\argmax_{a\in\cA} ~\BC{ \TT_m(\cdot|s_h,a)\TT_l(\cdot|s_h,a)}.
\end{align*}
Inductively repeating the argument above for $h'=h,h-1,\cdots,1$ completes the proof.
\qed

\subsection{Proof of Proposition~\ref{prop:latent-MLE}}\label{appdx:proof-latent-MLE}

Notice that $m_\theta(\otau_W)=\argmax_{m\in\supp(\rho)} \tPP_\theta(m|\otau_W)$. Therefore,
\begin{align*}
    \tPP_\theta^{\pi}(m^\star\neq m_\theta(\otau_W))
    =&~ \sum_{\otau_W} \tPP_\theta(m^\star\neq m_\theta(\otau_W)|\otau_W)\cdot \tPP_\theta^{\pi}(\otau_W) \\
    =&~ \sum_{\otau_W} \sum_{m\neq m_\theta(\otau_W)} \tPP_\theta(m|\otau_W)\cdot \tPP_\theta^{\pi}(\otau_W) \\
    =&~ \sum_{m^\star,\otau} \sum_{m\neq m_\theta(\otau_W)} \tPP_\theta(m|\otau_W)\cdot \tPP_\theta^{\pi}(m^\star,\otau_W) \\
    \leq&~ 
    \sum_{m^\star,\otau} \sum_{m\neq m^\star} \tPP_\theta(m|\otau_W)\cdot \tPP_\theta^{\pi}(m^\star,\otau_W) \\
    =&~
    \sum_{m\neq l} \sum_{\otau_W} \frac{\tPP_\theta^{\pi}(m,\otau_W)\tPP_\theta^{\pi}(l,\otau_W)}{\tPP_\theta^{\pi}(\otau_W)} \\
    =&~
    \sum_{m\neq l} \sum_{\otau_W} \frac{\tPP_\theta^{\pi}(m,\otau_W|s_1)\tPP_\theta^{\pi}(l,\otau_W|s_1)}{\tPP_\theta^{\pi}(\otau_W|s_1)} \tPP_\theta(s_1).
\end{align*}
For any $s\in\cS$ and $m\in[L]$, we denote $\rho_{m|s}=\tPP_\theta(m|s_1=s)$, and then
\begin{align*}
    \tPP_\theta^{\pi}(m,\otau_W|s_1=s)=\rho_{m|s} \TT_{\theta,m}^{\pi}(\otau_W|s_1=s), \qquad \tPP_\theta^{\pi}(\otau_W|s_1=s)=\sum_{m} \rho_{m|s} \TT_{\theta,m}^{\pi}(\otau_W|s_1=s),
\end{align*}
Therefore, using the fact that %
\begin{align*}
    \tPP_\theta^{\pi}(\otau_W|s_1=s)\geq 2\sqrt{\rho_{m|s}\rho_{l|s}}\cdot \sqrt{\TT_{\theta,m}^{\pi}(\otau_W|s_1=s)\TT_{\theta,l}^{\pi}(\otau_W|s_1=s)},
\end{align*}
we have
\begin{align*}
    \sum_{\otau_W} \frac{\tPP_\theta^{\pi}(m,\otau_W|s_1)\tPP_\theta^{\pi}(l,\otau_W|s_1)}{\tPP_\theta^{\pi}(\otau_W|s_1)}
    \leq &~
    \frac{\sqrt{\rho_{m|s}\rho_{l|s}}}{2} \sum_{\otau_W} \sqrt{\TT_{\theta,m}^{\pi}(\otau_W|s_1)\TT_{\theta,l}^{\pi}(\otau_W|s_1)} \\
    =&~
    \frac{\sqrt{\rho_{m|s}\rho_{l|s}}}{2}\exp\paren{ -\DB{\MM_{m,W}^\theta(\pi,s_1), \MM_{l,W}^\theta(\pi,s_1)} }.
\end{align*}
Thus, taking summation over $m\neq l$ and using $\sum_{m\neq l} \sqrt{\rho_{m|s}\rho_{l|s}}\leq L-1$ gives
\begin{align*}
    \tPP_\theta^{\pi}(m_\theta(\otau_W)\neq m^\star)\leq L\exp(-\om(W)).
\end{align*}
\qed

\section{Proofs for Section~\ref{sec:stat-lower}}\label{appdx:stat-lower}

We first present two theorems that provide a more precise statement of our sample complexity lower bounds.
\begin{theorem}\label{thm:log-L}
There are constants $c, C$ so that for any $H\geq 1$, $\delta\in(0,\frac{1}{4e^2}]$, $L\geq 2$ and integer $2\leq n\leq H-1$ satisfying
\begin{align}\label{eqn:log-L-constraints}
    Cn\log^4 n \leq \min\set{ \frac{\log L}{H\delta^2}, \delta^{-1}, 2^{c\sqrt{\log L}} },
\end{align}
there exists a class of $\delta$-strongly separated LMDPs with $L$ hidden MDPs, $S=(\log L)^{\cO(\log n)}$ states, $A$ actions, and horizon $H$, so that any algorithm requires $\Om{\min\set{A,L}^{n-1}}$ samples to learn an $\frac{1}{4n}$-optimal policy.
\end{theorem}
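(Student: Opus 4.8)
The plan is to prove the lower bound by a reduction that inflates a known hard family of \emph{general} (non-separated) LMDPs into a $\delsep$-strongly separated family, appending to each step an exogenous auxiliary observation whose emission law simultaneously (i) separates every pair of latent contexts by $\delsep$ in total variation and (ii) reveals almost nothing about the latent context, so that the original sample-complexity lower bound is inherited. First I would recall from \cref{appdx:comb-lock} the combination-lock hard family of general LMDPs, which for the given $A,H,n$ provides instances with $A$ actions, horizon $H$, value gap $\frac{1}{4n}$, a moderate number of latent contexts, and a state count arranged so that the final transformed instance has $S=(\log L)^{\cO(\log n)}$, for which any algorithm needs $\Om{\min\set{A,L}^{n-1}}$ episodes to return an $\frac{1}{4n}$-optimal policy. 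The property I would exploit is that along any policy the observable state trajectory is nearly uninformative about which reward-relevant hypothesis is in force.

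Next I would invoke \cref{prop:2-family} (its two-context special case being \cref{prop:2-family-demo}) with the present $\delsep,L,H,n$. The hypothesis \cref{eqn:log-L-constraints} is tuned precisely so its three terms supply the needed preconditions: $Cn\log^4 n\le \frac{\log L}{H\delsep^2}$ gives the horizon bound $H\le \frac{c\log L}{\delsep^2}$, the term $\delsep^{-1}$ gives feasibility of $\delsep$-separation, and $2^{c\sqrt{\log L}}$ bounds both the alphabet size $\abs{\cO}=\cO(\log L)$ and the count $L'\le L$. This yields distributions $\mu_1,\dots,\mu_{L'}$ over $\cO$ with $\DTV{\mu_i,\mu_j}\ge\delsep$ for $i\ne j$, together with weight vectors $\nu_m\in\Delta([L'])$ (one per latent context of the base instance) having pairwise-disjoint supports whose $H$-fold mixtures $P_m\defeq \EE_{j\sim\nu_m}\mu_j^{\otimes H}$ are pairwise within $L^{-n}$ in total variation. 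I would then form the transformed LMDP $M'$ by replacing the $m$-th latent MDP $M_m$ of each base instance with the family $\set{M_m\otimes\mu_j}_{j\in\supp(\nu_m)}$ carrying weights $\rho(m)\nu_m(j)$, where $\otimes$ (\cref{def:MDP-tensor}) draws a fresh $o_h\sim\mu_j$ into the state at every step. Strong separation is then immediate: any two distinct latent MDPs $M_m\otimes\mu_j$ and $M_{m'}\otimes\mu_{j'}$ have $j\ne j'$ (distinct within a support, disjoint across supports), so since $o$ is part of the next state, marginalizing the one-step transition onto the $\cO$-coordinate and using the data-processing inequality for total variation forces a transition gap $\ge \DTV{\mu_j,\mu_{j'}}\ge\delsep$ at every $(s,a)$. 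The latent count is $\sum_m\abs{\supp(\nu_m)}\le L'\le L$, the state space is $\cS\times\cO$, and because the rewards ignore $o$ the optimal value and the $\frac{1}{4n}$ gap are unchanged.

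The heart of the argument, and the step I expect to be most delicate, is showing $M'$ is still hard. The structural point is that the auxiliary observations are \emph{exogenous}: conditioned on the base context $m$, the string $o_{1:H}$ is an independent draw from $P_m$, while the trajectory $\tau_H$ evolves under $\TT_m$ and $\pi$ (a history-dependent policy may read the $o$'s, but they affect $\tau_H$ only through the actions it selects). I would define a decoupled LMDP $\tilde M$ that agrees with $M'$ in every transition but emits the common mixture $P_1$ in place of $P_m$ for each context. Conditioned on $o_{1:H}$, the law of $\tau_H$ is then identical under $M'$ and $\tilde M$, so the two joint laws differ only through the observation factor and $\DTV{\PP^\pi_{M'},\PP^\pi_{\tilde M}}\le \sum_m\rho(m)\DTV{P_m,P_1}\le L^{-n}$, uniformly over $\pi$. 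Under $\tilde M$ the observations $o_{1:H}\sim P_1$ are pure noise, independent of the context and simulable with no knowledge of the instance; hence any solver for $\tilde M$ yields a solver for the base family by generating the $P_1$-noise on its own. Therefore, if an algorithm solved every instance of $M'$ with $T$ episodes and probability $\ge 3/4$, running it on $\tilde M$ would succeed with probability $\ge 3/4-TL^{-n}$ (the $T$-fold product laws being within $TL^{-n}$) and would thereby solve the base family; taking $T=o(\min\set{A,L}^{n-1})$ gives $TL^{-n}=o(1)$, since $\min\set{A,L}^{n-1}\le L^{n-1}=o(L^n)$, contradicting the base family's $\Om{\min\set{A,L}^{n-1}}$ lower bound.

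The main obstacles are thus twofold. The first is making the exogenous-observation factorization and the uniform-in-$\pi$ total-variation bound fully rigorous for arbitrary history-dependent policies and for the adaptively generated $T$-fold product laws. The second is the bookkeeping that \cref{eqn:log-L-constraints} simultaneously meets every precondition of \cref{prop:2-family} while keeping $S=(\log L)^{\cO(\log n)}$, the latent count at most $L$, and the alphabet of size $\cO(\log L)$. The genuinely hard analytic content---constructing tags that are at once $\delsep$-separated and mixture-indistinguishable at the $L^{-n}$ scale for horizons up to $\frac{c\log L}{\delsep^2}$---is isolated in \cref{prop:2-family}, which I would take as given.
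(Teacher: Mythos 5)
Your proposal is correct and is essentially the paper's proof: the same combination-lock base family is augmented (via \cref{def:MDP-tensor}) with the separated-but-mixture-indistinguishable distribution family of \cref{prop:2-family} (boosted from $2$ to $n$ mixture components by \cref{lem:family-tensor}), strong separation follows by marginalizing the one-step transition onto the auxiliary coordinate, and hardness rests on exactly the decoupled law you call $\tilde M$ (the paper's $\hat{\PP}_{\theta,\cQ}^{\pi}$) being within $\gamma\le L^{-n}$ of the true law uniformly over history-dependent policies. The only difference is in how the last step is packaged: you close with a black-box simulation reduction to the base family's lower bound, while \cref{prop:tensor-comb-lock} re-runs the disjoint-near-optimal-policy-set argument directly on the augmented family; both yield $\Om{\min\{A,L\}^{n-1}}$ because the accumulated $\gamma$-error caps the bound at roughly $L^{n}$.
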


\newcommand{\Neps}{N_{n,\delta}}

\begin{theorem}\label{thm:log-eps}
For any $\delta\in(0,\frac{1}{4e^2}]$ and integer $n\geq 2$, there is $\Neps\leq 2^{\cO((1+\delta n)\log^2 n)}$ so that for any $\eps>0$, integer $H, A\geq 2$ satisfying
\begin{align}\label{eqn:log-eps-constraints}
    n< H\leq \frac{\log(1/\eps)}{40\delta^2}+n, \qquad
    \eps\leq \frac{1}{\Neps},
\end{align}
there exists a class of $\delta$-strongly separated LMDPs with parameters $(L,S,A,H)$, where 
\begin{align*}
    L\leq \Neps, \qquad
    S\leq H^{\cO((1+\delta n)\log^2 n)},
\end{align*}
such that any algorithm requires $\Om{A^{n-1}}$ samples to learn an $\eps$-optimal policy.
\end{theorem}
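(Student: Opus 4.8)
The plan is to prove \cref{thm:log-eps} by the reduction sketched for \cref{thm:lower-demo}: start from the known hard instances of \emph{general} (non-separated) LMDPs in \cref{appdx:comb-lock}, in which $\Om{A^{n-1}}$ episodes are required to learn an $\eps$-optimal policy because the latent index is information-theoretically concealed and can only be pinned down by executing a specific length-$(n-1)$ ``combination'' of actions, and then augment this instance into a $\delta$-strongly separated one while preserving the hardness. Concretely, given such a base LMDP $M$ with instances $M_1,\dots,M_L$, I would replace each $M_m$ by a family of tensor-product MDPs $M_{m,j}=M_m\otimes\mu_{m,j}$ (the operation $\otimes$ of \cref{def:MDP-tensor}), where at every step an auxiliary observation $o_h\sim\mu_{m,j}$ is emitted; the latent index of the resulting LMDP is the pair $(m,j)$, with mixing weights inherited from $M$ together with a prior over $j$. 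This parallels the construction behind \cref{thm:log-L}, with the accuracy parameter $\eps$ now playing the role that $L$ played there.

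The observation layer is supplied by \cref{prop:2-family}. Its guarantee (1), that $\DTV{\mu_{m,j},\mu_{m',j'}}\ge\delta$ for all distinct pairs, immediately makes the augmented LMDP $\delta$-strongly separated, since any two of its transition kernels differ by at least $\delta$ in the auxiliary coordinate; the alphabet $\cO$ is small, which keeps the state count at $S\le(\text{base }S)\cdot|\cO|\le H^{\cO((1+\delta n)\log^2 n)}$ and the number of latent contexts at $L\le\Neps=2^{\cO((1+\delta n)\log^2 n)}$. The crucial calibration is guarantee (2): the auxiliary distributions are arranged so that the value-relevant latent information cannot be recovered from the $W=H-n$ separating observations to error below $\eps$. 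Quantitatively, the confusable configurations should have Bhattacharyya divergence $\asymp\delta^2$ per step (consistent with, but far smaller than, their separation $\delta$ in total variation), so that the decoding error over $W$ steps behaves like $\exp(-\Theta(\delta^2 W))$; this exceeds $\eps$ exactly when $W\lesssim \frac{\log(1/\eps)}{\delta^2}$, which is the content of the hypothesis $H\le\frac{\log(1/\eps)}{40\delta^2}+n$.

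With the construction in place, the lower bound itself is an information-theoretic (Le Cam / Fano) argument. I would argue that, because the value gap between the two latent regimes is $\Theta(1)$, any $\eps$-optimal policy must identify the latent configuration to error $\cO(\eps)$ on a typical episode, and then show there are only two routes to doing so. The first is to decode the configuration from the auxiliary observations, which \cref{prop:2-family}(2) rules out below the horizon threshold by the calibration above (here \cref{prop:latent-MLE} gives the matching \emph{upper} bound, confirming the threshold is the right one). The second is to reveal the configuration exactly by executing the hidden length-$(n-1)$ action combination inherited from $M$; but since the auxiliary observations are, by design, independent of this combination, and an incorrect combination produces no distinguishing reward or transition signal, learning it requires $\Om{A^{n-1}}$ episodes. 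Formalizing this as a two-point (or Assouad) argument over the family of combination-indexed models---bounding the total variation between the $K$-episode data distributions by $o(1)$ unless $K\gtrsim A^{n-1}$---yields the claim.

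The main obstacle is \cref{prop:2-family}: producing, over a small alphabet and with $L$ bounded by $\Neps$, a family of distributions that is simultaneously $\delta$-separated in total variation (a lower bound, needed for strong separation) and yet confusable at the sharp scale $W\asymp\frac{\log(1/\eps)}{\delta^2}$ (an upper bound on the Bhattacharyya divergence of the relevant mixtures). These two requirements pull in opposite directions, and reconciling them rests on a delicate second-moment / $\chi^2$ estimate for mixtures of product measures. A secondary difficulty is the bookkeeping of the reduction: one must verify that the augmentation preserves \emph{both} the $A^{n-1}$ sample lower bound and the $\eps$-accuracy, i.e. that the auxiliary observations neither assist the combinatorial search nor permit context decoding to precision $\eps$, and that the parameter ranges ($\eps\le 1/\Neps$, $n<H$) are exactly those for which \cref{prop:2-family} can be invoked.
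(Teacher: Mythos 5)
Your proposal reproduces the mechanism of \cref{thm:log-L} (tensor the combination lock with a family of observation distributions that are pairwise $\delta$-separated in TV yet whose relevant mixtures are confusable), and you explicitly say that $\eps$ should "play the role that $L$ played there." That substitution does not work, for two concrete reasons. First, to keep the mixtures $\EE_{i\sim\xi_k}[\mu_i^{\otimes H}]$ confusable over $H\asymp\frac{\log(1/\eps)}{\delta^2}$ i.i.d.\ emissions, the moment-matching construction (\cref{cor:prob-matching}) needs to match $K\gtrsim H\delta^2\asymp\log(1/\eps)$ moments, which forces the number of auxiliary distributions --- and hence $L$ and $S$ --- to grow with $1/\eps$; but \cref{thm:log-eps} requires $L\le\Neps$ and the $S$ bound to depend only on $(n,\delta,H)$, with $\Neps$ independent of $\eps$. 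Second, and more fundamentally, the combination-lock values are quantized at scale $1/n$: in any tensor-augmented version of it, a $\frac{1}{4n}$-optimal policy already pins down the hidden action string, so demanding $\eps$-optimality for $\eps\ll\frac{1}{4n}$ adds no difficulty, and nothing in your argument ties the horizon threshold to $\log(1/\eps)$. Your claim that "any $\eps$-optimal policy must identify the latent configuration to error $\cO(\eps)$" has the implication in the wrong direction for this construction.

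The paper's actual proof uses a different gadget. It builds the hard instance at short horizon $n+1$ (tensored with a $(n,n+1,\delta,0,N)$-family, so $\gamma=0$ and $N$ depends only on $n,\delta$), and then \emph{prepends} a padding phase of length $\oH=H-n-1$: a lattice random walk on $\cS^+=\{k\in\NN^d:\sum_i k_i\le\oH-1\}$ whose per-step transition probabilities $\frac{1+\odelta\mu_m[i]}{d}$ are $\delta$-separated across latent indices via well-spread sign vectors $\mu_m\in\{-1,1\}^d$. At the boundary $\boundS$ a rebalancing transition $\TT_m(s'|s,a)=\eta\frac{\pmin(s)}{p_m(s)}\nu_m(s')$ sends the agent into the hard instance with a probability chosen so that, conditional on entry (event $E$), the posterior over the latent index is exactly the prior $\rho(m)\nu_m(s)$ regardless of the prefix trajectory (\cref{lem:log-eps-probs}) --- so the $\delta$-separated prefix leaks no usable information. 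The cost is that $\PP(E)=p\ge\frac{\eta}{N}(1-\odelta^2)^{\oH-1}$, and the hypothesis $H-n\le\frac{\log(1/\eps)}{40\delta^2}$ is exactly what guarantees $p>2n\eps$ (\cref{lem:log-eps-prob-lower}); an $\eps$-optimal policy is then forced to be $\frac{p}{4n}$-optimal conditional on $E$, which reduces to the $\Om{A^{n-1}}$ bound of \cref{prop:tensor-comb-lock}. This value-dilution step is the entire source of the $\eps$-versus-horizon tradeoff, and it is absent from your proposal.
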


We also present a slightly more general version of \cref{thm:A-exp}, as follows.
\begin{theorem}\label{thm:A-exp-full}
Suppose that $\delta\in(0,\frac{1}{4e^2}]$, $H\geq n+1\geq 3$, $A\geq 2$, $L\geq 2^{C\log n\log(1/\delta)}$ are given such that
\begin{align}\label{eqn:A-exp-constraints-full}
    CH\log n \log(1/\delta) \leq \frac{\log L}{\delta}.
\end{align}
Then there exists a class of $\delta$-strongly separated LMDP with $L$ hidden MDPs, $S=(\log L)^{\cO(\log n)}$ states, $A$ actions, horizon $H$, such that any algorithm requires $\Om{A^{n-1}}$ samples to learn an $\frac{1}{4n}$-optimal policy with probability at least $\frac{3}{4}$. %
\end{theorem}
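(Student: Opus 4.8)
The plan is to establish \cref{thm:A-exp-full} through the reduction outlined after \cref{thm:lower-demo}: transform a known hard instance of a \emph{general} (non-separated) LMDP into a $\delta$-strongly separated one that is no easier to learn. I would begin from the combinatorial-lock LMDP $M_0$ of \cref{appdx:comb-lock} (in the style of \citet{kwon2021rl}), which has $n$ latent contexts, $A$ actions, horizon $H\geq n+1$, and for which any algorithm needs $\Om{A^{n-1}}$ episodes to output a $\frac{1}{4n}$-optimal policy. The entire difficulty is to show that this hardness is preserved after we impose $\delta$-strong separation, in the short-horizon regime $H\lesssim\delta^{-1}\log L$ where separation should intuitively buy nothing.

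For the construction I would invoke the general $n$-family version, \cref{prop:2-family}, of \cref{prop:2-family-demo} in the regime fixed by \cref{eqn:A-exp-constraints-full}, producing distributions $\mu_1,\dots,\mu_{L'}$ over an alphabet $\cO$ with $\abs{\cO}\leq\cO(\log L)$ such that: (i) $\DTV{\mu_i,\mu_j}\geq\delta$ for all $i\neq j$; and (ii) there are weights $\nu_1,\dots,\nu_n$ with \emph{pairwise disjoint supports} whose $H$-fold product-mixtures obey $\DTV{\EE_{j\sim\nu_m}\mu_j^{\otimes H},\EE_{j\sim\nu_{m'}}\mu_j^{\otimes H}}\leq L^{-n}$ for all $m\neq m'$. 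I then form the separated LMDP $M'$ by splitting each context: for $m\in[n]$ and $j\in\supp(\nu_m)$ set $M_{m,j}=M_{0,m}\otimes\mu_j$ (the augmentation of \cref{def:MDP-tensor}, which appends a fresh $\mu_j$-observation to every state) with mixing weight $\rho_0(m)\nu_m(j)$. This yields $L=\sum_m\abs{\supp(\nu_m)}\leq L'$ latent MDPs and state count $S=S_0\cdot\abs{\cO}=(\log L)^{\cO(\log n)}$.

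Two things then need checking. Strong separation is immediate from the disjoint supports: any two distinct latent indices $(m,j)\neq(m',j')$ necessarily have $j\neq j'$, so their augmented kernels differ in the observation marginal by $\DTV{\mu_j,\mu_{j'}}\geq\delta$ at \emph{every} state-action pair (total variation cannot increase under marginalization), giving $\delta$-strong separation. For the lower bound I would argue by simulation: since the reward and latent dynamics of $M_{m,j}$ depend only on $m$ while the auxiliary stream depends only on $j$, and the two are conditionally independent given $m$, any learner for $M'$ can instead be run against $M_0$ by injecting synthetic, \emph{context-independent} auxiliary observations drawn per episode from a fixed $\EE_{j\sim\nu_1}\mu_j^{\otimes H}$. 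By property (ii) the law of these synthetic observations is within $L^{-n}$ of the genuine one uniformly over $m$, so over a budget of $K$ episodes the simulated and true interactions differ by at most $K$ times the mixture error in total variation; the parameter constraints make this slack $o(1)$ for every $K\lesssim A^{n-1}$, so the $\Om{A^{n-1}}$ lower bound for $M_0$ transfers to $M'$. A near-optimal policy for $M'$ moreover marginalizes to a near-optimal policy for $M_0$, with optimal values matching up to the same slack.

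The main obstacle is precisely step (ii): because the latent index is frozen for the whole episode, the per-step auxiliary observations are coupled through the single draw $j\sim\nu_m$, so one must control the total-variation distance between the \emph{product}-mixtures $\EE_{j\sim\nu_m}\mu_j^{\otimes H}$ rather than single-step marginals. This is exactly the content that \cref{prop:2-family} supplies, and it is the reason the sustainable horizon is capped at $H\lesssim\frac{\log L}{\delta\log n\log(1/\delta)}$. The remaining work is bookkeeping: verifying that \cref{eqn:A-exp-constraints-full} together with $L\geq 2^{C\log n\log(1/\delta)}$ are exactly the conditions making \cref{prop:2-family} applicable with mixture error small enough to be dominated by the $A^{n-1}$ sample budget and with alphabet $\abs{\cO}=\cO(\log L)$, and confirming the final readouts $S=(\log L)^{\cO(\log n)}$ and separation $\delta$.
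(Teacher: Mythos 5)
Your high-level reduction is the same as the paper's: augment the combinatorial-lock LMDP of \cref{appdx:comb-lock} with a family of $\delta$-separated distributions whose context-wise product mixtures are indistinguishable, then transfer the $\Om{A^{n-1}}$ hardness via \cref{prop:tensor-comb-lock}. The separation check via disjoint supports is correct, as is the marginalization of near-optimal policies.

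However, there is a genuine gap in the quantitative heart of the argument. You instantiate the family with mixture error $\gamma=L^{-n}$ (the approximate-matching regime of \cref{prop:2-family}(b), as in \cref{prop:2-family-demo}), and then claim that the cumulative simulation slack $K\gamma$ is $o(1)$ for every $K\lesssim A^{n-1}$. Nothing in the hypotheses of \cref{thm:A-exp-full} bounds $A$ from above in terms of $L$, so when $A^{n-1}\gg L^{n}$ this claim fails; the argument as written only yields a lower bound of order $\min\set{1/\gamma,\,A^{n-1}}=\min\set{L^{n},A^{n-1}}$, which is strictly weaker than the stated $\Om{A^{n-1}}$. This matters downstream: in the proof of \cref{thm:lower-demo} (Case 2 of Part 1) the theorem is applied with $A$ taken much larger than $L$, so the $\min$ cannot be tolerated. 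The paper avoids this entirely by working in the short-horizon regime where \emph{exact} moment matching is possible: it invokes \cref{prop:2-family}(a) with $d=\ceil{4e^2\delta H}$ to get a $(2,H,\delta,0,N)$-family (i.e.\ $\gamma=0$, the product mixtures are identical), tensorizes via \cref{lem:family-tensor} to an $(n,H,\delta,0,N^r)$-family with $r=\ceil{\log_2 n}$, and checks that \cref{eqn:A-exp-constraints-full} together with $L\geq 2^{C\log n\log(1/\delta)}$ guarantees $N^r\leq L$. With $\gamma=0$ the bound from \cref{prop:tensor-comb-lock} is cleanly $\Om{A^{n-1}}$ with no competing term. (A minor related slip: your alphabet is $\cO(\log L)$ from the two-mixture proposition, but after tensorizing you need $\abs{\cO}=(\cO(\delta H))^{\cO(\log n)}=(\log L)^{\cO(\log n)}$ to reach $n$ contexts, which is also what produces the stated state count.)
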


Based on the results above, we can now provide a direct proof of \cref{thm:lower-demo}. In our proof, it turns out that we can take $c_\degC=\frac{1}{\Tilde{\Theta}(\degC)}$.

\begin{proofof}{\cref{thm:lower-demo}}
Fix $n=3\degC+1$, $\delta_0=\frac{1}{4e^2}$. We proceed to prove \cref{thm:lower-demo} by decomposing
\begin{align*}
    \log(L/\eps) = \log(L) + \log(1/\eps) \leq \frac 12 \max\{ \log L, \log (1/\eps) \},
\end{align*}
and then show that $\Hthre(L, \eps, \delta)$ must be greater than each of the terms in the maximum above, by applying \cref{thm:log-L}, \cref{thm:A-exp-full}, and \cref{thm:log-eps} separately.  %

Let $n_1=\cO(n\log^4 n)$ be the LHS of \cref{eqn:log-L-constraints}, and $N=N_{n,\delta_0}\leq 2^{\cO(n\log^2 n)}$ be given by \cref{thm:log-eps}. We choose $L_\degC:=2^{C_1 n_1\log^2 n_1 }$ for some large absolute constant $C_1$ so that $L_\degC\geq N$, and set $\eps_\degC=\frac{1}{N}$, $c_\degC=\frac{1}{C_1n_1\log^2 n}$. In the following, we work with $L\geq \max(L_\degC,\delta^{-1})$, $\eps\leq \eps_\degC$.

\textbf{Part 1.} In this part, we prove the lower bound involving the term $\log L$. We separately consider the case $\delta\leq \frac{1}{n_1}$ (\cref{thm:log-L}) and $\delta>\frac{1}{n_1}$ (using \cref{thm:A-exp-full}).

\textbf{Case 1: $\delta\leq \frac{1}{n_1}$.} In this case, we take $H_L=\max\paren{\floor{\frac{\log L}{n_1\delta^2}},n_1}$. For $H=H_L$ and any $A\geq 2$, applying \cref{thm:log-L} gives a class of \sepstr~LMDPs with parameters $(L,S_1,A,H)$ where $S_1\leq (\log L)^{\cO(\log n)}$, so that any algorithm requires $\Om{(A\wedge L)^{n-1}}$ samples for learning $\eps_\degC$-optimal policy (because $\eps_\degC\leq \frac{1}{4n}$). %
However, for $A=L$, we have assumed that $\fA$ succeeds with $\max\set{S_1,L,H_L,\eps_\degC^{-1},\delta^{-1}}^{\degC}\leq L^{n-1}$ samples. Therefore, since we have assumed that $\fA$ outputs an $\eps$-optimal policy if $H \geq \Hthre(L, \eps, \delta)$, we must have $H_L<\Hthre(L,\eps,\delta)$.

\textbf{Case 2: $\delta>\frac{1}{n_1}$.} In this case, we take $H_L=\floor{\frac{\log L}{C_1\log^2(n)\delta}}$. By definition, $H_L>n$. Hence, for $H=H_L$ and any $A\geq 2$, applying \cref{thm:A-exp-full} gives a class of \sepstr~LMDPs with parameters $(L,S_2,A,H)$ where $S_2\leq (\log L)^{\cO(\log n)}$, so that any algorithm requires $\Om{A^{n-1}}=\Om{A^{\degC+1}}$ samples for learning $\eps$-optimal policy.
However, for $A\geq \max\set{L,S_2,H,\eps^{-1},\delta^{-1}}$, we have assumed that $\fA$ succeeds with $A^{\degC}$ samples, as long as $H \geq \Hthre(L, \eps, \delta)$. Therefore, we must have $H_L<\Hthre(L,\eps,\delta)$.

Therefore, in both cases, we have $H_L<\Hthre(L,\eps,\delta)$. By definition, it always holds that $H_L\geq \frac{1}{C_1n_1\log^2 n}\cdot \frac{\log L}{\delta^2}$, and the desired result of this part follows.

\textbf{Part 2.} We take $H_\eps=\floor{\frac{\log(1/\eps)}{9\delta^2}}+n$. For any $H \leq H_\eps, A \geq 2$, \cref{thm:log-eps} provides a class of \sepstr~LMDPs with parameters $(L_3,S_3,A,H)$ with $L_3=N$ and $S_3 \leq H^{\cO((1+\delta n)\log^2 n)}$, so that any algorithm requires $\Om{A^{n-1}}=\Om{A^{\degC+1}}$ samples for learning $\eps$-optimal policy. However, for values $A\geq \max\set{N,S_3,H,\eps^{-1},\delta^{-1}}$, we have assumed that $\fA$ succeeds with $A^{\degC}$ samples. Therefore, since we have assumed that $\fA$ outputs an $\eps$-optimal policy if $H \geq \Hthre(L, \eps, \delta)$, we must have $H_\eps<\Hthre(L,\eps,\delta)$.

Combining the two parts above completes the proof of \cref{thm:lower-demo}.
\end{proofof}

In the remaining part of this section, we present the proof of \cref{thm:log-L}, \cref{thm:log-eps} and \cref{thm:A-exp-full}.

\paragraph{Organization} In \cref{appdx:comb-lock}, we present the hard instances of general (non-separated) LMDP \citep{kwon2021rl}. Then we present our tools of transforming LMDP into separated LMDP in \cref{appdx:lower-tool}. The proofs of \cref{thm:A-exp}, \cref{thm:log-L} and \cref{thm:log-eps} then follow.

\paragraph{Additional notations} For any step $h$, we write $\tau_h=(s_1,a_1,\cdots,s_h,a_h)$ and $\tau_{h:h'}=(s_h,a_h,\cdots,s_{h'},a_{h'})$. Denote
\begin{align}\label{eqn:def-P-theta}
    \PP_\theta(\tau_h)=\PP_\theta(s_{1:h}|\doac(a_{1:h-1})),
\end{align}
i.e., the probability of observing $s_{1:h}$ if the agent deterministically executes actions $a_{1:h-1}$ in the LMDP $M_\theta$. Also denote $\pi(\tau_h)\defeq \prod_{h'\le h} \pi_{h'}(a_{h'}|\tau_{h'-1}, s_{h'})$, and then $\PP^{\pi}_\theta(\tau_h)=\PP_\theta(\tau_h)\times \pi(\tau_h)$ %
gives the probability of observing $\tau_h$ for the first $h$ steps when executing $\pi$ in LMDP $M_\theta$. %

\subsection{Lower bound constructions for non-separated LMDPs}\label{appdx:comb-lock}
In this section, we review a lower bound of \cite{kwon2021rl} on the sample complexity of learning latent MDPs \emph{without} separation constraints; we state and prove some intermediate lemmas regarding this lower bound which are useful later on in our proofs. 
\newcommand{\sg}[1]{s_{\oplus,#1}}
\newcommand{\sm}{s_{\ominus}}
\begin{theorem}[\cite{kwon2021rl}]
For $n\geq 1$, there exists a class of LMDP with $L=n$, $S=n+1$, $H=n+1$, such that any algorithm requires $\Om{A^{n-1}}$ samples to learn an $\frac{1}{2n}$-optimal policy.
\end{theorem}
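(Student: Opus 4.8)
The plan is to exhibit a family of ``combination-lock'' LMDPs for which the observable trajectory carries essentially no information about the latent index until the learner executes a full correct action sequence, and then to run a needle-in-a-haystack argument. Concretely, I would take $L=n$ latent MDPs over the state set $\{\sg{1},\dots,\sg{n},\sm\}$ with horizon $H=n+1$, and equip each latent MDP $M_m$ with a secret ``combination'' $c^{(m)}=(c^{(m)}_1,\dots,c^{(m)}_{n-1})\in\cA^{n-1}$. The dynamics are arranged so that the agent advances along the good chain $\sg{1}\to\sg{2}\to\cdots\to\sg{n}$ exactly when it plays the secret combination of the realized context, collects reward $1$ only upon reaching $\sg{n}$ (reward $0$ everywhere else), and is otherwise routed into the absorbing trap $\sm$; the combinations are chosen so that, crucially, the process reveals no partial progress toward the current context's combination (the delicate feature I return to below). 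With uniform mixing weight $\rho=\Unif([n])$ and distinct combinations, committing to any single $c^{(m)}$ yields value $1/n$ and no policy can do better, so $V_\star=1/n$.

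For the lower bound I would place a prior on the instance by drawing $c^{(1)},\dots,c^{(n)}$ i.i.d.\ uniformly from $\cA^{n-1}$ (discarding the collision events, which are negligible once $A^{n-1}\gg n^2$). The key value-gap claim is: any policy $\hpi$ with value $\ge\frac1{2n}=V_\star-\frac1{2n}$ must place total probability at least $\tfrac12$ on the true combinations $\{c^{(m)}\}_m$ when run on the mixture, since its value equals $\frac1n\sum_m \PP(\hpi\text{ plays }c^{(m)})$. I would then argue that the learner's interaction transcript is uninformative about $\{c^{(m)}\}$ until it ``hits'' a correct combination: formally, couple the true environment with a reference LMDP in which $\sg{n}$ is unreachable (all combinations rerouted to $\sm$), and show that under any adaptive algorithm the two transcript distributions coincide on every episode up to the first episode in which a fully-correct combination is executed. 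Because the combinations are uniform and hidden, in any single episode the probability of executing the realized context's combination is at most $A^{-(n-1)}$, so by a union bound over $K$ episodes the probability that the algorithm ever leaves the coupling is at most $K\,A^{-(n-1)}$.

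Putting these together: if $K\le\tfrac14 A^{n-1}$, then with probability $\ge\tfrac34$ the algorithm never distinguishes the true instance from the reference, so its output $\hpi$ is (in distribution) independent of $\{c^{(m)}\}$; for such a $\hpi$, $\EE[\sum_m\PP(\hpi\text{ plays }c^{(m)})]\le n\cdot A^{-(n-1)}\ll\tfrac12$, whence $\hpi$ is $\frac1{2n}$-suboptimal except with small probability, contradicting a success probability of $\tfrac34$. This forces $K=\Om{A^{n-1}}$. The main obstacle --- the part that must be engineered in the construction rather than in the information argument --- is guaranteeing the \emph{no-partial-feedback} property with only $n+1$ observable states: a naive ``advance-or-die'' chain lets the learner see \emph{at which} step it failed, which would enable a prefix-by-prefix search recovering each combination in $\poly(n,A)$ episodes and would destroy the bound. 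Ensuring that deviations are observationally indistinguishable from correct play until the terminal reward is collected (so that the coupling above is exact) is therefore the crux, and it is exactly what the construction of \citet{kwon2021rl} secures; the remaining steps are the routine change-of-measure and union-bound estimates sketched above.
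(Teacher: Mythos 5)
Your high-level argument --- a combination-lock family, a reference LMDP in which the reward state is unreachable, a change of measure showing the transcript distributions agree until a correct combination is executed, and a counting/union bound over episodes --- is exactly the skeleton of the paper's proof (cf.\ \cref{lem:comb-lock} and the argument in \cref{prop:tensor-comb-lock}). The issue is that the one step you yourself identify as ``the crux'' --- engineering the \emph{no-partial-feedback} property with only $n+1$ states --- is left unresolved: you defer it to the cited construction rather than supplying it, and the construction you do sketch would not deliver it. With $n$ latent MDPs each carrying an \emph{independent} secret $c^{(m)}$ and ``advance-or-die'' dynamics, the step at which the agent is routed to $\sm$ is observable and its distribution depends on how many of the $c^{(m)}_1,\dots$ agree with the played prefix; a learner can therefore estimate $\#\{m: c^{(m)}_1=a\}$ for each $a$ and run the prefix-by-prefix search you warn about. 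So as written the construction leaks exactly the information your coupling argument needs to be absent.

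The paper's construction resolves this differently, and the mechanism is worth internalizing: there is a \emph{single} secret $\ba\in\cA^{n-1}$ per instance (the instance is indexed by $\theta=\ba$, not by an $n$-tuple of secrets), and the $n$ latent MDPs are arranged so that for \emph{every} action sequence, exactly one latent MDP traps the agent at each step $h\in[n]$ (the inductive ``Fact'' in the proof of \cref{lem:comb-lock}). Consequently the observed distribution over the trap step is uniform over $[n]$ regardless of the actions played --- identical to the reference model $M_\emptyset$ --- and the only observable discrepancy is whether the agent ends in $\bsq_n$ or $\bsq_{n,+}$ after playing $\ba$ exactly, which occurs with probability $\frac1n w_\theta(\pi)$. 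In other words, the latent mixture itself is the device that uniformizes the failure feedback; the secrets are not per-context. Once that construction is in place, your change-of-measure and union-bound steps go through essentially as you describe (with the per-episode total variation being $\frac1n w_\theta(\pi)$ and $\sum_\theta w_\theta(\pi)=1$ replacing your informal ``probability at most $A^{-(n-1)}$ per episode''). Without it, the proof does not stand on its own.
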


In the following, we present the construction in \citet{kwon2021rl} of a family of LMDPs
\begin{align}
  \label{eq:define-mtheta-lb}
    \cM=\set{ M_{\theta}: \theta\in\cA^{n-1} } \cup\set{ M_{\emptyset} }.
\end{align}
For any $\theta=\ba\in\cA^{n-1}$, we construct a LMDP $M_{\theta}$ as follows.
\begin{itemize}
\item The state space is
\begin{align*}
    \cS_0=\set{\sm, \sg{1},\cdots, \sg{n}}.
\end{align*}
\item The action space is $\cA$ and the horizon is $H\geq n+1$.
\item $L=n$, and for each $m\in[n]$, the MDP $M_{\theta,m}$ has mixing weight $\frac1n$.
\item In the MDP $M_{\theta,m}$, the initial state is $\sg{1}$, and the state $\sm$ is an absorbing state. 

For $m>1$, the transition dynamics of $M_{\theta,m}$ is given as follows.
\begin{itemize}
    \item At state $\sg{h}$ with $h<m-1$, taking any action leads to $\sg{h+1}$.
    \item At state $\sg{m-1}$, taking action $a\neq \ba_{m-1}$ leads to $\sg{m}$, %
    and taking action $\ba_{m-1}$ leads to $\sm$.
    \item At state $\sg{h}$ with $m\leq h<n$, taking action $a\neq \ba_{h}$ leads to $\sm$, and taking action $\ba_{h}$ leads to $\sg{h+1}$.
    \item At state $\sg{n}$, taking any action leads to $\sm$.
\end{itemize}
The transition dynamics of $M_{\theta,1}$ is given as follows.
\begin{itemize}
    \item At state $\sg{h}$ with $h<n$, taking action $a\neq \ba_{h}$ leads to $\sm$, and taking action $\ba_{h}$ leads to $\sg{h+1}$.
    \item The state $\sg{n}$ is an absorbing state.
\end{itemize}
\item The reward function is given by $R_h(s,a)=\indic{s=\sg{n}, h=n+1}$.
\end{itemize}

\paragraph{Construction of the reference LMDP} For $\otheta=\emptyset$, we construct a LMDP with state space $\cS_0$ and MDP instances $M_{\otheta,1}=\cdots=M_{\otheta,n}$ with mixing weights $\rho=\unif([n])$, where the initial state is always $\sg{1}$ and the transition is given by
\begin{align*}
    \TT_{\otheta,m}(\sg{h+1}|\sg{h},a)=\frac{n-h}{n-h+1}, \qquad
    \TT_{\otheta,m}(\sm|\sg{h},a)=\frac{1}{n-h+1}, \qquad \forall h\in[n],
\end{align*}
and $\sm$ is an absorbing state.

\newcommand{\bsq}{\mathbf{s}}
Define $\Theta=\cA^{n-1}\sqcup\set{\otheta}$. An important observation is that for any $\theta\in\Theta$, in the LMDP $M_\theta$, any reachable trajectory $\tau_H$ must have $s_{1:H}$ belonged to one of the following sequences
\begin{align*}
    \bsq_h=&~(\sg{1},\cdots,\sg{h},\underbrace{\sm,\cdots,\sm}_{H-h}), \quad \text{for some $h\in[n]$}, \\
    \text{or }~\bsq_{n,+}=&~(\sg{1},\cdots,\sg{n},\underbrace{\sg{n},\cdots,\sg{n}}_{H-n}).
\end{align*}
In particular, for any action sequence $a_{1:H}$, we have
\begin{align}\label{eqn:probs-ref-model}
    \PP_\otheta(s_{1:H}=\bsq_h|a_{1:H})=\frac{1}{n}, \qquad \forall h\in[n].
\end{align}

We summarize the crucial property of the LMDP class $\set{M_\theta}_{\theta\in\Theta}$ in the following lemma.

\begin{lemma}\label{lem:comb-lock}
For each $\theta=\ba\in\cA^{n-1}$, the following holds.

(a) For any action sequence $a_{1:H}$ such that $a_{1:n-1}\neq \ba$, it holds
\begin{align}\label{eqn:probs-incorrect-acs}
    \PP_\theta(s_{1:H}=\bsq_h|a_{1:H})=\frac{1}{n}, \qquad \forall h\in[n].
\end{align}
On the other hand, for the action sequence $a_{1:H}$ such that $a_{1:n-1}=\ba$,
\begin{align}\label{eqn:probs-correct-acs}
    \PP_\theta(s_{1:H}=\bsq_{n,+}|a_{1:H})=\frac{1}{n}, \qquad
    \PP_\theta(s_{1:H}=\bsq_h|a_{1:H})=\frac{1}{n}, \qquad \forall h\in[n-1].
\end{align}

(b) For any policy $\pi$, define
\begin{align}
  \label{eq:wtheta-def}
    w_\theta(\pi)=\prod_{h=1}^n \pi(a_h=\ba_h|\sg{1},\ba_1,\cdots,\sg{h}).
\end{align}
Then $\sum_{\theta\in\cA^{n-1}} w_\theta(\pi)=1$, and it also holds that
\begin{align*}
    V_{\theta}(\pi)=\frac{1}{n}w_{\theta}(\pi), \qquad
    \DTV{ \PP_\theta^\pi, \PP_\otheta^\pi }=\frac{1}{n}w_{\theta}(\pi).
\end{align*}
In particular, the optimal value in $\theta$ is $V_{\theta}^\star=\frac{1}{n}$, attained by taking $\ba$ in the first $n-1$ steps. 
\end{lemma}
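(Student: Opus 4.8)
The plan is to exploit the fact that, for every $\theta=\ba$, each constituent MDP $M_{\theta,m}$ is \emph{deterministic} given the action sequence, so under any fixed $a_{1:H}$ the instance $M_{\theta,m}$ realizes exactly one state trajectory. Writing $\PP_\theta(s_{1:H}=\bsq\mid a_{1:H})=\frac1n\sum_{m=1}^n\indic{M_{\theta,m}\text{ realizes }\bsq}$, part~(a) reduces entirely to a \emph{counting statement}: for any $a_{1:H}$, each of $\bsq_1,\dots,\bsq_{n-1}$, together with exactly one of $\bsq_n$ (if $a_{1:n-1}\neq\ba$) or $\bsq_{n,+}$ (if $a_{1:n-1}=\ba$), is realized by precisely one index $m$. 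First I would record the halting behaviour of each instance: since $M_{\theta,m}$ advances freely through $\sg1,\dots,\sg{m-1}$, then demands a \emph{mismatch} $a_{m-1}\neq\ba_{m-1}$ to enter $\sg m$, and thereafter demands \emph{matches} $a_h=\ba_h$ to keep advancing toward $\sg{n}$, each instance collapses to the absorbing $\sm$ at a well-defined step (with $M_{\theta,1}$ the degenerate $m=1$ case and $\sg{n}$ absorbing there).

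The crux is to convert this into a clean formula for the halt index $t_m(a)$ and then verify a bijection. I would introduce the mismatch set $\{p_1<\cdots<p_k\}=\{h\in[n-1]:a_h\neq\ba_h\}$ and establish: (i) $M_{\theta,1}$ halts at $t_1=p_1$, or reaches the absorbing $\sg{n}$ when $k=0$; (ii) for $m\geq2$, $t_m=m-1$ whenever $a_{m-1}=\ba_{m-1}$ (i.e. $m-1\notin\{p_i\}$), while if $m-1=p_i$ then $t_m$ is the next mismatch $p_{i+1}$, or $n$ when $i=k$. Summing over $m$, the indices with $m-1\notin\{p_i\}$ contribute each value of $[n-1]\setminus\{p_i\}$ exactly once, while $M_{\theta,1}$ and the instances with $m-1=p_i$ contribute exactly $\{p_1,\dots,p_k,n\}$ once each; the two groups partition $\{1,\dots,n\}$, giving the bijection and hence \cref{eqn:probs-incorrect-acs}. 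The matching case $k=0$ is the degenerate instance of the same computation and yields \cref{eqn:probs-correct-acs}, the absorbing property of $\sg{n}$ in $M_{\theta,1}$ accounting for the upgrade $\bsq_n\rightsquigarrow\bsq_{n,+}$.

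For part~(b), the identity $\sum_\theta w_\theta(\pi)=1$ follows by telescoping: the conditioning states in the definition of $w_\theta$ are fixed to $\sg1,\sg2,\dots$ independently of $\ba$, so summing $\prod_h\pi(\ba_h\mid\sg1,\ba_1,\dots,\sg h)$ over $\ba\in\cA^{n-1}$ collapses one conditional at a time to $1$. The value identity $V_\theta(\pi)=\frac1n w_\theta(\pi)$ holds because the only way to collect the reward $\indic{s=\sg{n},h=n+1}$ is to be absorbed at $\sg{n}$ in $M_{\theta,1}$, which by part~(a) requires playing $\ba$ along $\sg1,\dots,\sg{n}$; that event has probability $w_\theta(\pi)$ under $\pi$, weighted by the mixing mass $\frac1n$. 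Optimality $V_\theta^\star=\frac1n$ is then immediate by taking a deterministic $\pi$ playing $\ba$.

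Finally, the total-variation identity again leans on part~(a): the laws $\PP_\theta(\cdot\mid a_{1:H})$ and $\PP_\otheta(\cdot\mid a_{1:H})$ agree on every state sequence except when $a_{1:n-1}=\ba$, where $M_\theta$ places mass $\frac1n$ on $\bsq_{n,+}$ (which $M_\otheta$ never produces) at the expense of $\bsq_n$, on which $M_\otheta$ always keeps mass $\frac1n$ (cf.\ \cref{eqn:probs-ref-model}). Summing $|\PP^\pi_\theta-\PP^\pi_\otheta|$ over trajectories, the contributions localize to these two state sequences; in each the policy factors up to step $n-1$ reproduce $w_\theta(\pi)$ while the tail actions sum to $1$, giving total $\ell_1$-mass $\frac2n w_\theta(\pi)$ and hence $\DTV{\PP^\pi_\theta,\PP^\pi_\otheta}=\frac1n w_\theta(\pi)$. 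I expect the main obstacle to be the bookkeeping in the bijection of part~(a): one must argue, uniformly over an arbitrary mismatch pattern, that the combination-lock gating of the $n$ instances exactly tiles the halting steps $\{1,\dots,n\}$, with care at the boundary indices $m=1$, $m-1=p_k$, and the $\sg{n}$-absorbing special case.
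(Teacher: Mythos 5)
Your proposal is correct and follows essentially the same route as the paper: part (a) is the same counting argument showing that the $n$ instances' halting trajectories exactly tile $\{\bsq_1,\ldots,\bsq_n\}$ (resp.\ $\{\bsq_1,\ldots,\bsq_{n-1},\bsq_{n,+}\}$ when $a_{1:n-1}=\ba$), and part (b) localizes the value and the total-variation mass to the two state sequences $\bsq_n,\bsq_{n,+}$ reachable only under $a_{1:n-1}=\ba$, exactly as the paper does. The only difference is presentational — you verify the bijection via an explicit halting-time formula indexed by the mismatch set $\{p_1<\cdots<p_k\}$, whereas the paper runs an induction on $h$; your bookkeeping is in fact slightly cleaner, since it makes explicit that in the matching case $a_h=\ba_h$ the sequence $\bsq_h$ is realized by $M_{\theta,h+1}$.
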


\begin{proof}
We first prove (a). We inductively prove the following fact. 

\textbf{Fact:} For $1\leq h < n$ and any action sequence $a_{1:h}$, there is a unique index $m\in[h]$ such that in the MDP $M_{\theta,m}$, taking action sequence $a_{1:h}$ leads to the trajectory $\sg{1}\to\cdots\to\sg{h}\to\sm$. 

The base case $h=1$ is obvious. Suppose that the statement holds for all $h'<h$. Then in the MDP $M_{\theta,1}, \cdots, M_{\theta,h}$, there are $h-1$ many MDPs such that taking $a_{1:h-1}$ leads to $\sm$ at some step $<h$, and hence there is exactly one index $m'$ such that in $M_{\theta,m'}$, taking $a_{1:h-1}$ leads to the state $\sg{h}$. Therefore, if $a_h\neq \ba_h$, then taking $a_{1:h}$ in $M_{\theta,m'}$ leads to $\sg{1}\to\cdots\to\sg{h}\to\sm$. Otherwise, we have $a_h=\ba_h$, and $a_{1:h}$ in $M_{\theta,h}$ leads to $\sg{1}\to\cdots\to\sg{h}\to\sm$. The uniqueness is also clear, because for $l>h$, taking $a_{1:h}$ always lead to $\sg{h+1}$. This completes the proof of the case $h$.

Now, we consider any given action sequence $a_{1:H}$. For any step $h<n$, there exists a unique index $m(h)$ such that in the MDP $M_{\theta,m(h)}$, taking action sequence $a_{1:n}$ leads to the trajectory $\sg{1}\to\cdots\to\sg{h}\to\sm\to\cdots$. Thus, there is also a unique index $m(n)$ such that in the MDP $M_{\theta,m(n)}$, taking action sequence $a_{1:n-1}$ leads to the trajectory $\sg{1}\to\cdots\to\sg{n}$. Then there are two cases: (1) $a_{1:n-1}\neq \ba$, then $m(n)\neq 1$, and hence taking $a_{1:H}$ leads to the trajectory  $\sg{1}\to\cdots\to\sg{n}\to\sm\to\cdots$ in $M_{\theta,m(n)}$. (2) $a_{1:n-1}=\ba$, which implies $m(n)=1$, and hence taking $a_{1:H}$ in $M_{\theta,m(n)}$ leads to the trajectory $\sg{1}\to\cdots\to\sg{n}\to\sg{n}\to\cdots$. This completes the proof of (a).

We next prove (b) using (a). Notice that $V_\theta(\pi)=\PP_\theta^\pi(s_{n+1}=\sg{n})$. By definition, $s_{h+1}=\sg{n}$ can only happen when the agent is in the MDP $M_{\theta,1}$ and takes actions $a_{1:n}=\ba$, and hence
\begin{align*}
    \PP_\theta^\pi(s_{n+1}=\sg{n})
    =&~\PP_\theta^\pi(s_1=\sg{1},a_1=\ba_1,\cdots,s_n=\sg{n},a_n=\ba_n) \\
    =&~\frac1n\TT_{\theta,1}^\pi(s_1=\sg{1},a_1=\ba_1,\cdots,s_n=\sg{n},a_n=\ba_n) \\
    =&~ \frac{1}{n}\prod_{h=1}^n \pi(a_h=\ba_h|\sg{1},\ba_1,\cdots,\sg{h}) 
    =\frac{1}{n}w_\theta(\pi).
\end{align*}
More generally, we have
\begin{align*}
    2\DTV{ \PP_\theta^\pi, \PP_\otheta^\pi }
    =&~\sum_{\tau_H} \pi(\tau_H)\times \abs{ \PP_\theta(\tau_H)-\PP_\otheta(\tau_H) } \\
    =&~\sum_{\tau_H: s_{1:H}=\bsq_{n,+},a_{1:n-1}=\ba} \pi(\tau_H)\times \abs{ \frac{1}{n}-0 } + \sum_{\tau_H: s_{1:H}=\bsq_{n},a_{1:n-1}=\ba} \pi(\tau_H)\times \abs{ 0-\frac{1}{n} } \\
    =&~\frac{2}{n} \pi(\sg{1},\ba_1,\cdots,\sg{n-1},\ba_{n-1}),
\end{align*}
where the second equality is because $\PP_\theta(\tau_H)\neq \PP_\otheta(\tau_H)$ only when $s_{1:H}\in\set{\bsq_n,\bsq_{n,+}}$ and $a_{1:n-1}=\ba$, and the last line follows from recursively applying $\sum_{a_h} \pi(a_h|\tau_{h-1},s_h)=1$. This completes the proof of (b).
\end{proof}

\subsection{Tools}\label{appdx:lower-tool}

\begin{definition}\label{def:MDP-tensor}
Suppose that $M=(\cS,\cA,\TT,\mu,H)$ is a MDP instance, $\cO$ is a finite set, and  $\mu\in\Delta(\cO)$ is a distribution. Then we define $M\otimes \mu$ to be the MDP instance given by $(\cS\times \cO,\cA,\TT\otimes\mu,\rho\otimes \mu,H)$, where we define
\begin{align*}
    [\TT\otimes\mu]((s',o')|(s,o),a)=\TT(s'|s,a)\cdot \mu(o').
\end{align*}
\end{definition}

Given a finite set $\cO$, \cref{def:dist-family} introduces a property of a collection of distributions $\mu_1, \ldots, \mu_{L'} \in \Delta(\cO)$ which, roughly speaking, states that the distributions $\mu_i$ are \emph{separated} in total variation distance but that certain mixtures of $H$-wise tensorizations of the distributions $\mu_i$ are \emph{close} in total variation distance. Given that such collections of distributions exist, we will ``augment'' the hard instance of (non-separated) LMDPs from \cref{appdx:comb-lock} with the $\mu_i$ (per \cref{def:MDP-tensor}) to create hard instances of separated LMDPs. 
\begin{definition}
  \label{def:dist-family}
A $(L,H,\delta,\gamma,L')$-family over a space $\cO$ is a collection of distributions $\set{ \mu_i }_{i\in[L']}\subset \Delta(\cO)$ and $\xi_{1},\cdots,\xi_{L}\in\Delta([L'])$ such that the following holds:

(1) $\supp(\xi_k)\cap\supp(\xi_l)=\emptyset$ for all $k,l \in [L]$ with $k\neq l$.

(2) The distribution $\bQ_k:=\EE_{i\sim \xi_k}\brac{ 
\mu_i^{\otimes H} }\in\Delta(\cO^H)$ satisfies $\DTV{\bQ_k, \bQ_1}\leq \gamma$ for all $k\in[L]$.

(3) $\DTV{\mu_i,\mu_j}\geq \delta$ for all $i\neq j$, $i,j\in\cup_k\supp(\xi_k)$.
\end{definition}

\cref{prop:2-family,lem:family-tensor} state that $(L, H, \delta, \gamma, L')$-families exist, for appropriate settings of the parameters. 
\begin{proposition}\label{prop:2-family}
Suppose that $H\geq 1$, $\delta\in(0,\frac{1}{4e^2}]$. Then the following holds:

(a) Let $d=\ceil{4e^2\delta H}$. Then there exists a $(2,H,\delta,0,N)$-family over $[2d]$ with $N\leq \min\paren{\frac{1}{2e\delta},2H}^d$.

(b) Suppose $\lambda\in[1,\frac{1}{4e^2\delta}]$ is a real number and $d\geq \lambda\cdot 4e^7\delta^2 H$. Then there exists is a $(2,H,\delta,\gamma,N)$-family over $[2d]$ with $\gamma\leq 4e^{-\lambda d}$ and $N\leq (2e(\lambda+1))^d$.
\end{proposition}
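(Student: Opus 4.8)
The plan is to reduce both parts to a \emph{multivariate moment-matching} problem and then solve it with a tensorized construction that uses $d$ matched pairs of symbols to ``spread out'' the $H$ i.i.d.\ draws. Write $\cO=[2d]$ as $d$ disjoint pairs $\{2c-1,2c\}$, $c\in[d]$, and consider only distributions of the special form $\mu_{\bp}(2c-1)=\frac{p_c}{d}$, $\mu_{\bp}(2c)=\frac{1-p_c}{d}$ indexed by $\bp\in[0,1]^d$; here each block is selected with the \emph{same} probability $1/d$, independent of $\bp$. Two facts then drive everything. First, separation is explicit: $\DTV{\mu_{\bp},\mu_{\bp'}}=\frac1d\lone{\bp-\bp'}$, so Condition (3) of \cref{def:dist-family} is an $\ell_1$-packing (minimum-distance) requirement on the support points. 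Second, matching is a moment condition: since block selection is $\bp$-independent, a draw of $\mu_{\bp}^{\otimes H}$ decomposes into block counts $(n_c)$ with $\sum_c n_c=H$ (whose law does not depend on $\bp$) followed by $n_c$ i.i.d.\ $\mathrm{Bern}(p_c)$ bits in block $c$, so $\EE_{\bp\sim\xi}[\mu_{\bp}^{\otimes H}]$ equals, up to the fixed factor $d^{-H}$, a quantity determined by $\EE_{\xi}\big[\prod_c p_c^{k_c}(1-p_c)^{n_c-k_c}\big]$. As $(n_c,k_c)$ range over $\sum_c n_c=H$ these functions span all polynomials in $\bp$ of total degree $\le H$; hence $\bQ_1=\bQ_2$ exactly iff the two mixing measures have equal moments $\EE_{\xi}[\bp^{\mathbf e}]$ for every $\mathbf e$ with $\sum_c e_c\le H$, and more generally $\DTV{\bQ_1,\bQ_2}$ is controlled by the mass the mixtures place on sequences whose block counts leave the matched range.

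For part (a) I would enforce \emph{exact} matching via a tensor of signed measures. Let $r=\ceil{H/d}$ and let $\tau$ be a signed measure on $[0,1]$ of total mass $0$ that is orthogonal to all polynomials of degree $\le r$; any $r+2$ distinct nodes admit such a $\tau$ (e.g.\ alternating-binomial/finite-difference weights). Set $\sigma=\tau^{\otimes d}$ and take $\xi_1,\xi_2$ to be the normalized positive and negative parts $\sigma_+,\sigma_-$, which automatically have disjoint support. The key point is that $\sigma$ annihilates every monomial $\prod_c p_c^{e_c}$ with $\sum_c e_c\le H$: if some $e_c=0$ the mass-zero property of that factor kills it, while if all $e_c\ge1$ then $\sum_c e_c\le H$ forces $\min_c e_c\le H/d\le r$ by pigeonhole, so that factor's degree-$\le r$ orthogonality kills it. This gives $\bQ_1=\bQ_2$, i.e.\ $\gamma=0$, and $N\le(r+2)^d\le\min\paren{\tfrac1{2e\delta},2H}^d$ follows from $r+2\lesssim\tfrac1{\delta}$ (using $d\asymp\delta H$) together with $r+2\le H+2\le 2H$.

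The main obstacle is reconciling this exact cancellation with Condition (3). The natural support $(\supp\tau)^d$ of $\sigma$ contains pairs of points that differ in a single coordinate, whose $\ell_1$ distance is only $O(1/d)\ll\delta d$, so the raw product grid fails to be $\delta$-separated once $d$ is large. This is precisely where the choice $d=\ceil{4e^2\delta H}$ and the placement of the nodes of $\tau$ must be used: I would restrict the construction to (or symmetrize it over) a large-minimum-distance code inside $(\supp\tau)^d$, so that any two surviving atoms disagree in $\gtrsim\delta d$ coordinates, and then re-verify that the moment cancellation above survives on this restricted support. Balancing moment degree (which wants many, finely spaced nodes) against separation (which wants few, widely spaced codewords), and confirming that the restricted $N$ still obeys the stated bound, is the crux of the argument.

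Part (b) is the approximate, and technically lighter, analogue. Here I would replace the signed tensor by an honest product $\xi_k=\zeta_k^{\otimes d}$ of single-block \emph{probability} measures with disjoint support whose moments agree only up to degree $r$. By the decomposition of the first paragraph, $\bQ_1$ and $\bQ_2$ then agree on every sequence all of whose block counts are $\le r$, so $\gamma=\DTV{\bQ_1,\bQ_2}$ is at most the probability that some block receives more than $r$ of the $H$ uniform draws; a union bound over the $d$ blocks plus a Chernoff/binomial-tail estimate for $\mathrm{Bin}(H,1/d)$ yields $\gamma\le 4e^{-\lambda d}$ under $d\ge\lambda\cdot4e^7\delta^2H$, while the few-points-per-block choice and the larger $d$ leave enough slack to keep the atoms $\delta$-separated and to bound $N\le(2e(\lambda+1))^d$.
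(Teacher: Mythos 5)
Your reduction to a moment-matching problem over the block-pair family $\mu_{\bp}$ is exactly the right starting point, and it is the same family the paper uses (its $\QQ_\bx$ with $\bx = 2\bp-\mathbf{1}$). But the construction you build on top of it has a genuine gap, and it is the one you yourself flag as ``the crux'': the tensor-product mixing measures $\tau^{\otimes d}$ (part (a)) and $\zeta_k^{\otimes d}$ (part (b)) are supported on product grids, and two grid atoms differing in a single coordinate satisfy $\lone{\bp-\bp'}\le 1$, hence $\DTV{\mu_{\bp},\mu_{\bp'}}\le 1/d$. Condition (3) requires $\lone{\bp-\bp'}\ge d\delta$ for every pair, which is impossible on a product grid once $d\delta>1$ — and with $d=\ceil{4e^2\delta H}$ (resp.\ $d\ge 4e^7\lambda\delta^2H$) this is precisely the regime $\delta^2H\gtrsim 1$ in which the proposition is actually used downstream (e.g.\ in \cref{thm:log-L}). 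Your proposed repair — restricting to a well-separated code inside $(\supp\tau)^d$ — destroys the coordinate-wise factorization $\prod_c\tau(p^{e_c})$ that your cancellation argument relies on, so ``re-verifying that the moment cancellation survives'' is not a finishing step but the entire problem; the same objection applies to part (b), where the assertion that ``the few-points-per-block choice and the larger $d$ leave enough slack to keep the atoms $\delta$-separated'' is false for any product construction.

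The paper resolves this by decoupling the two requirements rather than trying to satisfy both with a tensor structure. It first fixes an $\ell_1$-packing of the cube $[-\delta_\infty,\delta_\infty]^d$ with pairwise distance $2d\delta$ (a volume argument gives more than $(\delta_\infty/(2e\delta))^d$ such points), so separation holds by fiat. It then finds a signed combination of \emph{these prescribed points} annihilating all moments of total degree $<K$ by pure dimension counting: the $\binom{K+d-1}{d}$ linear moment constraints have a nontrivial kernel as soon as the number of points exceeds that count (\cref{cor:prob-matching}); the positive and negative parts give $\xi_0,\xi_1$ with disjoint supports. Finally — and this is the ingredient your plan has no substitute for once the product structure is abandoned — \cref{prop:unif-moments} converts the surviving degree-$\ge K$ moment discrepancies into a TV bound with weights $\binom{H}{\ell}d^{-\ell}\ltwo{\bDelta_\ell}^2$, and it is these $d^{-\ell}$ factors (requiring $d\gtrsim\delta^2H$) that make the unmatched high-degree moments negligible. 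For part (a) one takes $K=H+1$ so the error sum is empty and $\gamma=0$; for part (b) one takes $K=\ceil{\lambda d}$ and a larger cube $\delta_\infty=2e^2\delta(\lambda+1)$ to afford enough packing points. I would suggest replacing your tensorized signed measure with this packing-plus-dimension-count argument, and proving an analogue of \cref{prop:unif-moments} (an orthogonal-polynomial expansion of $\QQ_\bx^{\otimes H}/\QQ_{\mathbf 0}^{\otimes H}$) to control the residual TV.
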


\begin{lemma}\label{lem:family-tensor}
    Suppose that $\cQ$ is a $(2,H,\delta,\gamma,L)$-family over a space $\cO$. Then there exists a $(2^r,H,\delta,r\gamma,L^r)$ family over space $\cO^{r}$.
\end{lemma}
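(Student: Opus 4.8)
The plan is to form coordinate-wise tensor products. Write the given $(2,H,\delta,\gamma,L)$-family $\cQ$ as distributions $\mu_1,\dots,\mu_L\in\Delta(\cO)$ together with mixing weights $\xi_1,\xi_2\in\Delta([L])$ satisfying properties (1)--(3) of \cref{def:dist-family}. I would index the new objects by multi-indices: for $I=(i_1,\dots,i_r)\in[L]^r$ set
\begin{align*}
  \tilde\mu_I=\mu_{i_1}\otimes\cdots\otimes\mu_{i_r}\in\Delta(\cO^r),
\end{align*}
the product distribution whose $j$-th coordinate is drawn from $\mu_{i_j}$, and for $K=(k_1,\dots,k_r)\in\set{1,2}^r$ set $\tilde\xi_K=\xi_{k_1}\otimes\cdots\otimes\xi_{k_r}\in\Delta([L]^r)$, the product mixing weight. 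This produces $L^r$ distributions over $\cO^r$ and $2^r$ mixing weights, matching the target counts, so it remains to check the three properties with separation $\delta$ and closeness $r\gamma$.

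For properties (1) and (3) I would argue coordinate-wise. For disjointness, note $\supp(\tilde\xi_K)=\prod_j\supp(\xi_{k_j})$; if $K\neq K'$ then some coordinate $j$ has $k_j\neq k'_j$, hence $\set{k_j,k'_j}=\set{1,2}$ and $\supp(\xi_{k_j})\cap\supp(\xi_{k'_j})=\emptyset$ by property (1) of $\cQ$, which forces $\supp(\tilde\xi_K)\cap\supp(\tilde\xi_{K'})=\emptyset$. For separation, I would use that total variation does not increase under marginalization: marginalizing $\tilde\mu_I$ and $\tilde\mu_{I'}$ onto coordinate $j$ gives $\DTV{\tilde\mu_I,\tilde\mu_{I'}}\geq\DTV{\mu_{i_j},\mu_{i'_j}}$. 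For $I\neq I'$ lying in $\cup_K\supp(\tilde\xi_K)$, I pick a coordinate $j$ with $i_j\neq i'_j$ (both then lying in $\supp(\xi_1)\cup\supp(\xi_2)$) and apply property (3) of $\cQ$ to conclude $\DTV{\tilde\mu_I,\tilde\mu_{I'}}\geq\delta$.

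The crux is property (2), which rests on a factorization. Under the natural identification $(\cO^r)^H\cong(\cO^H)^r$ obtained by regrouping the $H$ i.i.d.\ draws coordinate-by-coordinate, one has $\tilde\mu_I^{\otimes H}=\mu_{i_1}^{\otimes H}\otimes\cdots\otimes\mu_{i_r}^{\otimes H}$, since the $r$ coordinates of $\tilde\mu_I$ are independent. Because $\tilde\xi_K$ is a product measure, the mixture then factorizes as
\begin{align*}
  \tilde\bQ_K:=\EE_{I\sim\tilde\xi_K}\brac{\tilde\mu_I^{\otimes H}}=\bigotimes_{j=1}^r\EE_{i_j\sim\xi_{k_j}}\brac{\mu_{i_j}^{\otimes H}}=\bigotimes_{j=1}^r\bQ_{k_j},
\end{align*}
where $\bQ_k=\EE_{i\sim\xi_k}\brac{\mu_i^{\otimes H}}$ are the mixtures of the original family. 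Taking the distinguished index to be $K_0=(1,\dots,1)$ gives $\tilde\bQ_{K_0}=\bQ_1^{\otimes r}$, and the subadditivity of total variation for product distributions yields
\begin{align*}
  \DTV{\tilde\bQ_K,\tilde\bQ_{K_0}}=\DTV{\bigotimes_{j=1}^r\bQ_{k_j},\bigotimes_{j=1}^r\bQ_1}\leq\sum_{j=1}^r\DTV{\bQ_{k_j},\bQ_1}\leq r\gamma,
\end{align*}
since $\DTV{\bQ_{k_j},\bQ_1}\leq\gamma$ for each $k_j\in\set{1,2}$ by property (2) of $\cQ$ (the term vanishes when $k_j=1$). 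This establishes a $(2^r,H,\delta,r\gamma,L^r)$-family. The only genuinely delicate point is the factorization $\tilde\mu_I^{\otimes H}=\bigotimes_j\mu_{i_j}^{\otimes H}$ together with the commutation of the independent mixture $\tilde\xi_K$ with the tensor power; once these identifications are set up correctly, both total-variation bounds---marginalization for the lower bound and subadditivity for the upper bound---are standard.
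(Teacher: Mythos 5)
Your proposal is correct and follows essentially the same route as the paper's proof: the new distributions and mixing weights are coordinate-wise tensor products, property (2) follows from the factorization of the mixture into $\bigotimes_j \bQ_{k_j}$ plus subadditivity of total variation over product measures, and properties (1) and (3) are checked coordinate-wise (the paper phrases the separation bound as $\DTV{\tilde\mu_{\bk},\tilde\mu_{\bj}}\geq\max_i\DTV{\mu_{k_i},\mu_{j_i}}$, which is the same marginalization argument you give). No gaps.
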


Proofs of the two results above are deferred to Appendices~\ref{appdx:proof-2-family} and~\ref{appdx:proof-family-tensor}.

\newcommand{\Rext}{\tilde{R}}
\newcommand{\Rexth}{\tilde{R}_{h}}
\begin{definition}[Augmenting an MDP with a family]
Suppose that $M=(\cS,\cA,(M_m)_{m=1}^L,H,\rho,R)$ is a LMDP instance and $\cQ=(\set{\mu_i}_{i \in [L']},\set{\xi_m}_{m \in [L]})$ is a $(L,H,\delta,\gamma,L')$-family over $\cO$. Then $M\otimes\cQ=(\cS\times\cO,\cA,(M_i')_{i=1}^{L'},H,\rho',\Rext)$ is defined to be the following $\delta$-strongly separated LMDP instance:
\begin{itemize}
\item For each $i\in\cup_{m\in[L]}\supp(\xi_m) \subset [L']$, there is a unique index $m(i)\in[L]$ such that $i\in\supp(\xi_{m(i)})$; we define $M_i':=M_{m(i)}\otimes \mu_i$, with mixing weight $\rho'(i): = \rho_{m(i)}\cdot \xi_{m(i)}(i)$. 
\item The reward function $\Rext$ is given by $\Rexth((s,o),a)=R_h(s,a)$.
\end{itemize}
\end{definition}

\newcommand{\Piv}[1]{\Pi_{#1}}
\newcommand{\PPcq}[2]{\PP_{#1,\cQ}^{#2}}
\newcommand{\EEcq}[2]{\EE_{#1,\cQ}^{#2}}
\newcommand{\hPPcq}[2]{\hPP_{#1,\cQ}^{#2}}
\newcommand{\hEEcq}[2]{\widehat{\EE}_{#1,\cQ}^{#2}}
\newcommand{\Vcq}[2]{V_{#1,\cQ}(#2)}
\begin{proposition}\label{prop:property-tensor}
Suppose that $M_{\theta}=(\cS,\cA,(M_{\theta,m})_{m=1}^L,H,\rho,R)$ is a LMDP instance, $\cQ$ is a $(L,H,\delta,\gamma,L')$-family over $\cO$, so that $M_{\theta}\otimes\cQ$ is a LMDP with state space $\tcS=\cS\times\cO$. Let $\Piv{\cS}$ be the set of all $H$-step policies operating over $\cS$, and $\Piv{\tcS}$ be the set of all $H$-step policies operating over $\tcS$. 

For any policy $\pi\in\Piv{\cS}$, we let $\PPcq{\theta}{\pi}$ denote the distribution of trajectory under $\pi$ in the LMDP $M_{\theta}\otimes\cQ$, and we let $V_{\theta,\cQ}(\pi)$ denote the value function of $\pi$. Then the following statements hold:
\begin{itemize}
\item[(a)] We can regard $\Piv{\cS}$ as a subset of $\Piv{\tcS}$ naturally, because any policy $\pi\in\Piv{\cS}$ can operate over state space $\tcS=\cS\times\cO$ by ignoring the second component of the state $\ts\in\tcS$. Then, for any policy $\pi\in\Piv{\cS}$, $V_{\theta}(\pi)=V_{\theta,\cQ}(\pi)$. In particular, $V_\theta^\star\leq V_{\theta,\cQ}^\star$.
\item[(b)] For any policy $\pi\in\Piv{\tcS}$, we define $\pi_{\cQ}=\EE_{o_{1:H}\sim\bQ_1}\brac{ \pi(\cdot|o_{1:H}) }\in\Piv{\cS}$, i.e. $\pi_{\cQ}$ is the policy that executes $\pi$ over state space $\cS$ by randomly drawing a sequence $o_{1:H}\sim\bQ_1$ at the beginning of each episode. Then we have $\abs{\Vcq{\theta}{\pi}-V_{\theta}(\pi_{\cQ})}\leq \gamma$. 
\item[(c)] For LMDPs with parameters $\theta,\otheta$ and any policy $\pi\in\Piv{\tcS}$, it holds
\begin{align*}
    \DTV{ \PPcq{\theta}{\pi}, \PPcq{\otheta}{\pi} }\leq 2\gamma+\DTV{ \PP_\theta^{\pi_{\cQ}}, \PP_\otheta^{\pi_{\cQ}}  }.
\end{align*}
\end{itemize}
\end{proposition}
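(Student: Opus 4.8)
The unifying idea is that in $M_\theta\otimes\cQ$ the observation $o_h$ at each step is drawn i.i.d.\ from $\mu_i$ (where $i$ is the latent index) and never feeds back into the $\cS$-transitions or into future observations. So I would first record the elementary equivalence that one may sample the \emph{entire} sequence $o_{1:H}$ at the start of the episode and only afterwards run the $\cS$-dynamics of $M_{m(i)}$ together with the fixed-observation policy $\pi(\cdot\mid o_{1:H})$. Two bookkeeping facts then get reused throughout: conditioning on the effective index $m(i)=m$, the law of $o_{1:H}$ is exactly $\bQ_m=\EE_{i\sim\xi_m}[\mu_i^{\otimes H}]$ (by the definition of the mixing weights $\rho'$), and the marginal law of $m(i)$ is $\rho$ itself, since each $\xi_m\in\Delta([L'])$ sums to one.

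For (a), a policy $\pi\in\Piv{\cS}$ ignores the $\cO$-coordinate, so the reduction shows that the joint law of $(s_{1:H},a_{1:H})$ it induces in $M_\theta\otimes\cQ$ coincides with its law in $M_\theta$ (same effective-index marginal $\rho$, same $\cS$-transitions, observations discarded); as the reward depends only on $(s_h,a_h)$ this gives $V_\theta(\pi)=\Vcq{\theta}{\pi}$, and $V_\theta^\star\le V_{\theta,\cQ}^\star$ follows from $\Piv{\cS}\subseteq\Piv{\tcS}$. For (b), write $g_m(o_{1:H})\in[0,1]$ for the expected reward of running $M_{\theta,m}$ under $\pi(\cdot\mid o_{1:H})$. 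Then $\Vcq{\theta}{\pi}=\sum_m\rho_m\EE_{o_{1:H}\sim\bQ_m}[g_m]$, whereas $V_\theta(\pi_{\cQ})=\sum_m\rho_m\EE_{o_{1:H}\sim\bQ_1}[g_m]$, since $\pi_{\cQ}$ always draws $o_{1:H}\sim\bQ_1$. As each $g_m$ has range at most $1$, term $m$ changes by at most $\DTV{\bQ_m,\bQ_1}\le\gamma$ by property (2) of \cref{def:dist-family}, so the convex combination changes by at most $\gamma$, which is the claim.

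For (c) I would compare the two augmented trajectory laws through the same device. Writing $K_m^\theta(\cdot\mid o_{1:H})$ for the conditional law of $(s_{1:H},a_{1:H})$ under $M_{\theta,m}$ and $\pi(\cdot\mid o_{1:H})$, we have $\PPcq{\theta}{\pi}=\sum_m\rho_m\,\bQ_m(o_{1:H})\,K_m^\theta(\cdot\mid o_{1:H})$, and likewise for $\otheta$. I would split $\bQ_m=\bQ_1+(\bQ_m-\bQ_1)$. The deviation part contributes at most $2\sum_m\rho_m\DTV{\bQ_m,\bQ_1}\le 2\gamma$ after the triangle inequality over $\theta,\otheta$ and the observation that each $K_m^\theta(\cdot\mid o_{1:H}),K_m^\otheta(\cdot\mid o_{1:H})$ is a probability distribution; the main ($\bQ_1$) part collapses to $\EE_{o_{1:H}\sim\bQ_1}\,\DTV{\PP_\theta^{\pi(\cdot\mid o_{1:H})},\PP_\otheta^{\pi(\cdot\mid o_{1:H})}}$, because the $\cO$-coordinate then has the identical marginal $\bQ_1$ under both models (cf.\ \cref{lemma:TV-cond}). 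The remaining task is to identify this expected conditional distance with $\DTV{\PP_\theta^{\pi_{\cQ}},\PP_\otheta^{\pi_{\cQ}}}$, using that $\pi_{\cQ}$ is precisely the $\bQ_1$-mixture of the fixed-observation policies $\pi(\cdot\mid o_{1:H})$.

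The last identification is exactly where I expect the difficulty to lie. Marginalizing out $o_{1:H}$ can only decrease total variation, so joint convexity of $\DTV{\cdot,\cdot}$ yields $\DTV{\PP_\theta^{\pi_{\cQ}},\PP_\otheta^{\pi_{\cQ}}}\le\EE_{o_{1:H}\sim\bQ_1}\,\DTV{\PP_\theta^{\pi(\cdot\mid o_{1:H})},\PP_\otheta^{\pi(\cdot\mid o_{1:H})}}$, i.e.\ the inequality points the \emph{wrong} way for a clean upper bound by the fully marginalized distance. I would therefore either keep $o_{1:H}$ attached to the compared object, so that the natural reference quantity is the $\bQ_1$-coupled distance rather than the marginal one, or exploit the specific structure of the hard instances in \cref{appdx:comb-lock}, where the sign of $\PP_\theta-\PP_\otheta$ is aligned across all observation sequences and this convexity step is in fact an equality; pinning down the correct reference quantity on the right-hand side is the step on which I would spend the most care.
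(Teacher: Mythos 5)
Your overall route is the same as the paper's: replace each $\bQ_m$ by $\bQ_1$ at a cost of $\gamma$ per model (this is exactly the paper's intermediate distribution $\hPPcq{\theta}{\pi}(\tau_H)=\pi(\tau_H)\,\bQ_1(o_{1:H})\,\PP_\theta(s_{1:H}|a_{1:H})$), handle (a) by marginalizing out the $\cO$-coordinate, and handle (b) by the $[0,1]$-boundedness of the conditional value. Parts (a) and (b) are correct as written.

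The one step you leave open in (c) — identifying $\EE_{o_{1:H}\sim\bQ_1}\DTV{\PP_\theta^{\pi(\cdot|o_{1:H})},\PP_\otheta^{\pi(\cdot|o_{1:H})}}$ with $\DTV{\PP_\theta^{\pi_\cQ},\PP_\otheta^{\pi_\cQ}}$ — does close, and your second suggested resolution (sign alignment) is the right one, but you misdiagnose where it comes from: it has nothing to do with the special structure of the hard instances in \cref{appdx:comb-lock}. It is a completely general consequence of the factorization
\begin{align*}
    \PP_\theta^{\pi(\cdot|o_{1:H})}(\tau_H)-\PP_\otheta^{\pi(\cdot|o_{1:H})}(\tau_H)
    = \pi(\tau_H|o_{1:H})\cdot\Big(\PP_\theta(s_{1:H}|a_{1:H})-\PP_\otheta(s_{1:H}|a_{1:H})\Big),
\end{align*}
valid because the two models are compared under a \emph{common} policy: the policy factor $\pi(\tau_H|o_{1:H})\geq 0$ carries all the $o_{1:H}$-dependence, while the signed dynamics factor is independent of $o_{1:H}$. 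Hence for every fixed $(s_{1:H},a_{1:H})$ the sign of the difference is the same for all $o_{1:H}$, the absolute value commutes with the average over $o_{1:H}\sim\bQ_1$, and the convexity inequality you were worried about is an equality:
\begin{align*}
    \EE_{o_{1:H}\sim\bQ_1}\Big|\PP_\theta^{\pi(\cdot|o_{1:H})}(\tau_H)-\PP_\otheta^{\pi(\cdot|o_{1:H})}(\tau_H)\Big|
    = \pi_\cQ(a_{1:H}|s_{1:H})\,\Big|\PP_\theta(s_{1:H}|a_{1:H})-\PP_\otheta(s_{1:H}|a_{1:H})\Big|,
\end{align*}
whose sum over $\tau_H$ is $2\DTV{\PP_\theta^{\pi_\cQ},\PP_\otheta^{\pi_\cQ}}$. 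This is precisely how the paper concludes. (Your first proposed workaround — keeping $o_{1:H}$ attached and comparing $\bQ_1$-coupled distributions — would prove a different statement and should be discarded.)
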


\begin{proof}
For any $\ts=(s,o)\in\tcS=\cS\times\cO$, we denote $\ts[1]=s$. Fact (a) follows directly from the definition: for any policy $\pi\in\Piv{\cS}$,
\begin{align*}
    \Vcq{\theta}{\pi}=\EEcq{\theta}{\pi}\brac{\sum_{h=1}^H \Rexth(\ts_h,a_h)}
    =\EEcq{\theta}{\pi}\brac{\sum_{h=1}^H R_h(\ts_h[1],a_h)}
    =\EE_\theta^\pi\brac{ R_h(s_h,a_h) },
\end{align*}
where the last equality is because the marginal distribution $\PPcq{\theta}{\pi}$ over $(\cS\times\cA)^H$ agrees with $\PP_\theta^\pi$ by our construction. This completes the proof of (a).

We next prove (b) and (c). In the following, we fix any policy $\pi\in\Piv{\tcS}$.

By definition, for any $\tau_H=(\ts_1,a_1,\cdots,\ts_H,a_H)\in(\tcS\times\cA)^H$, we have $\ts_h=(s_h,o_h)\in\cS\times\cO$, and
\begin{align*}
    \PPcq{\theta}{\pi}(\tau_H)
    =&~\sum_{i\in[L']}\rho'(i)\times \PP_{M_{\theta,i}'}^{\pi}(\tau_H) \\
    =&~\sum_{m\in[L]}\rho(m)\sum_{i}\xi_m(i) \PP_{M_{\theta,m}\otimes \mu_i}^{\pi}(\tau_H)\\
    =&~
    \sum_{m\in[L]} \rho(m) \sum_{i}\xi_m(i)\times \pi(\tau_H)\times \PP_{\theta,m}(s_{1:H}|a_{1:H})\times \mu_i(o_1)\cdots\mu_i(o_H) \\
    =&~\sum_{m\in[L]} \rho(m)\times \pi(\tau_H)\times \PP_{\theta,m}(s_{1:H}|a_{1:H})\times \bQ_m(o_{1:H}).
\end{align*}
Consider the distribution $\hPPcq{\theta}{\pi}\in\Delta((\tcS\times\cA)^{H})$ given as follows:
\begin{align*}
    \hPPcq{\theta}{\pi}(\tau_H)
    =&~\pi(\tau_H)\times\bQ_1(o_{1:H})\times\PP_\theta(s_{1:H}|a_{1:H})\\
    =&~ \pi(\tau_H)\times\bQ_1(o_{1:H})\times\sum_{m\in[L]}\rho(m)\PP_{\theta,m}(s_{1:H}|a_{1:H}).
\end{align*}
Then, by definition,
\begin{align*}
    \PPcq{\theta}{\pi}(\tau_H)-\hPPcq{\theta}{\pi}(\tau_H)
    =\pi(\tau_H)\times\sum_{m\in[L]}\rho(m)\PP_m(s_{1:H}|a_{1:H})\cdot\paren{ \bQ_m(o_{1:H})-\bQ_1(o_{1:H}) },
\end{align*}
and hence
\begin{align*}
    &~\DTV{ \PPcq{\theta}{\pi}, \hPPcq{\theta}{\pi} } \\
    =&~\frac12\sum_{\tau_H}\abs{\PPcq{\theta}{\pi}(\tau_H)-\hPPcq{\theta}{\pi}(\tau_H)}\\
    \leq&~\frac12\sum_{\tau_H}\pi(\tau_H)\times\sum_{m\in[L]}\rho(m)\PP_m(s_{1:H}|a_{1:H})\cdot\abs{ \bQ_m(o_{1:H})-\bQ_1(o_{1:H}) } \\
    =&~\frac12\sum_{m\in[L]} \rho(m) \sum_{o_{1:H}} \abs{ \bQ_m(o_{1:H})-\bQ_1(o_{1:H}) } \sum_{s_{1:H},a_{1:H}} \pi((s,o)_{1:H},a_{1:H})\times \PP_m(s_{1:H}|a_{1:H}) \\
    =&~ \frac12\sum_{m\in[L]} \rho(m) \sum_{o_{1:H}} \abs{ \bQ_m(o_{1:H})-\bQ_1(o_{1:H}) }
    \leq \gamma,
\end{align*}
where the last line follows from the fact that for any fixed $o_{1:H}$, $\pi((s,o)_{1:H},a_{1:H})\times \PP_m(s_{1:H}|a_{1:H})$ gives a probability distribution over $(s_{1:H},a_{1:H})$.

Let $\hEEcq{\theta}{\pi}$ be the expectation taken over $\hPPcq{\theta}{\pi}$. Then it holds that
\begin{align*}
    &~\hEEcq{\theta}{\pi}\brac{\sum_{h=1}^H \Rexth(\ts_h,a_h)} \\
    =&~\sum_{\tau_H} \pi(\tau_H)\times\bQ_1(o_{1:H})\times\PP_\theta(s_{1:H}|a_{1:H})\times \paren{\sum_{h=1}^H R_h(s_h,a_h)} \\
    =&~ \sum_{s_{1:H},a_{1:H}} \paren{ \sum_{o_{1:H}} \bQ_1(o_{1:H})\cdot \pi(a_{1:H}|s_{1:H},o_{1:H}) } \times\PP_\theta(s_{1:H}|a_{1:H}) \times  \paren{\sum_{h=1}^H R_h(s_h,a_h)} \\
    =&~ \sum_{s_{1:H},a_{1:H}} \pi_{\cQ}(a_{1:H}|s_{1:H}) \times\PP_\theta(s_{1:H}|a_{1:H}) \times  \paren{\sum_{h=1}^H R_h(s_h,a_h)} 
    = V_\theta(\pi_{\cQ}),
\end{align*}
where the last line follows from our definition of $\pi_{\cQ}$, which is a policy given by
\begin{align*}
    \pi_{\cQ}(\cdot)=\EE_{o_{1:H}\sim\bQ_1}\brac{ \pi(\cdot|o_{1:H}) }.
\end{align*}
Therefore, we can bound
\begin{align*}
    \abs{\Vcq{\theta}{\pi}-V_{\theta}(\pi_{\cQ})}
    =\abs{ \EEcq{\theta}{\pi}\brac{\sum_{h=1}^H \Rexth(\ts_h,a_h)}-\hEEcq{\theta}{\pi}\brac{\sum_{h=1}^H \Rexth(\ts_h,a_h)} }
    \leq \DTV{ \PPcq{\theta}{\pi}, \hPPcq{\theta}{\pi} }
    \leq \gamma,
\end{align*}
and hence complete the proof of (b).

Similarly, using the fact that $\DTV{ \PPcq{\theta}{\pi}, \hPPcq{\theta}{\pi} } \leq \gamma$ and $\DTV{ \PPcq{\otheta}{\pi}, \hPPcq{\otheta}{\pi} } \leq \gamma$, we have
\begin{align*}
    \DTV{ \PPcq{\theta}{\pi}, \PPcq{\otheta}{\pi} }\leq 2\gamma+\DTV{ \hPPcq{\theta}{\pi}, \hPPcq{\otheta}{\pi} }.
\end{align*}
Further, by definition,
\begin{align*}
    \DTV{ \hPPcq{\theta}{\pi}, \hPPcq{\otheta}{\pi} }
    =&~ \frac12\sum_{\tau_H} \pi(\tau_H)\times\bQ_1(o_{1:H})\times\abs{\PP_\theta(s_{1:H}|a_{1:H})-\PP_\otheta(s_{1:H}|a_{1:H})} \\
    =&~\frac12\sum_{s_{1:H},a_{1:H}} \paren{ \sum_{o_{1:H}} \bQ_1(o_{1:H})\cdot \pi(a_{1:H}|s_{1:H},o_{1:H}) }\times\abs{\PP_\theta(s_{1:H}|a_{1:H})-\PP_\otheta(s_{1:H}|a_{1:H})} \\
    =&~\frac12\sum_{s_{1:H},a_{1:H}} \pi_{\cQ}(a_{1:H}|s_{1:H})\times\abs{\PP_\theta(s_{1:H}|a_{1:H})-\PP_\otheta(s_{1:H}|a_{1:H})} \\
    =&~ \DTV{ \PP_\theta^{\pi_{\cQ}}, \PP_\otheta^{\pi_{\cQ}}  }.
\end{align*}
Combining the above two equations completes the proof of (c).
\end{proof}

\newcommand{\tcM}{\widetilde{\cM}}
\newcommand{\tw}{\Tilde{w}}
\renewcommand{\tM}{\widetilde{M}}

Fix an action set $\cA$ and $n \in \mathbb{N}$. Recall the MDPs $M_\theta$, indexed by $\theta \in \cA^{n-1}\cup\set{\emptyset}$, introduced in \cref{eq:define-mtheta-lb}. \cref{prop:tensor-comb-lock} below uses \cref{lem:comb-lock} to show that when these MDPs are augmented with a $(n, H, \delta, \gamma, L)$-family per \cref{def:MDP-tensor}, then the resulting family of LMDPs also requires many samples to learn. 
\begin{proposition}\label{prop:tensor-comb-lock}
Suppose that $n\geq 2$, $A\geq 2$, $H\geq n+1$, $\gamma\in[0,\frac{1}{4n})$, and $\cQ$ is a $(n,H,\delta,\gamma,L)$-family over $\cO$. Consider
\begin{align*}
    \tcM=\set{ M_{\theta}\otimes \cQ: \theta\in\cA^{n-1} } \cup\set{ M_{\emptyset}\otimes\cQ },
\end{align*}
which is a class of \sepstr~LMDPs with parameters $(L,S,A,H)$, where $S=(n+1)\abs{\cO}$.
Suppose $\fA$ is an algorithm such that for any $M\in\tcM$, $\fA$ interacts with $M$ for $T$ episodes and outputs an $\frac{1}{4n}$-optimal policy $\hpi$ for $M$ with probability at least $\frac34$. Then it holds that
\begin{align*}
    T\geq \frac{1}{8}\min\set{ \frac{1}{2\gamma}, A^{n-1}-2 }.
\end{align*}
\end{proposition}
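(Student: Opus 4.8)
The plan is to prove hardness of $\tcM$ by reducing to the task of distinguishing each planted instance $M_\theta\otimes\cQ$, $\theta\in\cA^{n-1}$, from the single reference instance $M_\emptyset\otimes\cQ$, and then arguing that a single output policy can be good for at most one planted instance. First I would translate the learning guarantee into a statement about the ``password mass'' $w_\theta$ from \cref{lem:comb-lock}. If the output $\hpi\in\Piv{\tcS}$ is $\frac{1}{4n}$-optimal for $M_\theta\otimes\cQ$, then since $V_{\theta,\cQ}^\star\geq V_\theta^\star=\frac1n$ (by \cref{prop:property-tensor}(a) and \cref{lem:comb-lock}) and $\Vcq{\theta}{\hpi}\leq V_\theta(\hpi_{\cQ})+\gamma=\frac1n w_\theta(\hpi_{\cQ})+\gamma$ (by \cref{prop:property-tensor}(b) and \cref{lem:comb-lock}), I would get $w_\theta(\hpi_{\cQ})\geq \frac34-n\gamma>\frac12$, using $\gamma<\frac{1}{4n}$. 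Because $\sum_{\theta\in\cA^{n-1}}w_\theta(\hpi_{\cQ})=1$ for the marginalized policy $\hpi_{\cQ}\in\Piv{\cS}$, at most one $\theta$ can have $w_\theta(\hpi_{\cQ})>\frac12$. Hence, letting $E_\theta$ be the event that $\hpi$ is $\frac{1}{4n}$-optimal for $M_\theta\otimes\cQ$, the events $\{E_\theta\}_{\theta\in\cA^{n-1}}$ are pairwise disjoint.

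Next I would bound how strongly the behaviour of $\fA$ can depend on the planted index. Let $\DD_\theta$ denote the law of the entire $T$-episode transcript when $\fA$ runs on $M_\theta\otimes\cQ$, and $\DD_\emptyset$ the law on the reference instance. Applying the subadditivity of total variation (the second inequality of \cref{lemma:TV-cond}) episode-by-episode along the adaptively chosen policies $\pi^1,\dots,\pi^T\in\Piv{\tcS}$ yields
\begin{align*}
    \DTV{\DD_\theta,\DD_\emptyset}\leq \sum_{t=1}^T \EE_{\DD_\emptyset}\brac{\DTV{\PPcq{\theta}{\pi^t},\PPcq{\emptyset}{\pi^t}}},
\end{align*}
where the expectation is over the reference dynamics that generate the history determining $\pi^t$. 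For each episode I would combine \cref{prop:property-tensor}(c) with \cref{lem:comb-lock}(b) to obtain the per-episode estimate
\begin{align*}
    \DTV{\PPcq{\theta}{\pi^t},\PPcq{\emptyset}{\pi^t}}\leq 2\gamma+\DTV{\PP_\theta^{\pi^t_{\cQ}},\PP_\emptyset^{\pi^t_{\cQ}}}=2\gamma+\tfrac1n w_\theta(\pi^t_{\cQ}).
\end{align*}

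Finally I would average over the $A^{n-1}$ choices of $\theta$. Summing the two displays over $\theta\in\cA^{n-1}$ and collapsing the password terms via $\sum_\theta w_\theta(\pi^t_{\cQ})=1$ gives $\sum_\theta\DTV{\DD_\theta,\DD_\emptyset}\leq 2\gamma T A^{n-1}+\frac{T}{n}$. On the other hand, $\DD_\theta(E_\theta)\leq \DD_\emptyset(E_\theta)+\DTV{\DD_\theta,\DD_\emptyset}$, so summing the assumed success bound $\DD_\theta(E_\theta)\geq\frac34$ over $\theta$ and using $\sum_\theta\DD_\emptyset(E_\theta)\leq1$ (disjointness) produces $\frac34 A^{n-1}\leq 1+2\gamma T A^{n-1}+\frac{T}{n}$. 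Since $A^{n-1}\geq2$ absorbs the additive $1$, this rearranges to $\frac14\leq 2\gamma T+\frac{T}{nA^{n-1}}$, whence at least one summand is $\geq\frac18$ and $T\geq\frac18\min\{\frac{1}{2\gamma},\,nA^{n-1}\}$, which is stronger than the claimed bound (a by-contradiction variant, bounding $\frac{T}{n}\le T<\frac18 A^{n-1}$, reproduces the exact form with $A^{n-1}-2$). I expect the main obstacle to be executing the adaptive total-variation chain rule rigorously — conditioning on the history so that the per-episode bound applies to the history-dependent, possibly randomized policy $\pi^t$ while the outer expectation remains under $\DD_\emptyset$ — and ensuring the $2\gamma$ terms accumulate to exactly the $\frac{1}{2\gamma}$ sample threshold rather than a worse one.
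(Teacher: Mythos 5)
Your proposal is correct and follows essentially the same route as the paper's proof: reduce to distinguishing each planted $M_\theta\otimes\cQ$ from the reference $M_\emptyset\otimes\cQ$, use $\sum_\theta w_\theta(\cdot)=1$ to make the near-optimal policy sets disjoint, chain the per-episode bound $\DTV{\PPcq{\theta}{\pi},\PPcq{\emptyset}{\pi}}\leq 2\gamma+\DTV{\PP_\theta^{\pi_\cQ},\PP_\emptyset^{\pi_\cQ}}$ over the adaptive transcript, and sum over $\theta$. The only (harmless) differences are that you sum $\DD_\theta(E_\theta)\leq \DD_\emptyset(E_\theta)+\DTV{\DD_\theta,\DD_\emptyset}$ over all $\theta$ rather than first extracting a subset $\Theta_0$ of size $A^{n-1}-2$ with $\DD_\emptyset(E_\theta)\leq\frac12$, and that you retain the $\frac1n$ factor from \cref{lem:comb-lock}(b), which yields a marginally stronger constant.
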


\begin{proof}
In the following, we denote $\otheta=\emptyset$, consistently with the notations in \cref{appdx:comb-lock}.

Notice that by \cref{prop:property-tensor} (a), for any $\theta\in\cA^{n-1}$, we have $V_{\theta,\cQ}^\star\geq \frac{1}{n}$. Furthermore, for any $\pi\in\Piv{\tcS}$,
\begin{align*}
    \Vcq{\theta}{\pi}\leq V_\theta(\pi_{\cQ})+\gamma
    =\frac{1}{n}w_{\theta}(\pi_{\cQ})+\gamma.
\end{align*}
In the following, for each $\theta\in\cA^{n-1}$, we denote $\tM_\theta\defeq M_\theta\otimes \cQ$ and $\tw_\theta(\pi)=w_{\theta}(\pi_{\cQ})$ for any policy $\pi\in\Piv{\tcS}$ (recall the definition of $w_\theta(\cdot)$ in \cref{eq:wtheta-def}). Therefore, using item (b) of \cref{lem:comb-lock}, if $\pi$ is $\frac{1}{4n}$-optimal in $\tM_\theta$, then we have $\tw_\theta(\pi)\geq \frac{3}{4}-n\gamma>\frac{1}{2}$. Also notice that by \cref{prop:property-tensor} (c) and \cref{lem:comb-lock} (b), 
\begin{align}\label{eqn:tensor-tv-to-part}
    \DTV{ \PPcq{\theta}{\pi},\PPcq{\otheta}{\pi}  }\leq 2\gamma+\DTV{ \PP_\theta^{\pi_{\cQ}}, \PP_\otheta^{\pi_{\cQ}}  }
    =2\gamma+\tw_\theta(\pi).
\end{align}

Consider the following set of near-optimal policies in $\tM_\theta$:
\begin{align}
    \Pi_{\theta}^\star\defeq \set{\pi\in\Piv{\tcS}: V_{\theta,\cQ}^\star-\Vcq{\theta}{\pi} \leq \frac{1}{4n}}
    \subseteq\set{\pi\in\Piv{\tcS}: \tw_\theta(\pi)>\frac12}.
\end{align}
We know $\PPcq{\theta}{\fA}(\hpi\in\Pi_{\theta}^\star)\geq \frac{3}{4}$, where we use $\PPcq{\theta}{\fA}$ to denote the probability distribution induced by executing $\fA$ in the LMDP $\tM_\theta$. Using the fact (from \cref{lem:comb-lock}) that $\sum_{\theta\in\cA^{n-1}}\tw_\theta(\pi)=1$, we also know that $\Pi_{\theta}^\star\cap \Pi_{\theta'}^\star=\emptyset$ for any $\theta\neq \theta'\in\cA^{n-1}$. Therefore,
\begin{align*}
    \sum_{\theta\in\cA^{n-1}} \PPcq{\otheta}{\fA}(\hpi\in\Pi_{\theta}^\star) \leq 1.
\end{align*}
Hence, there is a set $\Theta_0\subset \cA^{n-1}$ such that $\abs{\Theta_0}\geq A^{n-1}-2$, and for each $\theta\in\Theta_0$, $\PPcq{\otheta}{\fA}(\hpi\in\Pi_{\theta}^\star)\leq \frac{1}{2}$, which implies that
\begin{align*}
    \DTV{ \PPcq{\theta}{\fA}, \PPcq{\otheta}{\fA} } \geq \frac{1}{4}, \qquad\forall \theta\in\Theta_0.
\end{align*}

Now we proceed to upper bound the quantity $\DTV{ \PPcq{\theta}{\fA}, \PPcq{\otheta}{\fA} }$. Notice that the algorithm $\fA$ can be described by interaction rules $\set{ \pi^{(t)} }_{t\in[T]}$, where $\pi^{(t)}$ is a function that maps the history $(\tau^{(1)},\cdots,\tau^{(t-1)})$ to a policy in $\Piall$ to be executed in the $t$-th episode. Then, by \cref{lemma:TV-cond}, it holds that
\begin{align*}
    \DTV{ \PPcq{\theta}{\fA}, \PPcq{\otheta}{\fA} }
    \leq \sum_{t=1}^T \EE_\otheta^\fA\brac{ \DTV{ \PPcq{\theta}{\pi^{(t)}}, \PPcq{\otheta}{\pi^{(t)}} } }
    = T\cdot \EE_{\pi\sim q_{\fA}}\brac{ \DTV{ \PPcq{\theta}{\pi}, \PPcq{\otheta}{\pi} } },
\end{align*}
where $q_{\fA}\in\Delta(\Piall)$ is the distribution of $\pi=\pi^{(t)}$ with $t\in\unif([T])$ and $(\pi^{(1)},\cdots,\pi^{(T)})\sim \PP_\otheta^{\fA}$. Therefore, using \cref{eqn:tensor-tv-to-part}, we know
\begin{align*}
    \DTV{ \PPcq{\theta}{\fA}, \PPcq{\otheta}{\fA} }
    \leq 2T\gamma+T\cdot \EE_{\pi\sim q_{\fA}}\tw_\theta(\pi),
\end{align*}
where the last equality follows from \cref{lem:comb-lock} (b). Taking summation over $\theta\in\Theta_0$, we obtain
\begin{align*}
    \abs{\Theta_0}\cdot 2T\gamma+T
    \geq \sum_{\theta\in\Theta_0} \paren{ 2T\gamma+T\cdot \EE_{\pi\sim q_{\fA}}\tw_\theta(\pi) }
    \geq \frac{1}{4} \abs{\Theta_0}.
\end{align*}
The desired result follows immediately.
\end{proof}

\subsection{Proof of Theorem~\ref{thm:A-exp} and Theorem~\ref{thm:A-exp-full}}\label{appdx:proof-A-exp}
\paragraph{Proof of \cref{thm:A-exp-full}} Fix a given $n\leq H-1$, we set $r=\ceil{\log_2 n}$. By \cref{prop:2-family} (a) and \cref{lem:family-tensor}, there exists a $(n,H,\delta,0,L_0)$-family over $[2d]^r$, where $d=\ceil{4e^2\delta H}$ and $L_0\leq \paren{\frac{1}{2e\delta}}^{dr}$. Notice that \cref{eqn:A-exp-constraints} and $\log L\geqsim \log n \log(1/\delta)$ together ensure that $L_0\leq L$. Hence, applying \cref{prop:tensor-comb-lock} completes the proof.
\qed

\paragraph{Proof of \cref{thm:A-exp}} Notice that for sufficiently large constant $C$, the presumptions of \cref{thm:A-exp} that $\log L\geq C\log^2(1/\delta)$ and \cref{eqn:A-exp-constraints} together ensure we can apply \cref{thm:A-exp-full} with $n=H-1$, and hence the proof is completed.
\qed

\subsection{Proof of Theorem~\ref{thm:log-L}}\label{appdx:proof-log-L}

Set $\lambda=2n\log^2n$. Also set %
\begin{align}\label{eqn:log-L-choose-d}
    d=\max\set{ \ceil{2\lambda^{-1}n\log L}, \ceil{\lambda\cdot 4e^7 H\delta^2} }.
\end{align}
Notice that we have $1\leq \lambda\leq \frac{1}{4e^2\delta}$ as long as we choose the absolute constant $C\geq 8e^2$ in \cref{eqn:log-L-constraints}. 
Then, applying \cref{prop:2-family} (b), there exists a $(2,H,\delta,\gamma,N)$-family over $[2d]$ with
\begin{align*}
    N\leq \paren{e(\lambda+1)}^d, \qquad
    \gamma \leq 4e^{-d\lambda}.
\end{align*}
Denote $r=\ceil{\log_2 n}$. By our assumption \cref{eqn:log-L-constraints}, we have $\log L\geq (c^{-1}\log n)^2$, and hence choosing $c$ sufficiently small and $C$ sufficiently large ensures that we have $N^r \leq L$. Further, by our choice of $d$ in \cref{eqn:log-L-choose-d}, we have $r\gamma\leq L^{-n}$. 

Hence, by \cref{lem:family-tensor}, there exists a $(n,H,\delta,L^{-n},L)$-family over $[2d]^{r}$, and we denote it as $\cQ$. Applying \cref{prop:tensor-comb-lock} to $\cQ$, we obtain a family $\tcM$ of $\delta$-strongly separated LMDPs, with state space $\tcS=\cS\times [2d]^r$, and any algorithm requires $\Om{A^n\wedge L^n}$ samples to learn $\tcM$. Noticing that $|\tcS|\leq (n+1)(2d)^r=(\log L)^{\cO(\log n)}$ completes the proof. 
\qed

\subsection{Proof of Theorem~\ref{thm:log-eps}}\label{appdx:proof-log-eps}

\newcommand{\boundS}{\partial \cS^+}
\newcommand{\oH}{\bar{H}}
\newcommand{\stermin}{\mathsf{terminal}}
\newcommand{\pmin}{\bar{p}}
\newcommand{\odelta}{\Bar{\delta}}
\newcommand{\bPP}{\Bar{\PP}}
\renewcommand{\tR}{\Tilde{R}}

Let $d_0=\ceil{4e^2\delta (n+1)}$,  $r=\ceil{\log_2 n}$, and $\oH=H-n-1$. By \cref{prop:2-family} and \cref{lem:family-tensor}, there exists a $(n,n+1,\delta,0,N)$-family over $[2d_0]^r$ with $N\leq \min\paren{\frac{1}{2e\delta},2n}^{d_0r}$. 
In particular, we choose $\Neps=(4nN)^2$, and then it holds that $\Neps=2^{\cO((1+\delta n)\log^2 n)}$. %

Applying \cref{prop:tensor-comb-lock} to this family, we obtain $\tcM$ a class of $\delta$-strongly separated LMDP with state space $\tcS=\cS_0\times[2d_0]^r$, action space $\cA$, horizon $n+1$. Recall that by our construction in \cref{prop:tensor-comb-lock} (and \cref{appdx:comb-lock}), for each $\theta\in\cA^{n-1}\cup\set{\otheta}$,$\tM_\theta$ is given by $(\tcS,\cA,(\tM_{\theta,m})_{m=1}^{N},n+1,\rho_{\theta},\tR)$, and the mixing weight $\rho_{\theta}\in\Delta([N])$ of the MDPs $\tM_{\theta,1}, \cdots, \tM_{\theta,N}$ does not depend on $\theta$, i.e. $\rho_\theta=\rho$ for a fixed $\rho\in\Delta([N])$. Furthermore, for each $m\in[N]$, the initial distribution $\nu_{\theta,m}$ of $\tM_{\theta,m}$ is also independent of $\theta$, i.e. $\nu_{\theta,m}=\nu_m$ for a fixed $\nu_m\in\Delta(\tcS)$. We also know that $\tR=(\tR_h:\tcS\times\cA\to[0,1])_{h=1}^{n+1}$ is the reward function.

For each $\theta$, we construct an augmented \sepstr~LMDP $\tM_\theta^+$ with horizon $H$, as follows. 

Fix $d=2\ceil{C_1\log N}$ for a large absolute constant $C_1$ so that there exists $\mu_1,\cdots,\mu_N\in\set{-1,1}^d$ such that $\iprod{\mu_i}{\II}=0 \forall i\in[N]$ and $\lone{\mu_i-\mu_j}\geq d/2$ (see e.g. \cref{lem:compute-net}). Denote $\odelta=4\delta$ and set $\eta=\frac12$.
\begin{itemize}
\item The state space is $\tcS^+=\tcS\sqcup \cS^+\sqcup\set{\stermin_1,\cdots,\stermin_N}$, where
\begin{align*}
    \cS^+=\set{ (k_1,\cdots,k_d)\in\NN^d: k_1+\cdots+k_d\leq \oH-1 }.
\end{align*}
We will construct the transition so that at the state outside $\tcS$, the transition does not depend on $\theta$. 
We also write $\boundS=\set{ (k_1,\cdots,k_d)\in\NN^d: k_1+\cdots+k_d=\oH-1 }$.
\item The initial state is always $(0,\cdots,0)\in\cS^+$.
\item For $s\in\cS^+\backslash \boundS$, we set
\begin{align*}
    \TT_{m}(s+\be_i|s,a)=\frac{1+\odelta\mu_m[i]}{d}.
\end{align*}
\item For $s\in\boundS$, we define
\begin{align*}
    p_m(s)=\prod_{i=1}^d (1+\odelta\mu_m[i])^{s[i]},
\end{align*}
and we set $\pmin(s)=\min_{l\in[N]}p_l(s)$, 
\begin{align*}
    \TT_{m}(s'|s,a)=\eta\frac{\pmin(s)}{p_m(s)}\cdot \nu_m(s'), \qquad s'\in\tcS,
\end{align*}
and $\TT_{m}(\stermin_m|s,a)=1-\eta\frac{\pmin(s)}{p_m(s)}$.
\item For state $s\in\set{\stermin_1,\cdots,\stermin_N}$, we set $\TT_m(\stermin_m|s,a)=1$.
\item The reward function is given by $\tR^+_h=0$ for all $h\in[\oH]$, and $\tR^+_{\oH+h}=\tR_h$ for $h\in[n+1]$.
\end{itemize}

By our construction, it is clear that $\tM_\theta^+$ is $\delta$-strongly separated, and $|\tcS^+|\leq n+N+2+H^d$. %

Furthermore, we can also notice that for any trajectory $\tau_H=(s_{1:H},a_{1:H})$ such that $s_{\oH+1}\not\in\tcS$, the probability $\PP_{\theta,+}(\tau_H)=\PP_+(\tau_H)$ does not depend on $\theta$. Furthermore, for any trajectory $\tau_{\oH}$, the probability $\PP_{\theta,+}(\tau_H)=\PP_+(\tau_H)$ is also independent of $\theta$.

Now, we consider the event $E=\set{s_{\oH+1}\in\tcS}$. Notice that the probability $\PP_{\theta,+}(E)=p$ also does not depend on $\theta$.

\begin{lemma}\label{lem:log-eps-probs}
For any trajectory $\tau_{\oH}=(s_{1:\oH},a_{1:\oH})$, we have
\begin{align*}
    \PP_{\theta,+}(\tau_{\oH+1:H}=\cdot|E,\tau_{\oH})=\PPcq{\theta}{}(\tau_{1:n+1}=\cdot),
\end{align*}
which does not depend on $\tau$. 
\end{lemma}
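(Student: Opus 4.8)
The plan is to decompose each length-$H$ trajectory of $\tM_\theta^+$ into three pieces: the \emph{prefix} $\tau_{\oH}=(s_{1:\oH},a_{1:\oH})$ living in $\cS^+$, the single boundary transition out of $\boundS$ taken at step $\oH$, and the \emph{continuation} $\tau_{\oH+1:H}$ living in $\tcS$. Conditioning on the latent index $m\sim\rho$ (whose prior and whose initial distributions $\nu_m$ do not depend on $\theta$), the entire computation reduces to showing that the posterior $\PP_{\theta,+}(m\mid E,\tau_{\oH})$ equals the prior $\rho(m)$. Once this is established, the continuation is by construction distributed exactly as a trajectory of the MDP instance $\tM_{\theta,m}$ started from $\nu_m$, and averaging over $m\sim\rho$ recovers the law $\PPcq{\theta}{}(\tau_{1:n+1}=\cdot)$ of $\tM_\theta$ under the relabeling $s_{\oH+h}\leftrightarrow s_h$, $a_{\oH+h}\leftrightarrow a_h$.

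First I would compute the prefix probability under a fixed latent index $m$. Since the interior transitions $\TT_m(s+\be_i\mid s,a)=(1+\odelta\mu_m[i])/d$ are independent both of the action and of $\theta$, the monotone lattice path reaching $s:=s_{\oH}\in\boundS$ has probability $\prod_{i}\paren{(1+\odelta\mu_m[i])/d}^{s[i]}=d^{-(\oH-1)}p_m(s)$, using $\sum_i s[i]=\oH-1$. Next, given $m$ and the boundary state $s$, the probability of the event $E=\set{s_{\oH+1}\in\tcS}$ is $\sum_{s'\in\tcS}\eta\frac{\pmin(s)}{p_m(s)}\nu_m(s')=\eta\frac{\pmin(s)}{p_m(s)}$, since $\nu_m$ is a probability distribution and the boundary kernel is also action-independent. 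Multiplying the two, the joint probability of $(\tau_{\oH},E)$ under latent index $m$ is $d^{-(\oH-1)}\eta\,\pmin(s)$. The key observation, which is the real content of the construction, is that the factor $p_m(s)$ cancels, so this joint probability is \emph{independent of $m$}.

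With this cancellation in hand the posterior is immediate: $\PP_{\theta,+}(m\mid E,\tau_{\oH})\propto\rho(m)\cdot d^{-(\oH-1)}\eta\,\pmin(s)\propto\rho(m)$, so it equals $\rho(m)$ for every prefix $\tau_{\oH}$. Finally, conditioned on $m$ and $E$, the state $s_{\oH+1}$ is drawn from the normalized boundary kernel, which is exactly $\nu_m$, and the subsequent $n$ transitions follow $\TT_{\theta,m}$; this is precisely the law of $\tau_{1:n+1}$ in $\tM_{\theta,m}$ after relabeling. Averaging over $m$ against the posterior $\rho(m)=\PP_{\theta,+}(m\mid E,\tau_{\oH})$, and using that both $\rho$ and $\nu_m$ are $\theta$-independent, then yields $\PP_{\theta,+}(\tau_{\oH+1:H}=\cdot\mid E,\tau_{\oH})=\PPcq{\theta}{}(\tau_{1:n+1}=\cdot)$ with no dependence on $\tau_{\oH}$. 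The only obstacle is the bookkeeping in the first two steps; once the $p_m(s)$ cancellation is spotted, the remainder is a routine Bayes computation.
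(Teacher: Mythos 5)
Your proposal is correct and follows essentially the same route as the paper's proof: both hinge on the cancellation of $p_m(s_{\oH})$ between the lattice-path probability $d^{-(\oH-1)}p_m(s_{\oH})$ and the boundary kernel $\eta\,\pmin(s_{\oH})/p_m(s_{\oH})$, yielding the joint weight $\eta\,\nu_m(s)\,\pmin(s_{\oH})/d^{\oH-1}$ and hence the posterior $\rho(m)\nu_m(s)$ for $(\ms,s_{\oH+1})$. The only cosmetic difference is that you factor the Bayes computation into the marginal posterior of $m$ followed by the conditional law of $s_{\oH+1}$, whereas the paper computes the joint posterior in one step; the remainder of the argument is identical.
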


\begin{proof}
For any reachable trajectory $\tau_{\oH}=(s_{1:\oH},a_{1:\oH})$, we have $s_{h+1}=s_h+\be_{i_h}$ for all $h<\oH$. Hence, for $m\in[N]$ and $s\in\tcS$,
\begin{align*}
    \PP_{\tM_{\theta,m}^+}(\tau_{\oH}, s_{\oH+1}=s)
    =&~ \prod_{h=1}^{\oH} \TT_m(s_{h+1}|s_h,a_h) \\
    =&~ \TT_m(s|s_{\oH},a_{\oH}) \times \prod_{h=1}^{\oH-1} \frac{1+\odelta\mu_m[i_h]}{d} \\
    =&~ \TT_m(s|s_{\oH},a_{\oH}) \times \frac{1}{d^{\oH-1}} \prod_{i=1}^d (1+\odelta\mu_m[i])^{s_{\oH}[i]} \\
    =&~ \nu_m(s)\times \eta \frac{\pmin(s_{\oH})}{p_m(s_{\oH})} \times \frac{p_m(s_{\oH})}{d^{\oH-1}} \\
    =&~ \eta \nu_m(s)\times \frac{\pmin(s_{\oH})}{d^{\oH-1}}, 
\end{align*}
which is independent of $\theta$. Hence, for any $\theta\in\Theta$, we have
\begin{align*}
    \tPP_{\theta,+}(\ms=m,s_{\oH+1}=s|E,\tau_{\oH})
    =
    \frac{\rho(m)\PP_{\tM_{\theta,m}^+}(\tau_{\oH}, s_{\oH+1}=s)}{\sum_{l\in[N]}\sum_{s\in\tcS} \rho(m)\PP_{\tM_{\theta,l}^+}(\tau_{\oH}, s_{\oH+1}=s) } = \rho(m)\nu_m(s).
\end{align*}
In other words, conditional on the event $E$ and any reachable trajectory $\tau_{\oH}$, the posterior distributions of $(\ms,s_{\oH+1})$ in $\tM_\theta^+$ is the same as the distribution of $(\ms,s_1)$ in $\tM_\theta$. Hence, for any trajectory $\tau\in(\tcS\times\cA)^{H-\oH}$ that starts with $s\in\tcS$, we have
\begin{align*}
    &~\PP_{\theta,+}(\tau_{\oH+1:H}=\tau|E,\tau_{\oH}) \\
    =&~ 
    \sum_{m\in[N]} \tPP_{\theta,+}(\tau_{\oH+1:H}=\tau|\ms=m,s_{\oH+1}=s)\cdot \tPP_{\theta,+}(\ms=m,s_{\oH+1}=s|E,\tau_{\oH}) \\
    =&~
    \sum_{m\in[N]} \rho(m)\nu_m(s) \PP_{\tM_{\theta,m}}(\tau_{\oH+1:H}=\tau|\ms=m,s_{\oH+1}=s) \\
    =&~
    \PPcq{\theta}{}(\tau_{1:n+1}=\tau),
\end{align*}
where in the second equality we also use the fact that in the MDP $\tM_{\theta,m}^+$ and starting at state $s\in\tcS$, the agent will stay in $\tcS$, and the transition dynamics of $\tM_{\theta,m}^+$ over $\tcS$ agrees with $\tM_{\theta,m}$. This completes the proof of \cref{lem:log-eps-probs}.
\end{proof}

Using the observations above and \cref{lem:log-eps-probs}, we know that for any policy $\pi\in\Piv{\tcS^+}$, we have
\begin{align*}
    V_{\theta,+}(\pi)=p\cdot \EE_{\tau_{\oH-1}|E}\brac{ \Vcq{\theta}{\pi(\cdot|\tau_{\oH-1})} },
\end{align*}
where $\PP_{\theta,+}(E)=p$, the expectation is taken over distribution of $\tau_{\oH-1}$ conditional on the event $E$, and $\pi(\cdot|\tau_{\oH-1})$ is regarded as a policy for the LMDP $\tM_\theta$ by conditional on the trajectory $\tau_{\oH-1}$ and restricting to $\tcS$. 

Therefore, for each $\pi\in\Piv{\tcS^+}$, there is a corresponding policy $\pi_+=\EE_{\tau_{\oH-1}|E}\brac{ \pi(\cdot|\tau_{\oH-1})}\in\Piv{\tcS}$, such that $V_{\theta,+}(\pi)=p\cdot \Vcq{\theta}{\pi_+}=p\tw_\theta(\pi)$. Similarly, we can also show that (using \cref{eqn:tensor-tv-to-part})
\begin{align*}
    \DTV{ \PP_{\theta,+}^\pi, \PP_{\otheta,+}^\pi }=
    p\DTV{ \PPcq{\theta}{\pi_+},\PPcq{\otheta}{\pi_+}  }\leq p\tw_\theta(\pi_+).
\end{align*}

The following lemma provides a lower bound of $p$ (the proof of \cref{lem:log-eps-prob-lower} is deferred to the end of this section).
\begin{lemma}\label{lem:log-eps-prob-lower}
It holds that
\begin{align*}
    \PP_{\theta,+}(E)=p\geq \frac{\eta}{N} (1-\odelta^2)^{\oH-1}.
\end{align*}
In particular, $p>2n\eps$.
\end{lemma}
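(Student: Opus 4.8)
The plan is to first reduce the probability $p=\PP_{\theta,+}(E)$ to a single expectation over the random walk on $\cS^+$, and then lower bound the resulting expectation of a \emph{minimum} by a reciprocal-sum trick that converts the (intractable) minimum into a sum of per-index expectations which factorize completely through the multinomial generating function.

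First I would compute $p$ exactly. Since the initial state is $(0,\dots,0)$ and every transition inside $\cS^+\setminus\boundS$ adds some $\be_i$, the state $s_h$ has coordinate-sum $h-1$; in particular $s_{\oH}\in\boundS$ almost surely, reached after $\oH-1$ steps. A path-counting argument then gives, for each $s=(k_1,\dots,k_d)\in\boundS$ and each latent index $m$,
$$\PP_{\tM_{\theta,m}^+}(s_{\oH}=s)=\binom{\oH-1}{k_1,\dots,k_d}\,\frac{p_m(s)}{d^{\oH-1}}.$$
Summing the absorption probability $\eta\,\pmin(s)/p_m(s)$ against this, the factor $p_m(s)$ cancels, so $\PP_{\tM_{\theta,m}^+}(E)=\frac{\eta}{d^{\oH-1}}\sum_{s\in\boundS}\binom{\oH-1}{k}\pmin(s)$, which is independent of $m$ (hence of $\theta$, as already asserted). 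Writing $\PP_0(s):=\binom{\oH-1}{k}d^{-(\oH-1)}$, the law of a $\mathrm{Mult}(\oH-1;1/d,\dots,1/d)$ vector (equivalently the step law with $\odelta$ set to $0$), this reads $p=\eta\,\EE_{s\sim\PP_0}[\pmin(s)]$.

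The crux is to lower bound $\EE_{\PP_0}[\min_{l}p_l(s)]$, and the minimum is the obstacle, since every ``mean $\ge$ min'' inequality points the wrong way. The key observation is the elementary reciprocal bound $\min_l a_l\ge(\sum_l a_l^{-1})^{-1}$ for positive reals (the sum of reciprocals is at least its largest term $1/\min_l a_l$). Applying this pointwise with $a_l=p_l(s)$ and then Jensen's inequality for the convex map $x\mapsto 1/x$ gives
$$\EE_{\PP_0}[\pmin(s)]\ \ge\ \EE_{\PP_0}\Big[\big(\textstyle\sum_l p_l(s)^{-1}\big)^{-1}\Big]\ \ge\ \Big(\sum_{l=1}^N \EE_{\PP_0}\big[p_l(s)^{-1}\big]\Big)^{-1}.$$
Each per-index expectation factorizes through the multinomial generating function: with $t_i=(1+\odelta\mu_l[i])^{-1}$,
$$\EE_{\PP_0}[p_l(s)^{-1}]=\Big(\tfrac1d\textstyle\sum_{i=1}^d t_i\Big)^{\oH-1}=\Big(\tfrac12\big(\tfrac{1}{1+\odelta}+\tfrac{1}{1-\odelta}\big)\Big)^{\oH-1}=(1-\odelta^2)^{-(\oH-1)},$$
where the middle equality uses the balancedness $\#\{i:\mu_l[i]=1\}=\#\{i:\mu_l[i]=-1\}=d/2$ coming from $\iprod{\mu_l}{\II}=0$. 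Substituting yields $\EE_{\PP_0}[\pmin]\ge \tfrac1N(1-\odelta^2)^{\oH-1}$, hence $p\ge\tfrac{\eta}{N}(1-\odelta^2)^{\oH-1}$, the claimed bound.

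Finally, for the statement $p>2n\eps$ I would substitute $\eta=\tfrac12$ and $\odelta=4\delta$ and reduce the claim to $(1-16\delta^2)^{\oH-1}>4nN\eps$. Since $\eps\le 1/\Neps=1/(4nN)^2$ gives $4nN\eps\le\eps^{1/2}$, it suffices to prove $(1-16\delta^2)^{\oH-1}>\eps^{1/2}$; taking logarithms and using $-\log(1-x)\le x/(1-x)$ together with $\oH-1\le \tfrac{\log(1/\eps)}{40\delta^2}$ reduces this to $\tfrac{16}{40(1-16\delta^2)}<\tfrac12$, which holds because $16\delta^2\le e^{-4}<\tfrac15$ under $\delta\le\tfrac{1}{4e^2}$. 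I expect the lower bound on $\EE_{\PP_0}[\pmin]$ to be the only genuine difficulty: the exact cancellation in the first step and the closing arithmetic are routine once the reciprocal-sum/Jensen/generating-function chain is set up, and the constant $40$ in the horizon hypothesis is exactly what makes the last inequality go through.
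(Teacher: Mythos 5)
Your proof is correct and follows essentially the same route as the paper's: an exact computation reducing $p$ to $\eta\,\EE[\pmin(s_{\oH})]$, a reciprocal-sum bound on the minimum combined with a Jensen/Cauchy--Schwarz step, the same factorization of $\EE[1/p_l]$ via balancedness of $\mu_l$, and the same closing arithmetic for $p>2n\eps$. The only difference is cosmetic — you bound $\pmin\ge(\sum_l 1/p_l)^{-1}$ pointwise before applying Jensen, whereas the paper applies Cauchy--Schwarz to the path sum first and then bounds $1/\pmin\le\sum_l 1/p_l$ — and both orderings yield the identical final estimate.
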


With the preparations above, we now provide the proof of \cref{thm:log-eps}, whose argument is analogous to the proof of \cref{prop:tensor-comb-lock}. 

\paragraph{Proof of \cref{thm:log-eps}} 
Suppose that $\fA$ is an algorithm such that for any $M\in\tcM$, $\fA$ interacts with $M$ for $T$ episodes and outputs an $\frac{1}{4n}$-optimal policy $\hpi$ for $M$ with probability at least $\frac34$.

Notice that $V_{\theta,\cQ}^\star=\frac{p}{n}$, and $\epsilon<\frac{p}{2n}$. Thus, if $\hpi$ is $\frac{p}{4n}$-optimal in $\tM_\theta^+$, then $\tw_\theta(\pi)>\frac{1}{2}$. 
Now, consider the following set of near-optimal policies in $\tM_\theta^+$:
\begin{align}
    \Pi_{\theta,+}^\star\defeq \set{\pi\in\Piv{\tcS^+}: \text{$\pi$ is $\eps$-optimal in $\tM_\theta^+$} }.
\end{align}
Then $\Pi_{\theta,+}^\star$ are mutually disjoint for $\theta\in\cA^{n-1}$. We then have
\begin{align*}
    \PP_{\theta,+}^\fA(\hpi\in\Pi_{\theta,+}^\star)\geq \frac{3}{4}, \qquad 
    \sum_{\theta\in\cA^{n-1}} \PP_{\otheta,+}^\fA(\hpi\in\Pi_{\theta}^\star) \leq 1.
\end{align*}
Repeating the argument as in the proof of \cref{prop:tensor-comb-lock} gives $T\geq \frac{1}{4p}(A^{n-1}-2)$, and the desired result follows. \qed

\begin{proofof}{\cref{lem:log-eps-prob-lower}}
We next lower bound the probability $p$. By definition,
\begin{align*}
    \PP_{\theta,+}(s_{\oH+1}\in\tcS)
    =&~ \sum_{\tau_{\oH}\text{ reachable}, s_{\oH+1}\in\tcS} \PP_{\theta,+}(\tau_{\oH}, s_{\oH+1}) \\
    =&~ \sum_{\tau_{\oH}\text{ reachable}, s_{\oH+1}\in\tcS}\sum_{m\in[N]} \rho(m)\PP_{\tM_{\theta,m}^+}(\tau_{\oH}, s_{\oH+1}=s) \\
    =&~ \sum_{\tau_{\oH}\text{ reachable}}\eta \cdot\frac{\pmin(s_{\oH})}{d^{\oH-1}} \\
    =&~ \sum_{i_1,\cdots,i_{\oH-1}\in[d]} \frac{\eta}{d^{\oH-1}}\cdot \pmin\paren{\be_{i_1}+\cdots+\be_{i_{\oH-1}}} \\
    \geq&~ \frac{\eta}{d^{\oH-1}} \paren{ \sum_{i_1,\cdots,i_{\oH-1}\in[d]} \frac{1}{\pmin\paren{\be_{i_1}+\cdots+\be_{i_{\oH-1}}}} }^{-1},
\end{align*}
where in the last line we apply Cauchy inequality. Notice that for any $s\in\boundS$,
\begin{align*}
    \frac{1}{\pmin(s)}=\max_{l\in[N]}\frac{1}{p_l(s)}\leq \sum_{l\in[N]} \frac{1}{p_l(s)},
\end{align*}
and we also have
\begin{align*}
    \sum_{i_1,\cdots,i_{\oH-1}\in[d]} \frac{1}{p_m\paren{\be_{i_1}+\cdots+\be_{i_{\oH-1}}}} 
    =&~
    \sum_{i_1,\cdots,i_{\oH-1}\in[d]} \frac{1}{\prod_{h=1}^{\oH-1} (1+\odelta\mu_m[i_h])} 
    =
    \paren{ \sum_i \frac{1}{1+\odelta\mu_m[i]} }^{\oH-1} \\
    =&~ \paren{ \frac{d}{2}\times \frac{1}{1+\odelta} +\frac{d}{2}\times \frac{1}{1-\odelta}}^{\oH-1} = \frac{d^{\oH-1}}{(1-\odelta^2)^{\oH-1}},
\end{align*}
where the second line follows from the fact that $\mu_m\in\set{-1,1}^d$ and $\iprod{\mu_m}{\II}=0$. Combining the inequalities above gives $p\geq \frac{\eta}{N} (1-\odelta^2)^{\oH-1}$. 

In particular, to prove $p>2n\eps$, we only need to prove $(\oH-1)\log\frac{1}{1-\odelta^2}\leq \log(1/(4Nn\eps))$. Notice that $\log\frac{1}{1-\odelta^2}\leq \frac{\odelta^2}{1-\odelta^2}$, $\odelta=4\delta$, and we also have $\frac{1}{4nN\eps}\geq \frac{1}{\sqrt{\eps}}$ using $\eps\leq \frac{1}{\Neps}=\frac{1}{(4nN)^2}$. Combining these completes the proof.
\end{proofof}

\subsection{Proof of Proposition~\ref{prop:2-family}}\label{appdx:proof-2-family}

Towards proving \cref{prop:2-family}, we first prove the following proposition, which provides a simple approach of bounding TV distance between mixtures of distributions of a special form.

\begin{proposition}\label{prop:unif-moments}
Let $n, d \in \mathbb{N}$ be given. For $\bx\in[-1,1]^d$, we consider the distribution
\begin{align}\label{eqn:def-Px}
    \QQ_\bx=\brac{ \frac{1+\bx[1]}{2d}; \frac{1-\bx[1]}{2d}; \cdots; \frac{1+\bx[d]}{2d}; \frac{1-\bx[d]}{2d} } \in\Delta([2d]).
\end{align}
Then, for distributions $\mu,\nu$ over $[-1,1]^d$, it holds that
\begin{align*}
    \DTV{ \EE_{\bx\sim \mu}\brac{ \QQ_{\bx}^{\otimes n} }, \EE_{\by\sim \nu}\brac{ \QQ_{\by}^{\otimes n} } }^2 \leq \frac14 \sum_{\ell=0}^n \binom{n}{\ell} \cdot \frac{1}{d^{\ell}}\ltwo{ \bDelta_\ell }^2,
\end{align*}
where we denote 
\begin{align*}
    \bDelta_\ell\defeq \EE_{\bx\sim \mu}\brac{ \bx^{\otimes \ell} }-\EE_{\by\sim \nu}\brac{ \by^{\otimes \ell} } \in\RR^{d^{\ell}}.
\end{align*}
\end{proposition}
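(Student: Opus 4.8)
The plan is to control the total variation distance by an $L^2$ (chi-squared-type) quantity taken against the uniform reference measure, and then to evaluate that quantity \emph{exactly} through an orthogonal Rademacher–Fourier expansion. Write $P=\EE_{\bx\sim\mu}\brac{\QQ_\bx^{\otimes n}}$ and $Q=\EE_{\by\sim\nu}\brac{\QQ_\by^{\otimes n}}$, and let $U=\QQ_{\bz}^{\otimes n}$ denote the uniform distribution on $[2d]^n$ obtained by taking $\bx=\bz$ (so $U(i,\epsilon)=\tfrac{1}{2d}$ on each of the $2d$ symbols). Since $\sum_z U(z)=1$, Cauchy--Schwarz against $U$ gives
\[
4\,\DTV{P,Q}^2=\paren{\sum_z \abs{P(z)-Q(z)}}^2\le \sum_z \frac{(P(z)-Q(z))^2}{U(z)}=\EE_{z\sim U}\brac{f(z)^2},
\]
where $f(z):=\frac{P(z)}{U(z)}-\frac{Q(z)}{U(z)}$. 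It then suffices to prove the identity $\EE_{z\sim U}\brac{f(z)^2}=\sum_{\ell=0}^n\binom{n}{\ell}d^{-\ell}\ltwo{\bDelta_\ell}^2$.

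The next step computes the likelihood ratio explicitly. Indexing $[2d]$ by pairs $(i,\epsilon)$ with $i\in[d]$ and $\epsilon\in\set{\pm1}$, one reads off from \cref{eqn:def-Px} that $\QQ_\bx(i,\epsilon)/U(i,\epsilon)=1+\epsilon\,\bx[i]$. Hence for $z=((i_1,\epsilon_1),\dots,(i_n,\epsilon_n))$ we have $P(z)/U(z)=\EE_{\bx\sim\mu}\brac{\prod_{t=1}^n(1+\epsilon_t\bx[i_t])}$, and expanding the product over subsets $S\subseteq[n]$ yields
\[
f(z)=\sum_{S\subseteq[n]}\paren{\prod_{t\in S}\epsilon_t}\, g_S\big((i_t)_{t\in S}\big),\qquad g_S:=\EE_{\bx\sim\mu}\Big[\prod_{t\in S}\bx[i_t]\Big]-\EE_{\by\sim\nu}\Big[\prod_{t\in S}\by[i_t]\Big],
\]
where the $S=\emptyset$ term vanishes.

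The heart of the argument is the orthogonality computation. Under $U$ the pairs $(i_t,\epsilon_t)$ are i.i.d.\ with $i_t\sim\Unif([d])$ and $\epsilon_t\sim\Unif(\set{\pm1})$ independent of the $i_t$. Averaging first over the signs and using the Rademacher orthogonality $\EE_\epsilon\brac{\prod_{t\in S}\epsilon_t\prod_{t\in S'}\epsilon_t}=\indic{S=S'}$ annihilates every cross term, leaving $\EE_U\brac{f^2}=\sum_{S}\EE_{(i_t)}\brac{g_S^2}$. For a fixed $S$ with $\abs{S}=\ell$, the quantity $g_S\big((i_t)_{t\in S}\big)$ is precisely the entry of $\bDelta_\ell$ indexed by the tuple $(i_t)_{t\in S}$, so averaging over $(i_t)_{t\in S}\sim\Unif([d])^{\otimes \ell}$ gives $\EE\brac{g_S^2}=d^{-\ell}\ltwo{\bDelta_\ell}^2$; summing over the $\binom{n}{\ell}$ subsets of each size $\ell$ produces the desired identity, and dividing by $4$ completes the proof.

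I expect the step requiring the most care to be this orthogonal expansion: one must average over the random indices $(i_t)$ \emph{as well as} the signs, and verify that conditioning on the indices decouples the sign-averaging so that the characters $\prod_{t\in S}\epsilon_t$ behave as an orthonormal system. The remaining bookkeeping---that summing the squared entries of the moment-difference tensor over all $d^\ell$ index tuples reassembles $\ltwo{\bDelta_\ell}^2$, with the factor $d^{-\ell}$ coming from the uniform averaging---is routine once the expansion is in place.
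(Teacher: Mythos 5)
Your proof is correct and follows essentially the same route as the paper's: both bound $4\,\DTV{P,Q}^2$ by a $\chi^2$-type quantity against the uniform reference measure $\QQ_{\bz}^{\otimes n}$ and then evaluate it exactly via an orthogonal expansion of the likelihood ratio. The only difference is presentational—you verify the orthogonality directly through the Walsh--Rademacher characters $\prod_{t\in S}\epsilon_t$, whereas the paper deduces the same orthogonality relations for its coefficients $c_{n,\bk}$ by comparing coefficients in the generating-function identity $\sum_{\bo}\QQ_{\bx}^{\otimes n}(\bo)\QQ_{\by}^{\otimes n}(\bo)/\QQ_{\bz}^{\otimes n}(\bo)=(1+\iprod{x}{y}/d)^n$.
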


\begin{proof}
  We utilize the idea of the %
  {orthogonal polynomials} (see e.g. \citet{hanmixtures}) %
  to simplify our calculation.
For simplicity, we denote $\cO=[2d]$. 
By definition, for any $\bo=(o_1,\cdots,o_n)\in\cO^n$, we have
\begin{align*}
    \frac{\QQ_{\bx}^{\otimes n}(\bo)}{\QQ_{\bz}^{\otimes n}(\bo)}=\prod_{j=1}^n \frac{\QQ_{\bx}(o_j)}{\QQ_{\bz}(o_j)}=\sum_{\bk\in\NN^d} c_{n,\bk}(\bo)\bx^{\bk} ,
\end{align*}
where for $\bk=(k_1,\cdots,k_d)\in\NN^d$ we denote $\abs{\bk}=k_1+\cdots+k_d$,  $\bx^{\bk}=\bx[1]^{k_1}\cdots\bx[d]^{k_d}$, and %
$c_{n,\bk}:\cO^n\to\R$ are coefficients satisfying $c_{n,\bk}(\bo)=0$ for all $\abs{\bk}>n$. Notice that for $\bx,\by\in\RR^d$,
\begin{align*}
    \sum_{\bo\in\cO^n} \frac{\QQ_{\bx}^{\otimes n}(\bo)\QQ_{\by}^{\otimes n}(\bo)}{\QQ_{\bz}^{\otimes n}(\bo)}
    =&~ 
    \sum_{o_1,\cdots,o_n\in\cO} \prod_{j=1}^n \frac{\QQ_{\bx}(o_j)\QQ_{\by}(o_j)}{\QQ_{\bz}(o_j)} 
    =
    \paren{ \sum_{o\in\cO} \frac{\QQ_{\bx}(o)\QQ_{\by}(o)}{\QQ_{\bz}(o)} }^n
    =
    \paren{1+\frac{\iprod{x}{y}}{d}}^n.
\end{align*}
On the other hand, it also holds (where the expectation $\EE_{\bz}$ is taken over $\bo\sim\QQ_{\bz}$)
\begin{align*}
    \sum_{\bo\in\cO^n} \frac{\QQ_{\bx}^{\otimes n}(\bo)\QQ_{\by}^{\otimes n}(\bo)}{\QQ_{\bz}^{\otimes n}(\bo)}
    =&~
    \EE_{\bz}\brac{ \frac{\QQ_{\bx}^{\otimes n}(\bo)}{\QQ_{\bz}^{\otimes n}(\bo)}\cdot \frac{\QQ_{\by}^{\otimes n}(\bo)}{\QQ_{\bz}^{\otimes n}(\bo)} } \\
    =&~
    \EE_{\bz}\brac{ \sum_{\bk\in\NN^d} c_{n,\bk}(\bo)\bx^{\bk} \sum_{\bj\in\NN^d} c_{n,\bj}(\bo)\by^{\bj} } \\
    =&~
    \sum_{\bk,\bj\in\NN^d} \EE_{\bz}\brac{ c_{n,\bk}(\bo)c_{n,\bj}(\bo)}\cdot \bx^{\bk}\by^{\bj}.
\end{align*}
Therefore, by comparing the coefficients between the two sides of
\begin{align*}
    \paren{1+\frac{\iprod{x}{y}}{d}}^n
    =
    \sum_{\bk,\bj\in\NN^d} \EE_{\bz}\brac{ c_{n,\bk}(\bo)c_{n,\bj}(\bo)}\cdot \bx^{\bk}\by^{\bj},
\end{align*}
we have
\begin{align*}
    \EE_{\bz}\brac{ c_{n,\bk}(\bo)c_{n,\bj}(\bo)} = \begin{cases}
        0, & \bk\neq \bj,\\
        \binom{n}{\abs{\bk}} \frac{N_{\bk}}{d^{\abs{\bk}}}, & \bk=\bj,
    \end{cases}
\end{align*}
where for $\bk=(k_1,\cdots,k_d)$ such that $\abs{\bk}=\ell$, $N_{\bk}=\binom{\ell}{k_1, \cdots, k_d}$.
Now, we can express
\begin{align*}
    2\DTV{ \EE_{\bx\sim \mu}\brac{ \QQ_{\bx}^{\otimes n} }, \EE_{\by\sim \nu}\brac{ \QQ_{\by}^{\otimes n} } }
    =&~
    \EE_{\bz}\abs{ \EE_{\bx\sim \mu}\brac{ \frac{\QQ_{\bx}^{\otimes n}(\bo)}{\QQ_{\bz}^{\otimes n}(\bo)} }-\EE_{\by\sim \mu}\brac{ \frac{\QQ_{\by}^{\otimes n}(\bo)}{\QQ_{\bz}^{\otimes n}(\bo)} } } \\
    =&~
    \EE_{\bz}\abs{ \EE_{\bx\sim \mu}\brac{ \sum_{\bk\in\NN^d} c_{n,\bk}(\bo)\bx^{\bk} }-\EE_{\by\sim \nu}\brac{ \sum_{\bk\in\NN^d} c_{n,\bk}(\bo)\by^{\bk} } }   \\
    =&~
    \EE_{\bz}\abs{ \sum_{\bk\in\NN^d} c_{n,\bk}(\bo) \Delta_{\bk} },
\end{align*}
where in the last line we abbreviate $\Delta_{\bk}=\EE_{\bx\sim \mu}\brac{ \bx^{\bk} }-\EE_{\by\sim \nu}\brac{ \by^{\bk} }$ for $\bk\in\NN^d$.
By Jensen inequality, %
\begin{align*}
    4\DTV{ \EE_{\bx\sim \mu}\brac{ \QQ_{\bx}^{\otimes n} }, \EE_{\by\sim \nu}\brac{ \QQ_{\by}^{\otimes n} } }^2
    \leq&~
    \EE_{\bz}\abs{ \sum_{\bk\in\NN^d} c_{n,\bk}(\bo) \Delta_{\bk} }^2 \\
    =&~ \EE_{\bz}\brac{ \sum_{\bk\in\NN^d} c_{n,\bk}(\bo)\Delta_{\bk} \sum_{\bj\in\NN^d} c_{n,\bj}(\bo)\Delta_{\bj} } \\
    =&~
    \sum_{\bk,\bj\in\NN^d} \EE_{\bz}\brac{ c_{n,\bk}(\bo)c_{n,\bj}(\bo)}\cdot \Delta_{\bk}\Delta_{\bj} \\
    =&~
    \sum_{\bk\in\NN^d} \binom{n}{\abs{\bk}} \frac{N_{\bk}}{d^{\abs{\bk}}}\Delta_{\bk}^2\\
    =&~
    \sum_{\ell=0}^n \binom{n}{\ell} \frac{1}{d^{\ell}}\sum_{\bk\in\NN^d: \abs{\bk}=\ell } N_{\bk} \Delta_{\bk}^2 \\
    =&~
    \sum_{\ell=0}^n \binom{n}{\ell}\frac{1}{d^{\ell}}\ltwo{ \bDelta_\ell }^2,
\end{align*}
where the last equality follows directly from definition:
\begin{align*}
    \sum_{\bk\in\NN^d: \abs{\bk}=\ell } N_{\bk} \Delta_{\bk}^2
    =&~
    \sum_{\bk\in\NN^d: \abs{\bk}=\ell } N_{\bk} \abs{ \EE_{\bx\sim \mu}\brac{ \bx^{\bk} }-\EE_{\by\sim \nu}\brac{ \by^{\bk} } }^2 \\
    =&~
    \sum_{i_1,\cdots,i_\ell\in[d]^\ell} \abs{ \EE_{\bx\sim \mu}\brac{ \bx[i_1]\cdots\bx[i_\ell] }-\EE_{\by\sim \nu}\brac{ \by[i_1]\cdots\by[i_\ell] } }^2\\
    =&~
    \ltwo{ \EE_{\bx\sim \mu}\brac{ \bx^{\otimes \ell} }-\EE_{\by\sim \nu}\brac{ \by^{\otimes \ell} } }^2.
\end{align*}
\end{proof}

\begin{corollary}\label{cor:prob-matching}
Let $d, N, K, H \in \mathbb{N}$ and $\delta\in(0,1]$ be given so that $N\geq \binom{K+d-1}{d}+1$. Suppose $\bx_1,\cdots,\bx_N\in[-\delta,\delta]^d$. Then there exist two distributions $\xi_0,\xi_1\in\Delta([N])$, such that  $\supp(\xi_0)\cap\supp(\xi_1)=\emptyset$ and
\begin{align*}
    \DTVt{ \EE_{i\sim \xi_0}\brac{ \QQ_{\bx_i}^{\otimes H} }, \EE_{i\sim \xi_1}\brac{ \QQ_{\bx_i}^{\otimes H} } }
    \leq \sum_{k=K}^H \paren{\frac{eH\delta^2}{K}}^{k}.
\end{align*}
\end{corollary}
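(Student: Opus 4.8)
The plan is to apply \cref{prop:unif-moments} with $n=H$, taking $\mu$ and $\nu$ to be the pushforwards of $\xi_0$ and $\xi_1$ under the map $i\mapsto\bx_i$ (note $[-\delta,\delta]^d\subseteq[-1,1]^d$ since $\delta\le 1$), so that $\EE_{\bx\sim\mu}\brac{\QQ_\bx^{\otimes H}}=\EE_{i\sim\xi_0}\brac{\QQ_{\bx_i}^{\otimes H}}$ and likewise for $\nu$. The bound furnished by \cref{prop:unif-moments} is a sum over $\ell=0,\dots,H$ of terms proportional to $\ltwo{\bDelta_\ell}^2$, so the entire strategy is to choose $\xi_0,\xi_1$ so that $\bDelta_\ell=0$ for every $\ell\le K-1$, annihilating all the low-order terms and leaving only the tail $\ell\ge K$.

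The existence of such $\xi_0,\xi_1$ is a moment-matching argument by dimension counting. For each $i\in[N]$, consider the vector $v_i$ whose coordinates are the monomials $\bx_i^{\bk}$ over all multi-indices $\bk\in\NN^d$ with $\abs{\bk}\le K-1$; the number of such monomials is exactly $\binom{K+d-1}{d}$, so $v_1,\dots,v_N$ lie in a space of that dimension. Since $N\ge\binom{K+d-1}{d}+1$ by hypothesis, these vectors are linearly dependent, so there is a nonzero $c\in\R^N$ with $\sum_i c_i v_i=0$. The coordinate of $v_i$ indexed by $\bk=0$ is the constant $1$, so this relation forces $\sum_i c_i=0$; hence the positive part $P=\set{i:c_i>0}$ and the negative part $Q=\set{i:c_i<0}$ are both nonempty, disjoint, and carry equal total weight $S:=\sum_{i\in P}c_i=\sum_{i\in Q}\abs{c_i}>0$. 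Setting $\xi_0=c_i/S$ on $P$ and $\xi_1=\abs{c_i}/S$ on $Q$ gives two distributions with $\supp(\xi_0)\cap\supp(\xi_1)=\emptyset$, and the relation $\sum_i c_i\bx_i^{\bk}=0$ rearranges to $\EE_{i\sim\xi_0}\brac{\bx_i^{\bk}}=\EE_{i\sim\xi_1}\brac{\bx_i^{\bk}}$ for every $\abs{\bk}\le K-1$. Since each entry of the tensor $\EE\brac{\bx^{\otimes\ell}}$ is one such monomial moment, matching all monomial moments of degree $\le K-1$ forces $\bDelta_\ell=0$ for all $\ell\le K-1$, as desired.

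With the low-order terms gone, it remains to bound the tail. Here I would use the crude estimate $\ltwo{\EE_{i\sim\xi}\brac{\bx_i^{\otimes\ell}}}\le\EE_{i\sim\xi}\brac{\ltwo{\bx_i^{\otimes\ell}}}=\EE_{i\sim\xi}\brac{\ltwo{\bx_i}^\ell}\le(d\delta^2)^{\ell/2}$ (using $\ltwo{\bx_i}^2\le d\delta^2$), valid for both $\xi_0$ and $\xi_1$, whence $\ltwo{\bDelta_\ell}^2\le 4(d\delta^2)^\ell$. Plugging this into \cref{prop:unif-moments}, the factor $\tfrac{1}{d^\ell}(d\delta^2)^\ell=\delta^{2\ell}$ cancels the $d$-dependence and leaves $\DTVt{\cdots}\le\sum_{\ell=K}^H\binom{H}{\ell}\delta^{2\ell}$. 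Finally the elementary bound $\binom{H}{\ell}\le\paren{eH/\ell}^\ell\le\paren{eH/K}^\ell$ for $\ell\ge K$ gives $\sum_{\ell=K}^H\paren{eH\delta^2/K}^\ell$, which matches the claimed right-hand side after relabeling the summation index as $k$.

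The only genuinely nontrivial step is the moment-matching construction in the second paragraph: verifying that $\binom{K+d-1}{d}$ is precisely the number of monomials of total degree at most $K-1$ in $d$ variables, and then upgrading a bare linear dependence into two honest probability distributions with disjoint supports via the sign decomposition. Everything afterward is a mechanical substitution into \cref{prop:unif-moments} together with routine norm and binomial estimates.
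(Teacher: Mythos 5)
Your proposal is correct and follows essentially the same route as the paper's proof: the same dimension-counting argument over the $\binom{K+d-1}{d}$ monomials of degree at most $K-1$ to produce a kernel vector, the same positive/negative sign split into disjointly supported distributions $\xi_0,\xi_1$, and the same tail estimate $\ltwo{\bDelta_\ell}^2\le 4(d\delta^2)^\ell$ fed into \cref{prop:unif-moments} with $\binom{H}{\ell}\le(eH/K)^\ell$. No gaps.
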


\begin{proof}
Consider the following system of equations:
\begin{align*}
    \sum_{i=1}^N v_i \bx_i[1]^{k_1}\cdots\bx_i[d]^{k_d}=0, \qquad \forall k_j\geq 0, k_1+\cdots+k_d\leq K-1.
\end{align*}
There are exactly $\binom{K+d-1}{d}$ equations, and hence such a system must have a non-zero solution $v^\star\in\RR^{N}$. Notice that $\sum_{i=1}^N v_i^\star=0$, and we then take $\xi_0=[v^\star]_+/V$, $\xi_1=[-v^\star]_{+}/V \in\Delta([N])$, where $V=\lone{[v^\star]_+}=\lone{[-v^\star]_+}$ is the normalizing factor. 
Clearly, $\supp(\xi_0)\cap\supp(\xi_1)=\emptyset$, and we also have
\begin{align*}
    \EE_{i\sim \xi_0} \bx_i^{\otimes \ell}=\EE_{i\sim \xi_1} \bx_i^{\otimes \ell}, \qquad \forall \ell=0,\cdots K-1.
\end{align*}
Consider $\bDelta_\ell\defeq \EE_{i\sim \xi_0} \bx_i^{\otimes \ell}-\EE_{i\sim \xi_1} \bx_i^{\otimes \ell}$; then we have $\bDelta_\ell=0$ for $\ell<K$, and we also have
\begin{align*}
    \ltwo{\bDelta_\ell}\leq 2\max_{i}\ltwo{\bx_i^{\otimes \ell}}
    \leq 2\ltwo{\bx_i}^{\ell}
    \leq 2(\sqrt{d}\delta)^{\ell}, \qquad \forall \ell\geq 0.
\end{align*}
This implies that $\frac{1}{d^{\ell}}\ltwo{\bDelta_\ell}^2\leq 4\delta^{2\ell}$ %
always holds.
Therefore, applying \cref{prop:unif-moments} with $n=H$ and using the fact that $\binom{H}{k}\leq \paren{\frac{eH}{k}}^k$, %
we obtain %
\begin{align*}
    \DTVt{ \EE_{i\sim \xi_0}\brac{ \QQ_{\bx_i}^{\otimes H} }, \EE_{i\sim \xi_1}\brac{ \QQ_{\bx_i}^{\otimes H} } }
    \leq \sum_{k=K}^H \paren{\frac{eH}{k}}^k \cdot (\delta)^{2k} 
    \leq \sum_{k=K}^H \paren{\frac{eH\delta^2}{K}}^{k}.
\end{align*}
\end{proof}

\newcommand{\delinf}{\delta_{\infty}}
\newcommand{\vol}{\mathrm{vol}}

\paragraph{Proof of \cref{prop:2-family}}
Choose $\delinf>0$, $d\geq 1$, and an integer $K\leq \paren{\frac{\delinf}{2e^2\delta}-1}d+1$ (to be specified later in the proof).
For the $\ell_{\infty}$-ball $\BB\defeq [-\delinf,\delinf]^d$, we consider its packing number under the $\ell_1$-norm, denoted $M(\cdot; \BB, \lone{\cdot})$. Using \citet[Lemma 5.5 \& 5.7]{wainwright2019high}, we have
\begin{align*}
    M(\delta_1; \BB, \lone{\cdot})\geq \paren{\frac{1}{\delta_1}}^d\frac{\vol(\BB)}{\vol(\BB')}, \qquad \forall \delta_1>0,
\end{align*}
where $\BB'=\set{x\in\R^d: \lone{x}\leq 1}$ is the $\ell_1$ unit ball. Notice that $\vol(\BB)=(2\delinf)^d$, $\vol(\BB')=\frac{2^d}{d!}$. Thus, using the fact $d!>(d/e)^d$, we have
\begin{align*}
    M(\delta_1; \BB, \lone{\cdot})\geq d!\paren{\frac{\delinf}{\delta_1}}^d
    > \paren{\frac{d\delinf}{e\delta_1}}^d
\end{align*}
In particular, $M\defeq M(2d\delta; \BB, \lone{\cdot})> \paren{\frac{\delinf}{2e\delta}}^d$.
Notice that our choice of $K$ ensures that for $N=\binom{K+d-1}{d}+1$, it holds that $N\leq M$. Therefore, we can pick $N$ vectors $\bx_1,\cdots,\bx_N\in\BB$ such that $\lone{\bx_i-\bx_j}\geq 2d\delta$. 

Consider the distributions $\mu_i=\QQ_{\bx_i}\in\Delta([2d])$ for each $i\in[N]$. Clearly, we have $\DTV{\mu_i,\mu_j}\geq \delta$ for $i\neq j$.
Also, by \cref{cor:prob-matching}, %
there exists $\xi_0,\xi_1\in\Delta([N])$ such that $\supp(\xi_0)\cap\supp(\xi_1)=\emptyset$,
\begin{align*}
    \DTVt{ \EE_{i\sim \xi_0}\brac{ \mu_i^{\otimes H} }, \EE_{i\sim \xi_0}\brac{ \mu_i^{\otimes H} } } 
    \leq \sum_{k=K}^H \paren{\frac{eH\delinf^2}{K}}^{k}.
\end{align*}

Consider $\cQ=\set{(\mu_1,\cdots,\mu_N),(\xi_0,\xi_1)}$.

\textbf{Proof of \cref{prop:2-family} (a).} %
In this case, we pick $\delinf=1$, $K=H+1$, $d=\ceil{4e^2\delta H}$. Then $\cQ$ is a $(2,H,\delta,0,N)$-family over $[2d]$, with $N\leq \min\paren{\frac{1}{2e\delta},2H}^d$.

\textbf{Proof of \cref{prop:2-family} (b).} %
In this case, we take $K=\ceil{\lambda d}$, $\delinf=2e^2\delta(\lambda+1)$, so $\frac{eH\delinf^2}{K}\leq e^{-2}$ and hence $\cQ$ is a $(2,H,\delta,\gamma,N)$-family over $[2d]$ with $\gamma\leq 2e^{-\lambda d}$ and $N\leq (2e(\lambda+1))^d$. 
\qed

\subsection{Proof of Lemma~\ref{lem:family-tensor}}\label{appdx:proof-family-tensor}

\newcommand{\txi}{\Tilde{\xi}}

Suppose that $\cQ=\set{(\mu_1,\cdots,\mu_N),(\xi_0,\xi_1)}$ is a $(2,H,\delta,\gamma,N)$-family over $\cO$. Then, for each integer $m\in\set{0,1,\cdots,2^r-1}$, we consider its binary representation $m=(m_r\cdots m_1)_2$, and define
\begin{align*}
    \txi_m=\xi_{m_r}\otimes \cdots\otimes \xi_{m_1}\in [N]^{r}.
\end{align*}
Further, for each $\bk=(k_1,\cdots,k_r)\in[N]^r$, we define
\begin{align*}
    \tmu_{\bk}=\mu_{k_1}\otimes \cdots\otimes\mu_{k_r} \in \cO^{r}.
\end{align*}
Under the definitions above, we know
\begin{align*}
    \EE_{\bk\sim \txi_m}\brac{ \tmu_{\bk}^{\otimes H} } = \EE_{k_1\sim \xi_{m_1}}\brac{ \mu_{k_1}^{\otimes H} } \otimes \cdots\otimes \EE_{k_r\sim \xi_{m_r}}\brac{ \mu_{k_r}^{\otimes H} },
\end{align*}
and hence for $0\leq m,l\leq 2^r-1$, it holds that
\begin{align*}
    \DTV{ \EE_{\bk\sim \txi_m}\brac{ \tmu_{\bk}^{\otimes H} }, \EE_{\bk\sim \txi_l}\brac{ \tmu_{\bk}^{\otimes H} } }
    \leq \sum_{i=1}^r \DTV{ \EE_{k\sim \xi_{m_i}}\brac{ \mu_{k}^{\otimes H} }, \EE_{k\sim \xi_{l_i}}\brac{ \mu_{k}^{\otimes H} } }
    \leq r\gamma.
\end{align*}
We also know that $\supp(\txi_m)\cap \supp(\txi_l)=\emptyset$ as long as $m\neq l$. For $\bk,\bj\in\cup_{m} \supp(\txi_m)$ such that $\bk\neq\bj$, it also holds that
\begin{align*}
    \DTV{ \tmu_{\bk}, \tmu_{\bj} }
    \geq \max_{1\leq i\leq r} \DTV{\mu_{k_i}, \mu_{j_i}} \geq \delta.
\end{align*}
Therefore, $\bQ'=\set{ (\tmu_{\bk})_{\bk\in[N]^r}, (\txi_{0},\cdots,\txi_{2^r-1}) }$ is indeed a $(2^r,H,\delta,r\gamma,N^r)$-family over $\cO^r$.
\qed

\subsection{Proof of Theorem~\ref{thm:decode-lower}}\label{appdx:comb-lock-decode}

In this section, we modify the constructions in \cref{appdx:comb-lock} to obtain a class of hard instances of $N$-step decodable LMDPs
\begin{align}
  \label{eq:define-mtheta-lb-decode}
    \cM^+=\set{ M_{\theta}^+: \theta\in\cA^{n-1} } \cup\set{ M_{\emptyset}^+ },
\end{align}
and then sketch the proof of \cref{thm:decode-lower} (as most parts of the proof follow immediately from \cref{appdx:comb-lock} and \cref{prop:tensor-comb-lock}).

\newcommand{\sms}[1]{s_{\ominus,#1}}
For any given integer $N, n, A$, we set $k=N-n$ so that $H=n+2k$, and we take $\cA=[A]$.
We specify the state space, action space and reward function (which are shared across all LMDP instances) as follows.
\begin{itemize}
\item The state space is
\begin{align*}
    \cS=\set{\sg{i}: -k+1\leq i\leq n+k}\bigsqcup \set{ \sms{i}: 2\leq i\leq n+k }\bigsqcup \set{\stermin_1,\cdots,\stermin_n}.
\end{align*}
\item The action space is $\cA$.
\item The reward function is given by $R_h(s,a)=\indic{s=\sg{n}, h=n+k+1}$.
\end{itemize}
We remark that, our below construction has (essentially) the same LMDP dynamics at the state $s\in\cS_+:=\set{\sg{1},\cdots,\sg{n}}$, as the construction in \cref{appdx:comb-lock}. The auxiliary states $\sms{2},\cdots,\sms{n+k}, \stermin_1,\cdots,\stermin_n$ are introduced so that we can ensure $N$-step decodability, while the auxiliary states $\sg{-k+1},\cdots,\sg{0}$ are introduced to so that we can take the horizon $H$ to equal $N+k$.

\paragraph{Construction of the LMDP $M_\theta^+$}
For any $\theta=\ba\in\cA^{n-1}$, we construct a LMDP $M_{\theta}^+$ as follows.
\begin{itemize}
\item $L=n$, the MDP instances of $M_\theta^+$ is given by $M_{\theta,1}^+,\cdots,M_{\theta,n}^+$ with mixing weight $\rho=\Unif([n])$.
\item For each $m\in[n]$, in the MDP $M_{\theta,m}^+$, the initial state is $\sg{-k+1}$, and the transition dynamics at state $s\not\in\cS_{+}=\set{\sg{1},\cdots,\sg{n}}$ is specified as follows and does not depend on $\theta$:
\begin{itemize}
    \item At state $\sg{h}$ with $h\leq 0$, taking any action leads to $\sg{h+1}$.
    \item At state $\sms{h}$ with $h<n+k$, taking any action leads to $\sms{h+1}$.
    \item At state $s\in\set{\sms{n+k},\stermin_1,\cdots,\stermin_n}$, taking any action leads to $\stermin_m$.
\end{itemize}

For $m>1$, the transition dynamics of $M_{\theta,m}^+$ at state $s\in\cS_+$ is given as follows (similar to \cref{appdx:comb-lock}).
\begin{itemize}
    \item At state $\sg{h}$ with $h<m$, taking any action leads to $\sg{h+1}$.
    \item At state $\sg{m-1}$, taking action $a\neq \ba_{m-1}$ leads to $\sg{m}$, 
    and taking action $\ba_{m-1}$ leads to $\sms{m}$.
    \item At state $\sg{h}$ with $m\leq h<n$, taking action $a\neq \ba_{h}$ leads to $\sm$, and taking action $\ba_{h}$ leads to $\sg{h+1}$.
    \item At state $\sg{n}$, taking any action leads to $\sms{n+1}$.
\end{itemize}
The transition dynamics of $M_{\theta,1}^+$ at state $s\in\cS_+$ is given as follows.
\begin{itemize}
    \item At state $\sg{h}$ with $h<n$, taking action $a\neq \ba_{h}$ leads to $\sm$, and taking action $\ba_{h}$ leads to $\sg{h+1}$.
    \item The state $\sg{n}$ is an absorbing state.
\end{itemize}
\end{itemize}

\paragraph{Construction of the reference LMDP} For $\otheta=\emptyset$, we construct the LMDP $M_{\otheta}$ with state space $\cS$, MDP instances $M_{\otheta,1},\cdots,M_{\otheta,n}$, mixing weights $\rho=\unif([n])$, where for each $m\in[n]$, the transition dynamics of $M_{\otheta,m}$ is specified as follows: (1) the initial state is always $\sg{-k+1}$, (2) the transition dynamics at state $s\not\in\cS_+$ agrees with the transition dynamics of $M_{\theta,m}$ described as above, (3) at state $\sg{h}$ with $h< m$, taking any action leads to $\sg{h+1}$, and (4)  at state $\sg{h}$ with $h\geq m$, taking any action leads to $\sms{h+1}$.

\paragraph{Sketch of proof} The following are several key observations for the LMDP $M_\theta$ ($\theta\in\cA^{n-1}\sqcup\set{\otheta}$).

(1) At state $s\in\cS_+$, the transition dynamics of $M_{\theta,m}^+$ agrees with the transition dynamics of $M_{\theta,m}$ (defined in \cref{appdx:comb-lock}), in the sense that we identify the state $\sm$ there as the set of $\set{\sms{2},\cdots,\sms{n+k}}$. %

(2) With horizon $H=n+2k$, we always have $s_H\in\set{\sg{n},\sms{n+k}}$, and all the states in $\set{\stermin_1,\cdots,\stermin_n}$ are not reachable. In other words, the auxiliary states $\stermin_1, \cdots, \stermin_n$ (introduced for ensuring $N$-step decodability) do not reveal information of the latent index because they are never reached.

(3) $M_\theta$ is $N$-step decodable, because:

(3a) $M_\theta$ is $N$-step decodable when we start at $s\in\set{ \sms{2},\cdots,\sms{n+k}, \stermin_1,\cdots,\stermin_n}$. This follows immediately from definition, because in $M_\theta$, any reachable trajectory $\otau_N$ starting at such state $s$ must end with $s_N=\stermin_m$, where $m$ is the index of the MDP instance $M_{\theta,m}$. Similar argument also shows that $M_\theta$ is $N$-step decodable when we start at $s\in\set{ \sg{2}, \cdots, \sg{n}}$.

(3b) $M_\theta$ is $n$-step decodable when we start at $\sg{1}$. This follows immediately from our proof of \cref{lem:comb-lock} (a), which shows that for any reachable trajectory $\otau_n$, there is a unique latent index $m$ such that $\otau_n$ is reachable under $M_{\theta,n}$. Therefore, we also know that $M_\theta$ is $N$-step decodable when we start at $s\in\set{\sg{-k+1},\cdots,\sg{0}}$.

Given the above observations, we also know that our argument in the proof of \cref{prop:tensor-comb-lock} indeed applies to $\cM^+$, which concludes that the class $\cM^+$ of $N$-step decodable LMDPs requires $\Om{A^{n-1}}$ samples to learn.
\qed

\section{Proofs for Section~\ref{sec:stat-upper}}\label{appdx:stat-upper}

\newcommand{\clogK}{\iota_K}
\newcommand{\clogKv}{\log(LdH\cdot K/(A\beta))}

\paragraph{Miscellaneous notations} 
We identify $\Piall=\Delta(\Piall)$ as both the set of all policies and all distributions over policies interchangeably. 

Also, recall that for any step $h$, we write $\tau_h=(s_1,a_1,\cdots,s_h,a_h)$, and $\tau_{h:h'}=(s_h,a_h,\cdots,s_{h'},a_{h'})$ compactly. Also recall that
\begin{align*}
    \PP_\theta(\tau_h)=\PP_\theta(s_{1:h}|\doac(a_{1:h-1})),
\end{align*}
i.e., $\PP_\theta(\tau_h)$ is the probability of observing $s_{1:h}$ if the agent deterministically executes actions $a_{1:h-1}$ in the LMDP $M_\theta$. Also denote $\pi(\tau_h)\defeq \prod_{h'\le h} \pi_{h'}(a_{h'}|\tau_{h'-1}, s_{h'})$, and then $\PP^{\pi}_\theta(\tau_h)=\PP_\theta(\tau_h)\times \pi(\tau_h)$ %
gives the probability of observing $\tau_h$ for the first $h$ steps when executing $\pi$ in LMDP $M_\theta$. %

For any policy $\pi,\pi'\in\Pi$ and step $h\in[H]$, we define $\pi\circ_h \pi'$ to be the policy that executes $\pi$ for the first $h-1$ steps, and then starts executing $\piexp$ at step $h$ (i.e. discarding the history $\tau_{h-1}$). %

To avoid confusion, we define $\PP_\theta(\tau_{h:H}|\tau_{h-1},\pi)$ to be the probability of observing $\tau_{h:H}$ conditional on the history $\tau_{h-1}$ if we start executing $\pi$ at the step $h$ (i.e. $\pi$ does not use the history data $\tau_{h-1}$). By contrast, consistently with the standard notation of conditional probability, $\PP_\theta^\pi(\tau_{h:H}|\tau_{h-1})$ is the conditional probability of the model $\PP_\theta^\pi$, i.e. the probability of observing $\tau_{h:H}$ conditional on the history $\tau_{h-1}$ under policy $\pi$. Therefore, we have
\begin{align}\label{eqn:cond-pi}
    \PP_\theta^\pi(\tau_{h:H}|\tau_{h-1})=\PP_\theta(\tau_{h:H}|\tau_{h-1},\pi(\cdot|\tau_{h-1})).
\end{align}

\subsection{Details of Algorithm OMLE}\label{appdx:OMLE-basic}

Given a \emph{separating policy} $\piexp$, we can construct a corresponding map $\modf{\cdot}:\Piall\to\Piall$, that transforms any policy $\pi$ to an explorative version of it. The definition of $\modf{\cdot}$ below is similar to the choice of the explorative policies for learning PSRs in \citet{zhan2022pac,chen2022partially,liu2022optimistic}.
\begin{definition}\label{def:policy-mod}
Suppose that $\piexp\in\Piall$ is a given policy and $1\leq W\leq H$. For any step $1\leq h\leq H$, we define $\varphi_h:\Piall\to\Piall$ to be a policy modification given by
$$
\varphi_h (\pi) = \pi \circ_{h}\unif(\cA)\circ_{h +1}\piexp, \qquad \pi\in\Piall,
$$
i.e. $\varphi_h (\pi)$ means that we follow $\pi$ for the first $h-1$ steps, take $\unif(\cA)$ at step $h$, and start executing $\piexp$ afterwards. 

Further, we define $\modp{\cdot}, \modf{\cdot}$ as follows:
\begin{align*}
    \modp{\pi} = \pi \circ_{W} \piexp, \qquad
    \modf{\pi} = \frac{1}{2}\modp{\pi}+\frac{1}{2H}\sum_{h=0}^{H-1} \varphi_h(\pi).
\end{align*}
\end{definition}

The following guarantee pertaining to the confidence set maintained in OMLE is taken from \citet[Proposition E.2]{chen2022partially}. There is a slight difference in the policy modification applied to $\pi^t$, which does not affect the argument in \citet[Appendix E.1]{chen2022partially}. 

\begin{proposition}[Confidence set guarantee]\label{thm:MLE}
Suppose that we choose $\beta\geq \betaN$ in \cref{alg:OMLE}. Then with probability at least $1-p$, the following holds:
\begin{enumerate}[wide, label=(\alph*)]
\item For all $k\in[K]$, $\ths\in\Theta^k$;
\item For all $k\in[K]$ and any $\theta\in\Theta^k$, it holds that
\begin{align}\label{eqn:OMLE-est-err}
    \sum_{t=1}^{k-1} \DH{ \PP^{\modf{\pi^t}}_{\theta}, \PP^{\modf{\pi^t}}_{\ths} } \leq 2\beta.
\end{align}
\end{enumerate}
\end{proposition}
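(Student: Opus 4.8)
The plan is to treat \cref{thm:MLE} as a standard maximum-likelihood concentration guarantee, in the vein of \citet{liu2022partially,chen2022unified,chen2022partially}; the only structural input specific to our setting is that the explorative policy $\modf{\pi^t}$ executed in episode $t$ is measurable with respect to the data collected in episodes $1,\dots,t-1$, so the trajectories $\tau^t$ form an adapted sequence and the martingale machinery applies verbatim. The engine is the elementary identity, valid for any two models $\theta,\ths$ and any policy $\pi$,
\begin{align*}
    \EE_{\tau\sim \PP^{\pi}_{\ths}}\brac{ \sqrt{\PP^{\pi}_{\theta}(\tau)/\PP^{\pi}_{\ths}(\tau)} } = \sum_\tau \sqrt{\PP^{\pi}_{\theta}(\tau)\PP^{\pi}_{\ths}(\tau)} = 1-\DH{ \PP^{\pi}_{\theta}, \PP^{\pi}_{\ths} } \leq \exp\paren{ -\DH{ \PP^{\pi}_{\theta}, \PP^{\pi}_{\ths} } },
\end{align*}
which converts the likelihood ratio into a supermartingale once we correct for the Hellinger drift.

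Concretely, for a fixed $\theta$ I would define $M_k(\theta) = \prod_{t<k}\sqrt{\PP^{\modf{\pi^t}}_{\theta}(\tau^t)/\PP^{\modf{\pi^t}}_{\ths}(\tau^t)}\cdot \exp\paren{\DH{\PP^{\modf{\pi^t}}_{\theta},\PP^{\modf{\pi^t}}_{\ths}}}$. The display above shows $\EE[M_{k+1}(\theta)\mid \mathfrak{F}_{k-1}]\leq M_k(\theta)$, so $M_k(\theta)$ is a nonnegative supermartingale with $M_1(\theta)=1$; Ville's maximal inequality then gives that, with probability at least $1-p'$, for all $k\in[K]$,
\begin{align*}
    \frac12\sum_{t<k}\log\frac{\PP^{\modf{\pi^t}}_{\theta}(\tau^t)}{\PP^{\modf{\pi^t}}_{\ths}(\tau^t)} + \sum_{t<k}\DH{\PP^{\modf{\pi^t}}_{\theta},\PP^{\modf{\pi^t}}_{\ths}} \leq \log(1/p').
\end{align*}

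To upgrade this to a bound uniform over the infinite class $\Theta$, I would invoke the optimistic $\rho$-cover $(\hPP,\Theta_0)$ of \cref{def:optimistic-cover} at scale $\rho=1/T$, running the supermartingale argument with the optimistic surrogates $\hPP_{\theta_0}$ in place of $\PP_\theta$ and union bounding over $\Theta_0$ with $p'=p/\Nt(1/T)$. Here property (1), namely $\hPP_{\theta_0}\geq \PP_\theta$ pointwise, only inflates the likelihood and is therefore in the safe direction for the confidence-set inclusion, while property (2) together with Cauchy–Schwarz yields $\EE_{\ths}[\sqrt{\hPP^{\pi}_{\theta_0}/\PP^{\pi}_{\ths}}]\leq 1+\rho$, so each per-step factor is inflated by at most $(1+\rho)$; over at most $T$ episodes with $\rho=1/T$ this contributes a bounded multiplicative factor $e$, which is exactly the source of the additive constant $2$ in $\betaN$. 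This step is precisely the content of \citet[Proposition E.2]{chen2022partially}, and it replaces $\log(1/p')$ above by $\logNt(1/T)+\log(1/p)+1$.

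Given the uniform bound, both conclusions follow. For (a), dropping the nonnegative Hellinger term and using optimistic domination gives $\max_{\theta\in\Theta}\sum_{t<k}\log\frac{\PP^{\modf{\pi^t}}_{\theta}(\tau^t)}{\PP^{\modf{\pi^t}}_{\ths}(\tau^t)}\leq \betaN\leq \beta$, so $\ths$ satisfies the defining inequality of $\Theta^k$. For (b), if $\theta\in\Theta^k$ then the definition of the confidence set, together with $\ths\in\Theta$, yields $-\sum_{t<k}\log\frac{\PP^{\modf{\pi^t}}_{\theta}(\tau^t)}{\PP^{\modf{\pi^t}}_{\ths}(\tau^t)}\leq \beta$; substituting into the displayed supermartingale bound gives $\sum_{t<k}\DH{\PP^{\modf{\pi^t}}_{\theta},\PP^{\modf{\pi^t}}_{\ths}}\leq \log(1/p')+\beta/2\leq \beta/2+\beta/2\leq 2\beta$, using $\beta\geq\betaN\geq 2\logNt(1/T)+2\log(1/p)+2$. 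The main obstacle is the uniformization step: making the supermartingale argument survive the replacement of the true likelihoods by the unnormalized optimistic surrogates and tracking the covering error at scale $\rho=1/T$, which is the delicate bookkeeping carried out in the cited lemma. As the remark after the proposition notes, the only change relative to \citet{chen2022partially} is the precise form of the policy modification $\modf{\cdot}$, and since the argument only uses that $\modf{\pi^t}$ is predictable, it goes through unchanged.
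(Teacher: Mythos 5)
Your proposal is correct and follows essentially the same route as the paper, which simply imports this statement from \citet[Proposition E.2]{chen2022partially} after noting that the modified exploration policy $\modf{\cdot}$ is still predictable with respect to the data filtration; your reconstruction of the underlying argument (the Hellinger-affinity supermartingale, Ville's inequality, and the union bound over the optimistic cover of \cref{def:optimistic-cover}) is the standard proof of that cited result, and your derivation of parts (a) and (b) from the uniform likelihood-ratio bound is accurate. The only caveat is the one you already flag yourself: the bookkeeping for the optimistic surrogates $\hPP_{\theta_0}$ at scale $\rho=1/T$ is deferred to the cited lemma rather than carried out in full, which matches the level of detail the paper itself provides.
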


\newcommand{\MLEevent}{$E_0$\xspace}

Let \MLEevent~be the event that both (a) and (b) of \cref{thm:MLE} above hold true. In the following, we will analyze the performance of \cref{alg:OMLE} conditional on the suceess event \MLEevent.

The following proposition relates the sub-optimality of the output policy $\hpi$ of \cref{alg:OMLE} to the error of estimation.
\begin{proposition}
\label{prop:optimism}
Suppose that \cref{def:single-policy-sep} holds, and $W\geq \om^{-1}(\log(L/\epssep))$. Conditional on the success event \MLEevent, we have
\begin{align*}
    V_\star-V_{\ths}(\hat\pi)\leq \frac{1}{K}\sum_{k=1}^K \DTV{ \PP^{\pi^k}_{\theta^k}, \PP^{\pi^k}_{\ths} }.
\end{align*}
\end{proposition}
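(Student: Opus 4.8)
The plan is to run the standard optimism-based suboptimality decomposition, with the one nonstandard ingredient being the verification that the decoding-error constraint $\perr_{\theta,W}(\pi)\leq\epssep$ in \cref{alg:OMLE} does not exclude the ground-truth optimal pair. Since $\hpi=\unif(\set{\pi^1,\dots,\pi^K})$, linearity of the value gives $V_{\ths}(\hpi)=\frac1K\sum_{k=1}^K V_{\ths}(\pi^k)$, so it suffices to bound $V_\star-V_{\ths}(\pi^k)$ in each round $k$ and average. First I would split this per-round gap as
\begin{align*}
    V_\star-V_{\ths}(\pi^k)=\paren{V_\star-V_{\theta^k}(\pi^k)}+\paren{V_{\theta^k}(\pi^k)-V_{\ths}(\pi^k)},
\end{align*}
and argue that under the success event $E_0$ the first term is nonpositive (optimism), while the second term is controlled by the trajectory TV distance.

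For the optimism step, the key claim is that the pair $(\ths,\pis)$ is \emph{feasible} for the maximization defining $(\theta^k,\pi^k)$. Under $E_0$, part (a) of \cref{thm:MLE} gives $\ths\in\Theta^k$ for all $k$. By \cref{def:single-policy-sep}, there is an optimal policy $\pis\in\Pi$ of $M_{\ths}$ under which $M_{\ths}$ is $\om$-separated, so \cref{prop:latent-MLE} yields $\perr_{\ths,W}(\pis)\leq L\exp(-\om(W))$. Because $W\geq\om^{-1}(\log(L/\epssep))$ forces $\om(W)\geq\log(L/\epssep)$, this is at most $\epssep$, so $(\ths,\pis)$ indeed satisfies both constraints. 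Since $(\theta^k,\pi^k)$ maximizes $V_\theta(\pi)$ over all feasible pairs, we conclude $V_{\theta^k}(\pi^k)\geq V_{\ths}(\pis)=V_\star$, i.e. the first term above is $\leq 0$.

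For the second term I would write the value as the expectation of the cumulative reward $r(\tau_H)\defeq\sum_{h=1}^H R_h(s_h,a_h)$, which lies in $[0,1]$ by the reward normalization assumption. Then, with $\DTV{\cdot,\cdot}=\frac12\sum|\cdot|$ being the positive-part mass,
\begin{align*}
    V_{\theta^k}(\pi^k)-V_{\ths}(\pi^k)=\sum_{\tau_H}\paren{\PP^{\pi^k}_{\theta^k}(\tau_H)-\PP^{\pi^k}_{\ths}(\tau_H)}r(\tau_H)\leq \DTV{\PP^{\pi^k}_{\theta^k},\PP^{\pi^k}_{\ths}},
\end{align*}
where the inequality keeps only the terms with $\PP^{\pi^k}_{\theta^k}>\PP^{\pi^k}_{\ths}$ and uses $r\leq 1$. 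Summing over $k$, dividing by $K$, and recalling the optimism bound gives exactly the claimed inequality.

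I expect the only genuine obstacle to be the feasibility verification in the optimism step: the constrained maximization could a priori be infeasible or could exclude $(\ths,\pis)$, and it is precisely the combination of separation-under-the-optimal-policy (\cref{def:single-policy-sep}), the decoding guarantee (\cref{prop:latent-MLE}), and the chosen window length $W\geq\om^{-1}(\log(L/\epssep))$ that rules this out. Everything else — the linearity of $V_{\ths}(\hpi)$, the reward normalization, and the TV comparison of bounded functionals — is routine.
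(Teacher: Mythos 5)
Your proposal is correct and follows essentially the same route as the paper's proof: establish feasibility of $(\ths,\pis)$ via \cref{thm:MLE}(a) together with \cref{prop:latent-MLE} and the choice of $W$, invoke optimism to get $V_\star\leq V_{\theta^k}(\pi^k)$, bound $V_{\theta^k}(\pi^k)-V_{\ths}(\pi^k)$ by the total variation distance using the reward normalization, and average over $k$. No gaps.
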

\begin{proof}
Under the given condition on $W$, it holds $\perr_{\ths,W}(\pis)\leq \epssep$ (\cref{prop:latent-MLE}).
By \cref{thm:MLE} (a), we also have $\ths\in\Theta^k$ for each $k\in[K]$. Therefore, by the choice of $(\theta^k,\pi^k)$ in \cref{alg:OMLE}, it holds that $V_\star=V_{\ths}(\pis)\leq V_{\theta^k}(\pi^k)$. Hence,
\begin{align*}
    V_\star-V_{\ths}(\pi^k)
    \leq V_{\theta^k}(\pi^k)-V_{\ths}(\pi^k)
    \leq \DTV{ \PP^{\pi^k}_{\theta^k}, \PP^{\pi^k}_{\ths} },
\end{align*}
where the last inequality follows from the definition of TV distance and the fact that $\sum_{h=1}^H R_h(s_h,a_h)\in[0,1]$ for any trajectory. Taking average over $k\in[K]$ completes the proof.
\end{proof}

\subsection{Proof overview}\label{appdx:upper-overview}

Given \cref{thm:MLE} and \cref{prop:optimism}, upper bounding the sub-optimality of the output $\hpi$ reduces to the following task.
\begin{align*}
    \text{Task: upper bound }\sum_{k=1}^K \DTV{ \PP^{\pi^k}_{\theta^k}, \PP^{\pi^k}_{\ths} }, 
    ~~\text{given that }\forall k\in[K], 
    &~\sum_{t=1}^{k-1} \DH{ \PP^{\modf{\pi^t}}_{\theta^k}, \PP^{\modf{\pi^t}}_{\ths} } \leq 2\beta.
\end{align*}

A typical strategy, used in \citet{liu2022partially,chen2022unified,chen2022partially,liu2023optimistic}, of relating these two terms is three-fold: (1) find a decomposition of the TV distance, i.e. an upper bound of $\DTV{ \PP^{\pi}_{\theta}, \PP^{\pi}_{\ths} }$; (2) show that the decomposition can be upper bounded by the squared Hellinger distance $\DH{ \PP^{\pi}_{\theta}, \PP^{\pi}_{\ths} }$; (3) apply an eluder argument on the decomposition to complete the proof. 

For example, we describe this strategy for the special case of MDPs. %
\begin{example}
Suppose that $\Theta$ is instead a class of MDPs and $\modf{\pi}=\pi$, then we can decompose
\begin{align}\label{eqn:example-decomp-MDP}
    \DTV{ \PP^{\pi}_{\theta}, \PP^{\pi}_{\ths} }
    \leq&~
    \underbrace{ \sum_{h=1}^{H-1} \EE_{\ths}^{\pi} \DTV{ \TT_\theta(\cdot|s_h,a_h),\TT_{\ths}(\cdot|s_h,a_h) } }_{=:G_{\ths}(\pi,\theta)}
    \leq 2H\DTV{ \PP^{\pi}_{\theta}, \PP^{\pi}_{\ths} }.
\end{align}
In tabular case, the decomposition $G_{\ths}(\cdot,\cdot)$ can be written as an inner product over $\R^{\cS\times\cA}$, i.e. $G_{\ths}(\pi,\theta)=\iprod{X(\theta)}{W(\pi)}$ for appropriate embeddings $X(\theta), W(\pi) \in \sR^{\cS \times \cA}$. Then, using the eluder argument for linear functionals (i.e. the ``elliptical potential lemma'', \citet{lattimore2020bandit}), we can prove that under \cref{eqn:OMLE-est-err}, it holds that $\sum_k \DTV{ \PP^{\pi^k}_{\theta^k}, \PP^{\pi^k}_{\ths} }\leq \tO(\sqrt{SA\cdot KH^2\beta})$.  

More generally, beyond the tabular case, we can also apply a coverability argument (see e.g. \citet{xie2022role} and also \Cref{prop:coverage-eluder}) as follows. Suppose that $\rank(\TT_{\ths})\leq d$. We can then invoke \cref{prop:rank-to-cov} to show that $G_{\ths}$ admits the following representation:
\begin{align*}
    G_{\ths}(\pi,\theta)=\EE_{x\sim p(\pi)} f_\theta(x),
\end{align*}
where $p:\Pi\to\Delta(\cS\times\cA)$ is such that there exists $\mu\in\Delta(\cS\times\cA)$, $\linf{p(\pi)/\mu}\leq d\cdot A$ for all $\pi$. Hence, \cref{prop:coverage-eluder} implies that $\sum_k \DTV{ \PP^{\pi^k}_{\theta^k}, \PP^{\pi^k}_{\ths} }\leq \tO(\sqrt{dA\cdot KH^2\beta})$. \exend
\end{example}

\paragraph{Analyzing the separated LMDPs}
In our analysis, we first decompose the TV distance between LMDPs into two parts: 
\begin{align}\label{eqn:decomp-demo}
\begin{aligned}
    \DTV{ \PP^{\pi}_\theta, \PP^{\pi}_{\ths} }
    \leq&~ \DTV{ \PP^{\pi}_\theta(\otau_W=\cdot), \PP^{\pi}_{\ths}(\otau_W=\cdot) } \\
    &~+
    \EE_{\ths}^\pi\brac{ \DTV{ \PP_\theta^{\pi}\paren{ \otau_{W:H}=\cdot  | \otau_W } , \PP_{\ths}^{\pi}\paren{ \otau_{W:H}=\cdot  | \otau_W } } }
\end{aligned}
\end{align}
where the part (a) is the TV distance between the distribution of trajectory up to step $W$, and part (b) is the TV distance between the conditional distribution of the last $H-W+1$ steps trajectory. We analyze part (a) and part (b) separately.

\paragraph{Part (a)} Under the assumption of $\om$-separation under $\piexp$ and $H-W\geq \om^{-1}(\log(2L))$, we can show that a variant of the revealing condition \citep{liu2022partially,chen2022partially,liu2023optimistic} holds (\cref{lem:DB-inv}). Therefore, restricting to dynamics of the first $W$ steps, we can regard $\Theta$ as a class of revealing POMDPs, and then apply the eluder argument developed in \citet{chen2022partially}. More specifically, our analysis of part (a) relies on the following result, which is almost an immediately corollary of the analysis in \citet[Appendix D \& E]{chen2022partially}. 
\begin{theorem}\label{thm:psr}
Suppose that for all $\theta\in\Theta$, $\theta$ is $\om$-separated under $\piexp$, and $H-W\geq \om^{-1}(\log (2L))$. Then conditional on the success event \MLEevent,
\begin{align*}
    \sum_{k=1}^K \DTV{ \PP^{\modp{\pi^k}}_{\theta^k}, \PP^{\modp{\pi^k}}_{\ths} }
    \leqsim \sqrt{LdAH^2\clogK \cdot K\beta },
\end{align*}
where $\clogK=\clogKv$ is a logarithmic factor.
\end{theorem}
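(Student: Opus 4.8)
The plan is to recognize that, once we restrict attention to the first $W$ steps, the policy $\modp{\pi}=\pi\circ_W\piexp$ turns every model $\theta$ into an instance of a revealing PSR, with the trailing exploration policy $\piexp$ serving as the ``test suffix'' that exposes the latent index. First I would invoke the separation hypothesis: since every $\theta\in\Theta$ is $\om$-separated under $\piexp$ and $H-W\geq\om^{-1}(\log(2L))$, for every state $s$ and every pair $m\neq l$ we have
$\DB{\MM_{m,H-W+1}^\theta(\piexp,s),\MM_{l,H-W+1}^\theta(\piexp,s)}\geq\om(H-W+1)\geq\log(2L)$,
the last step by monotonicity of $\om$. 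Applying \cref{lem:DB-inv} to the family $\{\MM_{m,H-W+1}^\theta(\piexp,s)\}_m$ of future-trajectory distributions then yields, for each $\theta$ and each $s$, a left inverse of the latent-to-future emission operator with $\ell_1\to\ell_1$ norm at most $2$. This is exactly the $\alpha$-revealing condition of \citet{chen2022partially} with $\alpha=\tfrac12$, now realized through $\piexp$ instead of a brute-force uniform emission; because the state $s$ is observable, the hidden coordinate is just the latent index $[L]$, so the induced PSR has rank at most $L$.

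With the revealing structure established, I would follow the three-step template of \cref{appdx:upper-overview}: (i) decompose $\DTV{\PP^{\modp{\pi}}_\theta,\PP^{\modp{\pi}}_{\ths}}$ into a sum over the first $W$ steps of local one-step dynamics discrepancies, weighted by the $\ths$-trajectory distribution; (ii) use the left inverse from the previous paragraph to bound each step-$h$ discrepancy by the squared Hellinger distance $\DH{\PP^{\varphi_h(\pi)}_\theta,\PP^{\varphi_h(\pi)}_{\ths}}$ — this is precisely why $\modf{\cdot}$ in \cref{def:policy-mod} bundles the probing policies $\varphi_h(\pi)$, with the uniform action at the probed step contributing an importance-weighting factor $A$; and (iii) sum over $k$ via an eluder-type argument. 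For the last step I would replace the generic $S$-dependent coverage bound by a coverability bound: since $\rank(\TT_{\ths,m})\leq d$ by \cref{def:low-rank}, \cref{prop:rank-to-cov} supplies a reference measure witnessing a coverability coefficient $\Ccov\leqsim dA$, and then \cref{prop:coverage-eluder} converts the confidence-set guarantee $\sum_{t<k}\DH{\PP^{\modf{\pi^t}}_{\theta^k},\PP^{\modf{\pi^t}}_{\ths}}\leq 2\beta$ (from \cref{thm:MLE}, valid on \MLEevent) into the claimed bound $\sqrt{LdAH^2\clogK\cdot K\beta}$, the PSR rank $L$ and horizon factor $H^2$ entering through the step-wise decomposition.

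Since all of these pieces are already assembled in \citet[Appendices D \& E]{chen2022partially}, the bulk of the work is careful bookkeeping: verifying that their PSR-OMLE bound applies after substituting (a) the $\piexp$-induced revealing operator for their emission operator, (b) $L$ for the PSR rank, and (c) the rank-$d$ coverability coefficient for their $SA$ factor. The one genuine point requiring care — and what I expect to be the main obstacle — is reconciling the target policy $\modp{\pi^k}$ with the mixture policy $\modf{\pi^k}$ appearing in the Hellinger guarantee \cref{eqn:OMLE-est-err}: the per-step terms of the decomposition must be matched to the correct probing components $\varphi_h(\pi^k)$ inside $\modf{\pi^k}$, so that the confidence radius $\beta$ controls each summand. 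This is exactly the ``slight difference in policy modification'' flagged before \cref{thm:MLE}, and confirming that it leaves the argument of \citet[Appendix E.1]{chen2022partially} intact is the crux of the proof.
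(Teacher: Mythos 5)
Your setup is right: the separation hypothesis plus \cref{lem:DB-inv} gives a left inverse of the $\piexp$-induced emission operator with $\ell_1$ norm at most $2$ (i.e.\ $\Lamrev\le 2$), the block-diagonal POMDP view makes the first $W$ steps a revealing PSR, and you correctly flag the $\modp{\cdot}$-versus-$\modf{\cdot}$ bookkeeping as a point needing care. But your step (iii) takes a route the paper does not take, and I do not think it closes. The paper's proof of \cref{thm:psr} (via \cref{prop:rev-decomp,prop:rev-eluder}) runs the eluder argument with \cref{prop:semi-linear-eluder}: the per-step error $\cE^{\theta;\otheta}(\tau_{h-1})=\cEb^{\theta;\otheta}(\bq_{\ths}(\tau_{h-1}))$ is a semi-linear functional of the \emph{prediction vector} $\bq_{\ths}(\tau_{h-1})$, and these vectors span a subspace of dimension $D=\rank(\T_{\pomdp{\ths}})\le Ld$ (not $L$ — the PSR rank of the block-diagonal transition is $\sum_m\rank(\TT_{\ths,m})$, so your "rank at most $L$" is off by the factor $d$; your final expression only matches because you reintroduce $d$ through the coverability coefficient).

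The deeper problem is that the coverability route (\cref{prop:rank-to-cov} plus \cref{prop:coverage-eluder}) is not available for the first $W$ steps. Coverability requires the per-step error to be a function of a discrete variable (such as $(m,s_W,s_h)$) whose visitation distributions admit a common dominating measure; in the paper this is only used in part (b) of the analysis (\cref{cor:OMLE-almost-done}), where the prefix $\otau_W$ is long enough that the belief is a near point mass on $m_\theta(\otau_W)$ and the residual is absorbed into the decoding errors $\perrtau{\theta}$. At a step $h<W$ the belief over the latent index is genuinely mixed, so $\cE^{\theta;\otheta}(\tau_{h-1})$ depends on the whole belief through $\bq_{\ths}(\tau_{h-1})$ and cannot be written as $\EE_{x\sim p_{t,h}}f_k(x)$ over a discrete $\cX$; if you instead upper-bound it by convexity with per-latent-index errors $f_k(m,s_h)$, the reconciliation with the confidence set fails, because the Hellinger distances in \cref{eqn:OMLE-est-err} only control the \emph{mixture-level} discrepancy $\cEb(\sum_m \belief[m]\,x_{m,s})^2$, not the component-wise quantity $\sum_m\belief[m]\,\cEb(x_{m,s})^2$ that the coverability potential would accumulate. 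This is precisely the gap the semi-linear eluder argument over the $Ld$-dimensional span of prediction vectors is designed to avoid, so you should replace your step (iii) with \cref{prop:semi-linear-eluder} as in \cref{prop:rev-eluder}.
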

We provide a more detailed discussion of \cref{thm:psr} and a simplified proof in \cref{appdx:psr}. Notice that, although the statement of \cref{thm:psr} bounds the total variation distance between the entire ($H$-step) trajectories $\PP^{\modp{\pi^k}}_{\theta^k}$ and $\PP^{\modp{\pi^k}}_{\ths}$, the policies $\modp{\pi^k}$ act according to the \emph{fixed} policy $\piexp$ on steps $h \geq W$. Thus, \cref{thm:psr} is not establishing that the model $\ths$ is being learned in any meaningful way after step $W$ (indeed, it cannot since we may not have $H-h \geq \om^{-1}(\log(2L))$ for $h > W$). To learn the true model $\ths$ at steps $h \geq W$, we need to analyze part (b) of \cref{eqn:decomp-demo}. 

\paragraph{Part (b)} The main idea for analyzing the steps $h\geq W$ is that, given $\perr_\theta(\pi)$ is small, we can regard
\begin{align}\label{eqn:proof-idea-approx}
    \PP_\theta^{\pi}\paren{ \otau_{W:H}=\cdot  | \otau_W } \approx \MM^\theta_{\mthtau,H-W+1}(\pi(\cdot|\tau_{W-1}),s_W).
\end{align}
In other words, conditional on the first $W$ steps, the dynamics of the trajectory $\otau_{W:H}$ is close to the dynamics of the MDP $M_{\theta,\mthtau}$. Therefore, we can decompose part (b) in a fashion similar to the decomposition \cref{eqn:example-decomp-MDP} for MDP (\cref{prop:err-decomp}), and then apply the eluder argument of \cref{prop:coverage-eluder} (see \cref{cor:OMLE-almost-done}).

\subsection{Structural properties of separated LMDP}

In this section, we formalize the idea described in the part (b) of our proof overview.

For each $h\in[H]$ and trajectory $\otau_h$, we define the belief state of the trajectory $\otau_h$ under model $\theta$ as
\begin{align}\label{eqn:def-belief}
    \belief_\theta(\otau_h)=\brac{ \tPP_\theta(m|\otau_h) }_{m\in[L]} \in\Delta([L]).
\end{align}
Recall the definition of $\MM_{m,h}(\cdot) \in \Delta((\cA \times \cS)^{h-1})$ in \cref{eq:mm-def}. Then, conditional on the trajectory $\otau_W$, the distribution of $\otau_{W:H}=(a_W,\cdots,a_{H-1},s_{H})$ under policy $\pi$ can be written as
\begin{align}\label{eqn:cond-prob-to-belief}
\begin{aligned}
    \PP_\theta^{\pi}\paren{ \otau_{W:H}=\cdot  | \otau_W } 
    =&~
    \EE_{m\sim \belief_\theta(\otau_W)}\brac{ \TT_{\theta,m}^{\pi}\paren{ \otau_{W:H}=\cdot  | \otau_W } } \\
    =&~
    \EE_{m\sim \belief_\theta(\otau_W)}\brac{ \MM^{\theta}_{m,H-W+1}(\pitau{W-1},s_W) }
\end{aligned}
\end{align}
where $\pitau{W-1}=\pi(\cdot|\tau_{W-1})$ is the policy obtained from $\pi$ by conditional on $\otau_W$. %
In particular,
\begin{align}
    \DTV{ \PP_\theta^{\pi}\paren{ \otau_{W:H}=\cdot  | \otau_W } , \MM^\theta_{\mthtau,H-W+1}(\pitau{W-1},s_W) }
    \leq
    \sum_{m\neq \mthtau} \belief_\theta(\otau_W)[m].\label{eq:tvd-theta-tau-ub}
\end{align}
We denote
\begin{align}\label{eqn:def-perr-tau}
    \perrtau{\theta}
    \defeq
    \sum_{m\neq \mthtau} \belief_\theta(\otau_W)[m].
\end{align}
Notice that by the definition of $\belief_\theta(\otau_W)$, 
\begin{align}
    \perrtau{\theta}
    =
    \sum_{m\neq \mthtau} \belief_\theta(\otau_W)[m]
    =
    1-\max_m\belief_\theta(\otau_W)[m]
    =
    \tPP_\theta\paren{ m\neq \mthtau | \otau_W},\label{eq:etheta-tauw}
\end{align}
and hence $\perr_{\theta,W}(\pi)=\EE^\pi_\theta[\perrtau{\theta}]$.

In the following, we denote $\oW\defeq H-W+1$, and we will use the inequality
\begin{align}\label{eqn:cond-to-mdp-tv}
    \DTV{ \PP_\theta^{\pi}\paren{ \otau_{W:H}=\cdot  | \otau_W } , \MM^\theta_{\mthtau,\oW}(\pitau{W-1},s_W) }
    \leq
    \perrtau{\theta},
\end{align}
(which follows from \cref{eq:etheta-tauw,eq:tvd-theta-tau-ub}) and the fact that $\perr_{\theta,W}(\pi)=\EE^\pi_\theta[\perrtau{\theta}]$ repeatedly. This formalizes the idea of \cref{eqn:proof-idea-approx}.
Also notice that $\modp{\pi}=\pi\circ_W \piexp$, and hence we also have
\begin{align}\label{eqn:cond-to-mdp-tv-exp}
    \DTV{ \PP_\theta^{\modp{\pi}}\paren{ \otau_{W:H}=\cdot  | \otau_W } , \MM^\theta_{\mthtau,\oW}(\piexp,s_W) }
    \leq
    \perrtau{\theta}.
\end{align}

The following proposition shows that, as long as the model $\theta$ is close to $\otheta$, there is a correspondence between the maps $m_\theta$ and $m_\otheta$. %
\begin{proposition}\label{prop:map-err}
Suppose that $\theta$ and $\otheta$ are $\om$-separated under $\piexp$ and $\oW=H-W+1\geq \om^{-1}(1)$. Then there exists a map $\sigma=\sigma_{\theta;\otheta}:[L]\times\cS\to[L]$ such that for any $(W-1)$-step policy $\pi$, %
\begin{align}
    \PP_{\otheta}^{\pi}\paren{ \mthtau\neq \sigma(\mothtau,s_W) } 
    \leq&~ 288\DH{ \PP^{\modp{\pi}}_\theta, \PP^{\modp{\pi}}_{\otheta} } + 144\perr_{\theta,W}(\pi)+144\perr_{\otheta,W}(\pi), \label{eqn:map-err-hell}
\end{align}
where $\modp{\pi}=\pi\circ_W \piexp$ is defined in \cref{def:policy-mod}. %
\end{proposition}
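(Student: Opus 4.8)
The plan is to construct $\sigma$ as a nearest‑neighbor map between the latent indices of the two models, measured by the future‑trajectory distributions under $\piexp$, and then to argue that a decoding disagreement forces two such distributions to be far apart in Hellinger distance, an event that the closeness of $\PP^{\modp{\pi}}_\theta$ and $\PP^{\modp{\pi}}_\otheta$ makes rare. Concretely, for $l\in\supp(\rho_\otheta)$ and $s\in\cS$ I would set
\[
\sigma(l,s):=\argmin_{m\in\supp(\rho_\theta)}\ \dH\paren{\MM^\theta_{m,\oW}(\piexp,s),\ \MM^\otheta_{l,\oW}(\piexp,s)},
\]
which depends only on $\theta,\otheta$ (and the fixed $\piexp,\oW$), and not on $\pi$ or $\otau_W$, as the statement requires.

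First I would exploit the separation of $\theta$. On the bad event $B=\set{\mthtau\neq\sigma(\mothtau,s_W)}$, the indices $\mthtau$ and $\sigma(\mothtau,s_W)$ are \emph{distinct} elements of $\supp(\rho_\theta)$, so \cref{def:sep-weak} applied at $h=\oW\geq\om^{-1}(1)$ together with the identity $\DH{\cdot}=1-\exp(-\DB{\cdot})$ from \cref{lem:TV-Hellinger} yields $\dH\paren{\MMth{\piexp},\,\MM^\theta_{\sigma(\mothtau,s_W),\oW}(\piexp,s_W)}\geq\sqrt{1-e^{-1}}$. Combining the triangle inequality for the metric $\dH$ with the minimizing property of $\sigma$ — which bounds $\dH\paren{\MMoth{\piexp},\,\MM^\theta_{\sigma(\mothtau,s_W),\oW}(\piexp,s_W)}$ by $\dH\paren{\MMoth{\piexp},\MMth{\piexp}}$ — collapses this on $B$ to $\dH\paren{\MMth{\piexp},\MMoth{\piexp}}\geq\tfrac12\sqrt{1-e^{-1}}$.

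Next I would pass from the MDP‑instance distributions to the conditional future distributions $P_\theta(\otau_W):=\PP_\theta^{\modp{\pi}}(\otau_{W:H}=\cdot\mid\otau_W)$ and $P_\otheta(\otau_W)$. By \cref{eqn:cond-to-mdp-tv-exp} and $\DH{\cdot}\le\DTV{\cdot}$ we have $\dH\paren{\MMth{\piexp},P_\theta(\otau_W)}^2\le\perrtau{\theta}$ and the analogue for $\otheta$, so a three‑term triangle inequality followed by squaring gives, on $B$, the bound $\perrtau{\theta}+\DH{P_\theta(\otau_W),P_\otheta(\otau_W)}+\perrtau{\otheta}\geq c_1$ with $c_1=\tfrac{1-e^{-1}}{12}$. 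I would then take $\EE^{\modp{\pi}}_\otheta$ over $\otau_W$ and apply three facts: (i) $\EE_\otheta[\perrtau{\otheta}]=\perr_{\otheta,W}(\pi)$ from \cref{eq:etheta-tauw}; (ii) the conditional‑Hellinger inequality \cref{lemma:Hellinger-cond}, giving $\EE_\otheta\brac{\DH{P_\theta,P_\otheta}}\le 2\DH{\PP^{\modp{\pi}}_\theta,\PP^{\modp{\pi}}_\otheta}$; and (iii) the multiplicative‑Hellinger transfer \cref{lemma:multiplicative-hellinger} (with $R=1$, using data processing on the $\otau_W$‑marginal), giving $\EE_\otheta\brac{\perrtau{\theta}}\le 3\perr_{\theta,W}(\pi)+2\DH{\PP^{\modp{\pi}}_\theta,\PP^{\modp{\pi}}_\otheta}$. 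Together these produce $\PP_\otheta^{\pi}(B)\le c_1^{-1}\paren{4\DH{\PP^{\modp{\pi}}_\theta,\PP^{\modp{\pi}}_\otheta}+3\perr_{\theta,W}(\pi)+\perr_{\otheta,W}(\pi)}$, whose constants are comfortably dominated by the $288,144,144$ appearing in \cref{eqn:map-err-hell}.

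The main obstacle is step (iii): the quantity $\perrtau{\theta}$ is a functional of the \emph{wrong} model's decoder that must nonetheless be integrated against the \emph{other} model's law $\PP_\otheta$, so it has to be transported from $\PP_\otheta$ back to $\PP_\theta$; this is precisely where the multiplicative Hellinger inequality, and the factor $3$ it introduces, is needed. A secondary point to keep track of is that the construction only ever uses an $\Omega(1)$ separation gap (since $\om(\oW)\ge 1$); this constant gap is exactly what guarantees that $\sigma$ is well defined as a nearest‑neighbor map and that no two distinct $\theta$‑indices can simultaneously be close to $\MMoth{\piexp}$, so that the reduction from the decoding disagreement to a lower bound on $\dH\paren{\MMth{\piexp},\MMoth{\piexp}}$ is valid.
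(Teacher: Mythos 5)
Your proposal is correct and follows essentially the same route as the paper's proof: a nearest-neighbor definition of $\sigma$, an $\Omega(1)$ gap between distinct $\theta$-indices from $\om(\oW)\geq 1$, a triangle inequality through the conditional future distributions via \cref{eqn:cond-to-mdp-tv-exp}, and the conditional- and multiplicative-Hellinger lemmas to transfer everything to $\DH{\PP^{\modp{\pi}}_\theta,\PP^{\modp{\pi}}_{\otheta}}$ and the decoding errors. The only (cosmetic) differences are that the paper defines $\sigma$ via TV rather than Hellinger distance and integrates the squared distance before restricting to the bad event, whereas you restrict to the bad event pointwise first; your constants come out smaller and are indeed dominated by $288,144,144$.
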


\begin{proof}
In the following proof, we abbreviate $\eps=\DH{ \PP^{\modp{\pi}}_\theta, \PP^{\modp{\pi}}_{\otheta} }$.
By \cref{lemma:Hellinger-cond}, %
\begin{align}\label{eqn:map-proof-1}
    \EE_{\otheta}^\pi\brac{ \DTVt{ \PP_\theta^{\modp{\pi}}\paren{ \otau_{W:H}=\cdot  | \otau_W } , \PP_{\otheta}^{\modp{\pi}}\paren{ \otau_{W:H}=\cdot  | \otau_W } } } \leq 4\eps.
\end{align}
Using \cref{eqn:cond-to-mdp-tv-exp} and the triangle inequality of TV distance, we have
\begin{align*}
    &~ \DTV{ \MMth{ \piexp } , \MMoth{ \piexp } } \\
    \leq&~ \DTV{ \PP_\theta^{\modp{\pi}}\paren{ \otau_{W:H}=\cdot  | \otau_W } , \PP_{\otheta}^{\modp{\pi}}\paren{ \otau_{W:H}=\cdot  | \otau_W } }
    +\perrtau{ \theta } + \perrtau{ \otheta },
\end{align*}
and hence
\begin{align}\label{eqn:map-proof-2}
\begin{aligned}
    &~ \EE_{\otheta}^\pi\brac{ \DTVt{ \MMth{ \piexp } , \MMoth{ \piexp } }  } \\
    \leq&~
    3\EE_{\otheta}^\pi\brac{ \DTVt{ \PP_\theta^{\modp{\pi}}\paren{ \otau_{W:H}=\cdot  | \otau_W } , \PP_{\otheta}^{\modp{\pi}}\paren{ \otau_{W:H}=\cdot  | \otau_W } } }
    +3\EE_{\otheta}^\pi\brac{ \perrtau{ \theta }} 
    +3\EE_{\otheta}^\pi\brac{ \perrtau{ \otheta } }.
\end{aligned}
\end{align}
By definition, we know $\EE_{\otheta}^{\pi}\brac{ \perrtau{ \otheta } } = \perr_{\otheta,W}(\pi)$, and by \cref{lemma:multiplicative-hellinger}, we also have
\begin{align}\label{eqn:map-proof-3}
\begin{aligned}
    \EE_{\otheta}^{\pi}\brac{ \perrtau{ \theta } } 
    \leq&~
    3\EE_{\theta}^{\pi}\brac{ \perrtau{ \theta } } + 2\DH{ \PP^{\pi}_\theta(\otau_W=\cdot), \PP^{\pi}_{\otheta}(\otau_W=\cdot) } \\
    =&~
    3\perr_{\theta,W}(\pi)+2\DH{ \PP^{\pi}_\theta(\otau_W=\cdot), \PP^{\pi}_{\otheta}(\otau_W=\cdot) }.
\end{aligned}
\end{align}
Plugging the inequalities \cref{eqn:map-proof-1} and \cref{eqn:map-proof-3} into \cref{eqn:map-proof-2}, we have
\begin{align*}
    \EE_{\otheta}^\pi\brac{ \DTVt{ \MMth{ \piexp } , \MMoth{ \piexp } } } \leq 18\eps+9\perr_{\theta,W}(\pi)+9\perr_{\otheta,W}(\pi)
    =:\eps'.
\end{align*}
In other words, it holds that %
\begin{align}\label{eqn:map-proof-4}
    \sum_{l,\ol,s} \PP_{\otheta}^{\pi}\paren{ s_W=s, \mthtau=l, \mothtau=\ol } \cdot \DTVt{ \MMop{\theta}{l}{s}{\oW}{\piexp}, \MMop{\otheta}{\ol}{s}{\oW}{\piexp} }\leq \eps'.
\end{align}

Notice that $\oW\geq \om^{-1}(1)$. Thus, using \cref{eqn:TV-DB}, for any $m,l\in\supp(\rho_\theta)$ such that $m\neq l$, we have
\begin{align*}
    \DTV{ \MMop{\theta}{l}{s}{\oW}{\piexp}, \MMop{\theta}{m}{s}{\oW}{\piexp} } \geq \frac{1}{2}.
\end{align*}
Hence, we choose $\sigma=\sigma_{\theta;\otheta}$ as
\begin{align}\label{eqn:def-sigma}
    \sigma_{\theta;\otheta}(\ol,s) \in \argmin_{l\in\supp(\rho_\theta)} \DTV{ \MMop{\theta}{l}{s}{\oW}{\piexp}, \MMop{\otheta}{\ol}{s}{\oW}{\piexp} }.
\end{align}
Then for any $l\in\supp(\rho_\theta)$ such that $l\neq \sigma(\ol,s)$, it holds that
\begin{align*}
    &~ 2\DTV{ \MMop{\theta}{l}{s}{\oW}{\piexp}, \MMop{\otheta}{\ol}{s}{\oW}{\piexp} } \\
    \geq&~ 
    \DTV{ \MMop{\theta}{l}{s}{\oW}{\piexp}, \MMop{\otheta}{\ol}{s}{\oW}{\piexp} }
    +\DTV{ \MMop{\theta}{\sigma(\ol,s)}{s}{\oW}{\piexp}, \MMop{\otheta}{\ol}{s}{\oW}{\piexp} } \\
    \geq&~
    \DTV{ \MMop{\theta}{l}{s}{\oW}{\piexp}, \MMop{\theta}{\sigma(\ol,s)}{s}{\oW}{\piexp} }
    \geq \frac{1}{2},
\end{align*}
and hence $\DTV{ \MMop{\theta}{l}{s}{\oW}{\piexp}, \MMop{\otheta}{\ol}{s}{\oW}{\piexp} } \geq \frac14$. Therefore,
\begin{align*}
    \eps'
    \geq &~
     \sum_{l,\ol,s} \PP_{\otheta}^{\pi}\paren{ s_W=s, \mthtau=l, \mothtau=\ol } \cdot \DTVt{ \MMop{\theta}{l}{s}{\oW}{\piexp}, \MMop{\otheta}{\ol}{s}{\oW}{\piexp} } \\
    \geq&~
    \sum_{\ol,s} \sum_{l\neq \sigma(\ol,s)} \PP_{\otheta}^{\pi}\paren{ s_W=s, \mthtau=l, \mothtau=\ol } \cdot \frac{1}{16} \\
    =&~ 
    \frac{1}{16}\cdot \PP_{\otheta}^{\pi}\paren{ \mthtau\neq \sigma(\mothtau,s_W) }.
\end{align*}
The proof is hence completed.
\end{proof}

\begin{proposition}[Performance decomposition]\label{prop:err-decomp}
Given LMDP model $\theta$ and reference LMDP $\otheta$, for any trajectory $\otau_h$ with step $W\leq h<H$, we define
\begin{align}\label{eqn:def-err}
    \cE^{\theta;\otheta}(\otau_h) = \max_{a\in\cA}\DTV{ \TT_{\sigma(\mothtau,s_W)}^\theta(\cdot|s_h,a), \TT_{\mothtau}^{\otheta}(\cdot|s_h,a) },
\end{align}
where $\sigma=\sigma_{\theta;\otheta}:[L]\times\cS\to[L]$ is the function defined in \cref{eqn:def-sigma}. Then it holds that
\begin{align}\label{eqn:tv-to-err-decomp}
    \!\!\DTV{ \PP^{\pi}_\theta, \PP^{\pi}_{\otheta} }
    \leq 300\DTV{ \PP^{\modp{\pi}}_\theta, \PP^{\modp{\pi}}_{\otheta} } + 150\perr_{\theta,W}(\pi)+ 150\perr_{\otheta,W}(\pi) + \sum_{h=W}^{H-1} \EE_{\otheta}^{\pi} \cE^{\theta;\otheta}(\otau_h).
\end{align}
Conversely, for any step $W\leq h<H$, %
\begin{align}\label{eqn:decomp-to-Hell}
\begin{aligned}
    \EE_{\otheta}^{\pi} \cE^{\theta;\otheta}(\otau_h)^2 
    \leq&~
    18A\DH{ \PP^{\varphi_h(\pi)}_\theta, \PP^{\varphi_h(\pi)}_{\otheta} }+300\DH{ \PP^{\modp{\pi}}_\theta, \PP^{\modp{\pi}}_{\otheta} } \\
    &~+ 200\perr_{\theta,W}(\pi)+200\perr_{\otheta,W}(\pi).
\end{aligned}
\end{align}
\end{proposition}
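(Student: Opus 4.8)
I would prove both directions by cutting each trajectory at step $W$ and treating the prefix $\otau_W$ and the continuation $\otau_{W:H}$ separately; write $\sigma=\sigma_{\theta;\otheta}$ and $\oW=H-W+1$. For the forward bound \cref{eqn:tv-to-err-decomp}, I apply the conditioning sub-additivity of TV (the converse half of \cref{lemma:TV-cond}) with the pair ordered so that the outer expectation falls on $\otheta$:
\[
\DTV{\PP^\pi_\theta,\PP^\pi_\otheta}\le \DTV{\PP^\pi_\theta(\otau_W=\cdot),\PP^\pi_\otheta(\otau_W=\cdot)}+\EE_\otheta^\pi\!\brac{\DTV{\PP^\pi_\theta(\otau_{W:H}\mid\otau_W),\PP^\pi_\otheta(\otau_{W:H}\mid\otau_W)}}.
\]
Since $\pi$ and $\modp{\pi}=\pi\circ_W\piexp$ agree on the first $W-1$ steps they induce the same law of $\otau_W$, so by data processing the prefix term is at most $\DTV{\PP^{\modp{\pi}}_\theta,\PP^{\modp{\pi}}_\otheta}$, which feeds the $300\,\DTV{\cdots}$ term.

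\textbf{The continuation term.} For each fixed $\otau_W$ I would invoke \cref{eqn:cond-to-mdp-tv} for both models to replace the two true conditional laws by the single maximum-likelihood MDP instances $\MMoth{\pitau{W-1}}$ and $\MM^{\theta}_{\mthtau,\oW}(\pitau{W-1},s_W)$, at a cost of $\perrtau{\theta}+\perrtau{\otheta}$; replace $\mthtau$ by $\sigma(\mothtau,s_W)$ at the cost of $\indic{\mthtau\ne\sigma(\mothtau,s_W)}$, whose expectation is controlled by \cref{prop:map-err}; and apply the MDP chain-rule (simulation) decomposition to the remaining MDP-versus-MDP distance, where the two kernels at $(s_h,a_h)$ differ by at most $\cE^{\theta;\otheta}(\otau_h)$ (\cref{eqn:def-err}). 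Taking outer expectations, the indicator term becomes $288\,\DH{\PP^{\modp{\pi}}_\theta,\PP^{\modp{\pi}}_\otheta}+\cdots$, while $\EE_\otheta^\pi[\perrtau{\theta}]$ is converted to $3\perr_{\theta,W}(\pi)+2\DH{\PP^{\modp{\pi}}_\theta,\PP^{\modp{\pi}}_\otheta}$ by \cref{lemma:multiplicative-hellinger}; all Hellinger terms are then bounded by TV via $\DH{\cdot}\le\DTV{\cdot}$ (\cref{lem:TV-Hellinger}).

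\textbf{Main obstacle.} The simulation lemma produces the $\cE$-terms in expectation under the \emph{proxy} kernel $\TT_{\otheta,\mothtau}$, whereas \cref{eqn:tv-to-err-decomp} needs them under the true law $\EE_\otheta^\pi$. A term-by-term change of measure would cost $\perrtau{\otheta}$ at each of the $H-W$ steps and sum to $(H-W)\perr_{\otheta,W}(\pi)$, ruining the absolute constant in front of the $\perr$-terms; this is the step I expect to be hardest. The fix is a \emph{multiplicative} change of measure on the whole sum: with $G=\sum_{h=W}^{H-1}\cE^{\theta;\otheta}(\otau_h)$ the true conditional law is the belief-mixture $(1-\perrtau{\otheta})\,\MMoth{\pitau{W-1}}+\perrtau{\otheta}(\cdots)$, so $\EE_{\MMoth{\pitau{W-1}}}[G]\le \EE_\otheta^\pi[G\mid\otau_W]+\perrtau{\otheta}\,\EE_{\MMoth{\pitau{W-1}}}[G]$; on $\{\perrtau{\otheta}\le 1/2\}$ this rearranges to an absolute-constant multiple of $\EE_\otheta^\pi[G\mid\otau_W]$, while $\{\perrtau{\otheta}>1/2\}$ has probability at most $2\perr_{\otheta,W}(\pi)$ by Markov and is bounded trivially by $1$. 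Thus the decoding error enters only as a single $\perr_{\otheta,W}(\pi)$ rather than $H$ copies, and collecting constants yields \cref{eqn:tv-to-err-decomp}.

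\textbf{Converse direction.} Here $h$ is fixed, so no factor $H$ can arise and I can afford to lose absolute constants. Let $a^\star(\otau_h)$ attain the maximum in \cref{eqn:def-err}, so $\cE^{\theta;\otheta}(\otau_h)^2=\DTVt{\TT^\theta_{\sigma(\mothtau,s_W)}(\cdot\mid s_h,a^\star),\TT^\otheta_{\mothtau}(\cdot\mid s_h,a^\star)}$. I would expose this single-step kernel gap through the exploration policy $\varphi_h(\pi)$ of \cref{def:policy-mod}, which plays $\unif(\cA)$ at step $h$: importance-weighting the uniform action replaces the specific $a^\star$ by an $A$-averaged quantity, which is the source of the factor $18A$. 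Passing from the one-step conditional squared Hellinger to the trajectory squared Hellinger $\DH{\PP^{\varphi_h(\pi)}_\theta,\PP^{\varphi_h(\pi)}_\otheta}$ is handled by \cref{lemma:Hellinger-cond}, and replacing the observable belief-mixture next-state laws by the specific instances $\TT^\theta_{\sigma(\mothtau,s_W)}$ and $\TT^\otheta_{\mothtau}$ uses \cref{eqn:cond-to-mdp-tv}, the definition \cref{eqn:def-sigma} of $\sigma$, and \cref{prop:map-err}. Because everything is localized at step $h$, these corrections contribute only the single terms $300\,\DH{\PP^{\modp{\pi}}_\theta,\PP^{\modp{\pi}}_\otheta}+200\perr_{\theta,W}(\pi)+200\perr_{\otheta,W}(\pi)$, giving \cref{eqn:decomp-to-Hell}.
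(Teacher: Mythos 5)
Your proposal is correct and follows essentially the same route as the paper's proof: cut the trajectory at step $W$ via \cref{lemma:TV-cond}, pass from the conditional laws to the maximum-likelihood MDP instances via \cref{eqn:cond-to-mdp-tv} at cost $\perrtau{\theta}+\perrtau{\otheta}$, control the relabeling event $\set{\mthtau\neq\sigma(\mothtau,s_W)}$ with \cref{prop:map-err}, telescope the remaining MDP-versus-MDP distance into per-step kernel gaps, and (for the converse) expose the step-$h$ kernel gap through the uniform action in $\varphi_h(\pi)$ together with \cref{lemma:Hellinger-cond} and \cref{lemma:multiplicative-hellinger}. The one place you genuinely diverge is precisely the step you flag as the main obstacle: the paper writes the simulation-lemma decomposition of $\DTV{\MMth{\pitau{W-1}},\MMoth{\pitau{W-1}}}$ directly with the per-step expectations taken under the conditional LMDP law $\PP^{\pi}_{\otheta}(\cdot\mid\otau_W)$, i.e.\ it does not separately account for the change of measure from the single-instance law $\MMoth{\pitau{W-1}}$ to the belief mixture, whereas your multiplicative argument (rescale by $(1-\perrtau{\otheta})^{-1}\leq 2$ on $\set{\perrtau{\otheta}\leq 1/2}$, and on the complement --- which has probability at most $2\perr_{\otheta,W}(\pi)$ by Markov --- bound the whole TV distance by $1$ before telescoping) makes this explicit and correctly avoids paying $H-W$ copies of the decoding error. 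The only cost is that your version of \cref{eqn:tv-to-err-decomp} carries an absolute constant (roughly $2$) in front of $\sum_{h}\EE_{\otheta}^{\pi}\cE^{\theta;\otheta}(\otau_h)$ rather than $1$; this is harmless, since the proposition is only ever invoked up to absolute constants in \cref{cor:OMLE-almost-done}.
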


\begin{proof}
We first prove \cref{eqn:tv-to-err-decomp}. Notice that, by \cref{lemma:TV-cond}, 
\begin{align}\label{eqn:decomp-proof-0}
\begin{aligned}
    \DTV{ \PP^{\pi}_\theta, \PP^{\pi}_{\otheta} }
    \leq&~ \DTV{ \PP^{\pi}_\theta(\otau_W=\cdot), \PP^{\pi}_{\otheta}(\otau_W=\cdot) } \\
    &~+
    \EE_{\otheta}^\pi\brac{ \DTV{ \PP_\theta^{\pi}\paren{ \otau_{W:H}=\cdot  | \otau_W } , \PP_{\otheta}^{\pi}\paren{ \otau_{W:H}=\cdot  | \otau_W } } }.
\end{aligned}
\end{align}

Using \cref{eqn:cond-to-mdp-tv} and the triangle inequality of TV distance, we have %
\begin{align*}
    &~\DTV{ \PP_\theta^{\pi}\paren{ \otau_{W:H}=\cdot  | \otau_W } , \PP_{\otheta}^{\pi}\paren{ \otau_{W:H}=\cdot  | \otau_W } } \\
    \leq &~ \DTV{ \MMth{ \pitau{W-1} } , \MMoth{ \pitau{W-1} } } 
    + \perrtau{ \theta } + \perrtau{ \otheta },
\end{align*}
and taking expectation over $\otau_W\sim \PP_\otheta^\pi$, we obtain
\begin{align}\label{eqn:decomp-proof-1}
\begin{aligned}
    &~\EE_{\otheta}^\pi\brac{ \DTV{ \PP_\theta^{\pi}\paren{ \otau_{W:H}=\cdot  | \otau_W } , \PP_{\otheta}^{\pi}\paren{ \otau_{W:H}=\cdot  | \otau_W } } } \\
    \leq&~
    \EE_{\otheta}^\pi\brac{ \DTV{ \MMth{ \pitau{W-1} } , \MMoth{ \pitau{W-1} }  } }
    +\EE_{\otheta}^\pi\brac{ \perrtau{ \theta }} 
    +\EE_{\otheta}^\pi\brac{ \perrtau{ \otheta } }.
\end{aligned}
\end{align}
For the last two term in the RHS of \cref{eqn:decomp-proof-1}, we have $\EE_{\otheta}^{\pi}\brac{ \perrtau{ \otheta } } = \perr_{\otheta,W}(\pi)$ and
\begin{align}\label{eqn:decomp-proof-2}
    \EE_{\otheta}^{\pi}\brac{ \perrtau{ \theta } } 
    \leq
    \EE_{\theta}^{\pi}\brac{ \perrtau{ \theta } } + \DTV{ \PP^{\pi}_\theta(\otau_W=\cdot), \PP^{\pi}_{\otheta}(\otau_W=\cdot) }.
\end{align}
To bound the first term in the RHS of \cref{eqn:decomp-proof-1}, we consider the event $E_{\theta;\otheta}\defeq \set{ \mthtau= \sigma(\mothtau,s_W) }$. Under event $E_{\theta;\otheta}$, by \cref{lemma:TV-cond} we have
\begin{align*}
    &~ \DTV{ \MMth{ \pitau{W-1} } , \MMoth{ \pitau{W-1} }  } \\
    \leq&~
    \sum_{h=W}^{H-1} \EE \brcond{ \DTV{ \TT_{\mthtau}^\theta(\cdot|s_h,a_h), \TT_{\mothtau}^\otheta(\cdot|s_h,a_h) } }{\tau_h\sim \PP^{\pi}_{\otheta}(\cdot|\otau_W)} \\
    \leq&~
    \sum_{h=W}^{H-1} \EE \brcond{ \max_a \DTV{ \TT_{\mthtau}^\theta(\cdot|s_h,a), \TT_{\mothtau}^\otheta(\cdot|s_h,a) } }{\otau_h\sim \PP^{\pi}_{\otheta}(\cdot|\otau_W)} \\
    \stackrel{E_{\theta;\otheta}}{=}&~
    \sum_{h=W}^{H-1} \EE \brcond{ \cE^{\theta;\otheta}(\otau_{h}) }{\tau_h\sim \PP^{\pi}_{\otheta}(\cdot|\otau_W)} 
    =
    \sum_{h=W}^{H-1} \EE_\otheta^\pi \brcond{ \cE^{\theta;\otheta}(\otau_{h}) }{\otau_W} .
\end{align*}
Taking expectation over $\otau_W\sim\PP_\otheta^\pi$, it holds
\begin{align}\label{eqn:decomp-proof-3}
\EE_{\otheta}^\pi\brac{ \DTV{ \MMth{ \pitau{W-1} } , \MMoth{ \pitau{W-1} }  } }
\leq \PP(E_{\theta;\otheta}^c)+\sum_{h=W}^{H-1} \EE_{\otheta}^{\pi} \cE^{\theta;\otheta}(\otau_h).
\end{align}
Combining \cref{eqn:decomp-proof-1} with \cref{eqn:decomp-proof-2},  \cref{eqn:decomp-proof-3} and \cref{eqn:map-err-hell} (\cref{prop:map-err}), the proof of \cref{eqn:tv-to-err-decomp} is completed.

We proceed similarly to prove \cref{eqn:decomp-to-Hell}. Notice that for any trajectory $\tau_h$,
\begin{align*}
    \PP_{\theta}(s_{h+1}=\cdot|\tau_h)=\EE_{m\sim \belief_\theta(\otau_h)}\brac{ \TT_{\theta,m}(\cdot|s_h,a_h) }.
\end{align*}
Therefore,
\begin{align*}
    \DTV{ \PP_{\theta}(s_{h+1}=\cdot|\tau_h), \TT_{\mthtau}^\theta(\cdot|s_h,a_h) }
    \leq \sum_{m\neq m_\theta(\otau_h) } \belief_\theta(\otau_h)[m]
    = \perrtauh{\theta},
\end{align*}
and hence
\begin{align*}
    \DTV{ \TT_{\mthtau}^\theta(\cdot|s_h,a_h), \TT_{\mothtau}^\otheta(\cdot|s_h,a_h) }
    \leq&~ \DTV{ \PP_{\theta}(s_{h+1}=\cdot|\tau_h), \PP_{\otheta}(s_{h+1}=\cdot|\tau_h) } \\
    &~+\perrtauh{\theta} + \perrtauh{\otheta}.
\end{align*}
In particular, given $h \geq W$, for any trajectory $\otau_h$ whose prefix $\otau_W$ satisfies $\otau_W\in E_{\theta;\otheta}$, we have
\begin{align*}
    \cE^{\theta;\otheta}(\otau_h) 
    \leq \max_{a}\DTV{ \PP_{\theta}(s_{h+1}=\cdot|\otau_h,a), \PP_{\otheta}(s_{h+1}=\cdot|\otau_h,a) } 
    + \perrtauh{\theta} + \perrtauh{\otheta}.
\end{align*}
Thus,
\begin{align*}
    \indic{E_{\theta;\otheta}}\cE^{\theta;\otheta}(\tau_h)^2
    \leq 3\max_{a}\DTVt{ \PP_{\theta}(s_{h+1}=\cdot|\otau_h,a), \PP_{\otheta}(s_{h+1}=\cdot|\otau_h,a) } 
    + 3\perrtauh{\theta} + 3\perrtauh{\otheta}. 
\end{align*}
Taking expectation over $\tau_h\sim \PP_\otheta^\pi$, we have
\begin{align*}
    \EE_{\otheta}^{\pi} \cE^{\theta;\otheta}(\tau_h)^2 
    \leq&~ \PP_\otheta^\pi\paren{E_{\theta;\otheta}^c} 
    + 3\EE_{\otheta}^{\pi}\brac{ \max_{a}\DTVt{ \PP_{\theta}(s_{h+1}=\cdot|\otau_h,a), \PP_{\otheta}(s_{h+1}=\cdot|\otau_h,a) } } \\
    &~+ 3\EE_{\otheta}^{\pi}\brac{ \perrtauh{\theta} }
    + 3\EE_{\otheta}^{\pi}\brac{ \perrtauh{\otheta} } .
\end{align*}
Notice that
\begin{align*}
    &~\EE_{\otheta}^{\pi}\brac{ \max_{a}\DTVt{ \PP_{\theta}(s_{h+1}=\cdot|\otau_h,a), \PP_{\otheta}(s_{h+1}=\cdot|\otau_h,a) } } \\
    \leq&~
    \EE_{\otheta}^{\pi}\brac{ \sum_{a}\DTVt{ \PP_{\theta}(s_{h+1}=\cdot|\otau_h,a), \PP_{\otheta}(s_{h+1}=\cdot|\otau_h,a) } } \\
    \leq&~ 
    2\EE_{\otheta}^{\pi}\brac{ \sum_{a}\DH{ \PP_{\theta}(s_{h+1}=\cdot|\otau_h,a), \PP_{\otheta}(s_{h+1}=\cdot|\otau_h,a) } } \\
    =&~
    2\EE_{\otheta}^{\pi}\brac{ A\cdot\DH{ \PP_{\theta}(s_{h+1}=\cdot|\otau_h,a_h\sim\unif(\cA)), \PP_{\otheta}(s_{h+1}=\cdot|\otau_h,a_h\sim\unif(\cA)) } } \\
    \leq&~ 4A\DH{ \PP^{\pi\circ_h\unif(\cA)}_\theta(\otau_{h+1}=\cdot), \PP^{\pi\circ_h\unif(\cA)}_{\otheta}(\otau_{h+1}=\cdot) } \\
    \leq&~ 4A\DH{ \PP^{\varphi_h(\pi)}_\theta, \PP^{\varphi_h(\pi)}_{\otheta} },
\end{align*}
where the third inequality follows from \cref{lemma:Hellinger-cond}.
By definition, we know $\EE_{\otheta}^{\pi}\brac{ \perrtauh{\otheta} }= \perr_{\otheta,h}(\pi) \leq \perr_{\theta,W}(\pi)$ (\cref{lem:perr-monotone}), and using \cref{lemma:multiplicative-hellinger}, we also have
\begin{align*}
    \EE_{\otheta}^{\pi}\brac{ \perrtauh{\theta} } 
    \leq&~
    3\EE_{\theta}^{\pi}\brac{ \perrtauh{\theta} } + 2\DH{ \PP^{\pi}_\theta(\otau_h=\cdot), \PP^{\pi}_{\otheta}(\otau_h=\cdot) } \\
    \leq&~ 
    3\perr_{\theta,W}(\pi)+2\DH{ \PP^{\varphi_h(\pi)}_\theta, \PP^{\varphi_h(\pi)}_{\otheta} }.
\end{align*}
Combining the inequalities above with \cref{eqn:map-err-hell} completes the proof.
\end{proof}

\begin{lemma}\label{lem:perr-monotone}
For $h\geq W$, it holds that $\perr_{\theta,h}(\pi)\leq \perr_{\theta,W}(\pi)$.
\end{lemma}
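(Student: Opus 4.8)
The plan is to recognize $\perr_{\theta,h}(\pi)$ as one minus the expected maximal posterior mass, and then to exploit the martingale structure of Bayesian posteriors. For any $h$ the definition in \cref{eqn:def-perr} reads $\perr_{\theta,h}(\pi)=\tPP_\theta^\pi(m_\theta(\otau_h)\neq\ms)$, and since $m_\theta(\otau_h)=\argmax_{m}\tPP_\theta(m\mid\otau_h)$ is the MAP estimate, conditioning on $\otau_h$ and summing over $\ms$ (exactly the computation that yields \cref{eq:etheta-tauw}) gives
\begin{align*}
    \perr_{\theta,h}(\pi)=\EE_{\otau_h\sim\PP_\theta^\pi}\brac{1-\max_m\belief_\theta(\otau_h)[m]}=1-\EE_{\otau_h\sim\PP_\theta^\pi}\brac{\max_m\tPP_\theta(m\mid\otau_h)}.
\end{align*}
Thus it suffices to show that $\Phi(h):=\EE_{\otau_h\sim\PP_\theta^\pi}[\max_m\tPP_\theta(m\mid\otau_h)]$ is nondecreasing in $h$, and then to apply this with the two indices $W\le h$.

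The key step is to establish that, for each fixed latent index $m$, the sequence $(\tPP_\theta(m\mid\otau_h))_h$ is a martingale under $\PP_\theta^\pi$ with respect to the filtration generated by $\otau_h$. I would verify the one-step identity $\EE_{\PP_\theta^\pi}[\tPP_\theta(m\mid\otau_{h+1})\mid\otau_h]=\tPP_\theta(m\mid\otau_h)$ directly: writing $\otau_{h+1}=(\otau_h,a_h,s_{h+1})$, the posterior $\tPP_\theta(m\mid\otau_{h+1})=\tPP_\theta^\pi(m,\otau_{h+1})/\PP_\theta^\pi(\otau_{h+1})$ is policy-independent because the factor $\pi(a_h\mid\otau_h)$ appears identically in numerator and denominator and cancels. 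Multiplying by the increment law $\PP_\theta^\pi(a_h,s_{h+1}\mid\otau_h)=\PP_\theta^\pi(\otau_{h+1})/\PP_\theta^\pi(\otau_h)$ collapses the denominator, and summing the numerator $\tPP_\theta^\pi(m,\otau_{h+1})$ over $(a_h,s_{h+1})$ marginalizes back to $\tPP_\theta^\pi(m,\otau_h)$; dividing by $\PP_\theta^\pi(\otau_h)$ recovers $\tPP_\theta(m\mid\otau_h)$. This is the Doob/tower property of posteriors, but it is worth recording the cancellation of the $\pi$-factors explicitly to confirm that the posterior is genuinely policy-free.

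Given the martingale property, the monotonicity of $\Phi$ follows from convexity of $v\mapsto\max_m v_m$: conditionally on $\otau_h$,
\begin{align*}
    \max_m\tPP_\theta(m\mid\otau_h)=\max_m\EE[\tPP_\theta(m\mid\otau_{h+1})\mid\otau_h]\le\EE\brac{\max_m\tPP_\theta(m\mid\otau_{h+1})\,\middle|\,\otau_h},
\end{align*}
where the inequality is Jensen (the maximum of conditional means is at most the conditional mean of the maximum). Taking the outer expectation over $\otau_h\sim\PP_\theta^\pi$ yields $\Phi(h)\le\Phi(h+1)$, and iterating from $W$ up to $h$ gives $\perr_{\theta,h}(\pi)=1-\Phi(h)\le1-\Phi(W)=\perr_{\theta,W}(\pi)$, as claimed.

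I do not anticipate a genuine obstacle: the argument is short once the quantity is rewritten as an expected posterior maximum. The only point requiring care is the bookkeeping in the martingale verification—confirming that conditioning and marginalizing the one-step increment under $\PP_\theta^\pi$ is consistent with the posterior defined through the same model $\theta$, and that the policy contributes a factor that is constant in $m$. Everything else is a direct application of Jensen's inequality.
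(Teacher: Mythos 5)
Your proof is correct and is essentially the paper's argument in different packaging: the paper lower-bounds $\max_m\tPP_\theta(m\mid\otau_h)$ pointwise by the posterior of the fixed, $\otau_W$-measurable MAP index $m_\theta(\otau_W)$ and then applies the tower property, which is precisely the content of your martingale-plus-conditional-Jensen step (Jensen for the max is proved by exactly that selection argument). No gaps.
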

\begin{proof}
By definition,
\begin{align*}
    \perr_{\theta,h}(\pi)
    =&~ \EE_\theta^\pi\brac{1-\max_m \tPP_\theta(\ms=m|\otau_h) } \\
    \leq &~ \EE_\theta^\pi\brac{1-\tPP_\theta(\ms=\mthtau|\otau_h) } \\
    =&~ 1-\tPP(\ms=\mthtau) \\
    =&~ \perr_{\theta,W}(\otau_W).
\end{align*}
\end{proof}

\subsection{Proof of Theorem~\ref{thm:all-policy-sep-demo}}\label{appdx:proof-all-policy-sep}

We first present and prove a more general result as follows; \cref{thm:all-policy-sep-demo} is then a direct corollary.

\begin{corollary}\label{cor:OMLE-almost-done}
Under the success event \MLEevent of \cref{thm:MLE}, it holds that
\begin{align*}
    V_\star-V_{\ths}(\hat\pi)
    \leqsim \sqrt{ Ld^2\clogK\paren{\frac{AH^2\beta}{K}+\frac{\oW^2(U_{+}+KU_\star)}{K^2}} }+\epssep,
\end{align*}
where we denote $\clogK=\clogKv$, and
\begin{align*}
    U_\star=\sum_{k=1}^K \perr_{\ths,W}(\pi^k), \qquad
    U_+=\sum_{1\leq t<k\leq K} \perr_{\theta^k,W}(\pi^t).
\end{align*}
\end{corollary}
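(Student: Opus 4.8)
The plan is to start from the sub-optimality bound of \cref{prop:optimism}, namely $V_\star - V_{\ths}(\hpi)\le\frac1K\sum_{k=1}^K\DTV{\PP^{\pi^k}_{\theta^k},\PP^{\pi^k}_{\ths}}$, and to control the right-hand side by applying the performance decomposition (\cref{prop:err-decomp}) to each pair $(\theta^k,\ths)$ under $\pi^k$. Summing \cref{eqn:tv-to-err-decomp} over $k$ breaks $\sum_k\DTV{\PP^{\pi^k}_{\theta^k},\PP^{\pi^k}_{\ths}}$ into four pieces (up to constants): (i) the ``first-$W$-step'' term $\sum_k\DTV{\PP^{\modp{\pi^k}}_{\theta^k},\PP^{\modp{\pi^k}}_{\ths}}$; (ii) the decoding errors $\sum_k\perr_{\theta^k,W}(\pi^k)$, which is at most $K\epssep$ by the feasibility constraint in \cref{alg:OMLE}; (iii) the decoding errors $\sum_k\perr_{\ths,W}(\pi^k)=U_\star$; and (iv) the ``tail'' term $\sum_k\sum_{h=W}^{H-1}\EE^{\pi^k}_{\ths}\cE^{\theta^k;\ths}(\otau_h)$. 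Piece (i) is exactly what \cref{thm:psr} controls, giving $\lesssim\sqrt{LdAH^2\clogK\cdot K\beta}$, and piece (iii) is dominated by the square-root term in the target since $U_\star\le K$; so the bulk of the work is piece (iv).

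For piece (iv) I would run the coverability eluder argument (\cref{prop:coverage-eluder}) separately for each $h\in\{W,\dots,H-1\}$ and then combine over $h$ by Cauchy--Schwarz (losing a factor $\oW$). The key observation is that $\cE^{\theta^k;\ths}(\otau_h)$ depends on $\otau_h$ only through the triple $(m_{\ths}(\otau_W),s_W,s_h)$, so I would take $\cX=[L]\times\cS\times\cS$, let $p_k^h$ be the law of this triple under $\PP^{\pi^k}_{\ths}$, and set $f_k=\cE^{\theta^k;\ths}(\cdot)\in[0,1]$. Two ingredients feed the eluder bound. First, the coverability constant: conditioning on the step-$W$ information and using that the decoded index agrees with the true latent up to the decoding error, the conditional law of $s_h$ is governed by a single MDP instance $M_{\ths,m}$, whose transition has rank $\le d$ (\cref{def:low-rank}); \cref{prop:rank-to-cov} then yields a dominating measure of the product form $\tfrac1L\bar\nu(s_W)\,\nu_m(s_h)$ with $\Ccov\lesssim Ld^2$ (the discrepancy between decoded and true index is absorbed into additional $\perr$ terms). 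Second, the second-moment term $\sum_{t<k}\EE_{x\sim p_t^h}f_k(x)^2=\sum_{t<k}\EE^{\pi^t}_{\ths}\cE^{\theta^k;\ths}(\otau_h)^2$, which I bound by \cref{eqn:decomp-to-Hell}; summed over $h$ and $t<k$ this produces (a) the squared-Hellinger sums $\sum_{t<k}\DH{\PP^{\varphi_h(\pi^t)}_{\theta^k},\PP^{\varphi_h(\pi^t)}_{\ths}}$ and $\sum_{t<k}\DH{\PP^{\modp{\pi^t}}_{\theta^k},\PP^{\modp{\pi^t}}_{\ths}}$, and (b) the decoding-error sums, which telescope into $\oW U_+$ and $\oW K U_\star$.

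To close the argument I would feed the confidence-set guarantee (\cref{thm:MLE}(b)), $\sum_{t<k}\DH{\PP^{\modf{\pi^t}}_{\theta^k},\PP^{\modf{\pi^t}}_{\ths}}\le2\beta$, into the Hellinger sums of (a). Here the exploration map $\modf{\cdot}$ of \cref{def:policy-mod} is designed precisely so that $\PP^{\modf{\pi}}_{\theta}$ assigns weight $\tfrac12$ to $\modp{\pi}$ and weight $\tfrac1{2H}$ to each $\varphi_h(\pi)$; exploiting this mixture structure (together with the fact that, for $h\ge W$, the law of $\otau_h$ is identical under $\pi^t$ and under $\varphi_h(\pi^t)$) lets me bound the per-component squared-Hellinger sums by $O(H\beta)$ times the single confidence radius. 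Assembling the pieces, choosing $M\asymp1$ in \cref{prop:coverage-eluder}, and using $\oW\le H$ yields $\sum_k\sum_h\EE^{\pi^k}_{\ths}\cE^{\theta^k;\ths}(\otau_h)\lesssim\sqrt{Ld^2\clogK\big(AH^2K\beta+\oW^2(U_++KU_\star)\big)}$; dividing the total by $K$ recovers the claimed bound. I expect the genuine difficulty to lie in this last step — rigorously transferring the component-wise discrepancies under $\varphi_h(\pi^t)$ and $\modp{\pi^t}$ back onto the single likelihood-based radius of the \emph{executed} mixture policy $\modf{\pi^t}$ — since a naive component-versus-mixture Hellinger comparison is false in general, and one must lean on the specific nested structure of the exploratory policies, exactly as in the analysis of \citet{chen2022partially} underlying \cref{thm:MLE} and \cref{thm:psr}.
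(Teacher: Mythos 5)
Your proposal follows essentially the same route as the paper's proof: the same starting point via \cref{prop:optimism}, the same four-way decomposition from summing \cref{eqn:tv-to-err-decomp}, the same use of \cref{thm:psr} for the first-$W$-step term, and the same per-step coverability eluder argument over the triple $(m_{\ths}(\otau_W),s_W,s_h)$ with $\Ccov\lesssim Ld^2$ from \cref{prop:rank-to-cov}, fed by \cref{eqn:decomp-to-Hell} and the confidence-set bound. The only immaterial deviations are your choice $M\asymp 1$ versus the paper's $M=A\beta$ in \cref{prop:coverage-eluder} (which affects only the logarithmic factor $\clogK$) and your explicit handling of the component-versus-mixture Hellinger comparison, which the paper treats implicitly through the mixture structure of $\modf{\cdot}$.
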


\begin{proof}
Recall that by \cref{prop:optimism}, we have that under \MLEevent
\begin{align*}
    V_\star-V_{\ths}(\hat\pi)\leq \frac{1}{K}\sum_{k=1}^K \DTV{ \PP^{\pi^k}_{\theta^k}, \PP^{\pi^k}_{\ths} }.
\end{align*}
Taking summation of \cref{eqn:tv-to-err-decomp} over $(\theta^1,\pi^1),\cdots,(\theta^K,\pi^K)$, we have 
\begin{align*}
    \sum_{k=1}^K \DTV{ \PP^{\pi^k}_{\theta^k}, \PP^{\pi^k}_{\ths} }
    \leqsim&~ 
    \sum_{k=1}^K \DTV{ \PP^{\modp{\pi^k}}_{\theta^k}, \PP^{\modp{\pi^k}}_{\ths} } 
    + \sum_{k=1}^K\paren{ \perr_{\theta^k,W}(\pi^k)+\perr_{\ths,W}(\pi^k) } \\
    &~+  \sum_{k=1}^K\sum_{h=W}^{H-1} \EE_{\ths}^{\pi^k} \cE^{\theta^k;\ths}(\otau_h).
\end{align*}
By \cref{thm:psr}, we can bound the first term in the RHS above as
\begin{align*}
    \sum_{k=1}^K \DTV{ \PP^{\modp{\pi^k}}_{\theta^k}, \PP^{\modp{\pi^k}}_{\ths} } \leqsim \sqrt{LdAH^2\clogK K\beta}.
\end{align*}
Combining with the fact that $\perr_{\theta^k,W}(\pi^k)\leq \epssep$, we obtain
\begin{align}\label{eqn:eluder-step-1}
    \sum_{k=1}^K \DTV{ \PP^{\pi^k}_{\theta^k}, \PP^{\pi^k}_{\ths} }
    \leqsim 
    \sqrt{LdAH^2\clogK K\beta}+K\epssep+U_\star + \sum_{h=W}^{H-1} \sum_{k=1}^K \EE_{\ths}^{\pi^k} \cE^{\theta^k;\ths}(\otau_h).
\end{align}
Using \cref{eqn:decomp-to-Hell} and the definition of $\modf{\cdot}$, we also know that for all $t,k\in[K]$,
\begin{align}
    \sum_{h=W}^{H-1}\EE_{\ths}^{\pi^t} \cE^{\theta^k;\ths}(\otau_h)^2 \leqsim
    AH \DH{ \PP^{\modf{\pi^t}}_{\theta^k}, \PP^{\modf{\pi^t}}_{\ths} }
    + \oW\perr_{\theta^k,W}(\pi^t)+\oW\perr_{\ths,W}(\pi^t).
\end{align}
Therefore, using \cref{eqn:OMLE-est-err} and the fact that \MLEevent holds, we have 
\begin{align}\label{eqn:eluder-step-2}
    \sum_{t<k} \sum_{h=W}^{H-1} \EE_{\ths}^{\pi^t} \cE^{\theta^k;\ths}(\otau_h)^2 
    \leqsim AH\beta+\oW U_k,
\end{align}
where we denote $U_k\defeq \sum_{t<k}\paren{ \perr_{\theta^k,W}(\pi^t)+\perr_{\ths,W}(\pi^t)}$.
Therefore, it remains to bridge between the inequalities in \cref{eqn:eluder-step-1,eqn:eluder-step-2} above using \cref{prop:coverage-eluder}. 

Fix a $W\leq h\leq H-1$. Notice that $\cE^{\theta^k;\ths}(\otau_h)$ only depends on $\otau_h$ through the tuple
\begin{align*}
    x_h=(\mthstau, s_W, s_h)\in\cX\defeq [L]\times\cS\times\cS,
\end{align*}
and hence we can consider the distribution $p_{t,h}=\PP^{\pi^t}_{\ths}(x_h=\cdot)\in\Delta(\cX)$. It remains to shows that there exists a distribution $\mu_h\in\Delta(\cX)$ such that $p_{t,h}(x)/\mu_h(x)\leq \Ccov\forall x\in\cX$ for some parameter $\Ccov$. 

Under \cref{def:low-rank}, by \cref{prop:rank-to-cov}, there exist distributions $\tmu_m\in\Delta(\cS)$ for each $m\in[L]$ such that
\begin{align*}
    \TT_{\ths,m}(s'|s,a)\leq d\cdot\tmu_m(s'), \qquad \forall m\in[L], (s,a,s')\in\cS\times\cA\times\cS.
\end{align*}
Therefore, in the case $h>W$, for any $x=(m,s,s')\in\cX$, we have
\begin{align*}
    p_{t,h}(x)=\PP^{\pi^t}_{\ths}(x_h=x)
    \leq&~ \PP^{\pi^t}_{\ths}(s_W=s, s_h=s') \\
    =&~ \EE_{(\ms,\tau_{h-1},s_h)}\brac{ \indic{s_W=s, s_h=s'}  } \\
    =&~ \EE_{(\ms,\tau_{h-1})}\brac{ \indic{s_W=s} \cond{ \indic{s_h=s'} }{{s_h\sim \tPP_{\ths}(\cdot|\tau_{h-1},\ms)} }  } \\
    =&~ \EE_{(\ms,\tau_{h-1})}\brac{ \indic{s_W=s} \TT^{\ths}_{\ms}(s'|s_{h-1},a_{h-1})  } \\
    \leq&~ \EE_{(\ms,\tau_{h-1})}\brac{ \indic{s_W=s} \cdot d\cdot\tmu_{\ms}(s')  } \\
    =&~ \EE_{(\ms,\tau_{W-1})}\brac{ \cond{\indic{s_W=s}}{s_W\sim \tPP_{\ths}(\cdot|\tau_{W-1},\ms)} \cdot d\cdot\tmu_{\ms}(s') } \\
    =&~ \EE_{(\ms,\tau_{W-1})}\brac{ \TT^{\ths}_{\ms}(s|s_{W-1},a_{W-1}) \cdot d\cdot\tmu_{\ms}(s') } \\
    \leq&~ \EE_{\ms}\brac{ d\cdot\tmu_{\ms}(s)\cdot d\cdot\tmu_{\ms}(s') } \\
    =&~ d^2\sum_{\ms\in[L]}\rho_{\ths}(\ms)\tmu_{\ms}(s)\tmu_{\ms}(s'),
\end{align*}
where the expectation is taken over $(\ms,\tau_H)\sim \tPP_{\ths}^{\pi^t}$. Thus, we can choose $\mu_h\in\Delta(\cX)$ as %
\begin{align*}
    \mu_h(m,s,s')=\frac{1}{L}\sum_{\ms\in[L]}\rho_{\ths}(\ms)\tmu_{\ms}(s)\tmu_{\ms}(s'), \qquad \forall (m,s,s')\in\cX.
\end{align*}
Then, for $h>W$, $t\in[T]$ and any $x\in\cX$, we know $p_{t,h}(x)\leq Ld^2 \cdot \mu_h(x)$. For the case $h=W$, an argument essentially the same as above also yields that there exists a $\mu_W\in\Delta(\cX)$ such that $p_{t,W}(x)\leq Ld \cdot \mu_W(x)$ for all $t\in[T]$, $x\in\cX$.

We can now apply \cref{prop:coverage-eluder} with $M = A\beta$ to obtain that for all $W\leq h\leq H-1$,
\begin{align}
\begin{aligned}
    \sum_{k=1}^K \EE_{\ths}^{\pi^k} \cE^{\theta^k;\ths}(\otau_h)
    \leqsim&~ \sqrt{Ld^2\log\paren{1+\frac{Ld^2K}{A\beta}} \brac{ KA\beta+\sum_{k=1}^K \sum_{t<k} \EE_{\ths}^{\pi^t} \cE^{\theta^k;\otheta}(\otau_h)^2 }}.
\end{aligned}
\end{align}
Taking summation over $W\leq h\leq H-1$ and using \cref{eqn:eluder-step-2}, we have
\begin{align}\label{eqn:eluder-step-3}
\begin{aligned}
    \sum_{h=W}^{H-1} \sum_{k=1}^K \EE_{\ths}^{\pi^k} \cE^{\theta^k;\ths}(\otau_h)
    \leqsim&~
    \sqrt{Ld^2\clogK \brac{ KAH^2\beta+\oW^2\sum_{k=1}^K U_k }}.
\end{aligned}
\end{align}
Combining \cref{eqn:eluder-step-3} above with \cref{eqn:eluder-step-1}, we can conclude that
\begin{align*}
    \sum_{k=1}^K \DTV{ \PP^{\pi^k}_{\theta^k}, \PP^{\pi^k}_{\ths} }
    \leqsim&~ 
    \sqrt{LdAH^2\clogK K\beta}+K\epssep+U_\star + H\sqrt{Ld^2\clogK \brac{ KAH^2\beta+\oW^2\sum_{k=1}^K U_k }} \\
    \leqsim &~
    \sqrt{Ld^2\clogK \paren{KAH^2\beta+\oW^2(KU_\star+U_+)}}+K\epssep+U_\star \\
    \leqsim &~ \sqrt{Ld^2\clogK \paren{KAH^2\beta+\oW^2(KU_\star+U_+)}}+K\epssep,
\end{align*}
where the last inequality follows from $U_\star\leq K$ and hence $U_\star\leq\sqrt{KU_\star}$. Applying \cref{prop:optimism} completes the proof.
\end{proof}

\paragraph{Proof of \cref{thm:all-policy-sep-demo}}
Under \cref{def:all-policy-sep}, it holds that $\perr_{\theta,W}(\pi)\leq \epssep$ for all $\theta\in\Theta$ and $\pi\in\Pi$ (\cref{prop:latent-MLE}). Therefore, $U_\star\leq K\epssep$, $U_+\leq K^2\epssep$, and \cref{cor:OMLE-almost-done} implies that as long as
\begin{align*}
    K\geqsim \frac{Ld^2AH^2\clogK}{\eps^2}\cdot \beta, \qquad 
    \epssep\leqsim \frac{\eps^2}{Ld^2\oW^2\clogK},
\end{align*}
we have $V_\star-V_{\ths}(\hpi)\leq \eps$, which is fulfilled by the choice of parameters in \cref{thm:all-policy-sep-demo}.
\qed

\subsection{Proof of Theorem~\ref{thm:single-policy-sep-demo}}\label{appdx:proof-single-policy-sep}

According to \cref{cor:OMLE-almost-done}, we only need to upper bound the term $U_\star$ and $U_+$ under \cref{assmp:pi-exp}. The following proposition links these two quantities with the condition $\perr_{\theta^k,W}(\pi^k)\leq \epssep\forall k\in[K]$.

\begin{proposition}\label{prop:err-reg-bound}
Suppose that \cref{assmp:pi-exp} holds. Then for any policy $\pi$, LMDP model $\theta$ and reference LMDP model $\otheta$, it holds that 
\begin{align*}
    \perr_{\theta,W}(\pi)\leq \frac{1}{\alpha}\brac{ 3\DTV{ \PP^{\modp{\pi}}_\theta, \PP^{\modp{\pi}}_{\otheta} }+\perr_{\otheta,W}(\pi) } 
\end{align*}
\end{proposition}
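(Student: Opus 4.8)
The plan is to apply the diversity condition \cref{assmp:pi-exp}(b) \emph{pointwise} in the prefix trajectory $\otau_W$, taking the mixing weights to be the belief state $\lambda=\belief_\theta(\otau_W)$ and the reference MDP to be $\TT_{\tref}=\TT_{\otheta,\mothtau}$, the instance of $\otheta$ indexed by its maximum-likelihood latent $\mothtau=m_\otheta(\otau_W)$. The key observation is that the factor $1-\max_m\lambda_m$ in \cref{assmp:reg} equals, for this $\lambda$, exactly the conditional decoding error $\perrtau{\theta}=1-\max_m\belief_\theta(\otau_W)[m]$ (recall \cref{eq:etheta-tauw}). Thus the assumption converts a lower bound on the decoding error into a lower bound on a trajectory-level TV distance, which we then relate to the observable quantity $\DTV{\PP^{\modp{\pi}}_\theta,\PP^{\modp{\pi}}_{\otheta}}$. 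One should first check that $\belief_\theta(\otau_W)$ is supported on $\supp(\rho_\theta)$ so that it is an admissible choice of $\lambda$.

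Concretely, \cref{assmp:pi-exp}(b) gives $\alpha\perrtau{\theta}\leq\DTV{\EE_{m\sim\belief_\theta(\otau_W)}[\MM^{\theta}_{m,\Wexp}(\piexp,s_W)],\,\MM^{\theta}_{\mothtau,\Wexp}(\piexp,s_W)}$ where the reference is replaced by the instance of $\otheta$, i.e. $\MM^{\otheta}_{\mothtau,\Wexp}(\piexp,s_W)$. I would then upgrade the window from $\Wexp$ to $\oW=H-W+1$: since $H-W\geq\Wexp$ we have $\oW>\Wexp$, and marginalizing a length-$\oW$ trajectory onto its length-$\Wexp$ prefix cannot increase TV distance (data processing), so the above is bounded by the length-$\oW$ TV distance. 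By \cref{eqn:cond-prob-to-belief} the length-$\oW$ mixture is precisely $\PP^{\modp{\pi}}_\theta(\otau_{W:H}=\cdot\mid\otau_W)$. Inserting $\PP^{\modp{\pi}}_{\otheta}(\otau_{W:H}=\cdot\mid\otau_W)$ by the triangle inequality and applying \cref{eqn:cond-to-mdp-tv-exp} to $\otheta$ (which bounds $\DTV{\PP^{\modp{\pi}}_{\otheta}(\otau_{W:H}\mid\otau_W),\MMoth{\piexp}}\leq\perrtau{\otheta}$) yields the pointwise inequality $\alpha\perrtau{\theta}\leq\DTV{\PP^{\modp{\pi}}_\theta(\otau_{W:H}\mid\otau_W),\PP^{\modp{\pi}}_{\otheta}(\otau_{W:H}\mid\otau_W)}+\perrtau{\otheta}$.

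Next I would integrate over $\otau_W\sim\PP^{\modp{\pi}}_\theta$, whose $\otau_W$-marginal coincides with $\PP^{\pi}_\theta$ (the modification $\modp{\cdot}$ only changes the policy from step $W$ onward). The conditional-TV term integrates, by the first inequality of \cref{lemma:TV-cond} with $X=\otau_W$, $Y=\otau_{W:H}$, to at most $2\DTV{\PP^{\modp{\pi}}_\theta,\PP^{\modp{\pi}}_{\otheta}}$. The surviving term $\EE^{\pi}_\theta[\perrtau{\otheta}]$ is under the \emph{wrong} measure, since we want $\perr_{\otheta,W}(\pi)=\EE^{\pi}_{\otheta}[\perrtau{\otheta}]$; because $\perrtau{\otheta}\in[0,1]$, the discrepancy $\EE^{\pi}_\theta[\perrtau{\otheta}]-\EE^{\pi}_{\otheta}[\perrtau{\otheta}]$ is at most $\DTV{\PP^{\pi}_\theta(\otau_W=\cdot),\PP^{\pi}_{\otheta}(\otau_W=\cdot)}\leq\DTV{\PP^{\modp{\pi}}_\theta,\PP^{\modp{\pi}}_{\otheta}}$ (again data processing onto the first $W$ steps). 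Collecting the $2+1$ copies of the observable distance plus $\perr_{\otheta,W}(\pi)$ gives $\alpha\perr_{\theta,W}(\pi)\leq 3\DTV{\PP^{\modp{\pi}}_\theta,\PP^{\modp{\pi}}_{\otheta}}+\perr_{\otheta,W}(\pi)$, and dividing by $\alpha$ is the claim.

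The main obstacle is purely bookkeeping of measures rather than any hard estimate: one must track that the diversity bound is applied under the law of $\theta$ (via $\belief_\theta$) while the triangle step introduces $\otheta$, and then handle the two changes of measure — integrating the pointwise bound against $\PP^{\modp{\pi}}_\theta$, and swapping $\PP^{\pi}_\theta$ for $\PP^{\pi}_{\otheta}$ in $\EE[\perrtau{\otheta}]$ — so that after all triangle inequalities only $\DTV{\PP^{\modp{\pi}}_\theta,\PP^{\modp{\pi}}_{\otheta}}$ and $\perr_{\otheta,W}(\pi)$ remain. Both changes of measure lean on elementary TV data-processing/coupling bounds and on the identity $\PP^{\pi}_\theta(\otau_W=\cdot)=\PP^{\modp{\pi}}_\theta(\otau_W=\cdot)$; no step requires substantial computation.
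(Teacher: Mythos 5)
Your proof is correct and follows essentially the same route as the paper: apply \cref{assmp:reg} pointwise with $\lambda=\belief_\theta(\otau_W)$ and $\TT_{\tref}=\TT_{\otheta,\mothtau}$, use the triangle inequality with \cref{eqn:cond-to-mdp-tv-exp} to introduce $\perrtau{\otheta}$, integrate, and pay one extra TV term for a change of measure, yielding the same constant $3$. The only (immaterial) difference is that you integrate the pointwise bound against $\PP^{\pi}_\theta$ and shift measure on the $\perrtau{\otheta}$ term, whereas the paper integrates against $\PP^{\pi}_{\otheta}$ and shifts measure on $\EE[\perrtau{\theta}]$.
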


\begin{proof}
Using \cref{eqn:cond-to-mdp-tv-exp} and the triangle inequality, we have
\begin{align*}
    &~\DTV{ \PP_\theta^{\modp{\pi}}\paren{ \otau_{W:H}=\cdot  | \otau_W } , \MMoth{ \piexp } } \\
    \leq&~ 
    \DTV{ \PP_\theta^{\modp{\pi}}\paren{ \otau_{W:H}=\cdot  | \otau_W } , \PP_\otheta^{\modp{\pi}}\paren{ \otau_{W:H}=\cdot  | \otau_W } }
    +\perrtau{\otheta}.
\end{align*}
On the other hand, 
\begin{align*}
    \PP_\theta^{\modp{\pi}}\paren{ \otau_{W:H}=\cdot  | \otau_W }
    =\EE_{m\sim\belief_\theta(\otau_W)} \brac{ \MMop{\theta}{m}{s_W}{ \oW }{\piexp} },
\end{align*}
and hence by \cref{assmp:reg}, it holds that
\begin{align*}
    \DTV{ \PP_\theta^{\modp{\pi}}\paren{ \otau_{W:H}=\cdot  | \otau_W } , \MMoth{ \piexp } }
    \geq \alpha\paren{1-\max_m \belief_\theta(\otau_W)[m]} 
    =\alpha \perrtau{\theta}.
\end{align*}
Taking expectation over $\otau_W\sim\PP_\otheta^\pi$, we obtain
\begin{align*}
    \alpha \EE_{\otheta}^{\pi}\brac{ \perrtau{ \theta } }
    \leq&~
    \EE_\otheta^\pi\brac{ \DTV{ \PP_\theta^{\modp{\pi}}\paren{ \otau_{W:H}=\cdot  | \otau_W } , \MMoth{ \piexp } }} \\
    \leq&~ \EE_\otheta^\pi\brac{ \DTV{ \PP_\theta^{\modp{\pi}}\paren{ \otau_{W:H}=\cdot  | \otau_W } , \PP_\otheta^{\modp{\pi}}\paren{ \otau_{W:H}=\cdot  | \otau_W } } } + \EE_\otheta^\pi\brac{ \perrtau{\otheta} } \\
    \leq&~ 2\DTV{ \PP^{\modp{\pi}}_\theta, \PP^{\modp{\pi}}_{\otheta} }+\perr_{\otheta,W}(\pi),
\end{align*}
where the last inequality follows from \cref{lemma:TV-cond} and the fact that $\EE_\otheta^\pi\brac{ \perrtau{\otheta} }=\perr_{\otheta,W}(\pi)$.
Notice that we also have
\begin{align*}
    \EE_{\otheta}^{\pi}\brac{ \perrtau{ \theta } } 
    \geq&~
    \EE_{\theta}^{\pi}\brac{ \perrtau{ \theta } } - \DTV{ \PP^{\pi}_\theta(\otau_W=\cdot), \PP^{\pi}_{\otheta}(\otau_W=\cdot) } \\
    =&~
    \perr_{\theta,W}(\pi)-\DTV{ \PP^{\pi}_\theta(\otau_W=\cdot), \PP^{\pi}_{\otheta}(\otau_W=\cdot) }.
\end{align*}
Combining the inequalities above completes the proof.
\end{proof}

\paragraph{Proof of \cref{thm:single-policy-sep-demo}}
According to our choice of $(\theta^k,\pi^k)$, we know that $\perr_{\theta^k,W}(\pi^k)\leq \epssep$ always holds for $k\in[K]$. Hence, by \cref{prop:err-reg-bound},
\begin{align*}
    \perr_{\ths,W}(\pi^k) \leq
    \frac{1}{\alpha}\brac{ 3\DTV{ \PP^{\modp{\pi^k}}_{\theta^k}, \PP^{\modp{\pi^k}}_{\ths} }+\epssep }.
\end{align*}
Summing over $k\in[K]$, we obtain that
\begin{align*}
    U_\star=\sum_{k=1}^K \perr_{\ths,W}(\pi^k) 
    \leq&~ \frac{1}{\alpha}\brac{ 3\sum_{k=1}^K \DTV{ \PP^{\modp{\pi^k}}_{\theta^k}, \PP^{\modp{\pi^k}}_{\theta^k} }+K\epssep } \\
    \leqsim&~ \frac1\alpha \sqrt{LdAH^2\clogK K\beta}+\frac{K\epssep}{\alpha},
\end{align*}
where the last inequality follows from \cref{thm:psr}. 

Similarly, by \cref{prop:err-reg-bound}, we can bound
\begin{align*}
    \perr_{\theta^k,W}(\pi^t)
    \leq&~ \frac{1}{\alpha}\brac{ 3\DTV{ \PP^{\modp{\pi^t}}_{\theta^k}, \PP^{\modp{\pi^t}}_{\theta^t} }+\perr_{\theta^t}(\pi^t) } \\
    \leq&~ \frac{1}{\alpha}\brac{ 3\DTV{ \PP^{\modp{\pi^t}}_{\theta^k}, \PP^{\modp{\pi^t}}_{\ths}}+3\DTV{ \PP^{\modp{\pi^t}}_{\theta^t}, \PP^{\modp{\pi^t}}_{\ths} }+\perr_{\theta^t}(\pi^t) }.
\end{align*}
Therefore, taking summation over $1\leq t<k\leq K$, we have
\begin{align*}
    U_+=\sum_{1\leq t<k\leq K} \perr_{\theta^k,W}(\pi^t)
    \leqsim&~ \frac{1}{\alpha}\brac{ \sum_{1\leq t<k\leq K} \DTV{ \PP^{\modp{\pi^t}}_{\theta^k}, \PP^{\modp{\pi^t}}_{\ths}}+K\sum_{t=1}^K\DTV{ \PP^{\modp{\pi^t}}_{\theta^t}, \PP^{\modp{\pi^t}}_{\ths} }+K^2\epssep }.
\end{align*}
By Cauchy inequality, it holds
\begin{align*}
    \sum_{1\leq t<k\leq K} \DTV{ \PP^{\modp{\pi^t}}_{\theta^k}, \PP^{\modp{\pi^t}}_{\ths}}
    \leq \sqrt{K^2\cdot \sum_{1\leq t<k\leq K} \DTVt{ \PP^{\modp{\pi^t}}_{\theta^k}, \PP^{\modp{\pi^t}}_{\ths}}}
    \leqsim K\sqrt{K\beta},
\end{align*}
where we use the fact that $\dTV\leq\sqrt{2}\dH$ and \cref{thm:MLE}. Combining \cref{thm:psr} with the above two inequalities, we can conclude that
\begin{align*}
    U_+=\sum_{1\leq t<k\leq K} \perr_{\theta^k,W}(\pi^t)
    \leqsim \frac1{\alpha} K\sqrt{LdAH^2\iota K\beta} + \frac{K^2\epssep}{\alpha}.
\end{align*}
Hence, \cref{cor:OMLE-almost-done} implies that
\begin{align*}
    V_\star-V_{\ths}(\hat\pi)
    \leqsim \sqrt{ Ld^2\clogK\paren{\frac{AH^2\beta}{\alpha K}+\frac{\epssep}{\alpha}+\frac{1}{\alpha}\sqrt{\frac{LdAH^2\clogK \beta}{K}}} }.
\end{align*}
Therefore, to ensure that $V_\star-V_{\ths}(\hat\pi)\leq \eps$, we only need to ensure
\begin{align*}
    K\geqsim \frac{L^3d^5AH^6\clogK^3}{\alpha^2\eps^4} \cdot \beta, \qquad
    \epssep\leqsim \frac{\alpha\eps^2}{Ld^2\oW^2\clogK}.
\end{align*}
In particular, the choice of parameters in \cref{thm:single-policy-sep-demo} suffices.
\qed

\subsection{Proof of Theorem~\ref{thm:psr}}
\label{appdx:psr}

The proof of \cref{thm:psr} is (almost) a direct analog of the analysis in \citet[Appendix D \& G]{chen2022partially}. However, we may not directly invoke the guarantees there for general PSR to obtain \cref{thm:psr} because PSR is formalized in terms of a set of \emph{core action sequences}, so that the system dynamics is uniquely determined by the dynamics under these action sequences. %
However, for our setting, we are instead given an explorative policy $\piexp$, which is not necessary a mixture of action sequences.

Therefore, in the following,  we present a minimal self-contained proof of \cref{thm:psr}, which is in essence a slight modification of the original proof in \citet{chen2022partially}. We refer the reader to \citet{chen2022partially} for more detailed analysis and proofs.

\newcommand{\Em}{\mathbb{K}}
\newcommand{\tTheta}{\widetilde{\Theta}}

\newcommand{\Lamrev}{\Lambda_{\exp}}

\newcommand{\termin}{\mathsf{terminal}}

\newcommand{\snorm}{\mathsf{normal}}
\newcommand{\srev}{\mathsf{rev}}

\renewcommand{\tT}{\tilde{\T}}
\renewcommand{\tO}{\tilde{\O}}
\newcommand{\tTT}{\Tilde{\TT}}

In the following, we first introduce the notations for POMDPs, which generalize LMDPs. %
\paragraph{POMDPs} A Partially Observable Markov Decision Process (POMDP) is a sequential decision process whose transition dynamics are governed by \emph{latent states}. A POMDP is specified by a tuple $\{\cZ,\cO,\cA,\T,\O,H,\mu_1 \}$, where $\cZ$ is the latent state space,  $\O(\cdot|\cdot):\cZ\to\Delta(\cO)$ is the emission dynamics, $\T(\cdot|\cdot,\cdot):\cZ\times\cA\to\Delta(\cZ)$ is the transition dynamics over the latent states, and $\mu_1\in\Delta(\cZ)$ specifies the distribution of initial state $z_1$. At each step $h$, given the latent state $z_h$ (which the agent cannot observe), the system emits observation $o_h\sim \O(\cdot|z_h)$, receives action $a_h\in\cA$ from the agent, and then transits to the next latent state $z_{h+1}\sim \T(\cdot|z_h, a_h)$ in a Markov fashion. 
The episode terminates immediately after $a_H$ is taken.

In a POMDP with observation space $\cO$ and action space $\cA$, a policy $\pi = \{\pi_h: (\cO\times\cA)^{h-1}\times\cO\to\Delta(\cA) \}_{h=1}^H$ is a collection of $H$ functions. At step $h\in[H]$, an agent running policy $\pi$ observes the observation $o_h$ and takes action $a_{h}\sim \pi_h(\cdot|\tau_{h-1}, o_h)\in\Delta(\cA)$ based on the history $(\tau_{h-1},o_h)=(o_1,a_1,\dots,o_{h-1},a_{h-1},o_h)$. The environment then generates the next observation $o_{h+1}$ based on $\tau_h=(o_1,a_1,\cdots,o_h,a_h)$ (according to the dynamics of the underlying POMDP). %

Suppose that $\tTheta$ is a set of POMDP models with common action space $\cA$ and observation space $\cO$, such that each $\theta\in\tTheta$ specifies the tuple $(\T_\theta,\O_\theta,\mu_\theta)$ and hence the POMDP dynamics. \footnote{Strictly speaking, $\theta$ also specifies $\cZ_\theta$, its own latent state space. For notational simplicity, we always omit the subscript $\theta$ of the state space $\cZ$ in the following analysis.}

Suppose that a step parameter $1\leq W<H$ is given, along with a policy $\piexp$. Then, for each policy $\pi$, we define
\begin{align}\label{eqn:def-policy-modq}
    \modq{\pi}\defeq \frac{1}{W}\sum_{h=0}^{W-1}\pi \circ_{h}\unif(\cA)\circ_{h +1}\piexp
\end{align}
analogously to \cref{def:policy-mod}.
We also consider the emission matrix induced by $\piexp$:
\begin{align}
    \Em_\theta=\brac{ \PP^{\piexp}_\theta((o_1,a_1,\cdots,o_{\oW})=\otau|s_1=s) }_{(\otau,s)} \in\RR^{\cT\times\cZ}, 
\end{align}
where $\oW=H-W+1$, $\cT=(\cO\times\cA)^{\oW-1}\times\cO$. Suppose that for each $\theta\in\Theta$, there exists $\Em_\theta^+\in\RR^{\cZ\times\cT}$ such that $\Em_\theta^+\Em_\theta=\id_\cZ$, and we write $\Lamrev\defeq \max_{\theta\in\Theta} \lone{\Em_\theta^+}$.

\paragraph{Operator representation of POMDP dynamics}
Define
\begin{align}\label{eqn:def-ops}
    \BB_\theta(o,a)=\Em_\theta\T_{\theta,a}\diag(\O_\theta(o|\cdot))\Em_\theta^+, \qquad
    \bq_{\theta,0}=\Em_\theta\mu_\theta.
\end{align}
where we denote $\T_{\theta,a}\defeq \T_\theta(\cdot|\cdot,a)\in\R^{\cZ\times \cZ}$ for each $a\in\cA$, and $\diag(\O_\theta(o|\cdot))\R^{\cZ\times \cZ}$ is the diagonal matrix with the $(z,z)$-entry being $\O(o|z)$ for each $z\in\cZ$.

\newcommand{\trajhtow}{o_{h+1},a_{h+1},\cdots,o_{h+\oW}}
An important property of the definition \cref{eqn:def-ops} is that, for any trajectory $\otau_{h+\oW}=(\tau_h,\trajhtow)$, it holds that
\begin{align*}
    \be_{(\trajhtow)}^\top \BB_\theta(o_h,a_h)\cdots\BB_\theta(o_1,a_1)\bq_{\theta,0} 
    =&~ \PP_\theta(\trajhtow|\tau_h,\piexp)\times \PP_\theta(o_{1:h}|\doac(a_{1:h})),
\end{align*}
where we recall that $\PP_\theta(\trajhtow|\tau_h,\piexp)$ is the probability of observing $\trajhtow$ when executing policy $\piexp$ starting at step $h+1$ in POMDP $\theta$, conditional on the history $\tau_h$ (see also \cref{eqn:cond-pi}).
Therefore, for any policy $\pi$, it holds that %
\begin{align}\label{eqn:prob-to-ops}
    \PP_\theta^{\pi\circ_{h+1}\piexp}(\otau_{h+\oW})=\be_{(\trajhtow)}^\top \BB_\theta(o_h,a_h)\cdots\BB_\theta(o_1,a_1)\bq_{\theta,0} \times \pi(\tau_h).
\end{align}
In particular, we can now express TV distance between model as difference between operators:
\begin{align}\label{eqn:TV-to-ops}
\begin{aligned}
    &~\DTV{ \PP_\theta^{\pi\circ_{h+1}\piexp}, \PP_\otheta^{\pi\circ_{h+1}\piexp} }\\
    =&~
    \frac12\sum_{\tau_h} \pi(\tau_h)\times \lone{ \BB_\theta(o_h,a_h)\cdots\BB_\theta(o_1,a_1)\bq_{\theta,0}-\BB_\otheta(o_h,a_h)\cdots\BB_\otheta(o_1,a_1)\bq_{\otheta,0} }.
\end{aligned}
\end{align}
Also, we denote $\bq_\theta(\tau_h)=\brac{ \PP_\theta((o_{h+1},a_{h+1},\cdots,o_{h+\oW})=\cdot|\tau_h, \piexp) }\in\Delta(\cT)$, then we also have
\begin{align}\label{eqn:bq-to-prob}
    \BB_\theta(o_h,a_h)\cdots\BB_\theta(o_1,a_1)\bq_{\theta,0}
    =\bq_\theta(\tau_h) \times \PP_\theta(\tau_h),
\end{align}
where we recall the notation $\PP_\theta(\tau_h) = \PP_\theta(o_{1:h} | \doac(a_{1:h}))$.

Another important fact is that, for any 1-step policy $\pi:\cO\to\Delta(\cA)$ and $\bq\in\R^{\cT}$, 
\begin{align}
    \sum_{o,a} \pi(a|o)\times \lone{ \BB_\theta(o,a)\bq } \leq&~ \lone{ \Em_\theta^+ \bq }, \label{eqn:B-op-stability-1-step-0}\\
    \sum_{o,a} \pi(a|o)\times \lone{ \Em_\theta^+\BB_\theta(o,a)\bq } \leq&~ \lone{ \Em_\theta^+ \bq } \label{eqn:B-op-stability-1-step}.
\end{align}
This is because $\lone{\Em_\theta}\leq 1$, $\lone{\TT_{\theta,a}}\leq 1$, and $\sum_{o,a} \pi(a|o) \O_\theta(o|z)=1$ for any $z\in\cZ$. Hence, we can apply \cref{eqn:B-op-stability-1-step} recursively to show that, for any $h$-step policy $\pi$,
\begin{align}\label{eqn:B-op-stability}
    \sum_{\tau_{h}} \pi(\tau_{h})\times \lone{ \BB_\theta(o_h,a_h)\cdots\BB_\theta(o_1,a_1)\bq } \leq \lone{ \Em_\theta^+ \bq }.
\end{align}

\newcommand{\cEb}{\bar{\cE}}
\newcommand{\wexp}{\circ_{W}\piexp}
\newcommand{\pihu}{\pi\circ_h\unif(\cA)\circ_{h+1}\piexp}
\newcommand{\pihpu}{\pi\circ_{h-1}\unif(\cA)\circ_{h}\piexp}
\newcommand{\pih}{\pi\circ_h\piexp}
\begin{proposition}\label{prop:rev-decomp}
For each pair of models $\theta, \otheta \in \Theta$, we define $\cEb^{\theta;\otheta}:\R^{\cT}\to\R$ as follows:
\begin{align}\label{eqn:def-rev-err}
    \cEb^{\theta;\otheta}(\bq)\defeq \frac12\max_{\pi':\cO\to\Delta(\cA)} \sum_{o,a} \pi'(a|o)\times \lone{ \Em_\theta^+\paren{ \BB_\theta(o,a)-\BB_\otheta(o,a) }\bq }
\end{align}
For each step $h$, define\footnote{
The error functional might seem strange at first glance, but it can be regarded as a counterpart of the decomposition \cref{eqn:example-decomp-MDP} for MDP. Indeed, when $\tTheta$ is a class of MDP models (i.e. $\cZ=\cO=\cS$ and $\Em=\O=\id_{\cS}$), then %
\begin{align*}
    \cE^{\theta;\otheta}(\tau_{h-1})=\EE_{s_h|\tau_{h-1},\otheta} \max_a \DTV{ \TT_{\theta}(\cdot|s_h,a), \TT_{\otheta}(\cdot|s_h,a) }.
\end{align*}
}%
\begin{align*}
    \cE^{\theta;\otheta}(\tau_h)\defeq \cEb^{\theta;\otheta}\paren{\bq_\otheta(\tau_h)}, \qquad
    \cE^{\theta;\otheta}_0\defeq \frac12\lone{\mathbb{K}_\theta^+(\bq_{\theta,0}-\bq_{\otheta,0})}.
\end{align*}
Then it holds that
\begin{align}\label{eqn:rev-tv-to-err}
    \DTV{ \PP_\theta^{\pi\wexp}, \PP_\otheta^{\pi\wexp} }
    \leq \cE^{\theta;\otheta}_0+\sum_{h=1}^{W-1} \EE_\otheta^\pi \cE^{\theta;\otheta}(\tau_{h-1}).
\end{align}
Conversely, it holds
\begin{align}\label{eqn:rev-err-to-hell}
    (\cE^{\theta;\otheta}_0)^2+\sum_{h=1}^{W-1}\EE_\otheta^\pi \cE^{\theta;\otheta}(\tau_{h-1})^2 \leq 8AW\Lamrev^2 \DH{ \PP_\theta^{\modq\pi}, \PP_\otheta^{\modq\pi} }.
\end{align}
\end{proposition}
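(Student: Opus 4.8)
The plan is to prove the two displayed inequalities separately: the upper bound \cref{eqn:rev-tv-to-err} on the total variation distance is a clean telescoping argument built on the operator identities and the stability bounds \cref{eqn:B-op-stability-1-step-0,eqn:B-op-stability-1-step,eqn:B-op-stability}, whereas the lower bound \cref{eqn:rev-err-to-hell}, relating the family of errors to a \emph{single} Hellinger distance, is the substantial part and follows the analysis of \citet{chen2022partially}. For \cref{eqn:rev-tv-to-err} I would start from the operator representation \cref{eqn:TV-to-ops} taken at $h=W-1$, so that $\DTV{\PP_\theta^{\pi\wexp},\PP_\otheta^{\pi\wexp}}$ is expressed as a $\pi$-weighted $\ell_1$ distance between the ordered products $\BB_\theta(o_{W-1},a_{W-1})\cdots\BB_\theta(o_1,a_1)\bq_{\theta,0}$ and its $\otheta$ analogue.

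I would then telescope this difference by swapping one operator at a time. Writing $\BB_\bullet^{i:j}$ for the ordered product over steps $i,\dots,j$, the difference splits as $\BB_\theta^{1:W-1}(\bq_{\theta,0}-\bq_{\otheta,0})$ plus $\sum_{h=1}^{W-1}\BB_\theta^{h+1:W-1}\bigl(\BB_\theta(o_h,a_h)-\BB_\otheta(o_h,a_h)\bigr)\BB_\otheta^{1:h-1}\bq_{\otheta,0}$, and by \cref{eqn:bq-to-prob} the inner factor $\BB_\otheta^{1:h-1}\bq_{\otheta,0}$ equals $\bq_\otheta(\tau_{h-1})\PP_\otheta(\tau_{h-1})$. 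After the triangle inequality, summing the trailing operators $\BB_\theta^{h+1:W-1}$ over the future against $\pi$ via \cref{eqn:B-op-stability} collapses each summand to $\lone{\Em_\theta^+(\BB_\theta(o_h,a_h)-\BB_\otheta(o_h,a_h))\bq_\otheta(\tau_{h-1})}$; summing this over $(o_h,a_h)$ against $\pi_h$ is, by the definition of $\cEb^{\theta;\otheta}$ as a maximum over one-step policies, at most $2\cE^{\theta;\otheta}(\tau_{h-1})$. The residual weight $\pi(\tau_{h-1})\PP_\otheta(\tau_{h-1})$ is the law of $\tau_{h-1}$ under $\PP_\otheta^\pi$, which turns the term into $\EE_\otheta^\pi\cE^{\theta;\otheta}(\tau_{h-1})$, while the initial term yields $\cE^{\theta;\otheta}_0$; combining gives \cref{eqn:rev-tv-to-err}.

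For \cref{eqn:rev-err-to-hell} I would bound each squared error term by a Hellinger distance. Fixing a step $h$, I would use $\lone{\Em_\theta^+(\cdot)}\le\Lamrev\lone{\cdot}$, rewrite the maximum over one-step policies in $\cEb^{\theta;\otheta}$ as $\sum_o\max_a$, and apply the elementary bound $\max_{a}(\cdot)\le A\,\EE_{a\sim\unif(\cA)}(\cdot)$ (the source of the factor $A$) together with a Cauchy--Schwarz step against the observation marginal to pass to a squared quantity without a dimensional penalty. The resulting object is the squared total-variation discrepancy between the $\theta$- and $\otheta$-laws of the step-$h$ observation and the subsequent $\piexp$-rollout, conditioned on $\tau_{h-1}$ and a uniform action $a_h$, which is precisely what $\varphi_h(\pi)$ probes. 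Converting to squared Hellinger with \cref{lem:TV-Hellinger} and taking $\EE_\otheta^\pi$ (the law of $\tau_{h-1}$ agreeing under $\pi$ and $\varphi_h(\pi)$) bounds $\EE_\otheta^\pi\cE^{\theta;\otheta}(\tau_{h-1})^2$ by a constant multiple of $A\Lamrev^2$ times a conditional Hellinger distance associated with $\varphi_h(\pi)$; the $h=0$ initial term is handled identically.

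The main obstacle is assembling these $W$ per-step conditional Hellinger terms into the single right-hand side $\DH{\PP_\theta^{\modq{\pi}},\PP_\otheta^{\modq{\pi}}}$. Because squared Hellinger is jointly convex, one cannot bound $\sum_h\DH{\PP_\theta^{\varphi_h(\pi)},\PP_\otheta^{\varphi_h(\pi)}}$ by a multiple of the mixture distance $\DH{\PP_\theta^{\modq{\pi}},\PP_\otheta^{\modq{\pi}}}$ directly; instead each per-step error must be written as a conditional Hellinger distance of the \emph{single} mixture law $\PP^{\modq{\pi}}$, under which $\varphi_h(\pi)$ is executed with probability $1/W$ (this is the origin of the factor $W$), after which the conditional-Hellinger inequality \cref{lemma:Hellinger-cond} lets the per-step terms telescope against $\DH{\PP_\theta^{\modq{\pi}},\PP_\otheta^{\modq{\pi}}}$. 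Carrying this bookkeeping through while tracking the hybrid operator $\Em_\theta^+(\BB_\theta-\BB_\otheta)\bq_\otheta$, which couples $\theta$'s observable operators with $\otheta$'s prediction vector, is the technically delicate step, and I would follow \citet{chen2022partially} for the details.
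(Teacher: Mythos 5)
Your proof of \cref{eqn:rev-tv-to-err} is exactly the paper's argument: telescope the ordered operator products, collapse the trailing factors with \cref{eqn:B-op-stability}, rewrite $\BB_\otheta(\tau_{1:h-1})\bq_{\otheta,0}$ via \cref{eqn:bq-to-prob}, and recognize the one-step sum against $\pi_h(\cdot|\tau_{h-1},\cdot)$ as at most the maximum defining $\cEb^{\theta;\otheta}$. Your handling of the factor $W$ (each per-component Hellinger term is folded into the mixture $\modq{\pi}$ by treating the executed component as observed and invoking \cref{lemma:Hellinger-cond}) is likewise consistent with how the paper assembles \cref{eqn:rev-err-to-hell}, and you are right that this is where the $W$ comes from.

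The step you defer to \citet{chen2022partially} is, however, the one genuinely nontrivial idea in the second half, so it counts as a gap. The quantity $\lone{ \Em_\theta^+\paren{ \BB_\theta(o,a)-\BB_\otheta(o,a) }\bq_\otheta(\tau_{h-1}) }$ is \emph{not} ($\Lamrev$ times) a difference of two probability vectors, because the hybrid $\BB_\theta(o,a)\bq_\otheta(\tau_{h-1})$ applies $\theta$'s operator to $\otheta$'s prediction vector; your claim that "the resulting object is the squared total-variation discrepancy between the $\theta$- and $\otheta$-laws" does not hold as stated. The resolution is an add-and-subtract of $\Em_\theta^+\BB_\theta(o,a)\bq_\theta(\tau_{h-1})$. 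The first piece, $\Em_\theta^+\paren{\BB_\theta(o,a)\bq_\theta(\tau_{h-1})-\BB_\otheta(o,a)\bq_\otheta(\tau_{h-1})}$, is $\Lamrev$ times a genuine TV distance between the conditional laws of $(o_h,\otau_{h+1:h+\oW})$ under $\pi'\circ\piexp$, which after $\max_{\pi'}\le\sum_a$ and $\dTV\le\sqrt{2}\,\dH$ yields the $4A\Lamrev^2\DH{ \PP_\theta^{\pihu}, \PP_\otheta^{\pihu} }$ term. The second piece, $\Em_\theta^+\BB_\theta(o,a)\paren{\bq_\theta(\tau_{h-1})-\bq_\otheta(\tau_{h-1})}$, is controlled by the one-step stability bound \cref{eqn:B-op-stability-1-step} followed by $\lone{\Em_\theta^+}\le\Lamrev$, giving a separate term $4\Lamrev^2\DH{ \PP_\theta^{\pih}, \PP_\otheta^{\pih} }$ that must then be absorbed into the $\varphi_{h-1}(\pi)$ component at the cost of one more factor of $A$. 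Without this splitting the step-$h$ error cannot be expressed as a distance between honest distributions and the passage to Hellinger fails; with it, everything else in your outline matches the paper.
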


\begin{proof}
Before presenting the proof, we first introduce some notations. We abbreviate $\BB_\theta(o_1,a_1,\cdots,o_l,a_l)=\BB_\theta(o_l,a_l)\cdots\BB_\theta(o_1,a_1)$. For a trajectory $\tau_H=(o_1,a_1,\cdots,o_H,a_H)$, we write $\tau_{h':h}=(o_{h'},a_{h'},\cdots,o_h,a_h)$ and $\otau_{h':h}=(o_{h'},a_{h'},\cdots,o_h)$.

Using \cref{eqn:TV-to-ops}, we have
\begin{align*}
    &~ 2\DTV{ \PP_\theta^{\pi\wexp}, \PP_\otheta^{\pi\wexp} }\\
    \stackrel{\text{\cref{eqn:TV-to-ops}}}{=}&~
    \sum_{\tau_{W-1}} \pi(\tau_{W-1})\times \lone{ \BB_\theta(o_{W-1},a_{W-1})\cdots\BB_\theta(o_1,a_1)\bq_{\theta,0}-\BB_\otheta(o_{W-1},a_{W-1})\cdots\BB_\otheta(o_1,a_1)\bq_{\otheta,0} } \\
    \leq&~
    \sum_{\tau_{W-1}} \pi(\tau_{W-1})\lone{ \BB_\theta(\tau_{1:W-1})\paren{ \bq_{\theta,0}-\bq_{\otheta,0}}} \\
    &~
    +\sum_{\tau_{W-1}} \pi(\tau_{W-1})\times \sum_{h=1}^{W-1}\lone{ \BB_\theta(\tau_{h+1:W-1})\paren{ \BB_\theta(o_h,a_h)-\BB_\otheta(o_h,a_h) }\BB_\otheta(\tau_{1:h-1})\bq_{\otheta,0} }  \\
    \stackrel{\text{\cref{eqn:B-op-stability}}}{\leq}&~
    \frac12\lone{\Em_\theta^+\paren{\bq_{\theta,0}-\bq_{\otheta,0}} }
    +
    \frac12\sum_{h=1}^{W-1} \sum_{\tau_h} \pi(\tau_h)\times \lone{ \Em_\theta^+\paren{ \BB_\theta(o_h,a_h)-\BB_\otheta(o_h,a_h) }\BB_\otheta(\tau_{1:h-1})\bq_{\otheta,0} }\\
    \stackrel{\text{\cref{eqn:bq-to-prob}}}{=}&~\frac12\lone{\Em_\theta^+\paren{\bq_{\theta,0}-\bq_{\otheta,0}} }
    +
    \frac12\sum_{h=1}^{W-1} \sum_{\tau_h} \pi(\tau_h)\times \lone{ \Em_\theta^+\paren{ \BB_\theta(o_h,a_h)-\BB_\otheta(o_h,a_h) }\bq_\otheta(\tau_{h-1}) } \times \PP_\theta(\tau_{h-1})\\
    =&~ 
    \cE^{\theta;\otheta}_0+
    \frac12\sum_{h=1}^{W-1} \sum_{\tau_{h-1}}\sum_{o_h,a_h} \PP_\theta^\pi(\tau_{h-1})\times \pi(a_h|\tau_{h-1},o_h)\times \lone{ \Em_\theta^+\paren{ \BB_\theta(o_h,a_h)-\BB_\otheta(o_h,a_h) }\bq_\otheta(\tau_{h-1}) }\\
    \leq&~ 
    \cE^{\theta;\otheta}_0+
    \sum_{h=1}^{W-1} \sum_{\tau_{h-1}}\PP_\theta^\pi(\tau_{h-1})\times \cE^{\theta;\otheta}\paren{\bq_\otheta(\tau_{h-1})},
\end{align*}
where 
the last two lines follow from the definition \cref{eqn:def-rev-err}.
This completes the proof of \cref{eqn:rev-tv-to-err}.

Next, we proceed to prove \cref{eqn:rev-err-to-hell}. By definition,
\begin{align*}
    2\cE^{\theta;\otheta}(\tau_h)
    =&~ \max_{\pi'}\sum_{o,a} \pi'(a|o)\times \lone{ \Em_\theta^+\paren{ \BB_\theta(o,a)-\BB_\otheta(o,a) }\bq_\otheta(\tau_{h-1}) } \\
    \leq&~ \max_{\pi'}\sum_{o,a} \pi'(a|o)\times \lone{ \Em_\theta^+\paren{ \BB_\theta(o,a)\bq_\theta(\tau_{h-1})-\BB_\otheta(o,a)\bq_\otheta(\tau_{h-1}) } } \\
    &~+ \max_{\pi'} \sum_{o,a} \pi'(a|o)\times \lone{ \Em_\theta^+\BB_\theta(o,a) \paren{\bq_\theta(\tau_{h-1})-\bq_\otheta(\tau_{h-1})} }.
\end{align*}
For the first term, notice that for any $o\in\cO$, $a\in\cA$,
\begin{align*}
    \BB_\theta(o,a)\bq_\theta(\tau_{h-1})
    =\brac{ \PP_\theta(o_h=o,\otau_{h+1:h+\oW}=\cdot|\tau_{h-1},a_h=a,a_{h+1:h+\oW}\sim\piexp) }\in\RR^{\cT}.
\end{align*}
Therefore, for any step $1\leq h\leq W-1$ and any 1-step policy $\pi':\cO\to\Delta(\cA)$, we have
\begin{align*}
    &~ \sum_{o,a} \pi'(a|o)\times \lone{ \Em_\theta^+\paren{ \BB_\theta(o,a)\bq_\theta(\tau_{h-1})-\BB_\otheta(o,a)\bq_\otheta(\tau_{h-1}) } } \\
    \leq &~ \Lamrev\sum_{o,a} \pi'(a|o)\times \lone{ \BB_\theta(o,a)\bq_\theta(\tau_{h-1})-\BB_\otheta(o,a)\bq_\otheta(\tau_{h-1}) } \\
    =&~ 2\Lamrev \DTV{ \PP_\theta(\otau_{h:h+\oW}=\cdot|\tau_{h-1},\pi'\circ\piexp), \PP_\otheta(\otau_{h:h+\oW}=\cdot|\tau_{h-1},\pi'\circ\piexp) },
\end{align*}
where the inequality uses the fact that $\| \Em_\theta^+\|_1 \leq \Lamrev$ for all $\theta \in \Theta$. 
Furthermore, 
\begin{align*}
    &~\frac{1}{2}\DTVt{ \PP_\theta(\otau_{h:h+\oW}=\cdot|\tau_{h-1},\pi'\circ\piexp), \PP_\otheta(\otau_{h:h+\oW}=\cdot|\tau_{h-1},\pi'\circ\piexp) } \\
    \leq&~ \DH{ \PP_\theta(\otau_{h:h+\oW}=\cdot|\tau_{h-1},\pi'\circ\piexp), \PP_\otheta(\otau_{h:h+\oW}=\cdot|\tau_{h-1},\pi'\circ\piexp) } \\
    \leq&~ \sum_{a\in\cA} \DH{ \PP_\theta(\otau_{h:h+\oW}=\cdot|\tau_{h-1},a\circ\piexp), \PP_\otheta(\otau_{h:h+\oW}=\cdot|\tau_{h-1},a\circ\piexp) } \\
    =&~
    A\DH{ \PP_\theta(\otau_{h:h+\oW}=\cdot|\tau_{h-1},\unif(\cA)\circ\piexp), \PP_\otheta(\otau_{h:h+\oW}=\cdot|\tau_{h-1}, \unif(\cA)\circ\piexp) },
\end{align*}
where the second inequality uses the fact that squared Hellinger distance is an $f$-divergence. 
For the second term, by the definition of $\BB_\theta$, we have
\begin{align*}
    &~ \sum_{o,a} \pi'(a|o)\times \lone{ \Em_\theta^+\BB_\theta(o,a) \paren{\bq_\theta(\tau_{h-1})-\bq_\otheta(\tau_{h-1})} } 
    \stackrel{\text{\cref{eqn:B-op-stability-1-step}}}{\leq}
    \lone{ \Em_\theta^+\paren{ \bq_\theta(\tau_{h-1})-\bq_\otheta(\tau_{h-1}) } } \\
    \leq&~
    \Lamrev \lone{ \bq_\theta(\tau_{h-1})-\bq_\otheta(\tau_{h-1}) } \\
    =&~ \Lamrev \cdot2\DTV{ \PP_\theta(\otau_{h:h+\oW-1}=\cdot|\tau_{h-1},\piexp), \PP_\otheta(\otau_{h:h+\oW-1}=\cdot|\tau_{h-1},\piexp) }
\end{align*}
Combining the inequalities above and applying \cref{lemma:Hellinger-cond}, we obtain
\begin{align}\label{eqn:rev-err-to-hell-step}
\begin{aligned}
    \EE_\otheta^\pi \cE^{\theta;\otheta}(\tau_{h-1})^2 
    \leq&~ 4A\Lamrev^2\DH{ \PP_\theta^{\pihu}, \PP_\otheta^{\pihu} } \\
    &~+4\Lamrev^2\DH{ \PP_\theta^{\pih}, \PP_\otheta^{\pih} }.
\end{aligned}
\end{align}
Notice that for step $h\geq 2$, we have
\begin{align*}
    \DH{ \PP_\theta^{\pih}, \PP_\otheta^{\pih} }
    \leq A\DH{ \PP_\theta^{\pihpu}, \PP_\otheta^{\pihpu} },
\end{align*}
and we also have
\begin{align}
\begin{aligned}
    \cE^{\theta;\otheta}_0
    =\frac12\lone{\Em_\theta^+\paren{\bq_{\theta,0}-\bq_{\otheta,0}}}
    \leq&~
    \Lamrev\DTV{ \PP_\theta^{\piexp}(\otau_{1:\oW}=\cdot), \PP_\otheta^{\piexp}(\otau_{1:\oW}=\cdot) } \\
    \leq&~ \sqrt{2}\Lamrev\dH\paren{ \PP_\theta^{\piexp}, \PP_\otheta^{\piexp} }.
\end{aligned}
\end{align}
Combining the inequalities above completes the proof of \cref{eqn:rev-err-to-hell}.
\end{proof}

\begin{proposition}\label{prop:rev-eluder}
Suppose that $D=\rank(\T_{\ths})$, $\beta\geq 1$, and $(\theta^1,\pi^1),\cdots,(\theta^K,\pi^K)$ is a sequence of (POMDP, policy) pairs such that for all $k\in[K]$,
\begin{align*}
    \sum_{t<k} \DH{ \PP^{\modq{\pi^t}}_{\theta^k}, \PP^{\modq{\pi^t}}_{\ths} }
    \leq M.
\end{align*}
Then it holds that
\begin{align*}
    \sum_{k=1}^K \DTV{ \PP_{\theta^k}^{\pi^k\wexp}, \PP_{\ths}^{\pi^k\wexp} }
    \leqsim \sqrt{\Lamrev^2ADW^2\tilde{\iota} \cdot KM},
\end{align*}
where $\tilde{\iota}=\log\paren{1+\frac{2\Lamrev^2KD}{AM}}$.
\end{proposition}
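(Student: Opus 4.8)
The plan is to combine the two-sided estimates of \cref{prop:rev-decomp} with the semi-linear eluder argument of \cref{prop:semi-linear-eluder}. First I would apply \cref{eqn:rev-tv-to-err} to each summand,
\[
\DTV{\PP_{\theta^k}^{\pi^k\wexp},\PP_{\ths}^{\pi^k\wexp}} \le \cE^{\theta^k;\ths}_0 + \sum_{h=1}^{W-1}\EE_\ths^{\pi^k}\cE^{\theta^k;\ths}(\tau_{h-1}),
\]
which reduces the claim to controlling $\sum_{k}\EE_\ths^{\pi^k}\cE^{\theta^k;\ths}(\tau_{h-1})$ separately for each fixed $h$ (the initial term $\cE^{\theta^k;\ths}_0$ is handled by the same mechanism, as noted below). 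The key observation is that, by \cref{eqn:def-rev-err}, $\cE^{\theta^k;\ths}(\tau_{h-1})=\cEb^{\theta^k;\ths}(\bq_\ths(\tau_{h-1}))$ is a \emph{semi-linear} functional of the \emph{predictive state} $\bq_\ths(\tau_{h-1})\in\Delta(\cT)$: resolving the inner maximum over $\pi'$ coordinatewise in $o$ yields $2\cEb^{\theta^k;\ths}(\bq)=\max_{r\colon\cO\to\cA}\sum_{o}\sum_{z}\abs{\iprod{y_{k,o,r(o),z}}{\bq}}$, where $y_{k,o,a,z}$ is the $z$-th row of $\Em_{\theta^k}^+\paren{\BB_{\theta^k}(o,a)-\BB_\ths(o,a)}$. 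This is exactly the form of \cref{prop:semi-linear-eluder}, with $j$ ranging over pairs $(o,z)$ and $r$ over deterministic one-step policies $\cO\to\cA$.

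Two ingredients are then needed to invoke \cref{prop:semi-linear-eluder}. For the \emph{dimension}, I would show that all reachable predictive states $\set{\bq_\ths(\tau_{h-1})}$ lie in a common subspace of dimension at most $D=\rank(\T_\ths)$. Writing $\bq_\ths(\tau_{h-1})=\Em_\ths b_\ths(\tau_{h-1})$ with $b_\ths(\tau_{h-1})(z)=\PP_\ths(z_h=z\mid\tau_{h-1})$ the belief over $z_h$, the unnormalized belief-update identity gives $b_\ths(\tau_{h-1})\propto\sum_{z}c_z\,\T_\ths(\cdot\mid z,a_{h-1})$, a combination of columns of the transition matrix $\T_\ths$; hence $b_\ths(\tau_{h-1})\in\spa\set{\T_\ths(\cdot\mid z,a):z\in\cZ,a\in\cA}$, a space of dimension $D$ that is independent of $h$ and of the last action, and applying $\Em_\ths$ preserves the bound. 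For \emph{stability}, the operator inequalities \cref{eqn:B-op-stability-1-step-0,eqn:B-op-stability-1-step} together with $\lone{\Em_\theta^+}\le\Lamrev$ give $2\cEb^{\theta^k;\ths}(\bq)\le(\Lamrev+\Lamrev^2)\lone{\bq}\le 2\Lamrev^2\lone{\bq}$, so one may take $L_1=2\Lamrev^2$; and since $\bq\in\Delta(\cT)$ we have $\max_i\lone{x_i}=1$.

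With these in hand, for each fixed $h$ I would apply \cref{prop:semi-linear-eluder} to the distributions $p_k=\PP_\ths^{\pi^k}(\tau_{h-1}=\cdot)$ (pushed forward to the summary $\bq_\ths(\tau_{h-1})$), with dimension $D$, stability $L_1=2\Lamrev^2$, and truncation level $M_{\mathrm{elud}}=AM$ (renamed to avoid clash with the hypothesis parameter $M$), which makes the logarithmic factor equal $\tilde\iota=\log\paren{1+2\Lamrev^2 KD/(AM)}$. This yields
\[
\sum_{k}\EE_\ths^{\pi^k}\cE^{\theta^k;\ths}(\tau_{h-1}) \leqsim \sqrt{D\tilde\iota\Bigl(KAM + \sum_{k}\sum_{t<k}\EE_\ths^{\pi^t}\cE^{\theta^k;\ths}(\tau_{h-1})^2\Bigr)}.
\]
For the second-moment term I would invoke the converse estimate \cref{eqn:rev-err-to-hell}: summing it over $t<k$ and over $h$, and using the hypothesis $\sum_{t<k}\DH{\PP^{\modq{\pi^t}}_{\theta^k},\PP^{\modq{\pi^t}}_\ths}\le M$, gives $\sum_{h}\sum_{k}\sum_{t<k}\EE_\ths^{\pi^t}\cE^{\theta^k;\ths}(\tau_{h-1})^2\leqsim AW\Lamrev^2 M$. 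Finally, summing the per-$h$ bound over the $W-1$ values of $h$ and applying Cauchy--Schwarz across $h$ produces the extra factor of $W$; collecting terms gives $\sum_k\DTV{\PP_{\theta^k}^{\pi^k\wexp},\PP_{\ths}^{\pi^k\wexp}}\leqsim\sqrt{\Lamrev^2 A D W^2\tilde\iota\,KM}$, as claimed.

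I expect the dimension bound to be the main obstacle: the point is to pin the span of predictive states at $\rank(\T_\ths)$ rather than the naive $\abs{\cZ}$, and in particular to avoid an unwanted factor $A$ coming from the dependence on the last action. The belief-update identity resolves this cleanly, but one must verify the bound is uniform over $h$ and actions and that the initial term $\cE^{\theta^k;\ths}_0=\tfrac12\lone{\Em_{\theta^k}^+(\bq_{\theta^k,0}-\bq_{\ths,0})}$ is treated on its own (it is a deterministic quantity lying outside this span, and \cref{eqn:rev-err-to-hell} together with an averaging-over-$t$ argument bounds $\sum_k\cE^{\theta^k;\ths}_0\leqsim\sqrt{\Lamrev^2 A W K M}$, well within budget). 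A secondary technical point is the truncation $M_{\mathrm{elud}}\wedge(\cdot)$ in \cref{prop:semi-linear-eluder}: since each $\EE_\ths^{\pi^k}\cE^{\theta^k;\ths}(\tau_{h-1})$ is $O(\Lamrev^2)$-bounded, dropping the truncation costs at most constant factors, and the remaining bookkeeping of the $A$, $W$, and $\Lamrev$ factors must be matched against the stated $\tilde\iota$.
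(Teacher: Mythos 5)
Your proposal is correct and follows essentially the same route as the paper's proof: decompose via \cref{eqn:rev-tv-to-err}, recognize $\cEb^{\theta^k;\ths}$ as a semi-linear functional of the predictive states $\bq_{\ths}(\tau_{h-1})$ with stability constant $2\Lamrev^2$ and span of dimension $D$, apply \cref{prop:semi-linear-eluder} per step $h$ with truncation level $AM$, control the second moments via \cref{eqn:rev-err-to-hell}, treat $\cE^{\theta^k;\ths}_0$ as a singleton case, and sum over $h$ with Cauchy--Schwarz to collect the $W^2$ factor. Your explicit belief-update justification of the dimension bound (the paper merely asserts that $\cX_h$ spans a subspace of dimension at most $D$) is a welcome filled-in detail, not a departure.
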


\begin{proof}
Using \cref{prop:rev-decomp}, we have
\begin{align}\label{eqn:rev-step-1}
    \sum_{k=1}^K \DTV{ \PP_{\theta^k}^{\pi^k\wexp}, \PP_{\ths}^{\pi^k\wexp} }
    \leq \sum_{k=1}^K 1\wedge \cE^{\theta^k;\ths}_0+\sum_{h=1}^{W-1} \sum_{k=1}^K 1\wedge \EE_{\ths}^{\pi^k} \cE^{\theta^k;\ths}(\tau_{h-1}),
\end{align}
and for any pair of $(t,k)$,
\begin{align*}
    (\cE^{\theta^k;\ths}_0)^2+\sum_{h=1}^{W-1} \EE_{\ths}^{\pi^t} \cE^{\theta^k;\ths}(\tau_{h-1})^2
    \leq 8AW\Lamrev^2\DH{ \PP^{\modq{\pi^t}}_{\theta^k}, \PP^{\modq{\pi^t}}_{\ths} }.
\end{align*}
In particular, for any $k\in[K]$,
\begin{align}\label{eqn:rev-step-2}
    \sum_{t<k}(\cE^{\theta^k;\ths}_0)^2+\sum_{h=1}^{W-1} \sum_{t<k} \EE_{\ths}^{\pi^t} \cE^{\theta^k;\ths}(\tau_{h-1})^2
    \leq 8AW\Lamrev^2 M.
\end{align}
It remains to apply \cref{prop:semi-linear-eluder} to bridge between \cref{eqn:rev-step-1} and \cref{eqn:rev-step-2}.

For each $k\in[K]$, define $f_k=\cEb^{\theta^k;\ths}:\RR^{\cT}\to\R$. By definition, $f_k$ takes the form
\begin{align*}
    f_k(x)=\max_{\pi}\sum_{o,a,s} \abs{\iprod{x}{y_{k,(o,a),\pi}}}
\end{align*}
where $y_{k,(o,a),\pi}^\top=\pi(a|o)\times \be_{s}^\top\Em_{\theta^k}^+\paren{ \BB_{\theta^k}(o,a)-\BB_{\ths}(o,a) }$. It is also easy to verify that $f_k(x)\leq 2\Lamrev^2\lone{x}$ using $\lone{\Em_\theta^+}\leq\Lamrev$ and $\lone{\Em_{\theta^\star}^+}\leq\Lamrev$. Furthermore, for each step $1\leq h\leq W-1$, the set
\begin{align*}
    \cX_h\defeq\set{ \bq_{\ths}(\tau_{h-1}): \tau_{h-1}\in(\cO\times\cA)^{h-1} }
\end{align*}
spans a subspace of dimension at most $D$. 

Therefore, applying \cref{prop:semi-linear-eluder} yields that for each $1\leq h\leq W-1$
\begin{align}
    \sum_{k=1}^K 1\wedge \EE_{\ths}^{\pi^k} \cE^{\theta^k;\ths}(\tau_{h-1})
    \leqsim \sqrt{D\tilde{\iota}\brac{K\cdot AM+\sum_{k=1}^K\sum_{t<k}\EE_{\ths}^{\pi^t} \cE^{\theta^k;\ths}(\tau_{h-1})^2}},
\end{align}
where $\tilde{\iota}=\log(1+2\Lamrev^2DK/AM)$. Similarly, treating $\cE^{\theta^k;\ths}_0$ as a function over the singleton set, we also have
\begin{align*}
    \sum_{k=1}^K 1\wedge \cE^{\theta^k;\ths}_0 
    \leqsim \sqrt{ \tilde{\iota}\brac{KAM + \sum_{k=1}^K\sum_{t<k}(\cE^{\theta^k;\ths}_0 )^2} }
\end{align*}
Combining the two inequalities above with \cref{eqn:rev-step-1} and \cref{eqn:rev-step-2}, we obtain
\begin{align*}
    \sum_{k=1}^K \DTV{ \PP_{\theta^k}^{\pi^k\wexp}, \PP_{\ths}^{\pi^k\wexp} }
    \leq&~
    \sum_{k=1}^K 1\wedge \cE^{\theta^k;\ths}_0+\sum_{h=1}^{W-1} \sum_{k=1}^K 1\wedge \EE_{\ths}^{\pi^k} \cE^{\theta^k;\ths}(\tau_{h-1}) \\
    \leqsim&~
    \sqrt{DW\tilde{\iota}\brac{KAM+\sum_{k=1}^K\sum_{t<k}\paren{(\cE^{\theta^k;\ths}_0 )^2+\sum_{h=1}^{W-1}\EE_{\ths}^{\pi^t} \cE^{\theta^k;\ths}(\tau_{h-1})^2}}} \\
    \leqsim&~
    \sqrt{DW\iota \cdot K\cdot \Lamrev^2AWM},
\end{align*}
where the first inequality is \cref{eqn:rev-step-1}, the second inequality follows from Cauchy-Schwarz, and the last inequality follows from \cref{eqn:rev-step-2} and the given condition.
\end{proof}

\newcommand{\pomdp}[1]{\mathsf{pomdp}(#1)}

\paragraph{Proof of \cref{thm:psr}} Recall that $\Theta$ is a class of LMDP with common state space $\cS$. For each LMDP $\theta\in\Theta$, we construct a POMDP $\pomdp{\theta}$ with latent state space $\cZ=\cS\times\supp(\rho_\theta)$ and observation space $\cO=\cS$ as follows: %
\begin{itemize}
    \item The initial state is $\ts_1=(s_1,m)$, where $m\sim \rho_\theta$, $s_1\sim \mu_{\theta,m}$.
    \item The state $\ts=(s,m)$ always emits $o=s$ as the observation. After an action $a$ is taken, the next state is generated as $\ts'=(s',m)$ where $s'\sim\TT_{\theta,m}(\cdot|s,a)$.
\end{itemize}
The transition matrix of $\pomdp{\theta}$ specified above can also be written as
\begin{align*}
    \TT_{\pomdp{\theta}}=\diag\paren{ \TT_{\theta,m} }_{m\in\supp(\rho_\theta)},
\end{align*}
up to reorganization of coordinates. Therefore, we have $\rank(\TT_{\pomdp{\ths}})\leq Ld$.

Because $\cO=\cS$, any policy for the LMDP $\theta$ is a policy for the POMDP $\pomdp{\theta}$, and vice versa. Furthermore, it is easy to verify that for any policy $\pi$, the trajectory distribution $\PP_{\pomdp{\theta}}^{\pi}(\tau_H=\cdot)$ agrees with the distribution $\PP_{\theta}^{\pi}(\tau_H=\cdot)$. Hence, for each $\theta\in\Theta$, %
\begin{align*}
    \Em_{\pomdp{\theta}}=\diag\paren{ \MM_{*,\oW}^\theta(\piexp,s) }_{s\in\cS},
\end{align*}
where we denote
\begin{align*}
    \MM_{*,\oW}^\theta(\piexp,s)\defeq [\MM_{m,\oW}^\theta(\piexp,s)]_{m\in\supp(\rho_\theta)}\in\RR^{(\cA\times\cS)^{\oW-1}\times\supp(\rho_\theta)}.
\end{align*}
By \cref{lem:DB-inv}, as long as $\om(\oW)\geq \log(2L)$, for each $(s,m)\in\cZ$, there exists a left inverse of $\MM_{*,\oW}^\theta(\piexp,s)$ with $\ell_1$ norm bounded by 2. In particular, we apply \cref{lem:DB-inv} to conclude the existence of a left inverse with the desired norm bound for each block  of the block diagonal matrix $\Em_{\pomdp{\theta}}$. Therefore, there exists a left inverse of $\Em_{\pomdp{\theta}}$ with $\ell_1$ norm bounded by 2, and hence $\Lamrev\leq 2$. 

Therefore, we can now apply \cref{prop:rev-eluder} to complete the proof of \cref{thm:psr}. \qed

\subsection{A sufficient condition for Assumption~\ref{assmp:pi-exp}}
\label{appdx:pi-exp-example}

The following proposition indicates that \cref{assmp:pi-exp} is not that strong as it may seem: it holds for a broad class of LMDPs under relatively mild assumptions on the support of each MDP instance. 

\begin{proposition}\label{prop:reg-cond-example}
Suppose that there is a policy $\pi_0$ and parameter $W_0\geq \om^{-1}(3\log(L/\alpha_0))$, such that for each $\theta\in\Theta$, the LMDP $M_\theta$ is $\om$-separated under $\pi_0$, and there exists $\mu_\theta:\cS\to\Delta(\cS)$ so that
\begin{align*}
    \TT_{\theta,m}^{\pi_0}(s_{W_0}=s'|s_1=s)\geq \alpha_0\mu_\theta(s'|s), \qquad \forall m\in\supp(\rho_\theta), s,s'\in\cS.
\end{align*}
Let $\piexp=\pi_0\circ_{W_0}\pi_0$. Then \cref{assmp:pi-exp} holds with $\Wexp=2W_0$ and $\alpha=\frac{\alpha_0}{32}$.
\end{proposition}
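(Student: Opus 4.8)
The plan is to verify the two requirements of \cref{assmp:pi-exp} separately, writing $\piexp=\pi_0\circ_{W_0}\pi_0$ as running $\pi_0$ over two consecutive windows: the first window is steps $1,\dots,W_0$ (producing the intermediate state $s_{W_0}$), and the ``probe'' window restarts $\pi_0$ at step $W_0$ and runs it over the remaining states $s_{W_0},\dots,s_{2W_0}$, so that the full horizon is $\Wexp=2W_0$. For part (a), separation under $\piexp$, I would argue as follows. For $h\le W_0$ the trajectory law coincides with that of $\pi_0$, so $\MM_{m,h}(\piexp,s)=\MM_{m,h}(\pi_0,s)$ and $\DB{\MM_{m,h}(\piexp,s),\MM_{l,h}(\piexp,s)}\ge\om(h)$ is inherited directly. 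For $h>W_0$, because $\pi_0$ is restarted at step $W_0$ the density factorizes across the restart as $\MM_{m,h}(\piexp,s)=\MM_{m,W_0}(\pi_0,s)\cdot\MM_{m,h-W_0+1}(\pi_0,s_{W_0})$ (the probe depends on the past only through $s_{W_0}$), hence the Bhattacharyya coefficient factorizes into a product of the two window coefficients; bounding each by $e^{-\om(W_0)}$ and $e^{-\om(h-W_0+1)}$ uniformly over $s_{W_0}$ yields $\DB{\MM_{m,h}(\piexp,s),\MM_{l,h}(\piexp,s)}\ge\om(W_0)+\om(h-W_0+1)\ge\om(h)$, the last step being an equality for the linear profile $\om_\delta$ of strong separation and holding for any subadditive-type $\om$ used in the paper.

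For part (b), write $P_\lambda:=\EE_{m\sim\lambda}\MM_{m,2W_0}(\piexp,s)$ and $Q:=\MM_{\tref,2W_0}(\piexp,s)$ and split each trajectory into the first window $\tau^{(1)}$ (ending at $s_{W_0}$) and the probe $\tau^{(2)}$. The conceptual point is a structural dichotomy: under the single reference MDP $Q$ the two windows are conditionally independent given $s_{W_0}$ and the probe law $Q(\tau^{(2)}\mid\tau^{(1)})=\MM_{\tref,W_0+1}(\pi_0,s_{W_0})$ depends on history only through $s_{W_0}$, whereas under the mixture $P_\lambda$ both windows are driven by the shared latent index $m$, so the probe law is the posterior mixture $\sum_m\lambda_{m\mid\tau^{(1)}}\MM_{m,W_0+1}(\pi_0,s_{W_0})$. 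The role of the overlap hypothesis is exactly to stop $s_{W_0}$ from revealing $m$: for \emph{any} function $g:\cS\to\supp(\rho_\theta)$ a short computation gives $\tPP(m\ne g(s_{W_0}))\ge\sum_{s'}\sum_{m\ne g(s')}\lambda_m\,\TT_{\theta,m}^{\pi_0}(s_{W_0}=s'\mid s)\ge\alpha_0\sum_{s'}\mu_\theta(s'\mid s)(1-\max_m\lambda_m)=\alpha_0(1-\max_m\lambda_m)$.

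Concretely I would bound $\DTV{P_\lambda,Q}\ge\tfrac12\EE_{\tau^{(1)}\sim P_\lambda}\DTV{P_\lambda(\tau^{(2)}\mid\tau^{(1)}),Q(\tau^{(2)}\mid\tau^{(1)})}$ via \cref{lemma:TV-cond}; observe (from \cref{eqn:TV-DB} with $\om(W_0+1)\ge3\log(L/\alpha_0)$) that the probes $\{\MM_{m,W_0+1}(\pi_0,s')\}_m$ are pairwise $(1-(\alpha_0/L)^3)$-separated in total variation, so the single probe $\MM_{\tref,W_0+1}(\pi_0,s')$ lies within $\tfrac14$ of at most one index $m^*(s')$; and then charge a conditional total variation of at least $\tfrac12$ on the event that the first window identifies an index different from $m^*(s_{W_0})$, whose probability is controlled by the overlap bound above. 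This should deliver $\DTV{P_\lambda,Q}\gtrsim\alpha_0(1-\max_m\lambda_m)$, matching $\alpha=\alpha_0/32$ after tracking constants.

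The hard part will be making the final charging step quantitatively correct \emph{uniformly} in $\lambda$, in particular when $\lambda$ is close to a point mass (so $1-\max_m\lambda_m$ is tiny) and $\tref$ is close to the dominant component. In that regime the naive route, replacing the posterior $\lambda_{\cdot\mid\tau^{(1)}}$ by the point mass at its mode, pays a maximum-likelihood decoding error of order $e^{-\om(W_0)}\sqrt{1-\max_m\lambda_m}$ (the MLE error under a near-degenerate prior scales with the square root of the minority mass), which can dominate the target $\tfrac{\alpha_0}{32}(1-\max_m\lambda_m)$. To get a bound \emph{linear} in the minority mass one must avoid this lossy step, for instance by lower bounding the conditional total variation directly by the $\ell_1$-spread of $\lambda_{\cdot\mid\tau^{(1)}}$ as $\tau^{(1)}$ ranges over histories sharing a common $s_{W_0}$ (the overlap condition forces this spread to be $\gtrsim\alpha_0(1-\max_m\lambda_m)$), or by a case split on $\min_m\DTV{Q,\MM_{m,2W_0}(\piexp,s)}$: when $Q$ is $\Omega(1)$-far from every component the distance is $\Omega(1)$; when $Q$ is close to a non-dominant component (or to none) a single event on the first window already separates $P_\lambda$ from $Q$; and the residual case, $Q$ close to the dominant component with near-degenerate $\lambda$, is precisely where the conditional-independence-versus-overlap argument is indispensable. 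I expect this uniform, linear-in-minority-mass mixture-versus-single total-variation lower bound to be the crux of the proof.
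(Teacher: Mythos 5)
You have the right skeleton — the two-window restart, the dichotomy between the reference's probe law (a function of $s_{W_0}$ alone) and the mixture's probe law (a posterior mixture driven by the full first window), and the correct use of the overlap condition to show that no decoder based on $s_{W_0}$ alone can identify $m$ with error below $\alpha_0(1-\max_m\lambda_m)$. But your main line of attack — decode the latent index from the first window, and charge a constant conditional TV on the event that the decoded index disagrees with the component nearest to $\TT_{\tref}$ — has exactly the defect you diagnose yourself: the posterior-concentration error after $W_0$ steps is of order $e^{-\om(W_0)}\le(\alpha_0/L)^3$ in absolute terms, which is \emph{not} small relative to $\alpha_0(1-\max_m\lambda_m)$ once $\lambda$ is sufficiently degenerate, so the charging step does not yield a bound linear in the minority mass. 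Since you flag this as the crux and only list candidate remedies without executing any of them, the proposal as it stands has a genuine gap precisely at the step the whole proposition turns on.

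The paper's proof (\cref{prop:reg-cond-abstract}) closes this gap by never decoding and by eliminating the reference $Q$ at the outset. Writing $z=(Y,x)$ for the first window with $x=s_{W_0}$ and $z'$ for the probe: since both $Q(z'\mid Y,x)$ and the mixture's coarsened law $\PP(z'\mid x)$ depend on the history only through $x$, two applications of \cref{lemma:TV-cond} and a triangle inequality give $\EE_{(Y,x)\sim\PP}\brac{\DTV{\PP(z'\mid Y,x),\PP(z'\mid x)}}\le 4\DTV{\PP,Q}$ — the reference has disappeared, and with it your entire case analysis on where $\TT_{\tref}$ sits. Both conditional laws are mixtures of the separated probes $\QQ_m(z'\mid x)$ with weights $\tPP(m\mid Y,x)$ and $\tPP(m\mid x)$, so \cref{lem:DB-mixture-dist} reduces the problem to $\EE\,\DTV{\tPP(m\mid Y,x),\tPP(m\mid x)}$; the symmetry of TV as an $f$-divergence then reverses this into $\EE_{(m,x)}\DTV{\tPP(Y\mid m,x),\tPP(Y\mid x)}$, a comparison of \emph{forward} laws of the first window, where a second application of \cref{lem:DB-mixture-dist} (on the set $\cX_+$ of states $x$ where the conditional first-window laws remain $\log L$-separated, which has $\mu$-measure at least $1/2$ by the $3\log(L/\alpha_0)$ hypothesis) gives a lower bound $\tfrac12(1-\tPP(m\mid x))$ with no decoding error whatsoever. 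The overlap condition then enters exactly as in your computation to give $\EE_x\brac{\indic{x\in\cX_+}\min_m(1-\tPP(m\mid x))}\ge\alpha_0\mu(\cX_+)(1-\max_m\lambda_m)$, which is linear in the minority mass uniformly in $\lambda$. This reversal-plus-double-mixture-lemma chain is the ingredient your sketch is missing; your suggested alternative of bounding the $\ell_1$-spread of the posterior across histories sharing a common $s_{W_0}$ is morally this argument, but it needs the reference-elimination and reversal steps to be made to work.
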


For the sake of notational simplicity, we first prove a more abstract version of \cref{prop:reg-cond-example}.

\newcommand{\QQsq}{\QQ^{\otimes 2}}
\begin{proposition}\label{prop:reg-cond-abstract}
For measurable spaces $\cX, \cY$ and $\cZ:=\cY\times\cX$, consider the class of transition kernels from $\cX$ to $\cZ$:
\begin{align*}
    \cQ=\set{ \QQ: \cX\to\Delta(\cZ) }.
\end{align*}
For any $\QQ\in\cQ$, we define $\QQsq:\cX\to\Delta(\cZ\times\cZ)$ as follows: for any $x_0\in\cX$, $\QQsq(\cdot|x_0)$ is the probability distribution of $(z,z')$, where $z=(Y,x)\sim \QQ(\cdot|x), z'=(Y',x')\sim \QQ(\cdot|x)$.

Suppose that $\QQ_m\in\cQ$ are transition kernels such that for all $m\neq l$,
\begin{align*}
    \DB{ \QQ_m(\cdot|x), \QQ_l(\cdot|x) }\geq 3\log(L/\alpha), \qquad \forall x\in\cX.
\end{align*}
Further assume that there exists $\mu:\cX\to\Delta(\cX)$ such that
\begin{align}\label{eqn:reg-cond-constraint}
    \QQ_m(x|x_0)\geq \alpha\mu(x|x_0),\qquad\forall m\in[L].
\end{align}
Then for any $\QQ\in\cQ$, $x_0\in\cX$, and $p\in\Delta([L])$, we have
\begin{align*}
    \DTV{ \EE_{m\sim p} \QQsq_m(\cdot|x_0), \QQsq(\cdot|x_0) }\geq \frac{\alpha}{32}(1-\max_m p_m).
\end{align*}
\end{proposition}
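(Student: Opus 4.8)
The plan is to exploit the two-step structure: the first draw $z=(Y,x)\sim\QQ_m(\cdot|x_0)$ almost reveals the latent index $m$, since the laws $\QQ_m(\cdot|x_0)$ are $3\log(L/\alpha)$-separated in Bhattacharyya divergence, and the second draw $z'\sim\QQ_m(\cdot|x)$ then exposes $m$ through its dependence on the \emph{same} index, whereas a single kernel $\QQsq$ is forced to generate $z'$ from a law $\QQ(\cdot|x)$ that depends only on the observed intermediate state $x$. (If $\max_m p_m=1$ the claim is vacuous, so assume otherwise.) First I would peel off the conditional law of the second coordinate using the forward direction of \cref{lemma:TV-cond} with $X=z$, $Y=z'$: writing $\bar\PP$ for the mixture joint law of $(m,z,z')$ and $\belief(z)=[\bar\PP(m\mid z)]_m$ for the posterior over the latent given the first draw, one has $\bar\PP_{z'\mid z}=\sum_m\belief_m(z)\QQ_m(\cdot|x)$ while $\QQsq_{z'\mid z}=\QQ(\cdot|x)$, so
\begin{align*}
\DTV{\EE_{m\sim p}\QQsq_m(\cdot|x_0),\,\QQsq(\cdot|x_0)}\ \ge\ \tfrac12\,\EE_{z\sim\bar\PP}\,\DTV{\bar\PP_{z'\mid z},\,\QQ(\cdot|x)}.
\end{align*}

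Next I would linearize the conditional term. For each intermediate state $x$ the second-step laws satisfy $\DB{\QQ_m(\cdot|x),\QQ_l(\cdot|x)}\ge 3\log(L/\alpha)\ge\log(2L)$, so \cref{lem:DB-inv} applied to the matrix $\MM_x=[\QQ_1(\cdot|x),\dots,\QQ_L(\cdot|x)]$ yields a left inverse $\MM_x^+$ with $\lone{\MM_x^+}\le2$. Since $\MM_x^+\bar\PP_{z'\mid z}=\belief(z)$ and $\MM_x^+\QQ(\cdot|x)=:w_x$, the data-processing bound $\lone{\MM_x^+ v}\le\lone{\MM_x^+}\lone{v}$ gives
\begin{align*}
\DTV{\bar\PP_{z'\mid z},\,\QQ(\cdot|x)}\ \ge\ \tfrac14\,\lone{\belief(z)-w_x},
\end{align*}
and the decisive point is that $w_x$ depends only on the intermediate state $x$, never on the hidden index. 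Together with the posterior-mean identity $\EE_{z\sim\bar\PP}[\belief_m(z)\,g(z)]=p_m\,\EE_{z\sim\QQ_m(\cdot|x_0)}[g(z)]$ (immediate from $\bar\PP_z(z)\belief_m(z)=p_m\QQ_m(z|x_0)$), this reduces the whole statement to lower bounding $\EE_{z\sim\bar\PP}\lone{\belief(z)-w_x}$ uniformly over the adversarial kernel $\QQ$, i.e.\ over the family $\{w_x\}$.

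The crux — the step I expect to be hardest — is exactly this lower bound, and it is where both hypotheses are indispensable and must be combined. Strong separation forces $\belief(z)$ to concentrate on the true index for typical $z$, so for a given $x$ the best a single vector $w_x$ can do is match the one most likely index there; the residual is the minority mass reaching $x$. Reachability, $\QQ_m(x|x_0)\ge\alpha\mu(x|x_0)$, then guarantees that every index (in particular every minority one) reaches each $x$ with density at least $\alpha\mu(x|x_0)$, so, letting $M(x)=\argmax_m p_m\QQ_m(x|x_0)$ be the index the kernel can match at $x$, the total unmatched mass obeys
\begin{align*}
\sum_x\sum_{m\ne M(x)}p_m\,\QQ_m(x|x_0)\ \ge\ \alpha\sum_x\mu(x|x_0)\,(1-p_{M(x)})\ \ge\ \alpha(1-\max_m p_m),
\end{align*}
using $1-p_{M(x)}\ge 1-\max_m p_m$ and $\sum_x\mu(x|x_0)=1$.

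To turn this heuristic into a proof I would, for each $x$, build a dual witness from the rows of $\MM_x^+$ (whose pairing with $\QQ_m(\cdot|x)$ is the indicator $\indic{m=\cdot}$) and evaluate $\EE_{z\sim\bar\PP}\lone{\belief(z)-w_x}$ through the posterior-mean identity, controlling the error from imperfect identification of the index by the separation — each mislabeling contributes at most $\sum_{m\ne l}e^{-\DB{\QQ_m,\QQ_l}}\le (L-1)(\alpha/L)^3$, negligible against the $\alpha(1-\max_m p_m)$ signal. Tracking the numerical factors $\tfrac12$, $\tfrac14$, the separation slack, and the "at most one nearby index" triangle-inequality argument is then routine and absorbs into the stated constant $\tfrac{\alpha}{32}$. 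I would flag that the one genuinely delicate point is ensuring the bound is \emph{proportional} to $1-\max_m p_m$ rather than carrying an additive decoding-error term; the left-inverse reduction is precisely what avoids this, since it replaces any hard decoding decision by the exact coordinates $\belief(z)$, whose mean is the prior $p$.
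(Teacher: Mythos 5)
Your opening reduction is sound and runs parallel to the paper's: both condition on the first draw via \cref{lemma:TV-cond} and then convert total variation between mixtures of the second-step laws $\QQ_m(\cdot|x)$ into an $\ell_1$ distance between mixing weights (the paper via the corollary of \cref{lem:DB-mixture-dist}, you via the left inverse from \cref{lem:DB-inv}; these are interchangeable here, since comparing against the optimal $x$-measurable center $w_x$ is within a factor of two of comparing against $\tPP(\cdot\mid x)$). The closing reachability computation is also exactly the paper's. The gap is in the step you yourself flag as the crux, and it is a real obstruction, not bookkeeping. Writing $\eps:=1-\max_m p_m$, you must show $\EE_{z}\lone{\belief(z)-w_x}\gtrsim\alpha\eps$ for \emph{every} $p$, in particular for $\eps$ arbitrarily small. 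Your plan replaces $\belief(z)$ by a hard decode of the index and subtracts a mislabeling term bounded by $\sum_{m\neq l}e^{-\DB{\QQ_m,\QQ_l}}\leq(L-1)(\alpha/L)^3$, which you assert is negligible against the signal $\alpha\eps$. It is not: that bound is an additive constant independent of $\eps$, so the conclusion already fails once $\eps<(L-1)\alpha^2/L^3$. Nor can the decoding error be improved to $o(\alpha\eps)$: the Bhattacharyya-weighted bound scales as $\sum_{m\neq l}\sqrt{p_mp_l}\,e^{-\DB{\cdot}}\asymp\sqrt{L\eps}\,(\alpha/L)^3$, i.e.\ as $\sqrt{\eps}$ rather than $\eps$, and in the regime where $\eps$ is far below the overlap of the laws $\QQ_m(\cdot|x_0)$ the Bayes-optimal decoder simply outputs the majority index and errs with probability exactly $\eps$ --- the same order as the signal but without the factor $\alpha$. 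Any argument that forms a point estimate of $m$ and controls its error additively therefore cannot yield a bound proportional to $\eps$. Contrary to your closing remark, the left-inverse reduction does not rescue this: it only handles the conversion from TV-of-mixtures to $\ell_1$-of-weights, and the lower bound on the spread of $\belief(z)$ around the centers $w_x$ is precisely where the decoding error re-enters.

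The paper's proof never decodes. After reducing to $\EE_{(Y,x)}\DTV{\tPP(m\mid Y,x),\tPP(m\mid x)}$, it uses the exact $f$-divergence identity $\EE_{(Y,x)}\DTV{\tPP(m\mid Y,x),\tPP(m\mid x)}=\EE_{(m,x)}\DTV{\tPP(Y\mid m,x),\tPP(Y\mid x)}$ to move the comparison onto the $Y$-marginals, and then applies the \emph{multiplicative} mixture-separation bound of \cref{lem:DB-mixture-dist} a second time, now to the conditional laws $\QQ_m(Y=\cdot\mid x_0,x)$. Legitimizing that second application is the real work: one must show these conditionals remain $\log L$-separated in Bhattacharyya divergence for a set of intermediate states $x$ of $\mu$-measure at least $1/2$, and this is exactly where the full strength $3\log(L/\alpha)$ of the separation hypothesis combines with the reachability lower bound (via the chain rule for the Bhattacharyya coefficient and a Markov-type argument). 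This conditional-separation step is the ingredient missing from your sketch; to complete your route you would need it, or some other multiplicative lower bound on the conditional spread of the posterior, in place of the hard-decoding shortcut.
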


\begin{proof}
In the following, we fix any given $\QQ\in\cQ$, $x_0\in\cX$, and $p\in\Delta([L])$. Let $\tPP$ be the probability distribution of $(m,z,z')$, where $m\sim p$, $z=(Y,x)\sim \QQ_m(\cdot|x_0)$, and $z'=(Y',x')\sim \QQ_m(\cdot|x_0)$ (i.e. $(z,z')\sim \QQsq_m(\cdot|x_0)$). Also, let $\PP=\EE_{m\sim p} \QQsq_m(\cdot|x_0)$ be the marginal distribution of $(z,z')\sim\tPP$. We also omit $x_0$ from the conditional probabilities when it is clear from the context.

By \cref{lemma:TV-cond}, it holds that
\begin{align*}
    \EE_{(Y,x)\sim \PP}\brac{ \DTV{ \PP(z'=\cdot|Y,x), \QQsq(z'=\cdot|Y,x) } }\leq 2\DTV{\PP,\QQsq}.
\end{align*}
We also have
\begin{align*}
    \EE_{x'\sim \PP}\brac{ \DTV{ \PP(z'=\cdot|x), \QQsq(z'=\cdot|x) } }\leq 2\DTV{\PP,\QQsq}.
\end{align*}
Notice that the conditional distribution $\QQsq(z'=\cdot|Y,x)=\QQ(z'=\cdot|x)$ only depends on $x$, and hence by triangle inequality,
\begin{align*}
    \EE_{(Y,x)\sim \PP}\brac{ \DTV{ \PP(z'=\cdot|Y,x), \PP(z'=\cdot|x) } }\leq 4\DTV{\PP,\QQsq}
\end{align*}
Further notice that
\begin{align*}
    \PP(z'=\cdot|Y,x)=\EE_{m|Y,x}\brac{ \QQ_m(z'=\cdot|x) }, \qquad
    \PP(z'=\cdot|x)=\EE_{m|x}\brac{ \QQ_m(z'=\cdot|x) }.
\end{align*}
Hence, by \cref{lem:DB-mixture-dist}, we have
\begin{align*}
    \DTV{ \PP(z'=\cdot|Y,x), \PP(z'=\cdot|x) }
    \geq \frac{1}{2}\DTV{ \tPP(m=\cdot|Y,x), \tPP(m=\cdot|x) }.
\end{align*}
Next, using the definition of TV distance (which is a $f$-divergence, see e.g. \citet{polyanskiy2014lecture}), we can show that
\begin{align*}
    \EE_{(Y,x)\sim\tPP}\brac{\DTV{ \tPP(m=\cdot|Y,x), \tPP(m=\cdot|x) }}
    =
    \EE_{(m,x)\sim\tPP}\brac{\DTV{ \tPP(Y=\cdot|m,x), \tPP(Y=\cdot|x) }}.
\end{align*}
We know
\begin{align*}
    \PP(Y=\cdot|m,x)=\QQ_m(Y=\cdot|x_0,x), \qquad
    \PP(Y=\cdot|x)=\EE_{m|x}\brac{ \QQ_m(zy=\cdot|x_0,x) }, 
\end{align*}
and hence combining the inequalities above gives
\begin{align}\label{eqn:cond-cond-cond}
    4\DTV{\PP,\QQsq}
    \geq \EE_{(m,x)\sim\PP}\brac{ \DTV{ \QQ_m(Y=\cdot|x_0,x), \EE_{m'|x}\brac{\QQ_{m'}(Y=\cdot|x_0,x)} } }.
\end{align}
Consider the set
\begin{align*}
    \cX_+=\set{ x\in\cX: \DB{\QQ_m(Y=\cdot|x_0,x), \QQ_l(Y=\cdot|x_0,x)}\geq \log L, ~~\forall m\neq l }.
\end{align*}
For any $x\in\cX_+$, by \cref{lem:DB-mixture-dist}, we have
\begin{align*}
    \DTV{ \QQ_m(Y=\cdot|x_0,x), \EE_{m'|x}\brac{\QQ_{m'}(Y=\cdot|x_0,x)} }
    \geq \frac12\paren{1-\tPP(m|x)}.
\end{align*}
Therefore, combining the above inequality with \cref{eqn:cond-cond-cond} gives
\begin{align*}
    4\DTV{\PP,\QQsq}
    \geq&~\EE_{(m,x)\sim\tPP}\brac{ \DTV{ \QQ_m(Y=\cdot|x_0,x), \EE_{m'|x}\brac{\QQ_{m'}(Y=\cdot|x_0,x)} } } \\
    \geq&~
    \frac12\EE_{(m,x)\sim\tPP}\brac{ \indic{x\in\cX_+}\paren{1-\tPP(m|x)} } \\
    \geq&~
    \frac12\EE_{x}\brac{ \indic{x\in\cX_+}\min_{m}\paren{1-\tPP(m|x)} }
\end{align*}
By definition,
\begin{align*}
    1-\tPP(m|x)=\sum_{l\neq m} \tPP(l|x)
    =\frac{\sum_{l\neq m} p_l\QQ_l(x|x_0)}{\PP(x)}.
\end{align*}
Therefore,
\begin{align*}
    \EE_{x}\brac{ \indic{x\in\cX_+}\min_{m}\paren{1-\tPP(m|x)} }
    =&~
    \sum_{x\in\cX_+} \min_{m} \sum_{l\neq m} p_l\QQ_l(x|x_0) \\
    \stackrel{\text{\cref{eqn:reg-cond-constraint}}}{\geq}&~ \sum_{x\in\cX_+} \min_{m} \sum_{l\neq m} p_l\cdot \alpha\mu(x) \\
    =&~ 
    \alpha\mu(\cX_+) (1-\max_m p_m).
\end{align*}
It remains to prove that $\mu(\cX_+)\geq \frac12$. For each pair of $m\neq l$, consider the set
\begin{align*}
    \cX_{m,l}\defeq \set{ x\in\cX: \DB{\QQ_m(Y=\cdot|x_0,x), \QQ_l(Y=\cdot|x_0,x)}< \log L }.
\end{align*}
By definition,
\begin{align*}
    &~\exp\paren{ -\DB{\QQ_m(z=\cdot|x_0), \QQ_l(z=\cdot|x_0)} } \\
    =&~\sum_{x\in\cX} \sqrt{\QQ_m(x|x_0)\QQ_l(x|x_0)}\exp\paren{-\DB{\QQ_m(Y=\cdot|x_0,x), \QQ_l(Y=\cdot|x_0,x)}} \\
    >&~ \sum_{x\in\cX_{m,l}} \sqrt{\QQ_m(x|x_0)\QQ_l(x|x_0)}\cdot \frac{1}{L} \\
    \geq&~ \alpha\mu(\cX_{m,l})\cdot \frac{1}{L}.
\end{align*}
Therefore, by the fact that $\DB{\QQ_m(z=\cdot|x_0), \QQ_l(z=\cdot|x_0)}\geq 3\log(L/\alpha)$, we know that $\mu(\cX_{m,l})\leq \frac{1}{L}$ for all $m\neq l$, and hence
\begin{align*}
    1-\mu(\cX_+)\leq \sum_{m<l} \mu(\cX_{m,l})\leq \frac12.
\end{align*}
The proof is completed by combining the inequalities above.
\end{proof}

\begin{proofof}{\cref{prop:reg-cond-example}}
We only need to demonstrate how to apply \cref{prop:reg-cond-abstract}. We abbreviate $W=W_0$ in the following proof. Take $\cX=\cS$, $\cY=\cA\times(\cS\times\cA)^{W-2}$, with variable $x_0=s_1, Y=(a_1,s_2,\cdots,a_{W-1}), x=s_W$. Let
\begin{align*}
    \QQ_m=\TT_{\theta,m}^{\piexp}((a_1,s_2,\cdots,s_W)=\cdot|s_1=\cdot) \in\cQ, \qquad m\in[L].
\end{align*}
Then, we can identify $\QQsq_m$ as
\begin{align*}
    \QQsq_m=\TT_{\theta,m}^{\piexp}((a_1,s_2,\cdots,s_{2W-1})=\cdot|s_1=\cdot).
\end{align*}
We also have $\QQ_m(x|x_0)=\TT_{\theta,m}^{\piexp}(s_W=s'|s_0=s)$. Therefore, we can indeed apply \cref{prop:reg-cond-abstract} and the proof is hence completed. %
\end{proofof}

\section{Proofs for Section~\ref{sec:comp}}\label{appdx:comp}

\subsection{Proof of Theorem~\ref{thm:plan}}\label{appdx:proof-plan}

We first prove the following lemma.

\begin{lemma}\label{lem:plan-optimism}
Suppose that the policy $\hpi$ is returned by \cref{alg:plan}. Then for any policy $\pi$, it holds that
\begin{align*}
    V(\hpi)\geq V(\pi)-\PP^{\pi}( \ms \neq m(\otau_W) ).
\end{align*}
\end{lemma}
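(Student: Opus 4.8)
The plan is to compare $V(\hpi)$ to the value of an arbitrary comparator $\pi$ by reading $\hV$ as the value function of a belief-state dynamic program whose terminal reward at step $W$ is the optimistic quantity $\hV(\otau_W)=\PP(m(\otau_W)\mid\otau_W)\cdot\Vind{m(\otau_W)}{W}(s_W)$. First I would record two structural facts. (i) The backward pass over $h=H,\dots,W$ computes, for each index $m$, the \emph{optimal} value-to-go $\Vind{m}{h}(s_h)$ of the fully observed MDP $M_m$ on steps $h,\dots,H$; in particular $\Vind{m}{W}(s_W)=\max_{\pi'}\EE^{\pi'}_{M_m}[\sum_{h=W}^H R_h\mid s_W]$, and $V^\pi_m(\otau_W)\le\Vind{m}{W}(s_W)$ for the reward-to-go of any continuation of $\pi$ inside $M_m$. (ii) Because the states are observed, the posterior $\PP(m\mid\otau_W)$ over the latent index given the entire observed trajectory does \emph{not} depend on the executing policy: the factors $\pi_{h}(a_h\mid\otau_h)$ occur identically in the numerator and denominator of Bayes' rule and cancel, so $\PP(m\mid\otau_W)=\tPP^{\pi}(\ms=m\mid\otau_W)$ for every $\pi$. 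This is exactly what lets \cref{alg:plan} form $\hV(\otau_W)$ without reference to any policy.

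Next I would establish the chain $V(\hpi)\ge \hV(\otau_0)\ge V(\pi)-\PP^{\pi}(\ms\neq m(\otau_W))$, where $\hV(\otau_0):=\EE_{s_1}[\hV(\otau_1)]$ denotes the root value of the dynamic program. For the first inequality, note that by construction $\hpi$ commits at step $W$ to the MDP-optimal policy $\pind{m(\otau_W)}{}$ of the decoded index. Conditioning on $\otau_W$ and then on the true index, the actual reward-to-go of $\hpi$ equals $\sum_{m'}\PP(m'\mid\otau_W)\,\EE^{\pind{m(\otau_W)}{}}_{M_{m'}}[\sum_{h=W}^H R_h\mid s_W]$; dropping every term except $m'=m(\otau_W)$ (all summands are nonnegative) and invoking fact (i) gives $\EE^{\hpi}[\sum_{h\ge W}R_h\mid\otau_W]\ge\hV(\otau_W)$. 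Since $\hpi$ plays the greedy policy $\pi_h=\argmax_a\hQ(\otau_h,a)$ on steps $h<W$, standard backward induction yields $V(\hpi)=\EE^{\hpi}[\sum_{h<W}R_h+\EE^{\hpi}[\sum_{h\ge W}R_h\mid\otau_W]]\ge\EE^{\hpi}[\sum_{h<W}R_h+\hV(\otau_W)]=\hV(\otau_0)$, the last equality being that the greedy policy attains the DP value. Moreover $\hV(\otau_0)=\max_{\pi'}\EE^{\pi'}[\sum_{h<W}R_h+\hV(\otau_W)]$, so by maximality $\hV(\otau_0)\ge\EE^{\pi}[\sum_{h<W}R_h+\hV(\otau_W)]$ for the comparator $\pi$.

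The crux is the second inequality, which reduces to the pointwise bound $\hV(\otau_W)\ge \EE^{\pi}[\sum_{h\ge W}R_h\mid\otau_W]-\PP(\ms\neq m(\otau_W)\mid\otau_W)$. Writing $m=m(\otau_W)$, the comparator's conditional reward-to-go is $\sum_{m'}\PP(m'\mid\otau_W)V^{\pi}_{m'}(\otau_W)$; I would bound the $m'=m$ summand using $V^\pi_m(\otau_W)\le\Vind{m}{W}(s_W)$ from fact (i), and each $m'\ne m$ summand using $V^{\pi}_{m'}(\otau_W)\le\sum_{h=1}^H\sup_{s,a}R_h(s,a)\le1$ (the normalization assumption (c) on $R$). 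The resulting slack is precisely $\sum_{m'\ne m}\PP(m'\mid\otau_W)=\PP(\ms\neq m(\otau_W)\mid\otau_W)$. Taking $\EE^{\pi}$ over $\otau_W$ and using fact (ii) so that $\EE^\pi[\PP(\ms\ne m(\otau_W)\mid\otau_W)]=\PP^\pi(\ms\ne m(\otau_W))$, then subtracting from $V(\pi)=\EE^\pi[\sum_{h<W}R_h+\EE^\pi[\sum_{h\ge W}R_h\mid\otau_W]]$, chains the three inequalities and gives the lemma. I expect the policy-independence of the belief in (ii) and the careful conditional decomposition of both $V(\hpi)$ and $V(\pi)$ — tracking which expectations run under the true latent dynamics versus under the committed decoded index — to be the only genuinely delicate points; the rest is routine Bellman-optimality bookkeeping.
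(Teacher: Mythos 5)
Your proposal is correct and follows essentially the same route as the paper's proof: both arguments hinge on the two pointwise bounds $V^{\hpi}(\otau_W)\geq \hV(\otau_W)$ (dropping the nonnegative terms for $m'\neq m(\otau_W)$) and $V^{\pi}(\otau_W)\leq \hV(\otau_W)+\tPP(\ms\neq m(\otau_W)\mid\otau_W)$ (optimality of $\Vind{m}{W}$ inside $M_m$ plus the reward normalization $\sum_h \sup R_h\leq 1$ for the mismatched indices), followed by Bellman-optimality bookkeeping over steps $h<W$. Your repackaging of the final step through the DP root value $\hV(\otau_0)$ rather than the paper's recursive propagation of the error term is only a presentational difference.
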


\begin{proof}
For any policy $\pi$ and trajectory $\otau_h$, we consider the value $\pi$ given the trajectory $\otau_h$:
\begin{align*}
    V^{\pi}(\otau_h):=\EE^{\pi} \brcond{ \sum_{h'=h}^H R_h(s_h,a_h) }{ \otau_h }.
\end{align*}
In particular, for trajectory $\otau_W=(s_1,a_1,\cdots,s_W)$, we have
\begin{align*}
    V^{\pi}(\otau_W)
    =&~\EE^{\pi}\brcond{\sum_{h=W}^H R_h(s_h,a_h)}{\otau_W} \\
    =&~ \sum_{m\in[L]} \tPP(m|\otau_W) \cdot \EE_m^{\pi(\cdot|\otau_W)}\brcond{\sum_{h=W}^H R_h(s_h,a_h)}{s_W},
\end{align*}
where the expectation $\EE_m^{\pi(\cdot|\otau_W)}$ is taken over the probability distribution of $(s_{W+1:H},a_{W:H})$ induced by executing the policy $\pi(\cdot|\otau_W)$ in MDP $M_m$ with starting state $s_W$. Therefore, because $\Vind{m}{W}$ is exactly the optimal value function in MDP $M_m$ (at step $W$), we know that
\begin{align*}
    \EE_m^{\pi(\cdot|\otau_W)}\brcond{\sum_{h=W}^H R_h(s_h,a_h)}{s_W} \leq \Vind{m}{W}(s_W).
\end{align*}
Hence, we have
\begin{align*}
    V^{\pi}(\otau_W)
    \leq&~ \sum_{m\in[L]} \tPP(m|\otau_W) \Vind{m}{W}(s_W) \\
    \leq&~ \tPP(m(\otau_W)|\otau_W)\cdot\Vind{m(\otau_W)}{W}(s_W) + \sum_{m\neq m(\otau_W)} \tPP(m|\otau_W) \\
    =&~ \hV(\otau_W)+\tPP(\ms\neq m(\otau_W)|\otau_W),
\end{align*}
where the last line follows from the definition of $\hV$ in \cref{alg:plan}. On the other hand, we also have
\begin{align*}
    V^{\hpi}(\otau_W)
    =&~ \sum_{m\in[L]} \tPP(m|\otau_W) \cdot \EE_m^{\hpi(\cdot|\otau_W)}\brcond{\sum_{h=W}^H R_h(s_h,a_h)}{s_W} \\
    \geq&~ \tPP(m(\otau_W)|\otau_W) \cdot \EE_{m(\otau_W)}^{\hpi(\cdot|\otau_W)}\brcond{\sum_{h=W}^H R_h(s_h,a_h)}{s_W} \\
    =&~ \tPP(m(\otau_W)|\otau_W) \cdot \EE_{m(\otau_W)}\brcond{\sum_{h=W}^H R_h(s_h,a_h)}{s_W, \text{for each }h\geq W, a_h=\pi^{(m(\otau_W))}_h(s_h)} \\
    =&~ \tPP(m(\otau_W)|\otau_W) \cdot \Vind{m(\otau_W)}{W}(s_W) 
    =\hV(\otau_W),
\end{align*}
where the last line is because $\Vind{m}{W}(s_W)$ is exactly the expected cumulative reward if the agent starts at step $W$ and state $s_W$, and executes $\pi_m$ afterwards.
Combining the inequalities above, we obtain
\begin{align*}
    V^{\pi}(\otau_W)-\tPP(\ms\neq m(\otau_W)|\otau_W) \leq V^{\hpi}(\otau_W).
\end{align*}
By recursively using the definition of $\hpi$, we can show that for each step $h=W,W-1,\cdots,1$,
\begin{align*}
    V^{\pi}(\otau_h)-\tPP(\ms\neq m(\otau_W)|\otau_h) \leq V^{\hpi}(\otau_h).
\end{align*}
The desired result follows as
\begin{align*}
    V(\pi)-\tPP^\pi(\ms\neq m(\otau_W)) = \EE\brac{V^{\pi}(\otau_1)-\tPP(\ms\neq m(\otau_W)|\otau_1)}
    \leq \EE\brac{V^{\hpi}(\otau_1)}=V(\hpi).
\end{align*}
\end{proof}

\paragraph{Proof of \cref{thm:plan}}
Let $\pis$ be an optimal policy such that $M$ is $\om$-separated under $\pis$. By \cref{prop:latent-MLE}, we know that $\PP^{\pis}(\ms\neq m(\otau_W))\leq L\exp(-\om(W))\leq \eps$. Therefore, \cref{lem:plan-optimism} implies $V(\hpi)\geq V(\pis)-\eps=V^\star-\eps$. The time complexity follows immediately from the definition of \cref{alg:plan}.
\qed

\subsection{Embedding 3SAT problem to LMDP}\label{appdx:3SAT-to-LMDP}

\newcommand{\smh}[1]{s^{#1}_{\ominus}}
\newcommand{\tMp}{\widetilde{M}_{\Phi}}

Suppose that $\Phi$ is a 3SAT formula with $n$ variables $x_1,\cdots,x_n$ and $N$ clauses $C_1,\cdots,C_N$, and $\cA=\set{0,1}^w$. Consider the corresponding LMDP $M_\Phi$ constructed as follows.
\begin{itemize}
\item The horizon length is $H=\ceil{n/w}+1$.
\item The state space is $\cS=\set{ \smh{1},\smh{2},\cdots,\smh{H-1},\sp }$, and the action space is $\cA$.
\item $L=N$, and the mixing weight is $\rho=\unif([N])$.
\item For each $m\in[N]$, the MDP $M_m$ is given as follows. 
\begin{itemize}
    \item The initial state is $\smh{1}$.
    \item At state $\smh{h}$, taking action $a\in\cA_{m,h}$ leads to $\sp$, where
    \begin{align*}
        \cA_{m,h}\defeq&~ \set{ a\in\set{0,1}^w: \text{$\exists j\in[w]$ such that $a[j]=1$ and the clause $C_m$ contains $x_{w(h-1)+j}$} } \\
        &~ \bigcup \set{ a\in\set{0,1}^w: \text{$\exists j\in[w]$ such that $a[j]=0$ and the clause $C_m$ contains $\neg x_{w(h-1)+j}$} }.
    \end{align*}
    For action $a\not\in\cA_{m,h}$, taking action $a$ leads to $\smh{\min\set{h+1,H-1}}$.
\end{itemize}
\item The reward function is given by $R_h(s,a)=\indic{s=\sp,h=H}$.
\end{itemize}

The basic property of $M_\Phi$ is that, the optimal value of the LMDP $M_\Phi$ encodes the satisfiability of the formula $\Phi$. More concretely, if taking an action sequence $a_{1:H-1}$ leads to $\sp$ for all $m\in[N]$, then the first $n$ bits of the sequence $(a_1,\cdots,a_{H-1})$ gives a satisfying assignment of $\Phi$. Conversely, any satisfying assignment of $\Phi$ gives a corresponding action sequence such that taking it leads to $\sp$ always. On the other hand, if $\Phi$ is not satisfiable, then for any action sequence $a_{1:H-1}$, there must be a latent index $m\in[N]$ such that taking $a_{1:H-1}$ leads to $\smh{H-1}$ in MDP $M_m$. To summarize, we have the following fact.

\textbf{Claim.} The optimal value $V^\star$ of $M_\Phi$ equals 1 if and only if $\Phi$ is satisfiable. Furthermore, when $\Phi$ is not satisfiable, $V^\star\leq 1-\frac{1}{m}$.

Based on the LMDP $M_\Phi$ described above, we construct a ``perturbed'' version $\tMp$ that is $\delta$-strongly separated.
\begin{itemize}
\item Pick $d=\ceil{11 \log (2N)}$ and invoke \cref{lem:compute-net-2} to generates a sequence $\bx_1,\bx_2,\cdots,\bx_N\in\set{-1,+1}^d$, such that for all $i\neq j$, $i,j\in[N]$,
\begin{align*}
    &\lone{\bx_i-\bx_j}\geq \frac{d}{2}, \qquad
    \lone{\bx_i+\bx_j}\geq \frac{d}{2}.
\end{align*}
We also set $\odelta=4\delta$, and for each $m\in[N]$, we define %
\begin{align*}
    \mu_m^+=&~\brac{ \frac{1+\odelta\bx_m[1]}{2d}; \frac{1-\odelta\bx_m[1]}{2d}; \cdots; \frac{1+\odelta\bx_m[d]}{2d}; \frac{1-\odelta\bx_m[d]}{2d} } \in\Delta([2d]), \\
    \mu_m^-=&~\brac{ \frac{1-\odelta\bx_m[1]}{2d}; \frac{1+\odelta\bx_m[1]}{2d}; \cdots; \frac{1-\odelta\bx_m[d]}{2d}; \frac{1+\odelta\bx_m[d]}{2d} } \in\Delta([2d]).
\end{align*}
\item The state space is $\tcS=\cS\times [2d]$, the action space is $\cA$, and the horizon length is $H$.
\item $L'=2N$, and the mixing weight is $\rho'=\unif([2N])$
\item For each $m\in[N]$, we set $\tM_{2m-1}=M_{m}\otimes \mu_{m}^+$ and $\tM_{2m}=M_{m}\otimes \mu_{m}^-$ (recall our definition in \cref{def:MDP-tensor}).
\item The reward function is given by $R_h((s,o),a)=\indic{s=\sp,h=H}$.
\end{itemize}

\begin{proposition}\label{prop:3SAT-log-eps}
In the LMDP $\tMp$ described above, for any policy class $\Pi$ that contains $\cA^{H}$, we have
\begin{align*}
    \max_{\pi\in\Pi} V(\pi)=\begin{cases}
        1, & \Phi\text{ is satisifiable},\\
        \leq 1 - \frac{(1-\odelta^2)^{(H-1)/2}}{N}, & \text{otherwise}.
    \end{cases}
\end{align*}
\end{proposition}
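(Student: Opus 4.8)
The plan is to reduce the (adaptive) value of an arbitrary policy in $\tMp$ to a clean combinatorial quantity governed by the satisfiability of $\Phi$. Since the reward $R_h((s,o),a)=\indic{s=\sp,h=H}$ is collected at most once, $V(\pi)=\PP^{\pi}(s_H=\sp)$ for every $\pi$, so in particular $\max_\pi V(\pi)\le 1$. For the satisfiable case, I would fix a satisfying assignment and let $a^\star_{1:H-1}\in\cA^{H-1}$ be the open-loop action sequence encoding it block by block. In every MDP $M_m$ this assignment satisfies clause $C_m$, so there is a first block $h$ with $a^\star_h\in\cA_{m,h}$, at which point the $\cS$-component reaches the absorbing state $\sp$ before step $H$; hence this deterministic policy (which lies in $\Pi\supseteq\cA^{H}$) achieves $V=1$ regardless of the observation component, establishing the first line.

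For the unsatisfiable case, the first---and main---step is to strip away adaptivity. On the ``survival'' event that $s_1,\dots,s_h$ equal the non-absorbing states $s^1_\ominus,\dots,s^h_\ominus$, the $\cS$-components of the history are deterministic, so the policy's action at step $h$ is a function only of the observations $o_{1:h}$ (and past actions). I would package this into an observation-to-action kernel $q(a_{1:H-1}\mid o_{1:H-1})$, factorizing through $a_h\sim\pi_h(\cdot\mid s^1_\ominus,o_1,a_1,\dots,s^h_\ominus,o_h)$, which crucially does \emph{not} depend on the latent index. A coupling argument then shows that, conditioned on latent index $i$ with $m=\ceil{i/2}$, the true process reaches $\sp$ iff the assignment $a_{1:H-1}\sim q(\cdot\mid\bo)$ with $\bo\sim\mu_i^{\otimes(H-1)}$ satisfies $C_m$ (equivalently $a_h\in\cA_{m,h}$ for some $h$): before the first satisfied block the actual history coincides with the survival history feeding $q$, and once $\sp$ is reached the subsequent behavior is irrelevant to whether $C_m$ is satisfied. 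I expect verifying this equivalence carefully---covering randomized policies and the alignment of steps with variable blocks---to be the most delicate part of the argument.

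Given the reduction, writing $g_m(\bo)\defeq\PP_{a\sim q(\cdot\mid\bo)}(a\models C_m)\in[0,1]$ and $P_m\defeq\tfrac12\big((\mu_m^+)^{\otimes(H-1)}+(\mu_m^-)^{\otimes(H-1)}\big)$, and using $\rho'=\unif([2N])$ together with $\tM_{2m-1}=M_m\otimes\mu_m^+$ and $\tM_{2m}=M_m\otimes\mu_m^-$, I obtain $V(\pi)=\tfrac1N\sum_{m=1}^N\EE_{P_m}[g_m]$. The next step is a pointwise density comparison against the uniform reference $\nu^{\otimes(H-1)}$ with $\nu=\unif([2d])$: since $\mu_m^-$ is the coordinate-flip of $\mu_m^+$, for each $\bo$ one has $(\mu_m^\pm)^{\otimes(H-1)}(\bo)=(2d)^{-(H-1)}\prod_h(1\pm\odelta\epsilon_h)$ for signs $\epsilon_h\in\{-1,+1\}$, so AM--GM gives the pointwise bound $P_m(\bo)\ge(1-\odelta^2)^{(H-1)/2}\,\nu^{\otimes(H-1)}(\bo)$.

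Finally I would combine this bound with unsatisfiability. Setting $\bar g_m=1-g_m\ge0$, the density bound and nonnegativity yield $\EE_{P_m}[\bar g_m]\ge(1-\odelta^2)^{(H-1)/2}\,\EE_{\nu^{\otimes(H-1)}}[\bar g_m]$, and summing over $m$ gives $\sum_m\EE_{P_m}[\bar g_m]\ge(1-\odelta^2)^{(H-1)/2}\,\EE_{\nu^{\otimes(H-1)}}\big[\sum_m\bar g_m\big]$. Because $\Phi$ is unsatisfiable, every assignment falsifies at least one clause, so $\sum_m\bar g_m(\bo)=\EE_{a\sim q(\cdot\mid\bo)}[\#\{m:a\not\models C_m\}]\ge1$ for all $\bo$; hence $\sum_m\EE_{P_m}[\bar g_m]\ge(1-\odelta^2)^{(H-1)/2}$. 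Substituting into $V(\pi)=1-\tfrac1N\sum_m\EE_{P_m}[\bar g_m]$ gives $V(\pi)\le1-\frac{(1-\odelta^2)^{(H-1)/2}}{N}$ uniformly over $\pi$, completing the second line.
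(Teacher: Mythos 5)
Your proposal is correct and follows essentially the same route as the paper: reduce the value to the probability of reaching $\sp$, handle the satisfiable case with the open-loop satisfying sequence, and in the unsatisfiable case factor the policy through an observation-conditioned action kernel and lower-bound the symmetrized product density pointwise before invoking that every assignment falsifies some clause. The only cosmetic difference is that you obtain the pointwise bound $\tfrac12\bigl(\prod_h(1+\odelta\epsilon_h)+\prod_h(1-\odelta\epsilon_h)\bigr)\geq(1-\odelta^2)^{(H-1)/2}$ via AM--GM, whereas the paper proves the marginally sharper $\floor{(H-1)/2}$ exponent in \cref{lem:delta-square}; both suffice for the stated bound.
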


\newcommand{\goode}{\ts_H[1]=\sp} %
\newcommand{\bade}{\ts_H[1]=\smh{H-1}} %
\begin{proof}
By our construction, regardless of the actions taken, we always have $\goode$ or $\bade$.
Therefore, for any policy $\pi$,
\begin{align*}
    V(\pi)
    =
    \PP^{\pi}(\goode) 
    =
    1-\PP^{\pi}(\bade).
\end{align*}
By construction, any reachable trajectory that ends with $\bade$ must take the form %
\begin{align*}
    (\smh{1},o_1),a_1,\cdots,(\smh{H-1},o_{H-1}),a_{H-1}, (\smh{H-1},o_H).
\end{align*}
Further, for each $m\in[N]$, in the MDP $\tM_{2m-1}$ and $\tM_{2m}$, $\bade$ if and only if $a_{1:H-1}\not\in\Asat{m}$, where we define
\begin{align*}
    \Asat{m}=\set{ a_{1:H-1}\in\cA^{H-1}: \text{for some }h\in[H-1], a_h\in\cA_{m,h} }\subset \cA^{H-1}.
\end{align*}
Therefore, for any reachable trajectory $\tau_{H-1}$ that leads to $\bade$, we have
\begin{align*}
    \tau_{H-1}=&~ ((\smh{1},o_1),a_1,\cdots,(\smh{H-1},o_{H-1}),a_{H-1}), \\
    \PP^{\pi}(\tau_{H-1})
    =&~
    \frac{1}{2N}\sum_{l=1}^{2N} \PP_{\tM_l}^{\pi}(\tau_{H-1}) \\
    =&~
    \frac{1}{N}\sum_{m=1}^N \indic{a_{1:H-1}\in\Asat{m}}\cdot \pi(\tau_{H-1}) \cdot \paren{ \prod_{h=1}^{H-1} \mu_m^+(o_h) + \prod_{h=1}^{H-1} \mu_m^-(o_h) }
\end{align*}
where by convention we write
\begin{align*}
    \pi(\tau_{H-1})=\prod_{h=1}^{H-1} \pi(a_h|(\smh{1},o_1),a_1,\cdots,(\smh{h},o_{h})),
\end{align*}
and we abbreviate this quantity as $p_{\pi}(a_{1:H}|o_{1:H})$. Then, we have
\begin{align*}
    1-V(\pi)
    =&~
    \PP^{\pi}(\bade) \\
    =&~
    \sum_{\substack{\text{reachable $\tau_{H-1}$ that} \\ \text{leads to }\bade }} \PP^{\pi}(\tau_{H-1})\\
    =&~\sum_{(o_{1:H-1},a_{1:H-1})} \frac{1}{2m}\sum_{i=1}^m  \indic{a_{1:H-1}\in\Asat{m}}\cdot p_{\pi}(a_{1:H}|o_{1:H}) \cdot \paren{ \prod_{h=1}^{H-1} \mu_m^+(o_h) + \prod_{h=1}^{H-1} \mu_m^-(o_h) }.
\end{align*}
By \cref{lem:delta-square}, it holds that
\begin{align*}
    \prod_{h=1}^{H-1} \mu_m^+(o_h) + \prod_{h=1}^{H-1} \mu_m^-(o_h)\geq \frac{2(1-\odelta^2)^{\floor{(H-1)/2}}}{(2d)^{H-1}}.
\end{align*}
Hence, we have
\begin{align*}
    1-V(\pi)
    \geq
    &~\frac{1}{m}\sum_{i=1}^m \sum_{(o_{1:H-1},a_{1:H-1})} \indic{a_{1:H-1}\in\Asat{m}}\cdot p_{\pi}(a_{1:H}|o_{1:H}) \cdot  \frac{2(1-\odelta^2)^{\floor{(H-1)/2}}}{(2d)^{H-1}} \\
    =&~ (1-\odelta^2)^{\floor{(H-1)/2}} \sum_{a_{1:H-1}} \frac{\#\set{m\in[N]: a_{1:H-1}\not\in\Asat{m}}}{N} \times \frac{1}{(2d)^H}\sum_{o_{1:H-1}} p_{\pi}(a_{1:H}|o_{1:H}) \\
    \geq&~ (1-\odelta^2)^{\floor{(H-1)/2}} \cdot  \min_{a_{1:H-1}} \frac{\#\set{m\in[N]: a_{1:H-1}\not\in\Asat{m}}}{N} \cdot \sum_{a_{1:H-1}} \frac{1}{(2d)^H}\sum_{o_{1:H-1}} p_{\pi}(a_{1:H}|o_{1:H})\\
    =&~ (1-\odelta^2)^{\floor{(H-1)/2}} \cdot  \min_{a_{1:H-1}} \frac{\#\set{m\in[N]: a_{1:H-1}\not\in\Asat{m}}}{N}, 
\end{align*}
where the last line is because
\begin{align*}
    \sum_{a_{1:H-1}} \sum_{o_{1:H-1}} p_{\pi}(a_{1:H}|o_{1:H}) = (2d)^H.
\end{align*}
Therefore, if $\Phi$ is not satisfiable, then for any action sequence $a_{1:H}$, there must exist $m\in[N]$ such that $a_{1:H}\not\in\Asat{m}$. This is because if $a_{1:H}\in\Asat{m}$ for all $m\in[N]$, then the first $n$ bits of the sequence $(a_1,\cdots,a_{H-1})$ gives a satisfying assignment of $\Phi$. Thus, in this case, for any policy $\pi$,
\begin{align*}
    1-V(\pi)\geq \frac{(1-\odelta^2)^{\floor{(H-1)/2}}}{m}.
\end{align*}
On the other hand, if $\Phi$ is satisfiable, then there is an action sequence $a_{1:H-1}\in\Asat{m}$ for all $m\in[N]$, and hence $V(a_{1:H-1})=1$. Combining these complete the proof.
\end{proof}

\begin{lemma}\label{lem:delta-square}
    For any reals $\lambda_1,\cdots,\lambda_k\in[-1,1]$ and $\delta\in[0,1)$, it holds that
    \begin{align*}
        \prod_{i=1}^k (1+\delta\lambda_i)+\prod_{i=1}^k (1-\delta\lambda_i)\geq 2(1-\delta^2)^{\floor{k/2}}.
    \end{align*}
\end{lemma}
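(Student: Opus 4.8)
The plan is to exploit the multilinear structure of the expression. Writing $P(\lambda) = \prod_{i=1}^k(1+\delta\lambda_i)$ and $Q(\lambda) = \prod_{i=1}^k(1-\delta\lambda_i)$, the map $\lambda \mapsto P(\lambda)+Q(\lambda)$ is affine in each coordinate $\lambda_i$ separately, with the other coordinates held fixed. Since an affine function of a single variable on $[-1,1]$ attains its minimum at an endpoint, I would fix all but one coordinate and push that coordinate to $\pm 1$ without increasing the value; iterating over all $k$ coordinates shows that
\[
\min_{\lambda \in [-1,1]^k} \big(P(\lambda)+Q(\lambda)\big) = \min_{\lambda \in \{-1,+1\}^k}\big(P(\lambda)+Q(\lambda)\big).
\]
Thus it suffices to verify the claimed bound at the vertices of the cube.

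At a vertex, let $p$ denote the number of indices with $\lambda_i = +1$ and $q = k-p$ the number with $\lambda_i = -1$; without loss of generality $p \geq q$. Then $P = (1+\delta)^p(1-\delta)^q$ and $Q = (1-\delta)^p(1+\delta)^q$, and factoring out the common $(1-\delta^2)^q$ gives
\[
P + Q = (1-\delta^2)^q\left[(1+\delta)^{p-q} + (1-\delta)^{p-q}\right].
\]
The bracket is at least $2$, because its binomial expansion $2\sum_{j \text{ even}}\binom{p-q}{j}\delta^j$ has all terms nonnegative and constant term $2$. Finally, $p \geq q$ together with $p+q=k$ forces $q \leq \floor{k/2}$, and since $1-\delta^2 \in (0,1]$ this yields $(1-\delta^2)^q \geq (1-\delta^2)^{\floor{k/2}}$, so that $P+Q \geq 2(1-\delta^2)^{\floor{k/2}}$, as desired.

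The one genuinely delicate point — and the reason the statement is phrased with $\floor{k/2}$ rather than $k/2$ — is that the naive approach via AM--GM, namely $P+Q \geq 2\sqrt{PQ} = 2\sqrt{\prod_i(1-\delta^2\lambda_i^2)} \geq 2(1-\delta^2)^{k/2}$, is too weak when $k$ is odd, since $(1-\delta^2)^{k/2} < (1-\delta^2)^{\floor{k/2}}$. The vertex reduction is exactly what lets me replace the exponent $k/2$ by the smaller exponent $q \leq \floor{k/2}$ (which, for a base below one, increases the value), and this is the step I would want to get right; the remaining computations are routine.
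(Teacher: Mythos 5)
Your proof is correct and follows essentially the same route as the paper's: reduce to the vertices of the cube by linearity in each $\lambda_i$, then factor out $(1-\delta^2)^q$ and use $(1+\delta)^{p-q}+(1-\delta)^{p-q}\geq 2$ together with $q\leq\floor{k/2}$. The only difference is your added remark contrasting this with the (insufficient) AM--GM bound, which is a nice observation but not part of the argument itself.
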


\begin{proof}
    Notice that the LHS is a linear function of $\lambda_i$ for each $i$ (fixing other $\lambda_j$'s). Therefore, we only need to consider the case $\lambda_i\in\set{-1,1}$. Suppose that $\lambda_1,\cdots,\lambda_k$ has $r$ many 1's and $s$ many $-1$'s ($r+s=k$), and w.l.o.g $r\geq s$. Then for $t=r-s\geq 0$,
    \begin{align*}
        \prod_{i=1}^k (1+\delta\lambda_i)+\prod_{i=1}^k (1-\delta\lambda_i)
        =&~(1+\delta)^r(1-\delta)^s+(1+\delta)^s(1-\delta)^r\\
        =&~(1-\delta^2)^s\brac{ (1+\delta)^t+(1-\delta)^t }\\
        \geq&~ 2(1-\delta^2)^s\geq 2(1-\delta^2)^{\floor{k/2}}.
    \end{align*}
\end{proof}

\subsection{Proof of Proposition~\ref{prop:NP-hard}}

Suppose that a 3SAT formula $\Phi$ with $n$ variables and $N$ clauses are given. Then, we can pick $w=1$, $\delta=\frac{1}{\sqrt{n}}$, $\eps=\frac{c}{N}$ for some small constant $c$, and the LMDP $\tMp$ constructed above has $H=n+1$, $L=2N$, $S=Hd$, $A=2$, and it is $\delta$-strongly separated. Further, we have $\max\set{L,S,A,H,\eps^{-1},\delta^{-1}}\leq \cO(n+N)$, and $\tMp$ can be computed in $\poly(n,N)$ time.
Therefore, if we can solve any given $\delsep$-strong separated LMDP in polynomial time, we can determine the satisfiability of any given 3SAT formula $\Phi$ in polynomial time by solving $\tMp$, which implies that NP=P.
\qed

\subsection{Proof of Theorem~\ref{thm:comp-log-eps}}\label{appdx:proof-comp-log-eps}

\newcommand{\epsv}[1]{\eps_t}
\newcommand{\logA}[1]{\floor{\log A_t}}
\newcommand{\logL}[1]{\log L_t}
\newcommand{\delv}[1]{\delta_t}

Suppose that there is an algorithm $\fA$ that contradicts the statement of \cref{thm:comp-log-eps}.

Fix a given 3-SAT formula $\Phi$ with $n$ variables and $N$ clauses is given (we assume $N\leq n^3$ without loss of generality), we proceed to determine the satisfiability of $\Phi$ in $2^{o(n)}$-time using $\fA$.

Pick $t=t_n\in\NN$ to be the minimal integer such that %
\begin{align}\label{eqn:pick-t-eps}
    200n\leq \frac{\log(1/\epsv{t})\cdot\logA{t}}{\delv{t}^2}.
\end{align}
We then consider $\eps=\epsv{t}$, $w=\logA{t}$, $A=2^{w}$, $\delsep=\frac{1}{\delv{t}}$, and $\cA=\set{0,1}^{w}$. %

Now, consider the LMDP $\tMp$ constructed in \cref{appdx:3SAT-to-LMDP} based on $(\Phi,\cA,\delta)$. We know that $\tMp$ is $\delta$-strongly separated, and we also have
\begin{align*}
    L=2N\leq 2n^3, \qquad
    S=nd\leq \cO(n\log n), \qquad
    H=\ceil{\frac{n}{w}}+1\leq n+1.
\end{align*}
In the following, we show that \cref{eqn:comp-log-eps-constraints} and \cref{eqn:pick-t-eps} (with suitably chosen $C$) ensures that
\begin{align*}
    \eps<\eps'\defeq \frac{(1-\odelta^2)^{(H-1)/2}}{3N}.
\end{align*}
By definition,
\begin{align*}
    \log(1/\eps')= \frac{(H-1)\log\frac{1}{1-\odelta^2}}{2}+\log(3N)
    \leq \frac{2\odelta^2}{1-\odelta^2} \ceil{\frac{n}{w}}+\log(3N)
    \leq \frac{128\delta^2}{3} \frac{n}{w}+3\log(n)+4.
\end{align*}
Therefore, by \cref{eqn:pick-t-eps}, we have $\log(1/\eps')<\log(1/\eps)$ if we have $\frac{3}{4}\log(1/\eps)>3\log n + 4$, or equivalently $e^6n^4\leq \eps^{-1}$. This is indeed insured by \cref{eqn:comp-log-eps-constraints}. 

Next, consider running $\fA$ on $(\tMp,\eps)$, and let $\hV$ be the value returned by $\fA$. By \cref{prop:3SAT-log-eps}, we have the follow facts: (a) If $\hV\geq 1-\eps$, then $\Phi$ is satisfiable. (b) If $\hV<1-\eps$, then $\Phi$ is not satisfiable. Therefore, we can use $\fA$ to determine the satisfiability of $\Phi$ in time $\Alogeps+\poly(n)$.
Notice that our choice of $t$ ensures that $\log(1/\epsv{t})w\delv{t}^{-2}\leq 3200n$, and hence we actually determine the satisfiability of $\Phi$ in $2^{o(n)}$-time, which contradicts \cref{ETH}.
\qed

\subsection{Proof of Theorem~\ref{thm:comp-log-L}}\label{appdx:proof-comp-log-L}

Suppose that there is an algorithm $\fA$ that contradicts the statement of \cref{thm:comp-log-L}.

Fix a given 3-SAT formula $\Phi$ with $n$ variables and $N$ clauses is given (we assume $N\leq n^3$ without loss of generality), we proceed to determine the satisfiability of $\Phi$ in $2^{o(n)}$-time using $\fA$.

Pick $t=t_n\in\NN$ to be the minimal integer such that
\begin{align}\label{eqn:pick-t}
    Cn\ceil{\log_2 N}\leq \frac{\logL{t}\cdot\logA{t}}{\delv{t}^2},
\end{align}
where $C$ is a large absolute constant.
We then consider $L=2^{\logL{t}}$, $w=\logA{t}$, $A=2^{w}$, $\delsep=\frac{1}{\delv{t}}$, and $\cA=\set{0,1}^{w}$. 

Let $M_\Phi$ be the LMDP with action set $\cA$, horizon $H=\ceil{n/w}+1$ constructed in \cref{appdx:3SAT-to-LMDP}.

Further, we choose $r=\ceil{\log_2 N}$, $d=\floor{\frac{\logL{t}}{r}}$. By our choice \cref{eqn:pick-t}, we can ensure the presumption $d\geq C_0H\delta^2$ of \cref{lem:compute-family} holds, which implies that we can construct a $(N,H,\delta,r2^{-c_0d},2^{dr})$-family over $[2d]^r$ in time $\poly(2^{dr})\leq\poly(L)$. Denote $\cQ$ be such a family, and we consider $M_\Phi\otimes\cQ$, which is a \sepstr~LMDPs family with $S=(2d)^rH$ and hence $\log S\leq \cO(\log\logL{t})$ by \cref{eqn:comp-log-L-constraints} (because $n\leq \poly\logL{t}$ using \cref{eqn:pick-t}).

Consider running $\fA$ on $M_\Phi\otimes\cQ$ with $\eps=\frac{1}{3N}$, and let $\hV$ be the value returned by $\fA$. Let $V_\Phi$ be the optimal value of $M_\Phi$, $V_{M,\Phi}$ be the optimal value of $M_\Phi\otimes\Phi$. Then by \cref{prop:property-tensor}, it holds that
\begin{align*}
    V_\Phi\leq V_{M,\Phi} \leq r2^{-c_0d}+V_\Phi.
\end{align*}
Hence, as long as $r2^{-c_0d}<\frac{1}{3N}$ (which is ensured by condition \cref{eqn:comp-log-L-constraints}), we have the follow facts: (a) If $V_\Phi=1$, then $\hV\geq 1-\frac{1}{3N}$. (b) If $V_\Phi\leq 1-\frac{1}{N}$, then $\hV<1-\frac{1}{3N}$. Notice that a special case of \cref{prop:3SAT-log-eps} is that, when $\Phi$ is satisfiable, then $V_\Phi=1$, and otherwise $V_\Phi\leq 1-\frac{1}{N}$. Therefore, we can use $\fA$ to determine the satisfiability of $\Phi$ in time $\AlogL+\poly(L)$.
Notice that our choice of $t$ ensures that $(\logL{t})(\logA{t})\delv{t}^{-2}\leq 16Cn\ceil{\log_2 N}$, and hence $\log L=o(n)$, and
\begin{align*}
    \frac{ \log A \log L }{ \delta^2 \log\log L } = \cO(n).
\end{align*}
Therefore, given $\fA$, we can construct a $2^{o(n)}$-time algorithm for 3SAT, a contradiction.
\qed

\subsection{Technical lemmas}

\begin{lemma}\label{lem:compute-net}
There is a procedure such that, for any input integer $N\geq 2$ and $d\geq \ceil{11\log N}$, compute a sequence $\bx_1,\cdots,\bx_N\in\set{-1,+1}^d$ such that $\lone{\bx_i-\bx_j}\geq \frac{d}{2} \forall i\neq j$, with running time $\poly(2^d)$.
\end{lemma}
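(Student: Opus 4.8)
The plan is to construct the points by a greedy covering argument, exploiting the exact correspondence between $\ell_1$ distance on $\set{-1,+1}^d$ and Hamming distance. First I would record the elementary identity that on the cube each coordinate where $\bx,\by\in\set{-1,+1}^d$ disagree contributes exactly $2$ to $\lone{\bx-\by}$ while each agreeing coordinate contributes $0$, so that $\lone{\bx-\by}=2\abs{\set{k:\bx[k]\neq\by[k]}}$. Hence the target condition $\lone{\bx_i-\bx_j}\ge d/2$ is \emph{equivalent} to $\bx_i$ and $\bx_j$ disagreeing in at least $d/4$ coordinates, i.e.\ lying at Hamming distance at least $d/4$. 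This reduces the lemma to producing $N$ codewords of $\set{-1,+1}^d$ with pairwise Hamming distance at least $d/4$.

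The procedure is the natural greedy one: enumerate all $2^d$ points of $\set{-1,+1}^d$ in any fixed order, maintaining a set $S$ initialized to empty, and append the current point to $S$ exactly when it disagrees in at least $d/4$ coordinates with every point already in $S$. Each candidate is compared against at most $\abs{S}\le 2^d$ stored points at cost $O(d)$ per comparison, so the total running time is $O(2^d\cdot 2^d\cdot d)=\poly(2^d)$, as required. It then remains only to show that the resulting set $S$ has size at least $N$, after which we output any $N$ of its elements.

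For the size bound I would use a packing/covering (Gilbert--Varshamov-style) argument. By construction, once the enumeration finishes, every $\by\in\set{-1,+1}^d$ lies within Hamming distance strictly less than $d/4$ of some element of $S$: a point added to $S$ is within distance $0$ of itself, and a point not added must, at the moment it was examined, have been within distance $<d/4$ of some $\bx$ already in $S$, which remains in $S$ thereafter. Thus the Hamming balls of radius $\ceil{d/4}-1$ centered at the points of $S$ cover the whole cube, giving
\begin{align*}
    \abs{S}\cdot \sum_{k=0}^{\floor{d/4}}\binom{d}{k}\ \ge\ 2^d .
\end{align*}
The standard binomial-tail estimate $\sum_{k=0}^{\floor{d/4}}\binom{d}{k}\le 2^{\mathsf{H}(1/4)\,d}$, where the binary entropy $\mathsf{H}(1/4)=2-\tfrac34\log_2 3<0.82$, then yields $\abs{S}\ge 2^{(1-0.82)d}\ge 2^{0.18 d}$.

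Finally I would verify that $d\ge\ceil{11\log N}$ forces $2^{0.18 d}\ge N$. Taking base-two logarithms, the requirement is $0.18\,d\ge\log_2 N$, i.e.\ $d\ge 5.56\,\log_2 N$; since $11\ln N = 7.62\,\log_2 N\ge 5.56\,\log_2 N$ (and $11\log_2 N$ is larger still), the hypothesis supplies this with room to spare whether $\log$ denotes the natural or base-two logarithm. The only delicate point in the whole argument is this last constant-chasing step---confirming that the concrete constant $11$, together with the entropy estimate for balls of relative radius $1/4$, is large enough to guarantee $N$ codewords. Everything else (the $\ell_1$-to-Hamming identity, the greedy covering bound, and the runtime count) is routine, so I expect no substantive obstacle beyond checking these numerics.
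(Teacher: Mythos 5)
Your proposal is correct and is essentially the same argument as the paper's: a greedy selection over $\set{-1,+1}^d$ combined with a volume bound on Hamming balls of radius $<d/4$ (the paper phrases it as "each added point removes at most $\sum_{i<\ceil{d/4}}\binom{d}{i}$ candidates" and bounds this sum by $(ed/k)^k$ rather than by the entropy estimate, but the counting is identical). Your constant-checking also matches the paper's, which likewise verifies $2^d/M>\exp(d/11)\ge N$ with $\log$ the natural logarithm.
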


\begin{proof}
Consider the following procedure: We maintain two set $\cU,\cV$, and we initialize $\cU=\set{}, \cV=\set{-1,1}^d$. At each step, we pick a $\bx\in\cV$, add $\bx$ to $\cU$, and remove all $\by\in\cV$ such that $\lone{\by-\bx}<\frac{d}{2}$. The procedure ends when $\cV$ is empty or $\abs{\cU}=N$. 

We show that this procedure must end with $\abs{\cU}=N$. Notice that for any $\bx, \by \in\set{-1,1}^d$, we have $\lone{\bx-\by}<\frac{d}{2}$ only when $\bx, \by$ differs by at most $i<\frac{d}{4}$ coordinates. Therefore, at each step, we remove at most
\begin{align*}
    M=\sum_{i=0}^{\ceil{d/4}-1} \binom{d}{i}
\end{align*}
elements in $\cV$. Hence, it remains to show that $\frac{2^d}{M}\geq N$.

Denote $k=\ceil{d/4}-1$. Then we have
\begin{align*}
    M=\sum_{i=0}^{d} \binom{d}{i}
    \leq \paren{\frac{ed}{k}}^k 
    \leq \paren{\frac{ed}{d/4}}^{d/4}
    =\exp\paren{ \frac{1+2\log 2}{4}d },
\end{align*}
and hence $\frac{2^d}{M}>\exp(d/11) \geq N$ as claimed.
\end{proof}

Repeating the argument above, we can also prove the following result.
\begin{lemma}\label{lem:compute-net-2}
There is a procedure such that, for any input integer $N\geq 2$ and $d\geq \ceil{11\log (2N)}$, compute a sequence $\bx_1,\cdots,\bx_N\in\set{-1,+1}^d$ such that for any $i\neq j$, 
\begin{align*}
    \lone{\bx_i-\bx_j}\geq \frac{d}{2}, \qquad
    \lone{\bx_i+\bx_j}\geq \frac{d}{2}
\end{align*}
with running time $\poly(2^d)$.
\end{lemma}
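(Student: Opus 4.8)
The plan is to reuse the greedy packing procedure from the proof of \cref{lem:compute-net} essentially verbatim, modifying only the pruning rule so that it simultaneously enforces separation from a candidate \emph{and} its antipode. First I would record the dictionary between $\ell_1$ distance and Hamming distance on $\set{-1,+1}^d$: if $\bx,\by$ differ in exactly $k$ coordinates then $\lone{\bx-\by}=2k$ and $\lone{\bx+\by}=2(d-k)$. Consequently $\lone{\bx-\by}<\frac{d}{2}$ is equivalent to $\bx,\by$ differing in fewer than $d/4$ coordinates, while $\lone{\bx+\by}<\frac{d}{2}$ is equivalent to $\by$ differing from $-\bx$ in fewer than $d/4$ coordinates. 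Thus the set of $\by$ violating either constraint against a fixed $\bx$ is exactly the union of the Hamming ball of radius $<d/4$ around $\bx$ and the Hamming ball of the same radius around $-\bx$.

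Next I would run the same loop: initialize $\cU=\set{}$ and $\cV=\set{-1,1}^d$; at each step pick any $\bx\in\cV$, append it to $\cU$, and delete from $\cV$ every $\by$ with $\lone{\by-\bx}<\frac{d}{2}$ or $\lone{\by+\bx}<\frac{d}{2}$; halt when $\cV$ is empty or $\abs{\cU}=N$. By the observation above, each step removes at most $2M$ vectors, where $M=\sum_{i=0}^{\ceil{d/4}-1}\binom{d}{i}$ is the single-ball bound already used in \cref{lem:compute-net}. The only change relative to \cref{lem:compute-net} is this extra factor of $2$, which is precisely what the strengthened hypothesis $d\geq\ceil{11\log(2N)}$ is designed to absorb.

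It then remains to verify that the procedure cannot empty $\cV$ before reaching $\abs{\cU}=N$, i.e.\ that $2^d/(2M)\geq N$. Here I would quote verbatim the estimate from \cref{lem:compute-net}, namely $M\leq \exp\paren{\frac{1+2\log 2}{4}d}$, which yields $2^d/M>\exp(d/11)$. The hypothesis $d\geq\ceil{11\log(2N)}$ gives $\exp(d/11)\geq 2N$, hence $2^d>N\cdot 2M$. Since each iteration deletes at most $2M$ vectors, $\cV$ remains nonempty throughout the first $N$ iterations, so the loop outputs $N$ vectors $\bx_1,\dots,\bx_N$ satisfying both $\lone{\bx_i-\bx_j}\geq\frac{d}{2}$ and $\lone{\bx_i+\bx_j}\geq\frac{d}{2}$ for all $i\neq j$ by construction. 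As $\abs{\cV}\leq 2^d$ and each iteration costs $\poly(2^d)$, the total running time is $\poly(2^d)$.

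I do not expect a genuine obstacle: the entire content is the remark that the antipodal separation constraint merely doubles the forbidden region per step, so the counting argument of \cref{lem:compute-net} carries over after replacing $N$ by $2N$ in the logarithmic lower bound on $d$. The only point needing a line of care is confirming that the two Hamming balls, around $\bx$ and around $-\bx$, contribute at most $2M$; they are in fact disjoint whenever $d\geq 1$, since antipodal points sit at Hamming distance $d>d/2$, but a crude union bound already suffices.
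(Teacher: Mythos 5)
Your proof is correct and is exactly the argument the paper intends: the paper proves this lemma by saying only ``Repeating the argument above [from the proof of the previous packing lemma], we can also prove the following result,'' and your write-up fills in precisely that repetition — prune both the Hamming ball around $\bx$ and around $-\bx$, so each step forbids at most $2M$ points, which the strengthened hypothesis $d\geq\ceil{11\log(2N)}$ absorbs since $2^d/M>\exp(d/11)\geq 2N$. The dictionary $\lone{\bx+\by}=2(d-k)$ for Hamming distance $k$ and the resulting identification of the second forbidden set as the radius-$(<d/4)$ ball around $-\bx$ are both right, so there is nothing to add.
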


\begin{lemma}\label{lem:compute-family}
There is a procedure such that, for any input $r,d,H\geq 2$ and $\delta\in(0,\frac{1}{4}]$ satisfying $d\geq C_0H\delta^2$,
compute a $(2^r,H,\delta,\gamma,2^{dr})$-family over $[2d]^r$, with $\gamma\leq r2^{-c_0d}$, with running time $\poly(2^{dr})$.
\end{lemma}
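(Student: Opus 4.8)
The plan is to observe that the \emph{existence} of the desired family already follows from \cref{prop:2-family}(b) together with the tensorization of \cref{lem:family-tensor}; the only new content of \cref{lem:compute-family} is that every step of that construction can be carried out by an explicit $\poly(2^{dr})$-time procedure. I would therefore revisit those two proofs and replace each existential step by a constructive one. First I would reduce to the case $r=1$: it suffices to compute a $(2,H,\delta,\gamma_0,N_0)$-family over $[2d]$ with $\gamma_0\leq 2^{-c_0 d}$ and $N_0\leq e^{d/11}$, and then tensorize. The tensorization of \cref{lem:family-tensor} is manifestly constructive: given the base family $\set{(\mu_1,\dots,\mu_{N_0}),(\xi_0,\xi_1)}$, forming the $2^r$ mixing weights $\tilde\xi_m=\xi_{m_r}\otimes\cdots\otimes\xi_{m_1}$ (indexed by binary expansions $m=(m_r\cdots m_1)_2$) and the $N_0^r$ product distributions $\tilde\mu_{\bk}=\mu_{k_1}\otimes\cdots\otimes\mu_{k_r}$ takes time $\poly(N_0^r)=\poly(2^{dr})$, multiplies the mixing error by $r$, and preserves both $\delta$ and $\abs{\cO}=(2d)^r$; since $N_0^r\leq 2^{dr}$ this yields the claimed $(2^r,H,\delta,r\gamma_0,2^{dr})$-family.

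For the base family I would follow the proof of \cref{prop:2-family}(b)/\cref{cor:prob-matching} with two constructive substitutions. For the packing, I would invoke \cref{lem:compute-net} to compute in $\poly(2^d)$ time vectors $\by_1,\dots,\by_{N_0}\in\set{-1,+1}^d$ with pairwise $\ell_1$-distance $\geq d/2$ (legitimate whenever $N_0\leq e^{d/11}$), and set $\bx_i=4\delta\,\by_i\in[-4\delta,4\delta]^d\subseteq[-1,1]^d$ (using $\delta\leq\frac14$) and $\mu_i=\QQ_{\bx_i}$. Then $\lone{\bx_i-\bx_j}=4\delta\lone{\by_i-\by_j}\geq 2d\delta$, so $\DTV{\mu_i,\mu_j}=\tfrac{1}{2d}\lone{\bx_i-\bx_j}\geq\delta$, giving property (3) of \cref{def:dist-family}.

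For the mixing weights I would set $K=\lfloor\kappa d\rfloor$ and $N_0=\binom{K+d-1}{d}+1$, and solve the homogeneous linear system $\sum_i v_i\,\by_i^{\bk}=0$ over all $\bk$ with $\abs{\bk}\leq K-1$. After dividing each equation by the common factor $(4\delta)^{\abs{\bk}}$, this is exactly the system of \cref{cor:prob-matching} but with $\pm1$ integer coefficients $\by_i^{\bk}=\prod_j\by_i[j]^{k_j}$; it has $\binom{K+d-1}{d}$ equations and $N_0$ unknowns, hence a nonzero rational solution $v^\star$ computable exactly by Gaussian elimination in $\poly(2^d)$ time. The degree-$0$ equation forces $\sum_i v_i^\star=0$, so $\xi_0=[v^\star]_+/V$ and $\xi_1=[-v^\star]_+/V$ are disjoint-support probability vectors, and \cref{cor:prob-matching} bounds $\gamma_0^2\leq\sum_{k=K}^H(16eH\delta^2/K)^k$.

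The delicate part, and the main obstacle, is the joint choice of the absolute constant $\kappa=K/d$. On one hand $K$ must be large enough that the geometric series has an exponentially small tail: from $d\geq C_0 H\delta^2$ one gets $16eH\delta^2/K\leq\tfrac12$ once $C_0$ is a sufficiently large absolute constant, whence $\gamma_0\leq 2^{-c_0 d}$ with $c_0\asymp\kappa$. On the other hand $K$ must be small enough that the number of moment constraints $\binom{K+d-1}{d}\leq(e(1+\kappa)/\kappa)^{\kappa d}$ stays below the packing capacity $e^{d/11}$ of \cref{lem:compute-net}; this forces $\kappa\ln(e(1+\kappa)/\kappa)<\tfrac{1}{11}$, which holds for every sufficiently small absolute constant $\kappa$ (e.g.\ $\kappa=\tfrac{1}{200}$). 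Fixing such a $\kappa$ pins down both $c_0$ and $C_0$, and the running time is dominated by the packing and the elimination, each $\poly(2^d)$, followed by the $\poly(2^{dr})$ tensorization. The one remaining nuisance is the finite regime of bounded $d$ (roughly $d<4/\kappa$), where the exponential estimates and \cref{lem:compute-net} degenerate; there $H\delta^2\leq d/C_0$ is forced to be tiny, so even the trivial choice $N_0=2$ with antipodal $\by_1=\1,\ \by_2=-\1$ and $K=1$ already gives $\gamma_0\leq\tfrac12\leq 2^{-c_0 d}$, which I would dispatch as a separate finite case.
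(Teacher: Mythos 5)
Your proposal is correct and follows essentially the same route as the paper's proof: compute the $\{-1,+1\}^d$ packing via \cref{lem:compute-net}, scale by $4\delta$ to obtain the distributions $\mu_i=\QQ_{\bx_i}$, obtain the disjoint-support mixing weights by solving the moment-matching linear system from \cref{cor:prob-matching}, and tensorize via \cref{lem:family-tensor}, with $K$ a small constant fraction of $d$ balancing the tail bound against the packing capacity (the paper fixes $K=\ceil{d/60}$ where you leave $\kappa$ generic). Your additional remarks on the explicit Gaussian elimination and the small-$d$ regime only make explicit details the paper leaves implicit.
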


\begin{proof}
We first invoke the procedure of \cref{lem:compute-net} to compute $\bx_1,\cdots,\bx_N\in\set{-1,1}^d$ such that $\lone{\bx_i-\bx_j}\geq \frac{d}{2}$ and $N> \exp(d/11)$. Consider the distribution $\mu_i=\QQ_{\odelta\bx_i}\in\Delta([2d])$ for each $i\in[N]$, where we set $\odelta=4\delta$. Clearly, we have $\DTV{\mu_i,\mu_j}\geq \delta$ for $i\neq j$.

Notice that for $K=\ceil{d/60}$, we have $N>\binom{K+d-1}{d}+1$, and hence by \cref{cor:prob-matching}, there exists $\xi_0,\xi_1\in\Delta([N])$ such that $\supp(\xi_0)\cup\supp(\xi_1)=\emptyset$ and
\begin{align*}
    \DTVt{ \EE_{i\sim \xi_0}\brac{ \mu_i^{\otimes n} }, \EE_{i\sim \xi_1}\brac{ \mu_i^{\otimes n} } }
    \leq \sum_{k=K}^H \paren{\frac{eH\odelta^2}{K}}^{k}.
\end{align*}
Therefore, as long as $d\geq 120eH\odelta^2$, $\cQ=\set{(\xi_0,\xi_1),(\mu_1,\cdots,\mu_N)}$ is a $(2,H,\delta,2^{-\frac{K-1}{2}},N)$-family over $[2d]$. Further, invoking \cref{lem:family-tensor} yields $\cQ'$, a $(2^r,H,\delta,r2^{-\frac{K-1}{2}},N^r)$-family over $[2d]^r$. 

By the proof of \cref{cor:prob-matching}, $\xi_0,\xi_1$ can be computed in $\poly(N)$ time, and $\cQ'$ can also be computed from $\cQ$ in time $\poly(2^{dr})$ by going through the proof of \cref{lem:family-tensor}. Combining the results above completes the proof.
\end{proof}

\end{document}